\theoremstyle{plain}
\newtheorem{theorem}{Theorem}[section]
\newtheorem{lemma}[theorem]{Lemma}
\theoremstyle{definition}
\theoremstyle{remark}
\newcommand{\A}{\mathbb{A}}
\newcommand{\R}{\mathbb{R}}
\newcommand{\cA}{\mathcal{A}}
\newcommand{\cE}{\mathcal{E}}
\newcommand{\cF}{\mathcal{F}}
\newcommand{\cG}{\mathcal{G}}
\newcommand{\cH}{\mathcal{H}}
\newcommand{\cI}{\mathcal{I}}
\newcommand{\cJ}{\mathcal{J}}
\newcommand{\cK}{\mathcal{K}}
\newcommand{\cL}{\mathcal{L}}
\newcommand{\cM}{\mathcal{M}}
\newcommand{\cN}{\mathcal{N}}
\newcommand{\cR}{\mathcal{R}}
\newcommand{\cS}{\mathcal{S}}
\newcommand{\argmax}{\operatornamewithlimits{argmax}}
\newcommand{\argmin}{\operatornamewithlimits{argmin}}
\mathchardef\mhyphen="2D
\newcommand{\ex}{\mathbb{E}}
\newcommand{\kl}{\textup{KL}}
\newcommand{\var}{\textup{Var}}
\newcommand{\sbr}[1]{\left( #1 \right)}
\newcommand{\mbr}[1]{\left[ #1 \right]}
\newcommand{\lbr}[1]{\left\{ #1 \right\}}
\newcommand{\abr}[1]{\left| #1 \right|}
\newcommand{\nbr}[1]{\left\| #1 \right\|}
\newcommand{\indicator}{\mathbbm{1}}
\newcommand{\trace}{\textup{tr}}
\newcommand{\unif}{\textup{unif}}
\newcommand{\round}{\mathtt{ROUND}}
\newcommand{\bernoulli}{\mathcal{B}}
\newcommand{\seg}{\textup{seg}}
\newcommand{\bi}{\textup{B}}
\newcommand{\summing}{\textup{S}}
\newcommand{\sub}{\textup{sub}}
\newcommand{\bitsseg}{\mathtt{SegBiTS}}
\newcommand{\bitssegtran}{\mathtt{SegBiTS\mbox{-}Tran}}
\newcommand{\edlinucbseg}{\mathtt{E\mbox{-}LinUCB}}
\newcommand{\linucbtranseg}{\mathtt{LinUCB\mbox{-}Tran}}
\newcommand{\compilehidecomments}{false}%HIDE comments
	\newcommand{\yihan}[1]{}
	\newcommand{\gal}[1]{}
	\newcommand{\srikant}[1]{}
	\newcommand{\shie}[1]{}
	\newcommand{\anna}[1]{}
	\newcommand{\yihan}[1]{{\color{teal} [\text{Yihan:} #1]}}
	\newcommand{\gal}[1]{{\color{cyan} [\text{Gal:} #1]}}
	\newcommand{\srikant}[1]{{\color{red} [\text{Srikant:} #1]}}
	\newcommand{\shie}[1]{{\color{brown} [\text{Shie:} #1]}}
	\newcommand{\anna}[1]{{\color{orange} [\text{Anna:} #1]}}
\newcommand{\revision}[1]{#1}
\icmltitlerunning{Reinforcement Learning with Segment Feedback}
\begin{document}
	
	\twocolumn[
	\icmltitle{Reinforcement Learning with Segment Feedback}
	
	% It is OKAY to include author information, even for blind
	% submissions: the style file will automatically remove it for you
	% unless you've provided the [accepted] option to the icml2025
	% package.
	
	% List of affiliations: The first argument should be a (short)
	% identifier you will use later to specify author affiliations
	% Academic affiliations should list Department, University, City, Region, Country
	% Industry affiliations should list Company, City, Region, Country
	
	% You can specify symbols, otherwise they are numbered in order.
	% Ideally, you should not use this facility. Affiliations will be numbered
	% in order of appearance and this is the preferred way.
	\icmlsetsymbol{equal}{*}
	
	\begin{icmlauthorlist}
		\icmlauthor{Yihan Du}{uiuc}
		\icmlauthor{Anna Winnicki}{stanford}
		\icmlauthor{Gal Dalal}{nvidia}
		\icmlauthor{Shie Mannor}{technion,nvidia}
		\icmlauthor{R. Srikant}{uiuc}
	\end{icmlauthorlist}
	
	\icmlaffiliation{uiuc}{University of Illinois at Urbana-Champaign}
	\icmlaffiliation{stanford}{Stanford University}
	\icmlaffiliation{technion}{Technion}
	\icmlaffiliation{nvidia}{NVIDIA Research}
	
	\icmlcorrespondingauthor{Yihan Du}{duyihan1996@gmail.com}
	\icmlcorrespondingauthor{R. Srikant}{rsrikant@illinois.edu}
	
	% You may provide any keywords that you
	% find helpful for describing your paper; these are used to populate
	% the "keywords" metadata in the PDF but will not be shown in the document
	\icmlkeywords{Machine Learning, ICML}
	
	\vskip 0.3in
	]
	
	% this must go after the closing bracket ] following \twocolumn[ ...
	
	% This command actually creates the footnote in the first column
	% listing the affiliations and the copyright notice.
	% The command takes one argument, which is text to display at the start of the footnote.
	% The \icmlEqualContribution command is standard text for equal contribution.
	% Remove it (just {}) if you do not need this facility.
	
	\printAffiliationsAndNotice{}  % leave blank if no need to mention equal contribution
	%\printAffiliationsAndNotice{\icmlEqualContribution} % otherwise use the standard text.
	
	\begin{abstract}
		Standard reinforcement learning (RL) assumes that an agent can observe a reward for each state-action pair. However, in practical applications, it is often difficult and costly to collect a reward for each state-action pair. While there have been several works considering RL with trajectory feedback, it is unclear if trajectory feedback is inefficient for learning when trajectories are long. In this work, we consider a model named RL with segment feedback, which offers a general paradigm filling the gap between per-state-action feedback and trajectory feedback.
		In this model, we consider an episodic Markov decision process (MDP), where each episode is divided into $m$ segments, and the agent observes reward feedback only at the end of each segment. Under this model, we study two popular feedback settings: binary feedback and sum feedback, where the agent observes a binary outcome and a reward sum according to the underlying reward function, respectively. To investigate the impact of the number of segments $m$ on learning performance, we design efficient algorithms and establish regret upper and lower bounds for both feedback settings. Our theoretical and experimental results show that: under binary feedback, increasing the number of segments $m$ decreases the regret at an exponential rate; in contrast, surprisingly, under sum feedback, increasing $m$ does not reduce the regret significantly.
	\end{abstract}
	
	Reinforcement learning (RL) is a class of sequential decision-making algorithms, where an agent interacts with an unknown environment through time with the goal of maximizing the obtained reward.
	RL has variant applications such as robotics, autonomous driving and game playing.
	
	In classic RL, when the agent takes an action in a state, the environment will provide a reward for this state-action pair. 
	However, in real-world applications, it is often difficult and costly to collect a reward for each state-action pair. For example, in robotics, when we instruct a robot to scramble eggs, it is hard to specify a reward for each individual action. In autonomous driving, it is difficult and onerous to evaluate each action, considering multiple criteria including safety, comfort and speed.
	
	Motivated by this fact, there have been several works that consider RL with trajectory feedback~\citep{efroni2021reinforcement,chatterji2021theory}. In these works, the agent observes a reward signal only at the end of each episode, instead of at each step, with the signal indicating the quality of the trajectory generated during the episode. While these works mitigate the issue of impractical per-step reward feedback in classic RL, the relationship between the frequency of feedback and the performance of RL algorithms is unknown. In particular, if for example we get feedback twice in each trajectory, does that significantly improve performance over once per trajectory feedback?
	
	To answer this question, we study a general model called RL with segment feedback, which bridges the gap between per-state-action feedback in classic RL~\citep{sutton2018reinforcement} and trajectory feedback in recent works~\citep{efroni2021reinforcement,chatterji2021theory}.
	In this model, we consider an episodic Markov decision process (MDP), where an episode is equally divided into $m$ segments.
	%, and each segment is of length $\frac{H}{m}$. 
	%Here $H$ is the length of each episode. 
	In each episode, at each step, the agent first observes the current state, and takes an action, and then transitions to a next state according to the transition distribution. The agent \emph{observes a reward signal at the end of each segment}. 
	Under this model, we consider two reward feedback settings: binary feedback and sum feedback. In the binary feedback setting, the agent observes a binary outcome (e.g., thumbs up/down) generated by a sigmoid function of the reward on this segment. In the sum feedback setting, the agent observes the sum of the rewards over this segment. 
	In our model, the agent needs to learn the underlying reward function (i.e., the expected reward as a function of states and actions) from binary or sum segment feedback, and maximize the expected reward achieved.
	While \citet{tang2024reinforcement} also studied this segment model before (they called it RL from bagged reward), their work is mostly empirical, and does not provide theoretical guarantees for algorithms and rigorously reveal the influence of segments on learning.

	This model is applicable to many scenarios involving human queries. For instance, in autonomous driving, a driving trajectory is often divided into several segments, and human annotators are asked to provide feedback for each segment, e.g., thumbs up/down. Compared to state-action pairs or whole trajectories, segments are easier and more efficient to evaluate, since human annotators can focus on and rate behaviors in each segment, e.g., passing through intersections, reversing the car and parking.
	
	In this segment model, there is an interesting balance between the number of segments (queries to humans) and the collected observations, i.e., we desire more observations, but we also want to reduce the number of queries. Therefore, in this problem, it is critical to investigate the trade-off between the benefits brought by segments and the increase of queries, which essentially comes down to a question: \emph{How does the number of segments $m$ impact learning performance?}
	
	To answer this question, we design efficient algorithms for binary and sum feedback settings in both known and unknown transition cases. Regret upper and lower bounds are provided to rigorously show the influence of the number of segments on learning performance. We also present experiments to validate our theoretical results.

	Note that studying RL with equal segments is an important starting point and serves as a foundation for further investigation on more general models and analysis for RL with unequal segments. Even under equal segments, this problem is already very challenging: (i) This problem cannot be solved by applying prior trajectory feedback works, e.g., \citep{efroni2021reinforcement}, since they use the martingale property of subsequent trajectories in analysis, while subsequent segments are not a martingale due to  \emph{dependency among segments} within a trajectory. (ii) In prior trajectory feedback works~\citep{efroni2021reinforcement,chatterji2021theory}, there exists a gap between upper and lower bounds for sum feedback, and there is no lower bound for binary feedback. This fact poses a significant challenge for us when trying to understand the influence of the number of segments $m$ on learning performance.

	Our work overcomes the above challenges and makes contributions as follows.
	
	\begin{enumerate}
		\item We study a general model called RL with segment feedback, which bridges the gap between per-state-action feedback in classic RL and trajectory feedback seemlessly. Under this model, we consider two feedback settings: binary feedback and sum feedback.
		\item For binary feedback, we design computationally-efficient and sample-efficient algorithms $\bitsseg$ and $\bitssegtran$ for known and unknown transitions, respectively. We provide regret upper and lower bounds which depend on $\exp(\frac{Hr_{\max}}{2m})$, where $H$ is the length of each episode, and $r_{\max}$ is a universal upper bound of rewards. Our results exhibit that under binary feedback, increasing the number of segments $m$ significantly helps accelerate learning.
		\item For sum feedback, we devise algorithms $\edlinucbseg$ and $\linucbtranseg$, which achieve near-optimal regrets in terms of $H$ and $m$. We also establish lower bounds to validate the optimality, and show that optimal regrets do not depend on $m$. Our results reveal that surprisingly, under sum feedback, increasing the number of segments $m$ does not help expedite learning much. 
		\item We develop novel techniques which can be of independent interest, including the KL divergence analysis to derive an exponential lower bound under binary feedback, and the use of E-optimal experimental design in algorithm $\edlinucbseg$ to refine the eigenvalue of the covariance matrix and reduce the regret.
	\end{enumerate}

	\section{Related Work}
	
	In this section, we briefly review prior related works.
	
	Algorithms and analysis for classic RL were well studied in the literature~\citep{sutton2018reinforcement,jaksch2010near,azar2017minimax,jin2018q,zanette2019tighter}. 
	\citet{tang2024reinforcement} proposed the RL with segment feedback problem (they called it RL from bagged rewards), and designed a transformer-based algorithm. However, their work is mostly empirical and does not provide theoretical guarantees. \revision{\citet{gaoharnessing} considers RL with bagged decision times, where the state transitions are non-Markovian within a bag, and a reward is observed at the end of the bag. But the focus of \cite{gaoharnessing} is to handle the non-Markovian state transitions within a bag using a causal directed acyclic graph, instead of investigating how to infer the reward function of state-action pairs from bagged rewards like us. In addition, to the best of our knowledge, there is no existing work that rigorously quantifies the influence of segments on learning performance.}
	
	There are two prior works~\citep{efroni2021reinforcement,chatterji2021theory} studying RL with trajectory feedback, which are most related to our work. \citet{efroni2021reinforcement} investigated RL with sum trajectory feedback, and designed upper confidence bound (UCB)-type and Thompson sampling (TS)-type algorithms with regret guarantees. 
	\citet{chatterji2021theory} studied RL with binary trajectory feedback, but considered a different formulation for binary feedback from ours. Specifically, in their formulation, 
	%the feature of a trajectory can be non-decomposable to state-action features, and 
	the objective is to find the policy that maximizes the expected probability of generating feedback $1$, and their optimal policy can be non-Markovian due to the non-linearity of the sigmoid function; In our formulation, 
	%the feature of a trajectory is the sum of the state-action features over this trajectory, and 
	our objective is to find the optimal policy under the standard MDP definition by inferring rewards from binary feedback, and thus we consider Markovian policies. 
	The algorithms in \citep{chatterji2021theory} are either computationally inefficient or have a suboptimal regret order due to the  non-linearity of their objective and direct maximization over all non-Markovian policies. 
	Our algorithms are computationally efficient by adopting the TS algorithmic style and efficient MDP planning under Markovian policies.
	Our regret results cannot be directly compared to those in \citep{chatterji2021theory} due to the difference in formulation.
	
	Moreover, different from \citep{efroni2021reinforcement,chatterji2021theory}, we study RL with segment feedback, which allows feedback from multiple segments within a trajectory, with per-state-action feedback and trajectory feedback as the two extremes. Under sum feedback, we improve the result in \citep{efroni2021reinforcement} by a factor of $\sqrt{H}$ using experimental design, when the problem reduces to the trajectory feedback setting. Under binary feedback, we propose TS-style algorithms which are computationally efficient, and build a lower bound to reveal an inevitable exponential factor in the regret bound, which is novel to the RL literature.
	
	Our work is also related to linear bandits~\citep{abbasi2011improved} and logistic bandits~\citep{filippi2010parametric,faury2020improved,russac2021self}, and uses analytical techniques from that literature.
	
	\section{Formulation}
	In this section, we present the formulation of RL with binary and sum segment feedback.
	
	We consider an episodic MDP denoted by $\cM(\cS,\cA,H,r,p,\rho)$. Here $\cS$ is the state space, and $\cA$ is the action space. $H$ is the length of each episode. $r:\cS \times \cA \rightarrow [-r_{\max},r_{\max}]$ is an unknown reward function, where $r_{\max}>0$ is a universal constant.
	%and used to prevent the input of binary feedback (the sigmoid function) from being too large. 
	Define the reward parameter $\theta^*:=[r(s,a)]_{(s,a) \in \cS \times \cA} \in \R^{|\cS||\cA|}$.
	%	We will specify reward feedback settings shortly.
	$p:\cS \times \cA \rightarrow \triangle_{\cS}$ is the transition distribution. For any $(s,a,s') \in \cS \times \cA \times \cS$, $p(s'|s,a)$ is the probability of transitioning to $s'$ if action $a$ is taken in state $s$. $\rho \in \triangle_{\cS}$ is an initial state distribution. 
	
	A policy $\pi:\cS \times [H] \rightarrow \cA$ is defined as a mapping from the state space and step indices to the action space, so that $\pi_h(s)$ specifies what action to take in state $s$ at step $h$. 
	For any policy $\pi$, $h \in [H]$ and $(s,a) \in \cS \times \cA$, let $V^{\pi}_h(s)$ be the state value function, and $Q^{\pi}_h(s,a)$ be the state-action value function, which denote the cumulative expected reward obtained under policy $\pi$ till the end of an episode, starting from $s$ and $(s,a)$ at step $h$, respectively. Formally, $V^{\pi}_h(s) := \ex[ \sum_{t=h}^{H} r(s_t,a_t) | s_h=s, \pi ]$, and $Q^{\pi}_h(s,a) := \ex[ \sum_{t=h}^{H} r(s_t,a_t)  | s_h=s, a_h=a, \pi ]$.
	%\begin{align}
	%	V^{\pi}_h(s) &:= \ex\mbr{\ \sum_{t=h}^{H} r(s_t,a_t) \ \Big|\ s_h=s, \pi } ,\quad \label{eq:def_Q_function}
	%	\\
	%	Q^{\pi}_h(s,a) &:= \ex\mbr{\ \sum_{t=h}^{H} r(s_t,a_t) \ \Big|\ s_h=s, a_h=a, \pi } . \nonumber
	%\end{align}
	The optimal policy is defined as $\pi^*=\argmax_{\pi}V^{\pi}_h(s)$ for all $s \in \cS$ and $h \in [H]$. For any $s \in \cS$ and $h \in [H]$, denote $V^*_h(s):=V^{\pi^*}_h(s)$.
	
	%Below we introduce the settings of reward observation and the agent's interaction with the environment.

	The process of RL with segment feedback is as follows. In each episode $k$, the agent chooses a policy $\pi^k$ at the beginning of this episode, and starts from $s^k_1 \sim \rho$. At each step $h \in [H]$, the agent first observes the current state $s^k_h$, and takes an action $a^k_h=\pi^k_h(s^k_h)$ according to her policy, and then transitions to a next state $s^k_{h+1} \sim p(\cdot|s^k_h,a^k_h)$. 
	
	Each episode is equally divided into $m$ segments, and each segment is of length $\frac{H}{m}$. For convenience, assume that $H$ is divisible by $m$.
	For any $k>0$ and $i \in [m]$, let $\tau^k=(s^k_1,a^k_1,\dots,s^k_h,a^k_h)$ denote the trajectory in episode $k$, and $\tau^k_i=(s^k_{\frac{H}{m}\cdot(i-1)+1},a^k_{\frac{H}{m}\cdot(i-1)+1},\dots,s^k_{\frac{H}{m}\cdot i},a^k_{\frac{H}{m}\cdot i})$ denote the $i$-th segment of the trajectory in episode $k$.
	
	For any trajectory or trajectory segment $\tau$, $\phi^{\tau} \in \R^{|\cS||\cA|}$ denotes the vector where each entry $\phi^{\tau}(s,a)$ is the number of times $(s,a)$ is visited in $\tau$. 
	For any policy $\pi$, $\phi^{\pi} \in \R^{|\cS||\cA|}$ denotes the vector where each entry $\phi^{\pi}(s,a)$ is the expected number of times $(s,a)$ is visited in an episode under policy $\pi$, i.e.,
	\begin{align*}
		\phi^{\pi}(s,a):=\ex \mbr{ \sum_{h=1}^{H} \indicator\{s_h=s, a_h=a\} \Big| \pi } .
	\end{align*}
	% \label{eq:def_phi_pi}

	In our model, the agent observes reward feedback \emph{only at the end of each segment}, instead of each step as in classic RL.
	We consider two reward feedback settings as follows.
	
	\paragraph{Binary Segment Feedback.}
	Denote the sigmoid function by $\mu(x):=\frac{1}{1+\exp(-x)}$ for any $x\in\R$.
	In the binary segment feedback setting, in each episode $k$, at the end of each segment $i \in [m]$, the agent observes a binary outcome 
	\begin{align*}
		y^k_i = \left\{\begin{matrix}
			1, &\textup{ w.p. }  \mu((\phi^{\tau^k_i})^\top \theta^*) ,
			\\
			0, &\hspace*{1.73em} \textup{ w.p. }  1 - \mu((\phi^{\tau^k_i})^\top \theta^*) .
		\end{matrix}\right.
	\end{align*}
	%
	%\begin{align*}
	%	y^k_i = \left\{\begin{matrix}
		%		1, &\textup{ w.p. } \mu \sbr{-\sum_{t=\frac{H}{m}\cdot(i-1)+1}^{\frac{H}{m}\cdot i} r(s^k_t,a^k_t)} = \mu((\phi^{\tau^k_i})^\top \theta^*) ,
		%		\\
		%		0, &\hspace*{3.4em} \textup{ w.p. } 1-\mu \sbr{-\sum_{t=\frac{H}{m}\cdot(i-1)+1}^{\frac{H}{m}\cdot i} r(s^k_t,a^k_t)} = 1 - \mu((\phi^{\tau^k_i})^\top \theta^*) .
		%	\end{matrix}\right.
	%\end{align*}
	
	Note that our formulation is different from that in prior work for binary feedback~\citep{chatterji2021theory}.  \citet{chatterji2021theory} aim to find the policy that maximizes the expected probability of generating feedback $1$, i.e., $\max_{\pi} \ex_{\tau \sim \pi, p}[\mu((\phi^{\tau})^\top \theta^*)]$, where the optimal policy can be non-Markovian due to the non-linearity of $\mu(\cdot)$. In contrast, we aim to find the optimal policy under the standard MDP definition, i.e., $\max_{\pi} \ex_{\tau \sim \pi, p}[(\phi^{\tau})^\top \theta^*]$, by inferring reward $\theta^*$ from binary feedback, and thus we consider Markovian policies.
	\revision{
		Both formulations have value and are applicable in different contexts. In particular, our formulation is better suited to situations where we want to solve an MDP but only get binary segment feedback.
	}
	Under our formulation, we design TS-type algorithms with  confidence bonuses added on $\theta^*$ element-wise to achieve computational efficiency, which cannot be done without sacrificing the regret order under the formulation of \citep{chatterji2021theory}.

	\paragraph{Sum Segment Feedback.}
	In the sum segment feedback setting, in each episode $k$, at each step $h$, the environment generates an underlying random reward $R^k_h = r(s^k_h,s^k_h) + \varepsilon^k_h$, where $\varepsilon^k_h$ is a zero-mean and $1$-sub-Gaussian noise, and independent of transition. At the end of each segment $i \in [m]$, the agent observes the sum of random rewards
	\begin{align*}
		R^k_i = \!\!\! \sum_{t=\frac{H}{m} (i-1)+1}^{\frac{H}{m}\cdot i} \!\!\! R(s^k_t,a^k_t) =  (\phi^{\tau^k_i})^\top \theta^* + \!\!\! \sum_{t=\frac{H}{m} (i-1)+1}^{\frac{H}{m}\cdot i} \!\!\! \varepsilon^k_t .
	\end{align*}
	Under sum feedback, when $m=H$, our model degenerates to  classic RL~\citep{azar2017minimax,sutton2018reinforcement}. 
	When $m=1$, the above two settings reduce to the problems of RL with binary~\citep{chatterji2021theory} and sum trajectory feedback~\citep{efroni2021reinforcement}, respectively.
	
	In our model, the agent needs to infer the reward function from sparse and implicit reward feedback. Let $K$ denote the number of episodes played. The goal of the agent is to minimize the cumulative regret, which is defined as
	\begin{align*}
		\cR(K):=\sum_{k=1}^{K} ( V^*_1(s_1) - V^{\pi^k}_1(s_1) ) .
	\end{align*}
	
	We note that to the best of our knowledge, the fact that one gets extremely coarse information about the sum reward in the binary feedback case makes it impossible to have a common analysis for both feedback models.

	\section{Reinforcement Learning with Binary Segment Feedback}\label{sec:binary_feedback}
	
	In this section, we investigate RL with binary segment feedback. To isolate the effect of segment feedback from transition model learning, we first design a computationally-efficient and sample-efficient algorithm $\bitsseg$ for the known transition case, and establish a novel lower bound to exhibit the indispensable exponential dependency in the result under binary feedback. Then, we further develop an algorithm $\bitssegtran$ with carefully-designed transition bonuses for the unknown transition case. 
	%, which infers rewards from binary signals and  performs posterior sampling.
	
	\subsection{Algorithm $\bitsseg$ for Known Transition}
	
	\begin{algorithm}[t]
		\caption{$\bitsseg$} \label{alg:bits_sum_regret}
		\begin{algorithmic}[1]
			\STATE {\bfseries Input:} $\delta,\delta':=\frac{\delta}{3},\alpha := \exp(\frac{H r_{\max}}{m})+\exp(-\frac{H r_{\max}}{m})+2,\lambda$.
			\FOR{$k=1,\dots,K$}
			\STATE $\hat{\theta}_{k-1} \leftarrow \argmin_{\theta} -(\sum_{k'=1}^{k-1} \sum_{i=1}^{m} ( y^{k'}_i \cdot \log(\mu((\phi^{\tau^{k'}_i})^\top \theta)) + (1-y^{k'}_i) \cdot \log(1-\mu((\phi^{\tau^{k'}_i})^\top \theta) ) ) - \frac{1}{2} \lambda \|\theta\|_2^2)$\; \label{line:hat_theta_bits}
			\STATE $\Sigma_{k-1} \leftarrow  \sum_{k'=1}^{k-1} \sum_{i=1}^{m} \phi^{\tau^{k'}_i} (\phi^{\tau^{k'}_i})^\top + \alpha \lambda I$\; \label{line:Sigma_bits}
			\STATE Sample $\xi_k \sim \cN(0, \alpha \cdot \nu(k-1)^2 \cdot \Sigma_{k-1}^{-1})$, where $\nu(k-1)$ is defined in Eq.~\eqref{eq:def_nu}\; \label{line:noise_bits}
			\STATE $\tilde{\theta}_k \leftarrow \hat{\theta}_{k-1}+\xi_k$\; \label{line:tilde_theta_bits}
			\STATE $\pi^k \leftarrow \argmax_{\pi} (\phi^{\pi})^\top \tilde{\theta}_k$\; \label{line:pi_k_bits}
			% , where $\phi^{\pi}$ is defined in Eq.~\eqref{eq:def_phi_pi}
			\STATE Play episode $k$ with policy $\pi^k$. Observe trajectory $\tau^k$ and binary segment feedback $\{y^k_i\}_{i=1}^{m}$\; \label{line:play_bits}
			\ENDFOR
		\end{algorithmic}
	\end{algorithm}
	
	Building upon the Thompson sampling algorithm~\citep{thompson1933likelihood}, $\bitsseg$ adopts the maximum likelihood estimator (MLE) to learn rewards from binary feedback, and performs posterior sampling to compute the optimal policy. Different from prior trajectory feedback algorithms~\citep{chatterji2021theory} which are either computationally inefficient or have a $O(K^{\frac{2}{3}})$ regret bound, $\bitsseg$ is both computationally efficient and has a $O(\sqrt{K})$ regret bound.
	
	Algorithm~\ref{alg:bits_sum_regret} presents the procedure of $\bitsseg$. Specifically, in each episode $k$, $\bitsseg$ first employs MLE with past binary reward observations to obtain the estimated reward parameter $\hat{\theta}_{k-1}$ (Line~\ref{line:hat_theta_bits}). Then, $\bitsseg$ calculates the feature covariance matrix of past segments $\Sigma_{k-1}$ (Line~\ref{line:Sigma_bits}). After that, $\bitsseg$ samples a noise $\xi_k$ from Gaussian distribution $\cN(0, \alpha \cdot \nu(k-1)^2 \cdot \Sigma_{k-1}^{-1})$ (Line~\ref{line:noise_bits}). Here $\alpha$ is a universal upper bound of the inverse of the sigmoid function's derivative.
	For any $k>0$, we define
	\begin{align}
		& \nu(k):=\frac{m \sqrt{ \lambda}}{H} \Bigg( 1 + \frac{H r_{\max} \sqrt{|\cS| |\cA|}}{m}  + \frac{H}{m\sqrt{\lambda}} \cdot 
		\nonumber\\
		& \sqrt{ 1 + \frac{H r_{\max} \sqrt{|\cS| |\cA|}}{m} } \omega(k) + \frac{H^2}{m^2 \lambda} \cdot \omega(k)^2  \Bigg)^{\frac{3}{2}} , \label{eq:def_nu}
	\end{align}
	and
	\begin{align}
		& \omega(k):=\sqrt{\lambda}\sbr{ r_{\max}\sqrt{|\cS||\cA|} + \frac{1}{2} } 
		\nonumber\\
		& + \frac{|\cS||\cA|}{\sqrt{\lambda}}\log\sbr{ \frac{4}{\delta'} \sbr{1+\frac{H^2 k}{4 |\cS||\cA| \lambda m}} } . \label{eq:def_omega}
	\end{align}
	$\nu(k)$ is the confidence radius factor of the MLE estimate $\hat{\theta}_{k}$. With high probability, we have $|\phi^\top \theta^* - \phi^\top \hat{\theta}_k| \leq \sqrt{\alpha} \cdot \nu(k) \|\phi\|_{\Sigma_k^{-1}}$, where  $\phi$ is the visitation indicator of any trajectory (Lemma~\ref{lemma:confence_interval_proj_free} in Appendix~\ref{apx:ub_binary_known_tran}).
	
	Adding noise $\xi_k$ to $\hat{\theta}_{k-1}$, $\bitsseg$ obtains a posterior reward estimate $\tilde{\theta}_{k}$ (Line~\ref{line:tilde_theta_bits}). 
	Then, it computes the optimal policy $\pi^k$ under reward $\tilde{\theta}_{k}$, i.e.,  $\argmax_{\pi} (\phi^{\pi})^\top \tilde{\theta}_k$ (Line~\ref{line:pi_k_bits}). Note that this step is \emph{computationally efficient}, which can be easily solved by any MDP planning algorithm, e.g., value iteration, by taking    $\tilde{\theta}_{k}$ as the reward function.
	After obtaining $\pi^k$, $\bitsseg$ plays episode $k$, and observes trajectory $\tau^k$ and binary feedback $\{y^k_i\}_{i=1}^{m}$ on each segment (Line~\ref{line:play_bits}).
	
	Now we provide a regret upper bound for $\bitsseg$.
	\begin{theorem}\label{thm:ub_bits}
		With probability at least $1-\delta$, for any $K>0$, the regret of algorithm $\bitsseg$ is bounded by
		\begin{align*}
			\cR(K) &= \tilde{O} \Bigg( \exp\sbr{\frac{H r_{\max}}{2m}} \nu(K) \sqrt{|\cS||\cA|} \cdot
			\\
			&\bigg( \sqrt{ Km |\cS| |\cA|  \max\lbr{ \frac{H^2}{m \alpha \lambda}, 1}  } + H\sqrt{ \frac{K}{\alpha \lambda} }  \bigg)
			\Bigg) .
			%			%
			%			&\mathcal{R}(K) = \tilde{O} \Bigg( \exp\bigg(\frac{H r_{\max}}{2m}\bigg) \cdot \bigg( \sqrt{ Km |\mathcal{S}| |\mathcal{A}| \max \Big\{ \frac{H^2}{m \alpha \lambda}, 1} \Big\} + H\sqrt{ \frac{K}{\alpha \lambda} }  \bigg) \cdot \frac{m \sqrt{\lambda |\mathcal{S}| |\mathcal{A}|}}{H} \cdot
			%			\\
			%			& \Bigg( 1 + \frac{H r_{\max} \sqrt{|\mathcal{S}| |\mathcal{A}|}}{m}  + \frac{H}{m\sqrt{\lambda}}  \sqrt{ 1 + \frac{H r_{\max} \sqrt{|\mathcal{S}| |\mathcal{A}|}}{m} } \bigg( \sqrt{\lambda} \Big( r_{\max}\sqrt{|\mathcal{S}| |\mathcal{A}|} + \frac{1}{2} \Big) + \frac{|\mathcal{S}| |\mathcal{A}|}{\sqrt{\lambda}} \bigg) 
			%			\\
			%			&+ \frac{H^2}{m^2 \lambda} \bigg( \sqrt{\lambda} \Big( r_{\max}\sqrt{|\mathcal{S}| |\mathcal{A}|} + \frac{1}{2} \Big) + \frac{|\mathcal{S}| |\mathcal{A}|}{\sqrt{\lambda}} \bigg)^2  \Bigg)^{\frac{3}{2}}
			%			\Bigg) .
		\end{align*}
	\end{theorem}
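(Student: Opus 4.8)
The plan is to follow the optimism‑by‑posterior‑sampling framework for logistic/linear models, adapted to the segment structure. Write $J(\theta):=\max_{\pi}(\phi^{\pi})^\top\theta$ for the optimal value under a reward parameter $\theta$, so that $V^*_1(s_1)=J(\theta^*)$ and, since $\pi^k=\argmax_\pi(\phi^\pi)^\top\tilde\theta_k$, we have $J(\tilde\theta_k)=(\phi^{\pi^k})^\top\tilde\theta_k$. I first condition on the good event of \ref{lemma:confence_interval_proj_free}, under which $|\phi^\top(\theta^*-\hat\theta_{k-1})|\le\sqrt\alpha\,\nu(k-1)\|\phi\|_{\Sigma_{k-1}^{-1}}$ for every trajectory feature $\phi$ and all $k$ simultaneously, and on a second event, derived from a Gaussian tail bound, on which the sampled noise obeys $|\phi^\top\xi_k|\lesssim\sqrt\alpha\,\nu(k-1)\sqrt{\log(\cdot)}\,\|\phi\|_{\Sigma_{k-1}^{-1}}$. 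The per‑episode regret then splits as
\[
V^*_1(s_1)-V^{\pi^k}_1(s_1)=\underbrace{\big(J(\theta^*)-J(\tilde\theta_k)\big)}_{\text{optimism gap}}+\underbrace{(\phi^{\pi^k})^\top(\tilde\theta_k-\theta^*)}_{\text{estimation term}},
\]
where the estimation term is immediately controlled by the two events as $\lesssim\sqrt\alpha\,\nu(k-1)\sqrt{\log(\cdot)}\,\|\phi^{\pi^k}\|_{\Sigma_{k-1}^{-1}}$.

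The optimism gap is the crux of the posterior‑sampling analysis. Because $J$ is convex (a maximum of linear functions) with subgradient $\phi^{\pi(\theta)}$ at $\theta$, and because Gaussian anti‑concentration guarantees that $\tilde\theta_k$ is optimistic, i.e.\ $J(\tilde\theta_k)\ge J(\theta^*)$, with at least a constant probability $p>0$ conditioned on the history, I will bound $\ex[J(\theta^*)-J(\tilde\theta_k)\mid\cF_{k-1}]$ by $O(1/p)$ times the confidence width at the played policy, $\sqrt\alpha\,\nu(k-1)\,\ex[\|\phi^{\pi^k}\|_{\Sigma_{k-1}^{-1}}\mid\cF_{k-1}]$, via the standard argument that any non‑optimistic sample's gap is dominated by that of the optimal policy of some optimistic sample. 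The factor $\exp(\frac{Hr_{\max}}{2m})=\sqrt\alpha$ enters here and in the estimation term precisely because $\alpha$ upper‑bounds the inverse derivative of the sigmoid, which is as small as $\exp(-\frac{Hr_{\max}}{m})$ when a segment feature aligns with $\theta^*$.

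Summing both contributions over $k\in[K]$ reduces the whole regret $\cR(K)$ to controlling $\sum_{k=1}^K\|\phi^{\pi^k}\|_{\Sigma_{k-1}^{-1}}$, and here the segment structure enters. Since $\phi^{\pi^k}=\sum_{i=1}^m\ex[\phi^{\tau^k_i}\mid\pi^k]$, the triangle inequality gives $\|\phi^{\pi^k}\|_{\Sigma_{k-1}^{-1}}\le\sum_{i=1}^m\|\ex[\phi^{\tau^k_i}\mid\pi^k]\|_{\Sigma_{k-1}^{-1}}$. I then pass from expected to realized segment features; handled as a mean‑zero martingale across episodes (at the episode level, so the within‑trajectory dependence among the $m$ segments is never invoked), this step contributes the additive $H\sqrt{K/(\alpha\lambda)}$ term via Azuma, using $\|\phi^{\pi^k}\|_2,\|\phi^{\tau^k}\|_2\le H$ and $\Sigma_{k-1}^{-1}\preceq(\alpha\lambda)^{-1}I$. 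Finally I bound $\sum_{k,i}\|\phi^{\tau^k_i}\|_{\Sigma_{k-1}^{-1}}$ by Cauchy–Schwarz over the $Km$ terms together with a \emph{batched} elliptical‑potential (matrix‑determinant) lemma; because all $m$ segments of episode $k$ are scored against the frozen $\Sigma_{k-1}$ rather than an incrementally updated matrix, the batched version pays the factor $\max\{\frac{H^2}{m\alpha\lambda},1\}$, reflecting how much the covariance can move within one episode (each $\|\phi^{\tau^k_i}\|_2\le H/m$, so one episode adds at most $H^2/m$ in operator norm relative to the regularizer $\alpha\lambda$). The $\log\det$ bound yields the $\sqrt{|\cS||\cA|}$ factor inside the root and Cauchy–Schwarz the $\sqrt{Km}$, assembling the main term $\sqrt{Km|\cS||\cA|\max\{\frac{H^2}{m\alpha\lambda},1\}}$.

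I expect the main obstacle to be the interaction between segment batching and the dependence among segments within a trajectory. Standard elliptical‑potential and martingale arguments presuppose either one observation per update or genuinely martingale increments; here each episode injects $m$ correlated segment features scored against a single frozen $\Sigma_{k-1}$, which is exactly the obstruction flagged in the introduction to reusing trajectory‑feedback analyses. Making the batched determinant argument tight—so that the penalty is only $\max\{\frac{H^2}{m\alpha\lambda},1\}$ and degrades gracefully as $m$ grows—while simultaneously keeping the exponential constant $\alpha$ inside $\nu(K)$ and $\sqrt\alpha$ under control through the logistic MLE curvature bound, is the technically delicate core; the anti‑concentration and Gaussian‑tail steps are routine once the good events are fixed.
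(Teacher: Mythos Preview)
Your proposal is correct and tracks the paper's proof closely: the same optimism-plus-estimation split, Gaussian anti-concentration for a constant optimism probability, the same Azuma step from expected to realized trajectory features yielding the $H\sqrt{K/(\alpha\lambda)}$ term, and the same batched elliptical-potential lemma producing exactly the $\max\{H^2/(m\alpha\lambda),1\}$ penalty you identify. The one place where you are vaguer than the paper is the optimism-gap step: the paper's precise mechanism (following Efroni et al.) is to bound $\ex[J(\theta^*)-J(\tilde\theta_k)\mid F_{k-1}]\le (1/p)\,\ex[(J(\tilde\theta_k)-\ex[J(\tilde\theta_k)])^+\mid F_{k-1}]$ via anti-concentration and then introduce an \emph{independent copy} $\xi'_k$ of the noise to show this positive part is at most $\ex[\,|(\phi^{\pi^k})^\top\xi_k|+|(\phi^{\pi^k})^\top\xi'_k|\,\mid F_{k-1}]$---this i.i.d.-copy trick is what pins the width to the \emph{played} policy $\pi^k$, and controlling $\ex[|(\phi^{\pi^k})^\top\xi_k|]$ via the chi-norm of $\Sigma_{k-1}^{1/2}\xi_k$ is where the outer $\sqrt{|\cS||\cA|}$ factor (which you absorbed into ``$\sqrt{\log(\cdot)}$'') actually originates.
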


	In this result, the dependency on $|\mathcal{S}|$, $|\mathcal{A}|$ and $H$ are $|\mathcal{S}|^3$, $|\mathcal{A}|^3$ and $\exp(\frac{H r_{\max}}{2m}) H^2$, respectively. Our focus here is to reveal the exponential dependency on $\frac{Hr_{\max}}{m}$ in the regret bound under binary feedback,  instead of pursuing absolute tightness of every polynomial factor.
	%Theorem~\ref{thm:ub_bits} exhibits that the regret of algorithm $\bitsseg$ has an exponential dependency $\exp(\frac{Hr_{\max}}{2m})$, which is usually the dominating factor. 
	Since the exponential factor is usually the dominating factor, this result implies that as the number of segments $m$ increases, the regret decays rapidly. Thus, under binary feedback, increasing the number of segments significantly helps accelerate learning.
	
	The intuition behind this exponential dependency is that when the reward scale $x=\frac{H r_{\max}}{m}$ is large, the binary feedback is generated from the range where the sigmoid function $\mu(x)=\frac{1}{1+\exp(-x)}$ is flat, i.e., the derivative of the sigmoid function $\mu'(x)$ is small. Then, the generated binary feedback is likely always $0$ or always $1$, and it is hard to distinguish between a good action and a bad action, leading to a higher regret; On the contrary, when the reward scale $x=\frac{H r_{\max}}{m}$ is small, the binary feedback is generated from the range where the sigmoid function $\mu(x)$ is steep, i.e., $\mu'(x)$ is large. Then, the generated binary feedback is more dispersed to be $0$ or $1$, and it is easier to distinguish between a good action and a bad action, leading to a lower regret. In other words, the regret bound depends on the inverse of the sigmoid function's derivative $\mu'(x)=\frac{1}{\exp(x)+\exp(-x)+2}$.
	
	%	The intuition behind this is that the regret depends on the inverse
	%	of the sigmoid function's derivative $\mu'(x)=\frac{1}{\exp(x)+\exp(-x)+2}$. When the scale of input $x$ (here $x=\frac{H r_{\max}}{m}$) is large, the output of the sigmoid function is close to $0$ or $1$, and the derivative is small (flat). In this case, the binary outcome is almost always $0$ or always $1$, and it is difficult to distinguish the rewards of two actions, leading to a high regret; In contrast, when the number of segments $m$ increases, the scale of input $x$ decreases, and the derivative of the sigmoid function becomes larger (steeper). Then, it is easier to distinguish the rewards of two actions, resulting in a lower regret.

	% Below we establish a lower bound to demonstrate the inevitability of this exponential dependency.
	
	\subsection{Regret Lower Bound for Known Transition}
	
	Below we provide a lower bound, which \emph{firstly} demonstrates the inevitability of the exponential factor in the regret bound for RL with binary feedback.
	
	\begin{theorem} \label{thm:lb_bi_known_tran}
		Consider RL with binary segment feedback and known transition. There exists a distribution of instances where for any $c_0 \in (0,\frac{1}{2})$, when $K \geq \exp( \frac{Hr_{\max}}{m} ) \frac{4 |\cS| |\cA| m }{ H^2 r_{\max}^2 c_0^2}$, the regret of any algorithm must be
		\begin{align*}
			\Omega\sbr{ \exp\sbr{ \Big(\frac{1}{2}-c_0\Big) \frac{Hr_{\max}}{m} } \sqrt{ |\cS| |\cA| m K } } . 
		\end{align*}
	\end{theorem}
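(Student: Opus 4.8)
The plan is to reduce the MDP to a multi-armed logistic bandit and then run a change-of-measure (KL) argument in the style of the classical $\sqrt{NK}$ bandit lower bound, the new ingredient being that a single segment observation carries information that is exponentially suppressed in $\frac{H r_{\max}}{m}$.

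First I would construct the hard instance family. I take an MDP with deterministic, known transitions in which, at the start of each of the $m$ segments, the agent steers to one of $N := \Theta(|\cS||\cA|)$ state-action pairs (the ``arms'') and stays on the chosen pair for the entire segment; then $\phi^{\tau^k_i}$ equals $\frac{H}{m}$ times the indicator of the chosen arm and the feedback is $y^k_i \sim \bernoulli(\mu(\frac{H}{m} r(s,a)))$. I index instances by $j\in[N]$: under $\nu_j$ arm $j$ has reward $r_{\max}$ and every other arm has reward $r_{\max}-\Delta$, and I use an auxiliary reference $\nu_0$ in which all arms have reward $r_{\max}-\Delta$ (the gap $\Delta$ is fixed at the end). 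Under $\nu_j$ the optimal policy pulls arm $j$ in every segment, so each suboptimal segment-pull costs $\frac{H}{m}\Delta$ in value and $\cR(K) = \frac{H}{m}\Delta\,\ex_j[\,Km - N_j\,]$, where $N_j$ counts how often arm $j$ is pulled over all $Km$ segments.

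The heart of the argument is the KL computation. Each segment-pull of arm $j$ contributes to $\kl(\nu_0 \,\|\, \nu_j)$ the divergence between $\bernoulli(\mu(\frac{H}{m}(r_{\max}-\Delta)))$ and $\bernoulli(\mu(\frac{H}{m}r_{\max}))$. Using the exact Bregman form of the Bernoulli KL together with $\mu'(x) = (\exp(\tfrac{x}{2})+\exp(-\tfrac{x}{2}))^{-2}$, I bound this single-observation divergence by $C\,\mu'(\tilde{x})(\frac{H}{m})^2 \Delta^2$ for some $\tilde{x}$ in $[\frac{H}{m}(r_{\max}-\Delta),\frac{H}{m}r_{\max}]$; since the sigmoid is flattest at the right end and the threshold on $K$ will force $\Delta \le c_0 r_{\max}$, the argument stays $\ge (1-c_0)\frac{H r_{\max}}{m}$, giving a per-observation divergence of order $\exp(-(1-c_0)\frac{H r_{\max}}{m})(\frac{H}{m})^2\Delta^2$. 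This exponential suppression is precisely the phenomenon the theorem isolates, and pinning down the constant in the exponent is the main obstacle. By the bandit divergence-decomposition lemma (valid for any, possibly history-dependent, algorithm) $\kl(\mathbb{P}^{\nu_0}\|\mathbb{P}^{\nu_j}) = \ex_0[N_j]\cdot(\text{per-pull divergence})$, so I pick the arm $j$ least pulled under $\nu_0$, which satisfies $\ex_0[N_j]\le Km/N$ by averaging, and apply Pinsker to control $|\ex_j[N_j]-\ex_0[N_j]|$.

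Finally I would optimize the gap. Requiring $\frac{Km}{N}\cdot(\text{per-pull divergence}) \le \tfrac12$ forces $\Delta \lesssim \frac{1}{H}\sqrt{N m \exp((1-c_0)\frac{H r_{\max}}{m})/K}$, and the hypothesis $K \ge \exp(\frac{H r_{\max}}{m})\frac{4|\cS||\cA|m}{H^2 r_{\max}^2 c_0^2}$ is exactly what makes this choice satisfy $\Delta \le c_0 r_{\max}$, validating the KL bound. With this $\Delta$, Pinsker gives $\ex_j[N_j]\le \tfrac34 Km$, so $\cR(K) \ge \tfrac14 \frac{H}{m}\Delta\, Km = \tfrac14 H\Delta K = \Omega\big(\exp((\tfrac12-c_0)\frac{H r_{\max}}{m})\sqrt{|\cS||\cA|mK}\big)$, as claimed. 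I expect the two subtle points to be (i) making the single-observation KL bound rigorous with the sharp exponent rather than the crude $e^{-x}$ heuristic, and (ii) arguing that arbitrary within-segment behavior and non-Markovian policies cannot beat the one-arm-per-segment reduction, which the deterministic transition structure together with the decomposition lemma handles.
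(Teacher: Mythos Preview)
Your approach---reduce to a Bernoulli/logistic bandit and run a KL change-of-measure argument---is the same as the paper's, with two differences in execution worth flagging. First, the paper does not let the agent choose an arm per segment; it instead draws the initial state uniformly over $n=\Theta(|\cS|)$ bandit states and, after one action, sends the agent to one of two absorbing states for the rest of the episode, so all $m$ segment observations in an episode come from the \emph{same} arm and the $\sqrt{|\cS|}$ factor arises from averaging over initial states (then over $a_J\in\cA$ via Lemma~A.1 of \cite{auer2002nonstochastic}). Your ``steer to one of $\Theta(|\cS||\cA|)$ arms per segment'' construction is underspecified: a hub with $|\cA|$ actions reaches only $|\cA|$ states in one step, so realizing $\Theta(|\cS||\cA|)$ freely choosable arms without navigation steps contaminating $\phi^{\tau^k_i}$ (and hence the sigmoid argument) needs a depth-$\Theta(\log_{|\cA|}|\cS|)$ tree, which is fine when $H/m$ is large but not in general; the paper's random-initial-state device sidesteps this for all $H,m$. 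Second, the paper bounds the per-segment KL via $\kl(\bernoulli(p)\|\bernoulli(q))\le(p-q)^2/\bigl(q(1-q)\bigr)$ together with $\mu(x)(1-\mu(x))=\mu'(x)$, producing the factor $\mu'((1-\varepsilon)x)^2/\mu'(x)\approx e^{-(1-2\varepsilon)x}$ and hence the exponent $\tfrac12-c_0$. Your Bregman/Taylor bound gives the sharper factor $\mu'((1-c_0)x)\approx e^{-(1-c_0)x}$, so your argument actually yields exponent $\tfrac12-\tfrac{c_0}{2}$, not the $\tfrac12-c_0$ you wrote in the last line---a harmless slip in your favor, since the stronger bound implies the theorem.
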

	Theorem~\ref{thm:lb_bi_known_tran} shows that under binary feedback, the exponential dependency on $\frac{Hr_{\max}}{m}$ in the result is indispensable, and the $\exp(\frac{Hr_{\max}}{2m})$ factor in Theorem~\ref{thm:ub_bits} nearly matches the exponential factor in the lower bound up to an arbitrarily small factor $c_0$ in $\exp(\cdot)$.  Theorem~\ref{thm:lb_bi_known_tran} reveals that when the number of segments $m$ increases, the regret indeed decreases at an exponential rate. In addition, this lower bound also holds for the unknown transition case, by constructing the same problem instance as in its proof.
	%, where the reward feedback is random and the transition is deterministic.

	To the best of our knowledge, our lower bound for binary feedback and its analysis are novel in the RL literature. In the analysis, we calculate the KL divergence of Bernoulli distributions with the sigmoid function being in their parameters. Then, we employ Pinsker's inequality and the fact that $\mu'(x)=\mu(x)(1-\mu(x))$ to build a connection between the calculated KL divergence and $\mu'(\frac{H r_{\max}}{m})$. Since $\mu'(x)=\frac{1}{\exp(x)+\exp(-x)+2}$ contains an exponential factor, we can finally derive an exponential dependency in the lower bound. Below we give a proof sketch of Theorem~\ref{thm:lb_bi_known_tran}, and defer a full proof to Appendix~\ref{apx:lb_bi_known_tran}.

	\emph{Proof Sketch.} Consider an instance as follows: there are $n$ bandit states $s_1,\dots,s_n$ (i.e., there is an optimal action and multiple suboptimal actions), a good absorbing state $s_{n+1}$ and a bad absorbing state $s_{n+2}$. 
	The agent starts from $s_1,\dots,s_n$ with equal probability $\frac{1}{n}$. 
	For any $i \in [n]$, in state $s_i$, under the optimal action $a^*_i$, the agent transitions to $s_{n+1}$ deterministically, and $r(s_i,a^*_i)=r_{\max}$; 
	\begin{wrapfigure}[11]{r}{0.6\columnwidth}
		\centering
		\vspace*{-1em}
		\includegraphics[width=0.6\columnwidth]{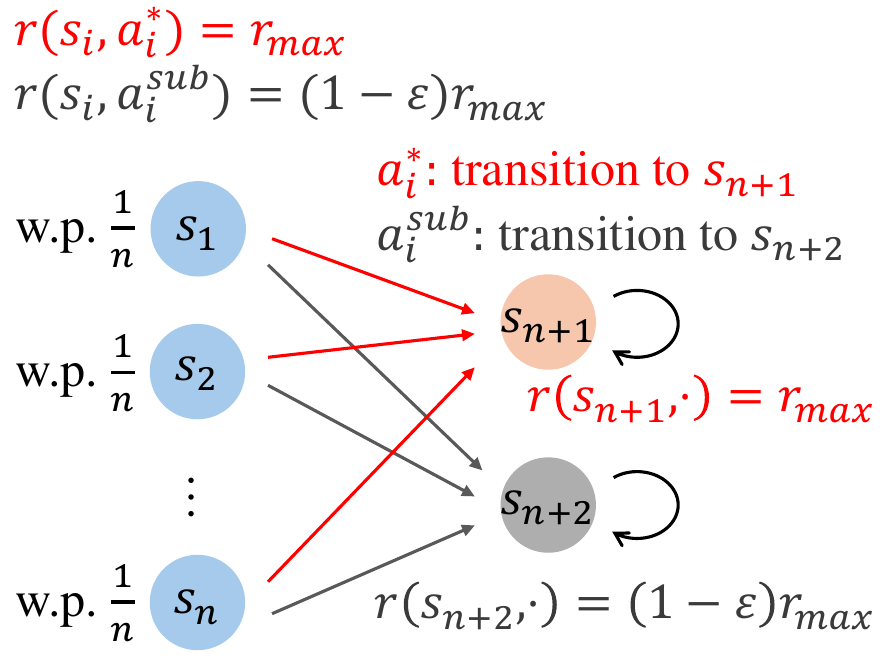}  
		\vspace*{-2em}
		\caption{Lower bound instance.}     
		\label{fig:lower_bound_main_text}
	\end{wrapfigure}
	Under any suboptimal action $a^{\sub}_i$, 
	the agent transitions to $s_{n+2}$ deterministically, and $r(s_i,a^{\sub}_i)=(1-\varepsilon)r_{\max}$, 
	where $\varepsilon \in (0,\frac{1}{2})$ is a parameter specified later. 
	For all actions $a \in \cA$, $r(s_{n+1},a)=r_{\max}$ and $r(s_{n+2},a)=(1-\varepsilon)r_{\max}$.

	Then, 
	%	to learn the optimal policy, the agent needs to distinguish between the optimal action and suboptimal actions from binary segment feedback. 
	the KL divergence of binary observations between the optimal action and suboptimal actions in an episode is 
	\begin{align}
		& \sum_{i=1}^{m} \!\kl\! \sbr{ \bernoulli\sbr{ \mu\sbr{ \frac{(1-\varepsilon) H r_{\max}}{m}} } \Big\| \bernoulli\sbr{ \mu\sbr{  \frac{H r_{\max}}{m}} } } 
		\nonumber\\
		&\overset{\textup{(a)}}{\leq} m \cdot \frac{ \sbr{ \mu\sbr{\frac{(1-\varepsilon) H  r_{\max}}{m}} - \mu\sbr{ \frac{H r_{\max}}{m}} }^2 }{ \mu'\sbr{  \frac{H r_{\max}}{m}} }
		\nonumber\\
		&\overset{\textup{(b)}}{\leq} m \cdot \frac{ \mu'\sbr{ \frac{(1-\varepsilon) Hr_{\max}}{m} }^2 \sbr{\varepsilon \cdot \frac{Hr_{\max}}{m}}^2  }{ \mu'\sbr{ \frac{Hr_{\max}}{m} } } , \label{eq:kl_divergence}
	\end{align}
	Here $\bernoulli(p)$ denotes the Bernoulli distribution with parameter $p$. Inequality (a) uses the facts that $\kl(\bernoulli(p) \| \bernoulli(q)) \leq \frac{(p-q)^2}{q(1-q)}$ and $\mu'(x)=\mu(x)(1-\mu(x))$. Inequality (b) is due to that $\mu'(x)$ is monotonically decreasing when $x > 0$.
	
	Furthermore, we consider the reward scale $Hr_{\max}$ in each episode, and the enumeration over each bandit state $s_i$ ($i \in [n]$) and each possible optimal action $a^*_i \in \cA$ in the lower bound derivation. Then, following the analysis in \citep{auer2002nonstochastic}, to learn the difference between the optimal action and suboptimal actions, the agent must suffer a regret
	\begin{align*}
		&\Omega\sbr{ Hr_{\max} \sqrt{n|\cA|} \cdot \frac{1}{\sqrt{Eq.~\eqref{eq:kl_divergence}}} }
		\\
		&= \Omega\Bigg(  \frac{\sqrt{|\cA| n m}}{\varepsilon}  \sqrt{ \frac{  \mu'\sbr{ \frac{Hr_{\max}}{m} }  }{ \mu'\sbr{ (1-\varepsilon) \frac{Hr_{\max}}{m} }^2 } } \Bigg) .
	\end{align*}
	Recall that $\mu'(x)=\frac{1}{\exp(x)+\exp(-x)+2}$. Let $\varepsilon=\Theta(\frac{1}{\sqrt{K}})$.
	For any constant $c_0 \in (0,\frac{1}{2})$, letting $K$ large enough ($\varepsilon$ small enough) to satisfy $\varepsilon \leq c_0$, then the regret is 
	\begin{align*}
		&\Omega\sbr{  \sqrt{K |\cA| n m} \sqrt{ \exp\sbr{(1-2\varepsilon) \frac{Hr_{\max}}{m}}  } } 
		\\
		&= \Omega\sbr{ \sqrt{K |\cS| |\cA| m}  \exp\sbr{\sbr{\frac{1}{2}-c_0} \frac{Hr_{\max}}{m}}   } . \quad  \square
	\end{align*}

	\subsection{Algorithm $\bitssegtran$ for Unknown Transition}
	
	Now we extend our results to the unknown transition case. 
	
	We develop an efficient algorithm $\bitssegtran$ for binary segment feedback and unknown transition. $\bitssegtran$ includes a transition bonus $p^{pv}_{k-1}$ in posterior reward estimate $\tilde{\theta}^{b}_k$, and replaces visitation indicator $\phi^{\pi}$ by its estimate $\hat{\phi}^{\pi}_{k-1}$. For any $(s,a)$, $\hat{\phi}^{\pi}_{k-1}(s,a)$ is the expected number of times $(s,a)$ is visited in an episode under policy $\pi$ on empirical MDP $\hat{p}_{k-1}$, where $\hat{p}_{k-1}$ is the empirical estimate of transition distribution $p$. 
	Then, $\bitssegtran$ computes the optimal policy via $\argmax_{\pi} (\hat{\phi}^{\pi}_{k-1})^\top \tilde{\theta}^{b}_k$, which can be efficiently solved by any MDP planning algorithm with transition distribution $\hat{p}_{k-1}$ and reward $\tilde{\theta}^{b}_k$.
	We defer the details of $\bitssegtran$ to Appendix~\ref{apx:alg_bi_unknown_tran}, and present its regret performance as follows.
	
	\begin{theorem}\label{thm:ub_bits_tran}
		With probability at least $1-\delta$, for any $K>0$, the regret of algorithm $\bitssegtran$ is bounded by
		\begin{align*}
			\tilde{O} \Bigg(& \exp\sbr{\frac{H r_{\max}}{2m}} \nu(K) \sqrt{|\cS||\cA|} \cdot
			\nonumber \\
			& \bigg( \sqrt{ Km |\cS| |\cA| \max\lbr{ \frac{H^2}{m \alpha \lambda}, 1}  } + H\sqrt{ \frac{K}{\alpha \lambda} }  \bigg)
			\\
			& + \bigg( \nu(K) \sqrt{ \frac{|\cS||\cA|}{\lambda} }  + Hr_{\max} \bigg) |\cS|^2 |\cA|^{\frac{3}{2}} H^{\frac{3}{2}}  \sqrt{ K }
			\Bigg) .
		\end{align*}
	\end{theorem}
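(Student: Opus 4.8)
The plan is to establish Theorem~\ref{thm:ub_bits_tran} by reducing to the known-transition analysis behind Theorem~\ref{thm:ub_bits} and then separately charging the transition estimation error, which produces the extra additive term. I would condition on the intersection of three high-probability events: the MLE confidence event of Lemma~\ref{lemma:confence_interval_proj_free}, giving $|\phi^\top\theta^* - \phi^\top\hat\theta_{k-1}| \le \sqrt\alpha\,\nu(k-1)\,\|\phi\|_{\Sigma_{k-1}^{-1}}$ for every trajectory feature $\phi$; the Thompson-sampling (anti-)concentration event for the Gaussian noise $\xi_k$ already used in Theorem~\ref{thm:ub_bits}; and a transition concentration event $\|p(\cdot|s,a)-\hat p_{k-1}(\cdot|s,a)\|_1 \lesssim \sqrt{|\cS|\,\iota/N_{k-1}(s,a)}$ for all visited $(s,a)$, where $\iota$ is a logarithmic factor and $N_{k-1}(s,a)$ is the visitation count before episode $k$.

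First I would write the per-episode regret as $(\phi^{\pi^*})^\top\theta^* - (\phi^{\pi^k})^\top\theta^*$ and insert the empirical-MDP optimistic value $(\hat\phi^{\pi^k}_{k-1})^\top\tilde\theta^b_k$, so that
\[
V^*_1(s_1)-V^{\pi^k}_1(s_1) = \underbrace{\big[(\phi^{\pi^*})^\top\theta^* - (\hat\phi^{\pi^k}_{k-1})^\top\tilde\theta^b_k\big]}_{\text{optimism}} + \underbrace{\big[(\hat\phi^{\pi^k}_{k-1})^\top\tilde\theta^b_k - (\phi^{\pi^k})^\top\theta^*\big]}_{\text{estimation}} .
\]
The optimism term is handled by the same posterior-sampling argument as in Theorem~\ref{thm:ub_bits}, now applied on the empirical MDP and augmented by the transition bonus $p^{pv}_{k-1}$, which is chosen so that $(\hat\phi^{\pi^*}_{k-1})^\top\tilde\theta^b_k$ dominates $(\phi^{\pi^*})^\top\theta^*$ up to lower-order terms. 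The estimation term I would split as $(\hat\phi^{\pi^k}_{k-1})^\top(\tilde\theta^b_k-\theta^*) + (\hat\phi^{\pi^k}_{k-1}-\phi^{\pi^k})^\top\theta^*$; bounding the pure reward component $|(\hat\phi^{\pi^k}_{k-1})^\top(\hat\theta_{k-1}-\theta^*+\xi_k)|$ through the confidence interval and summing $\sum_k\|\hat\phi^{\pi^k}_{k-1}\|_{\Sigma_{k-1}^{-1}}$ by the elliptical-potential lemma reproduces exactly the first term $\exp(\tfrac{Hr_{\max}}{2m})\nu(K)\sqrt{|\cS||\cA|}(\cdots)$ of Theorem~\ref{thm:ub_bits}.

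The extra additive term comes from the transition-induced mismatches, namely the bracket $(\hat\phi^{\pi^k}_{k-1}-\phi^{\pi^k})^\top\theta^*$ together with the transition-bonus contribution $(\hat\phi^{\pi^k}_{k-1})^\top p^{pv}_{k-1}$ and the analogous mismatch inside the optimism term. I would bound each $(\hat\phi^\pi_{k-1}-\phi^\pi)^\top v$ by a simulation (value-difference) lemma, which telescopes the visitation gap into one-step transition discrepancies and yields $\lesssim H\,\|v\|_\infty\sum_{s,a}\phi^\pi(s,a)\,\|p(\cdot|s,a)-\hat p_{k-1}(\cdot|s,a)\|_1$. Substituting the transition concentration bound and summing over episodes via the pigeonhole/potential estimate $\sum_k\sum_{s,a}\phi^{\pi^k}(s,a)/\sqrt{N_{k-1}(s,a)} \lesssim \sqrt{|\cS||\cA|HK}$, the factor $H$ from the simulation lemma, the $\sqrt{|\cS|}$ from the $\ell_1$ transition error, and the norm conversions on the feature and reward vectors combine into the polynomial factor $|\cS|^2|\cA|^{3/2}H^{3/2}\sqrt K$. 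The two reward-vector scales, $\|\theta^*\|_\infty \le r_{\max}$ (horizon-scaled to $Hr_{\max}$) and the aggregate magnitude $\nu(K)\sqrt{|\cS||\cA|/\lambda}$ of the sampled-plus-bonus estimate $\tilde\theta^b_k$, give the prefactor $(\nu(K)\sqrt{|\cS||\cA|/\lambda}+Hr_{\max})$, completing the second additive term.

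The main obstacle I anticipate is twofold. First, the covariance $\Sigma_{k-1}$ is populated by segment features $\phi^{\tau^{k'}_i}$ realized on the true MDP, whereas the policy is planned and its width measured through the empirical visitation $\hat\phi^{\pi^k}_{k-1}$; bridging these requires controlling the deviation between realized segment features and their expectations, and, as noted in the introduction, segments within one episode are dependent and do not form a martingale difference sequence, so this concentration must be organized at the episode level rather than segment-by-segment while keeping the elliptical-potential bookkeeping valid. Second, the transition bonus $p^{pv}_{k-1}$ must be simultaneously large enough to restore optimism on $\pi^*$ against the transition error and small enough that its summed contribution stays within the claimed additive term; calibrating it so that the Gaussian reward sampling and the transition correction do not interfere is where most of the delicate work lies.
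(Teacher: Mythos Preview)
Your proposal is correct and follows essentially the same route as the paper: condition on the MLE confidence event, a Hoeffding-type transition event (the paper's $b^{pv}_{k-1}$ is calibrated to $|(\hat p_{k-1}-p)^\top V^*_{h+1}|$ rather than $\|\hat p_{k-1}-p\|_1$, but this is cosmetic), and a KL transition event for the $\ell_1$ visitation gap; then decompose regret into optimism plus estimation, restore optimism via the bonus in Lemma~\ref{lemma:ts_optimism}, and charge the remaining pieces to the elliptical potential and to $\sum_k\|\hat\phi^{\pi^k}_{k-1}-\phi^{\pi^k}\|_1$ via Lemmas~\ref{lemma:phi_hat-phi_l1} and~\ref{lemma:phi_b_pv}. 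The obstacle you anticipate about $\hat\phi^{\pi^k}_{k-1}$ versus segment features in $\Sigma_{k-1}$ is resolved in the paper by the one-line triangle inequality $\|\hat\phi^{\pi^k}_{k-1}\|_{\Sigma_{k-1}^{-1}}\le \|\phi^{\pi^k}\|_{\Sigma_{k-1}^{-1}}+\tfrac{1}{\sqrt{\alpha\lambda}}\|\hat\phi^{\pi^k}_{k-1}-\phi^{\pi^k}\|_1$, after which the first term is handled exactly as in the known-transition proof (Azuma at the episode level, then $\|\phi^{\tau}\|_{\Sigma_{k-1}^{-1}}\le\sum_i\|\phi^{\tau_i}\|_{\Sigma_{k-1}^{-1}}$ before the elliptical potential on segments) and the second term is absorbed into the additive transition term; no new segment-dependence argument is needed.
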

	
	Similar to algorithm $\bitsseg$ (Theorem~\ref{thm:ub_bits}), the regret bound of algorithm $\bitssegtran$ also has a factor $\exp(\frac{H r_{\max}}{2m})$. When the number of segments $m$ increases, the regret of $\bitssegtran$ significantly decreases. Compared to $\bitsseg$, the regret of $\bitssegtran$ has an additional polynomial term in $|\cS|$, $|\cA|$, $H$ and $\sqrt{K}$, which is incurred due to learning the unknown transition distribution.
	
	\begin{algorithm*}[t]
		\caption{$\edlinucbseg$} \label{alg:ed_linucb_seg}
		\begin{algorithmic}[1]
			\STATE {\bfseries Input:} $\delta,\delta':=\frac{\delta}{3},\lambda:=\frac{H}{r_{\max}^2 m},$ rounding procedure $\round$, rounding approximation parameter $\gamma:=\frac{1}{10}$. $\beta(k):=\sqrt{\frac{H|\cS||\cA|}{m} \log(1+\frac{kH^2}{\lambda|\cS||\cA|m})+2\log(\frac{1}{\delta'})} + r_{\max} \sqrt{\lambda|\cS||\cA|}, \forall k>0$.
			\STATE Let $w^* \in \triangle_{\Pi}$ and $z^*$ be the optimal solution and optimal value of the optimization:
			\begin{align}
				\min_{w \in \triangle_{\Pi}}   \Bigg\| \Bigg( \sum_{\pi \in \Pi} w(\pi) \bigg( \sum_{i=1}^{m} \ex_{\tau_i \sim \pi}\mbr{\phi^{\tau_i} (\phi^{\tau_i})^\top} \bigg) \Bigg)^{-1} \Bigg\| \label{eq:ed_opt}
			\end{align}  \label{line:e_optimal_design}\\
			\STATE $K_0 \leftarrow \lceil \max\{ 26 (1+\gamma)^2 (z^*)^2 H^4 \log(\frac{2|\cS||\cA|}{\delta'}),\ \frac{|\cS||\cA|}{\gamma^2} \} \rceil$  \label{line:K_0}\;
			\STATE $(\pi^1,\dots,\pi^{K_0}) \leftarrow \round(\{\sum_{i=1}^{m} \ex_{\tau_i \sim \pi}\mbr{\phi^{\tau_i} (\phi^{\tau_i})^\top} \}_{\pi \in \Pi}, w^*, \gamma, K_0)$ \label{line:round}\;
			\STATE Play $K_0$ episodes with policies $\pi^1,\dots,\pi^{K_0}$. Observe trajectories $\tau^1,\dots,\tau^{K_0}$ and rewards $\{R^1_i\}_{i=1}^{m},\dots,\{R^{K_0}_i\}_{i=1}^{m}$  \label{line:initial_exploration}\;
			\FOR{$k=K_0+1,\dots,K$}
			\STATE $\hat{\theta}_{k-1} \leftarrow (\lambda I + \sum_{k'=1}^{k-1} \sum_{i=1}^{m} \phi^{\tau^{k'}_i} (\phi^{\tau^{k'}_i})^\top)^{-1} \sum_{k'=1}^{k-1} \sum_{i=1}^{m} \phi^{\tau^{k'}_i} R^{k'}_i $  \label{line:hat_theta}\;
			\STATE $\Sigma_{k-1} \leftarrow \lambda I + \sum_{k'=1}^{k-1} \sum_{i=1}^{m} \phi^{\tau^{k'}_i} (\phi^{\tau^{k'}_i})^\top$  \label{line:Sigma}\;
			\STATE $\pi^k \leftarrow \argmax_{\pi \in \Pi} ((\phi^{\pi})^\top \hat{\theta}_{k-1} + \beta(k-1) \cdot  \|\phi^{\pi}\|_{(\Sigma_{k-1})^{-1}} )$  \label{line:pi_k_elinucb}\;
			\STATE Play episode $k$ with policy $\pi^k$. Observe trajectory $\tau^k$ and sum segment feedback $\{R^k_i\}_{i=1}^{m}$  \label{line:play_episode_elinucb}\;
			\ENDFOR
		\end{algorithmic}
	\end{algorithm*}

	\section{Reinforcement Learning with Sum Segment Feedback}
	
	In this section, we turn to RL with sum segment feedback. 
	Different from prior sum trajectory feedback algorithm~\citep{efroni2021reinforcement}, which directly uses the least squares estimator and has a suboptimal regret bound, we develop an algorithm $\edlinucbseg$ for the known transition case, which adopts experimental design to perform an initial exploration and achieves a near-optimal regret with respect to $H$ and $m$. To validate the optimality, we further establish a regret lower bound. Moreover, we design an algorithm $\linucbtranseg$ equipped with a variance-aware transition bonus to handle the unknown transition case.
	
	\subsection{Algorithm $\edlinucbseg$ for Known Transition}

	If we regard visitation indicators $\phi^{\pi^k_i}$ as feature vectors and $\theta^*$ as the reward parameter, RL with sum segment feedback and known transition is similar to linear bandits.
	
	Building upon the classic linear bandit algorithm $\mathtt{LinUCB}$~\citep{abbasi2011improved}, our algorithm $\edlinucbseg$ performs the E-optimal design~\citep{pukelsheim2006optimal} to conduct an initial exploration. This scheme ensures sufficient coverage of the covariance matrix and further sharpens the norm under the inverse of the covariance matrix, which enables an improved regret bound over prior trajectory feedback algorithm~\cite{efroni2021reinforcement}. 
	%Then, it estimates the reward parameter via optimistic least squares with visitation indicators as features.
	
	Algorithm~\ref{alg:ed_linucb_seg} shows the procedure of $\edlinucbseg$. Specifically, $\edlinucbseg$ first performs the E-optimal design to compute a distribution on policies $w^*$, which maximizes the minimum eigenvalue of the feature covariance matrix $\sum_{\pi \in \Pi} w(\pi) ( \sum_{i=1}^{m} \ex_{\tau_i \sim \pi} [\phi^{\tau_i} (\phi^{\tau_i})^\top] )$ (Line~\ref{line:e_optimal_design}). We assume that there exists a policy distribution $w$ under which this matrix is invertible. Then, $\edlinucbseg$ calculates the number of samples $K_0$ for initial exploration according to the optimal value of the E-optimal design (Line~\ref{line:K_0}).

	Then, in Line~\ref{line:round}, $\edlinucbseg$ calls a rounding procedure $\round$~\citep{allen2021near} to transform sampling distribution $w^*$ into discrete sampling sequence $(\pi^1,\dots,\pi^{K_0})$, which satisfies (see Appendix~\ref{apx:rounding_procedure} for more details of $\round$)
	\begin{align*}
		&\Bigg\| \Bigg( \sum_{k=1}^{K_0} \bigg( \sum_{i=1}^{m} \ex_{\tau_i \sim \pi_k}\mbr{\phi^{\tau_i} (\phi^{\tau_i})^\top} \bigg) \Bigg)^{-1} \Bigg\| 
		\\
		&\leq \! (1 \!+\! \gamma)  \Bigg\|  \Bigg(\! K_0\!\! \sum_{\pi \in \Pi} \! w^*(\pi) \bigg( \!\sum_{i=1}^{m} \! \ex_{\tau_i \sim \pi}\mbr{\phi^{\tau_i} (\phi^{\tau_i})^\top} \!\bigg) \Bigg)^{\!\!\!-1} \Bigg\| .
	\end{align*}
	After that, $\edlinucbseg$ plays $K_0$ episodes with $(\pi^1,\dots,\pi^{K_0})$ to perform initial exploration (Line~\ref{line:initial_exploration}).
	Owing to the E-optimal design, the covariance matrix of initial exploration $\Sigma_{K_0}$ has an optimized minimum eigenvalue, and then $\|\phi^{\pi}\|_{(\Sigma_{k-1})^{-1}}$ has a sharp upper bound for any $k>K_0$. This is the key to the optimality of $\edlinucbseg$.
	% with respect to $H$ and $m$.
	
	% After initial exploration, 
	In each episode $k>K_0$, $\edlinucbseg$ first calculates the least squares reward estimate $\hat{\theta}_{k-1}$ using past reward observations and covariance matrix $\Sigma_{k-1}$ (Lines~\ref{line:hat_theta}-\ref{line:Sigma}). Then, it computes the optimal policy with reward estimate $\hat{\theta}_{k-1}$ and reward confidence bonus $\|\phi^{\pi}\|_{(\Sigma_{k-1})^{-1}}$ (Line~\ref{line:pi_k_elinucb}).  $\edlinucbseg$ plays episode $k$ with the computed optimal policy $\pi^k$, and collects trajectory $\tau^k$ and reward observations on each segment  $\{R^k_i\}_{i=1}^{m}$ (Line~\ref{line:play_episode_elinucb}).
	Below we present a regret upper bound for algorithm $\edlinucbseg$.
	
	%\revision{
		%Here in the sum feedback setting, we adopt a UCB-type algorithm to achieve the optimality with respect to $H$ and $m$. By contrast, in the aforementioned binary feedback setting (Section~\ref{sec:binary_feedback}), we use a TS-type algorithm to achieve computational efficiency, since our focus there is mainly to reveal the inevitable exponential dependency on $\frac{H r_{\max}}{m}$ in the regret result, instead of absolute tightness.	
		%}
	
	\begin{theorem} \label{thm:ub_sum_known_tran}
		With probability at least $1-\delta$, for any $K>0$, the regret of algorithm $\edlinucbseg$ is bounded by
		\begin{align*}
			O\Bigg(& |\cS| |\cA| \sqrt{HK} \log\sbr{\sbr{1+\frac{KH r_{\max}}{|\cS||\cA|m}} \frac{1}{\delta}} 
			\\
			& + (z^*)^2 H^5  \log\sbr{\frac{|\cS||\cA|}{\delta}} + |\cS||\cA|H \Bigg) .
		\end{align*}
	\end{theorem}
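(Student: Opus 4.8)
The plan is to run the standard optimism-based (LinUCB) regret analysis, but with two modifications dictated by the segment structure and the E-optimal design: a self-normalized confidence bound tailored to the segment noise, and an eigenvalue guarantee for the exploration phase that makes the elliptical-potential argument go through with the right constant. I would begin with the confidence bound. Each segment reward $R^k_i$ equals $(\phi^{\tau^k_i})^\top\theta^*$ plus a sum of $\frac{H}{m}$ independent $1$-sub-Gaussian noises, so the segment noise is $\sqrt{H/m}$-sub-Gaussian. Applying the self-normalized martingale concentration inequality of \citet{abbasi2011improved} to the least-squares estimate $\hat\theta_{k-1}$ (Line~\ref{line:hat_theta}) with design matrix $\Sigma_{k-1}$ (Line~\ref{line:Sigma}), together with a $\log\det$ bound using $\|\phi^{\tau_i}\|_2^2\le (H/m)^2$ and $\|\theta^*\|_2\le r_{\max}\sqrt{|\cS||\cA|}$, yields that with probability at least $1-\delta'$, simultaneously for all $k$, $\|\hat\theta_{k-1}-\theta^*\|_{\Sigma_{k-1}}\le \beta(k-1)$; this is exactly the stated form of $\beta(k)$. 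On this event, optimism of the index in Line~\ref{line:pi_k_elinucb} gives the usual per-episode bound $V^*_1(s_1)-V^{\pi^k}_1(s_1)=(\phi^{\pi^*}-\phi^{\pi^k})^\top\theta^*\le 2\beta(k-1)\,\|\phi^{\pi^k}\|_{\Sigma_{k-1}^{-1}}$ for every exploitation episode $k>K_0$.

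Second, I would control the exploration phase. The rounding guarantee stated just before the theorem, combined with the E-optimal objective \eqref{eq:ed_opt}, ensures that the expected design matrix $\sum_{k\le K_0}\sum_i \ex_{\tau_i\sim\pi^k}[\phi^{\tau_i} (\phi^{\tau_i})^\top]$ has minimum eigenvalue at least $K_0/((1+\gamma)z^*)$. A matrix concentration inequality for the per-episode summands $\sum_i\phi^{\tau^k_i}(\phi^{\tau^k_i})^\top$, with $K_0$ chosen exactly as in Line~\ref{line:K_0} so as to dominate their range and variance, then gives $\lambda_{\min}(\Sigma_{K_0})\ge \tfrac{K_0}{2(1+\gamma)z^*}$ with high probability. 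For the regret accounting, the $K_0$ exploration episodes each cost at most $2Hr_{\max}$, contributing $2Hr_{\max}K_0 = O((z^*)^2H^5\log(|\cS||\cA|/\delta)+|\cS||\cA|H)$, which produces the last two terms of the bound.

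Third, for the exploitation phase I would bound $\sum_{k>K_0} 2\beta(k-1)\|\phi^{\pi^k}\|_{\Sigma_{k-1}^{-1}}$. Writing $\phi^{\pi^k}=\sum_{i=1}^m \ex_{\tau_i\sim\pi^k}[\phi^{\tau_i}]$ and using the triangle inequality, Cauchy-Schwarz over the $m(K-K_0)$ segments, and Jensen's inequality $\|\ex[\phi^{\tau_i}]\|_{\Sigma_{k-1}^{-1}}^2\le \ex\|\phi^{\tau_i}\|_{\Sigma_{k-1}^{-1}}^2$, the sum reduces to $2\beta(K)\sqrt{mK}\,\big(\sum_{k>K_0}\sum_i \ex\|\phi^{\tau^k_i}\|_{\Sigma_{k-1}^{-1}}^2\big)^{1/2}$. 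I would replace the expectation by the realized features up to a martingale-difference (Azuma) concentration term, and then apply a segment-level elliptical-potential lemma to get $\sum_{k>K_0}\sum_i\|\phi^{\tau^k_i}\|_{\Sigma_{k-1}^{-1}}^2 = O(|\cS||\cA|\log(1+\tfrac{KH^2}{\lambda|\cS||\cA|m}))$. Substituting $\beta(K)=O(\sqrt{H|\cS||\cA|/m}\cdot\sqrt{\log(\cdot)})$ and $\lambda=H/(r_{\max}^2 m)$ yields the leading term $O(|\cS||\cA|\sqrt{HK}\log(\cdot))$, which is notably $m$-independent.

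The hard part will be the last step's two coupled issues, both resolved by the E-optimal design. The covariance $\Sigma_{k-1}$ updates once per episode rather than once per segment, so the elliptical-potential lemma does not apply verbatim; and the regret/index features are the \emph{expected} policy features $\phi^{\pi^k}$, while the design uses the \emph{realized} segment features $\phi^{\tau^k_i}$. The eigenvalue guarantee $\lambda_{\min}(\Sigma_{K_0})\gtrsim K_0/z^*\gg H^2$ is what makes both harmless: it forces every confidence width to satisfy $\|\phi^{\tau_i}\|_{\Sigma_{k-1}^{-1}}^2\le H^2/\lambda_{\min}(\Sigma_{K_0})\le 1$, so the potential lemma applies with the correct constant, and it renders the within-episode rank-$m$ update negligible so that $\Sigma_{k-1}\approx\Sigma_k$ across the segments of episode $k$. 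This is precisely the mechanism by which the E-optimal design removes the extra $\sqrt{H}$ factor present in the trajectory-feedback analysis of \citet{efroni2021reinforcement}.
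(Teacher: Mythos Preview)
Your proposal is correct and follows essentially the same approach as the paper: the self-normalized confidence bound for $\hat\theta_k$, the matrix-concentration argument showing $\lambda_{\min}(\Sigma_{K_0})\ge H^2$ from the E-optimal design, the optimism decomposition, the Azuma step linking $\phi^{\pi^k}$ to realized trajectory features, and the batched elliptical-potential lemma enabled by $\sum_i\|\phi^{\tau^k_i}\|_{\Sigma_{k-1}^{-1}}^2\le 1$ are exactly the ingredients the paper uses. The only cosmetic difference is that the paper applies Jensen to the full-trajectory feature $\phi^{\tau}$ first and then the triangle inequality over segments, whereas you split into segments first and then apply Jensen; also, the mechanism you describe as ``$\Sigma_{k-1}\approx\Sigma_k$'' is more precisely the determinant inequality $\det(\Sigma_k)/\det(\Sigma_{k-1})\ge 1+\sum_i\|\phi^{\tau^k_i}\|_{\Sigma_{k-1}^{-1}}^2$ together with $x\le 2\log(1+x)$ on $[0,1]$, but your conclusion is the same.
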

	
	Surprisingly, under sum feedback, when the number of segments $m$ increases, the regret bound does not decrease significantly, e.g., at a rate of $\frac{1}{\sqrt{m}}$ or $\frac{1}{m}$.
	While this looks surprising at the first glance, we discover an \emph{intuition} through analysis: The performance in RL is measured by the expected reward sum of an episode, namely, we only need to accurately estimate the expected reward sum of an episode.
	When the number of segments $m$ increases, while we obtain more observations, the segment features $\phi^{\tau^{k'}_i}$ contributed to covariance matrix $\Sigma_{k}$ shrink, which makes the reward estimation uncertainty $\|\phi^{\pi}\|_{(\Sigma_{k})^{-1}}$ inflate. 
	When we focus on the estimation performance of the expected reward sum of an episode, these two effects cancel out with each other, and the regret result is not  influenced by $m$ distinctly.
	
	When $m=1$, our problem reduces to RL with sum trajectory feedback~\citep{efroni2021reinforcement}, and our result \emph{improves} theirs by a factor of $\sqrt{H}$ and achieves the optimality with respect to $H$. This improvement comes from the fact that we conduct the E-optimal design and perform an initial exploration to guarantee that $\|\phi^{\pi}\|_{(\Sigma_{k-1})^{-1}} \leq 1$, instead of $\|\phi^{\pi}\|_{(\Sigma_{k-1})^{-1}} \leq \frac{H}{\sqrt{\lambda}}$ as used in~\citep{efroni2021reinforcement}.
	%, and then we use a refined elliptical potential bound in analysis.
	
	Next, we study the lower bound to see if the number of segments $m$ really does not influence the regret bound much.
	
	\begin{figure*}[t]
		\centering   
		\subfigure[Binary segment feedback]{
			\includegraphics[width=0.28\textwidth]{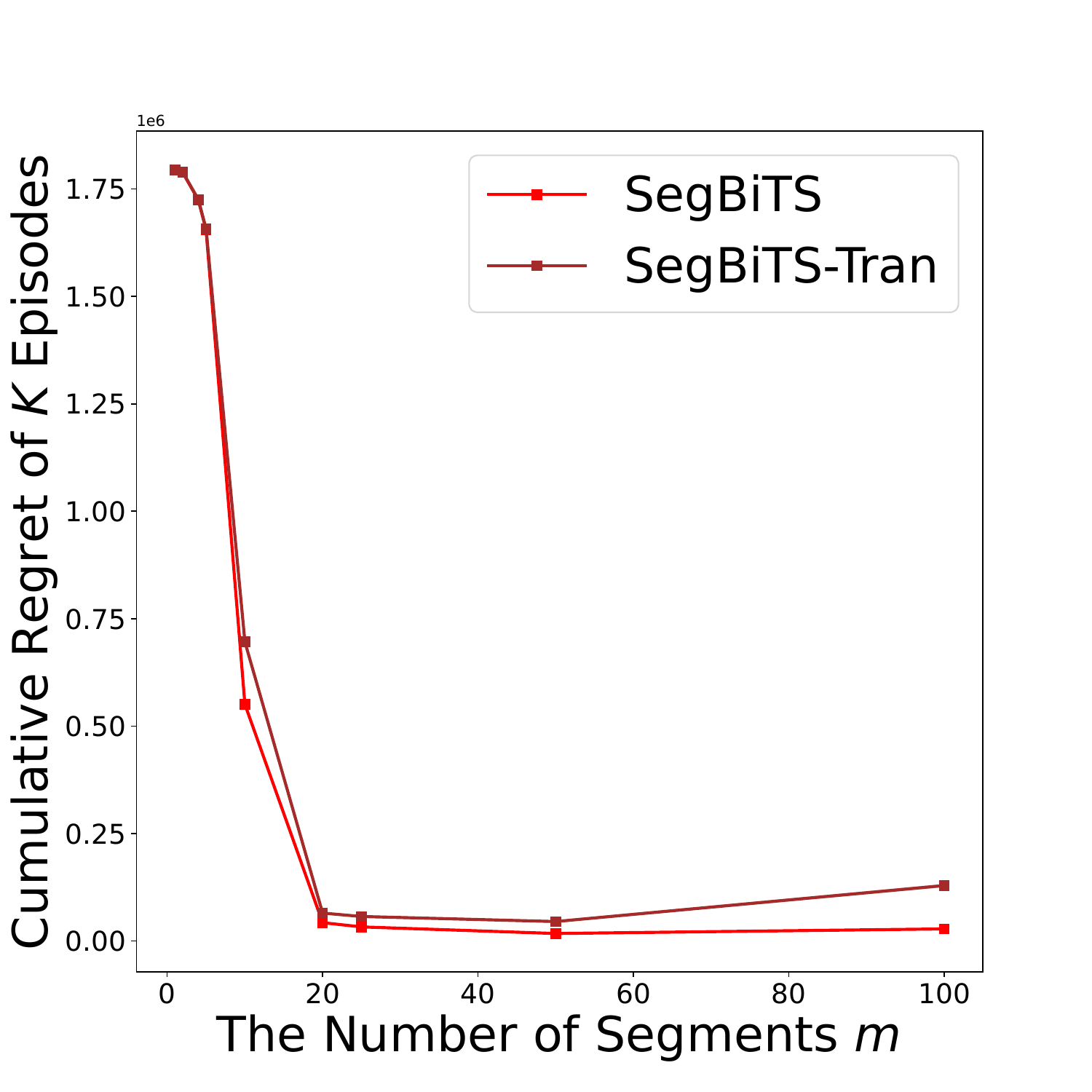} \label{fig:experiment_bi}
		}
		\subfigure[Sum segment feedback]{
			\includegraphics[width=0.28\textwidth]{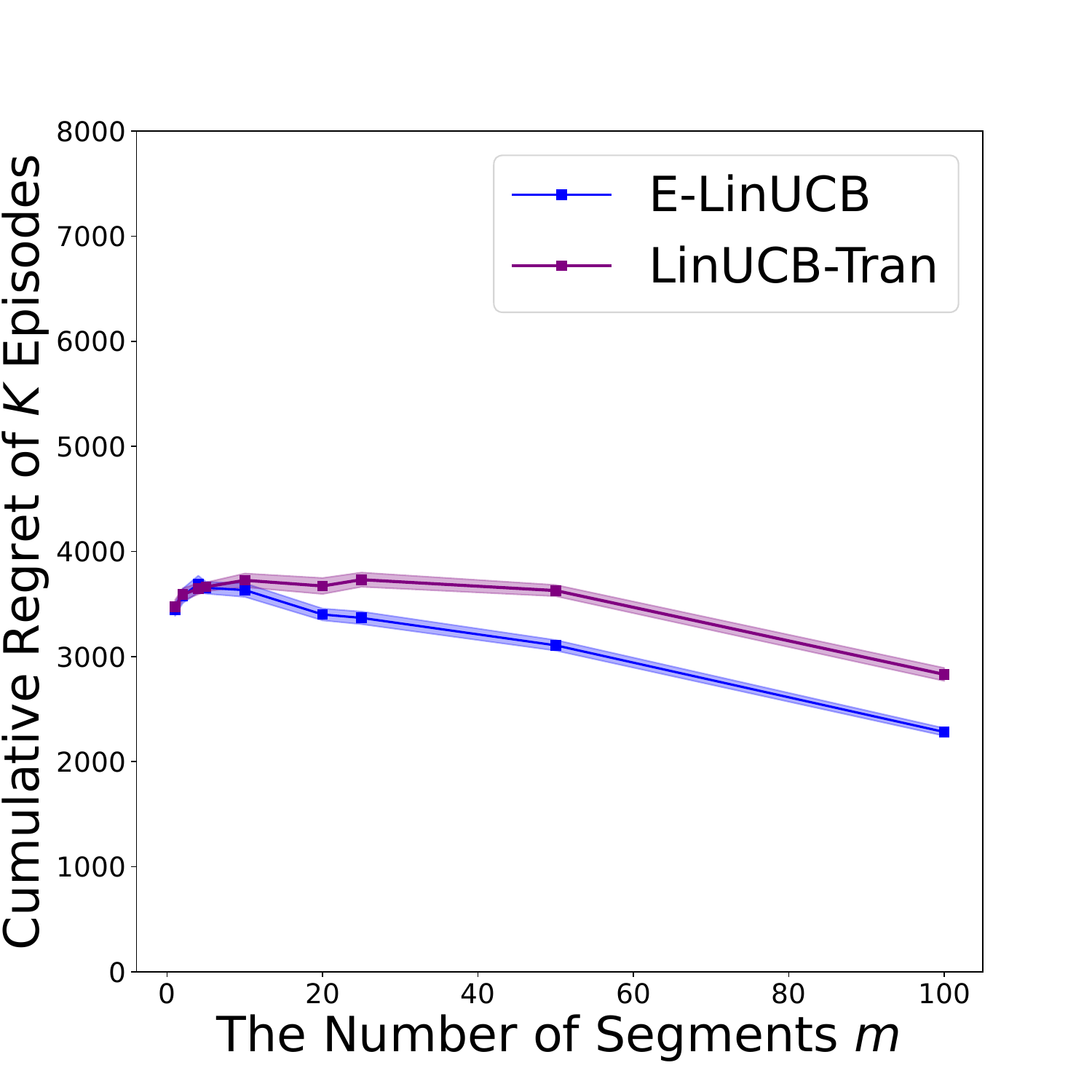}
			\includegraphics[width=0.28\textwidth]{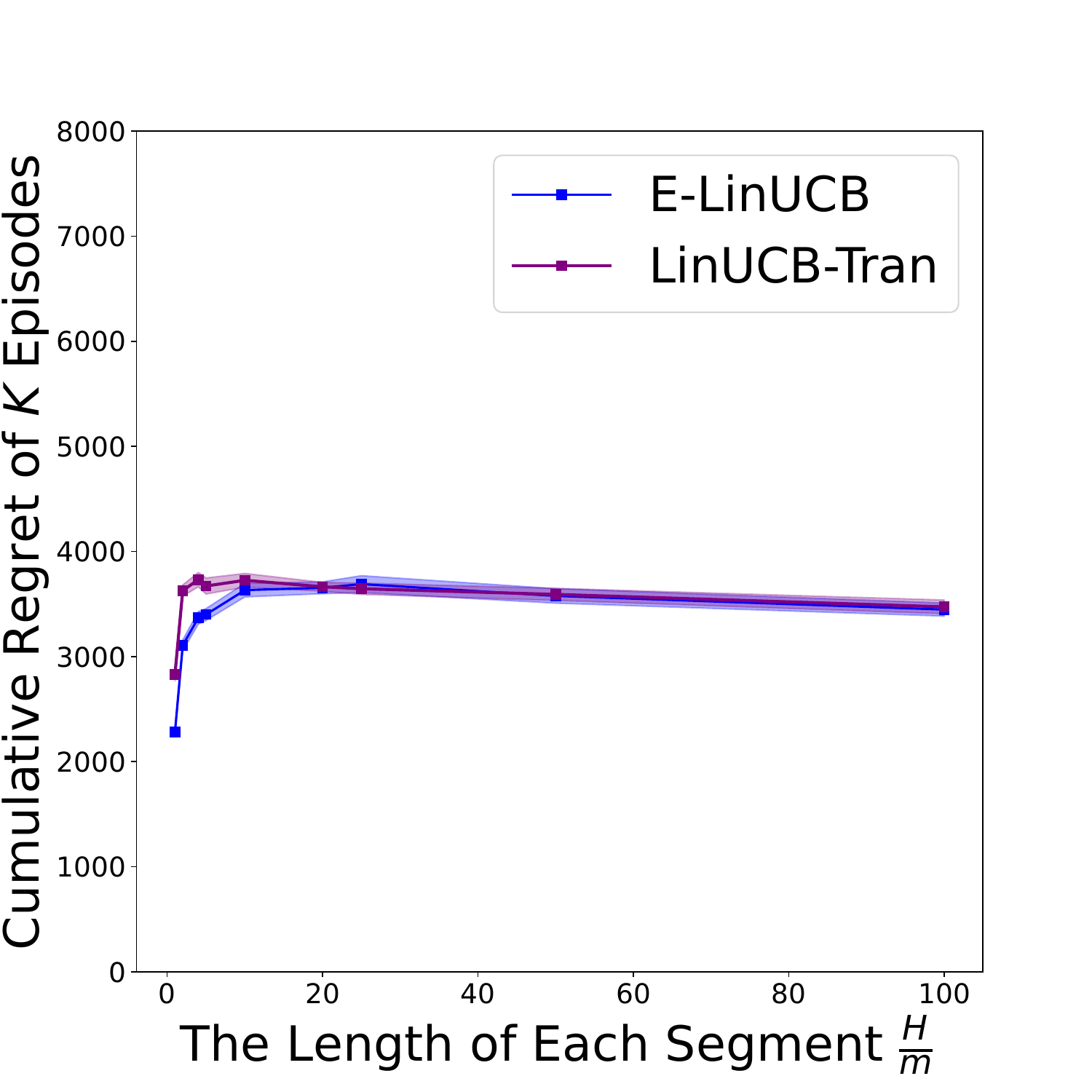} \label{fig:experiment_sum}
		}
		\caption{Experimental results for RL with binary or sum segment feedback.
		}
	\end{figure*}
	
	\subsection{Regret Lower Bound for Known Transition}
	
	We establish a lower bound for RL with sum segment feedback and known transition as follows.
	
	\begin{theorem} \label{thm:lb_sum_known_tran}
		Consider RL with sum segment feedback and known transition. There exists a distribution of instances where the regret of any algorithm must be
		\begin{align*}
			\Omega\sbr{ \sqrt{|\cS||\cA|HK} } .
		\end{align*}
	\end{theorem}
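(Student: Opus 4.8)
The plan is to exhibit a distribution over instances with a fixed, \emph{known} transition kernel and to reduce the regret to that of $\Theta(|\cS|)$ independent $|\cA|$-armed bandit sub-problems. The decisive structural choice is that the per-step reward gap is spread \emph{uniformly over the whole horizon}, so that splitting an episode into $m$ segments can neither concentrate nor increase the information the agent extracts; this is exactly the mechanism that makes the bound independent of $m$.

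Concretely, I would take $n=\Theta(|\cS|)$ ``bandit'' states $s_1,\dots,s_n$, each equipped with a self-loop under every action (so the transition kernel is action-independent, identical across all instances in the family, and hence known to and uninformative for the agent). The initial distribution $\rho$ is uniform over $s_1,\dots,s_n$, so that episode $k$ stays in its start state for all $H$ steps. For each $i$ a hidden optimal action $a_i^\ast$ is drawn uniformly from $\cA$, and rewards are set to $r(s_i,a_i^\ast)=r_{\max}$ and $r(s_i,a)=(1-\varepsilon)r_{\max}$ for $a\neq a_i^\ast$, with i.i.d.\ $\cN(0,1)$ per-step noise and a parameter $\varepsilon\in(0,\tfrac12)$ fixed later. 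Only the rewards differ between instances. A policy that always plays $a_i^\ast$ in $s_i$ has value $Hr_{\max}$, while each step playing a suboptimal action costs $\varepsilon r_{\max}$; a segment of length $H/m$ traversed with the optimal action yields feedback $\cN(\tfrac{H}{m}r_{\max},\tfrac{H}{m})$.

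The core of the argument is the information computation. Using the divergence decomposition, the KL divergence between the full observation laws of two instances that differ only in whether $a$ or $a'$ is optimal in $s_i$ factorizes over episodes and then over segments, and each segment of an episode contributes
\begin{align*}
\frac{\sbr{(c_{j,a}-c_{j,a'})\,\varepsilon r_{\max}}^2}{2\,(H/m)} \le \frac{(H/m)\,(c_{j,a}+c_{j,a'})\,\varepsilon^2 r_{\max}^2}{2\,(H/m)},
\end{align*}
where $c_{j,a},c_{j,a'}\le H/m$ count the steps in segment $j$ on which the agent plays $a$ resp.\ $a'$; the factor $H/m$ cancels, so the per-episode KL is at most $\tfrac{\varepsilon^2 r_{\max}^2}{2}$ times the number of ``distinguishing'' plays and, crucially, \emph{does not depend on $m$}. (When the agent commits to one action throughout, this is an equality and the per-episode KL equals $\tfrac{\varepsilon^2 r_{\max}^2 H}{2}=\tfrac{(\varepsilon r_{\max}H/m)^2}{2(H/m)}\cdot m$, making the cancellation transparent.) Plugging this $m$-free KL bound into the standard many-armed lower bound of \citet{auer2002nonstochastic}, each state $s_i$ behaves like an $|\cA|$-armed bandit run for $N_i=\Theta(K/n)$ episodes and forces regret $\Omega(\sqrt{|\cA|H N_i})$; summing over $i$ with $N_i$ concentrating around $K/n$ gives $\Omega(\sqrt{|\cA|H}\cdot\sqrt{nK})=\Omega(\sqrt{|\cS||\cA|HK})$ once $\varepsilon$ is tuned to $\Theta\big(\tfrac{1}{r_{\max}}\sqrt{|\cS||\cA|/(HK)}\big)$, which requires $K\gtrsim |\cS||\cA|/(r_{\max}^2 H)$ so that $\varepsilon<\tfrac12$.

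The main obstacle is precisely establishing the $m$-independence of the information in the presence of general, non-committing policies: because each segment only reveals a \emph{sum} of $H/m$ per-step rewards, the individual actions played within a segment are confounded, and one must verify that this aggregation never lets the agent gain over the finest ($m=H$) feedback. The bound above handles this, and it is worth noting an alternative, fully rigorous shortcut that sidesteps the bookkeeping: any algorithm receiving $m$-segment feedback can be simulated by one receiving per-step ($m=H$) feedback by re-summing the rewards within each segment, so the minimax regret is non-increasing as the feedback becomes finer; hence the classic-RL ($m=H$) lower bound $\Omega(\sqrt{|\cS||\cA|HK})$ for this construction lower-bounds the regret for \emph{every} $m$, which is exactly the claimed statement.
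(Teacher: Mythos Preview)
Your proposal is correct and follows the same high-level strategy as the paper: build a family of instances that decomposes into $\Theta(|\cS|)$ independent $|\cA|$-armed bandit sub-problems, show that the per-episode KL between neighbouring instances is $\Theta(H\varepsilon^2 r_{\max}^2)$ independently of $m$, and then invoke the \citet{auer2002nonstochastic} argument with $\varepsilon=\Theta\!\big(r_{\max}^{-1}\sqrt{|\cS||\cA|/(HK)}\big)$. The instantiations differ: the paper routes each bandit state to one of two absorbing states so that the first action commits the agent for all $H$ steps, which reduces the KL computation to a one-line Gaussian identity; you instead use self-loops, which keeps the transition kernel literally identical across every instance in the family---arguably the cleaner choice for a \emph{known-transition} lower bound---at the price of having to bound the KL for policies that mix actions within a segment (your inequality $(c_{j,a}-c_{j,a'})^2\le (H/m)(c_{j,a}+c_{j,a'})$ handles this). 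Your simulation/data-processing shortcut---segment feedback is a deterministic coarsening of per-step feedback, so the $m=H$ lower bound is automatically a lower bound for every $m$---is a genuinely different and shorter route that the paper does not take, and it removes the need for any segment-level KL bookkeeping.
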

	
	Theorem~\ref{thm:lb_sum_known_tran} demonstrates that our regret upper bound for algorithm $\edlinucbseg$ (Theorem~\ref{thm:ub_sum_known_tran}) is optimal with respect to $H$ and $m$ when ignoring logarithmic factors. In addition, this lower bound corroborates that the number of segments $m$ does not impact the regret result in essence.

	\subsection{Algorithm $\linucbtranseg$ for Unknown Transition}
	
	Now we investigate RL with sum segment feedback in the unknown transition case. 
	
	For unknown transition, we design an algorithm $\linucbtranseg$, which establishes a variance-aware uncertainty bound for the estimated visitation indicator $\hat{\phi}^{\pi}_{k}$, and incorporates this uncertainty bound into exploration bonuses. In analysis, we handle the estimation error of visitation indicators $\|\hat{\phi}_{k}^{\pi} - \phi^{\pi}\|_1$ by this variance-aware uncertainty bound, which enables us to achieve a near-optimal regret in terms of $H$. The details of $\linucbtranseg$ are deferred to Appendix~\ref{apx:alg_sum_unknown_tran}, and we state the regret performance of algorithm $\linucbtranseg$ below.
	
	\begin{theorem} \label{thm:ub_sum_unknown_tran}
		With probability at least $1-\delta$, for any $K>0$, the regret of algorithm $\linucbtranseg$ is bounded by 
		\begin{align*}
			\tilde{O} \sbr{ (1+r_{\max}) |\cS|^{\frac{5}{2}} |\cA|^2 H \sqrt{K} } .
		\end{align*}
	\end{theorem}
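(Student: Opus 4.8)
The plan is to run the standard optimism-based (UCB) regret argument, but with the two coupled error sources—reward estimation and transition estimation—handled together. The starting point is the linear representation $V^{\pi}_1(s_1) = (\phi^{\pi})^\top \theta^*$, which turns the regret into $\cR(K) = \sum_{k} ((\phi^{\pi^*})^\top \theta^* - (\phi^{\pi^k})^\top \theta^*)$. Since $\linucbtranseg$ selects $\pi^k$ greedily with respect to an index of the form $(\hat\phi^{\pi}_{k-1})^\top \hat\theta_{k-1}$ plus exploration bonuses, I would first establish two high-probability concentration events: (i) a self-normalized (LinUCB-style) bound $|\phi^\top (\hat\theta_{k-1} - \theta^*)| \le \beta(k-1)\|\phi\|_{\Sigma_{k-1}^{-1}}$ for the least-squares reward estimate, exactly as in the known-transition analysis of $\edlinucbseg$; and (ii) a \emph{variance-aware} (Bernstein-type) concentration bound on the empirical transition $\hat p_{k-1}(\cdot\,|s,a)$ around $p(\cdot\,|s,a)$, calibrated by the visitation counts $N_{k-1}(s,a)$. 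This second bound is the "variance-aware uncertainty" feeding the transition bonus.

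The heart of the argument is bounding the visitation-indicator error $\|\hat\phi^{\pi}_{k-1} - \phi^{\pi}\|_1$ in terms of the transition error. Treating each coordinate indicator $\indicator\{(s,a)\}$ as a reward function, the quantity $\hat\phi^{\pi}_{k-1}(s,a) - \phi^{\pi}(s,a)$ is a value difference under identical reward but differing transitions, so a simulation / value-difference lemma rewrites it as an accumulated sum of one-step transition errors weighted by future occupancy. Summing over the $|\cS||\cA|$ indicators produces the $\ell_1$ bound and the polynomial $|\cS|,|\cA|$ factors. Crucially, plugging in the variance-aware bound from event (ii) and invoking the law of total variance—which controls the sum of per-step value variances along a trajectory and thereby saves a $\sqrt{H}$ factor relative to a Hoeffding-based bound—is what keeps the $H$-dependence near-optimal.

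With these pieces, optimism follows by decomposing the index gap $(\hat\phi^{\pi^*}_{k-1})^\top \hat\theta_{k-1} - (\phi^{\pi^*})^\top \theta^* = (\hat\phi^{\pi^*}_{k-1})^\top(\hat\theta_{k-1}-\theta^*) + (\hat\phi^{\pi^*}_{k-1}-\phi^{\pi^*})^\top\theta^*$ into a reward-error term (dominated by $\beta(k-1)\|\hat\phi^{\pi^*}_{k-1}\|_{\Sigma_{k-1}^{-1}}$) and a transition-error term (dominated by $r_{\max}\|\hat\phi^{\pi^*}_{k-1}-\phi^{\pi^*}\|_1$), and choosing the bonus to dominate both; the index of $\pi^*$ then upper-bounds $V^*_1(s_1)$. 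The greedy choice of $\pi^k$ and the same decomposition applied to $\pi^k$ bound the per-episode regret by twice the bonus. I would finish by summing over $k$: the reward-confidence term $\sum_k \beta(k-1)\|\hat\phi^{\pi^k}_{k-1}\|_{\Sigma_{k-1}^{-1}}$ is controlled through the elliptical-potential (log-determinant) lemma as in $\edlinucbseg$, and the transition term $\sum_k r_{\max}\|\hat\phi^{\pi^k}_{k-1}-\phi^{\pi^k}\|_1$ is controlled through the Bernstein bound plus a pigeonhole argument over the counts $N_{k-1}$; collecting all polynomial factors yields the claimed $\tilde O((1+r_{\max})|\cS|^{\frac{5}{2}}|\cA|^2 H\sqrt{K})$ rate.

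The main obstacle I anticipate is the visitation-indicator step—controlling $\|\hat\phi^{\pi}_{k-1}-\phi^{\pi}\|_1$ with a near-optimal $H$-dependence. A crude Hoeffding-based transition bound would inject an extra $\sqrt{H}$ (or worse) factor and spoil optimality in $H$; the delicate part is setting up the variance-aware bonus so that, after propagating per-step transition errors through the occupancy recursion and invoking the law of total variance, the accumulated error collapses to the claimed order. A secondary subtlety is that the elliptical-potential summation is over the \emph{estimated} features $\hat\phi^{\pi^k}_{k-1}$ rather than the realized segment features $\phi^{\tau^k_i}$ actually entering $\Sigma_{k-1}$, so one must additionally relate these two—again through the transition concentration—before the potential lemma applies cleanly.
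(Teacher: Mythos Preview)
Your proposal is correct and tracks the paper's proof closely: the paper also establishes a LinUCB self-normalized bound for $\hat\theta_{k-1}$ together with a KL/Bernstein transition concentration, introduces the visitation value function $G^{\pi;s',a'}_h$ (exactly your ``indicator-as-reward'' idea) to control $\|\hat\phi^{\pi}_{k-1}-\phi^{\pi}\|_1$ via a value-difference recursion and a variance-aware bonus $B^{\pi;s',a';k}_h$, proves optimism from the same two-term decomposition, and sums using the elliptical-potential lemma plus the law of total variance for the transition term. You also correctly anticipated both subtleties the paper handles explicitly---the need for the Bernstein/total-variance machinery to avoid losing $\sqrt{H}$, and the need to pass from $\|\hat\phi^{\pi^k}_{k-1}\|_{\Sigma_{k-1}^{-1}}$ to the realized segment features before the potential lemma applies.
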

	Theorem~\ref{thm:ub_sum_unknown_tran} shows that similar to algorithm $\edlinucbseg$, the regret of $\linucbtranseg$ does not depend on the number of segments $m$ when ignoring logarithmic factors. The heavier dependency on $|\cS|$, $|\cA|$ and $H$ is due to the estimation of the unknown transition distribution. 
	We also provide a lower bound for the unknown transition case, which demonstrates that the optimal regret indeed does not depend on $m$ and our upper bound is near-optimal with respect to $H$ (see Appendix~\ref{apx:lb_sum_unknown_tran}).
	
	%	Intuitively, the regret for RL with sum segment feedback ($m \geq 1$) is no higher than that for classic RL ($m=1$), since segments offer more observations.
	%	Even for $m\geq1$, the regret of $\linucbtranseg$ matches the existing lower bound for classic RL~\citep{osband2016lower} in terms of $H$. Thus, this regret bound is optimal with respect to $H$ and $m$ for the sum segment feedback problem (we present a formal lower bound and a proof in Appendix~\ref{apx:lb_sum_unknown_tran}).

	\section{Experiments}

	Below we present experiments for RL with segment feedback to validate our theoretical results.
	
	For the binary feedback setting, we evaluate our algorithms $\bitsseg$ and $\bitssegtran$ in known and unknown transition cases, respectively, and we set $|\cS|=9$, $|\cA|=5$ and $K=30000$.
	For the sum feedback setting, similarly, we run our algorithms $\edlinucbseg$ and $\linucbtranseg$ in known and unknown transition cases, respectively. Since $\edlinucbseg$ and $\linucbtranseg$ are computationally inefficient (mainly designed to reveal the optimal dependency on $m$), we use a small MDP with $|\cS|=3$ and $|\cA|=5$, and set $K=1000$.
	The details of the instances used in our experiments are described in Appendix~\ref{apx:experimental_setup}.
	In both settings, we set $r_{\max}=0.5$, $\delta=0.005$, $H=100$ and $m \in \{1,2,4,5,10,20,25,50,100\}$. For each algorithm, we perform $20$ independent runs, and plot the average cumulative regret up to episode $K$ across runs with a $95\%$ confidence interval.
	
	Figure~\ref{fig:experiment_bi} reports the regrets of algorithms $\bitsseg$ and $\bitssegtran$ under binary feedback. One sees that as the number of segments $m$ increases, the regret decreases rapidly. Specifically, when $m$ decreases from $20$ to $1$, i.e., $\frac{H}{2m}$ increases from $\exp(2.5)$ to $\exp(50)$, the regret grows explosively. This matches our theoretical results, i.e., Theorems~\ref{thm:ub_bits} and \ref{thm:ub_bits_tran}, which show a dependency on $\exp(\frac{Hr_{\max}}{2m})$.
	
	Figure~\ref{fig:experiment_sum} plots the regrets of algorithms $\edlinucbseg$ and $\linucbtranseg$ under sum feedback. To see the impact of segments on regrets clearly, here we show the regrets with respect to the number of segments $m$ and the length of each segment $\frac{H}{m}$ in the left and right subfigures, respectively. In the left subfigure, when $m$ increases, the  regrets almost keep the same for small $m$ and slightly decrease for large $m$. To see the dependency on $m$ more clearly, we turn to the right subfigure: When the length of each segment $\frac{H}{m}$ increases, the regrets slightly increase in a logarithmic trend. This also matches our theoretical bounds in Theorems~\ref{thm:ub_sum_known_tran} and \ref{thm:ub_sum_unknown_tran}, which do not depend on $m$ except for the $\log(\frac{H}{m})$ factor.

	\section{Conclusion}
	
	In this work, we formulate a model named RL with segment feedback, which offers a general paradigm for feedback, bridging the gap between per-state-action feedback in classic RL and trajectory feedback. In the binary feedback setting, we deign efficient algorithms $\bitsseg$ and $\bitssegtran$, and provide regret upper and lower bounds which show a dependency on $\exp(\frac{Hr_{\max}}{2m})$. These results reveal that under binary feedback, increasing the number of segments $m$ greatly helps expedite learning. In the sum feedback setting, we develop near-optimal algorithms $\edlinucbseg$ and $\linucbtranseg$ in terms of $H$ and $m$, where the regret results do not depend on $m$ when ignoring logarithmic factors. These results exhibit that under sum feedback, increasing $m$ does not help accelerate learning much. 
	
	There are several interesting directions worth further investigation. One direction is to consider segments of unequal lengths and study how to divide segments to optimize learning.
	\revision{
		The variable segment length will affect the noise variance of reward feedback, and the sum analysis of segment visitation indicators. More advanced techniques are needed to handle these challenges.
	}
	\revision{Another direction is to generalize the results to the function approximation setting.
		Since our analysis is based on the fact that segment reward feedback is generated linearly with respect to visitation indicators $\phi(s,a)$, we believe that generalizing $\phi(s,a)$ from a visitation indicator in the tabular setting to a feature vector in the linear function approximation setting is viable.
	}

	% Acknowledgements should only appear in the accepted version.
	\section*{Acknowledgements}
	
	The work of Yihan Du and R. Srikant is supported by NSF grants CNS 23-12714, CCF 22-07547, CNS 21-06801 and AFOSR grant FA9550-24-1-0002.

	\section*{Impact Statement}
	
	This paper presents work whose goal is to advance the field of 
	Machine Learning. There are many potential societal consequences 
	of our work, none which we feel must be specifically highlighted here.

	% In the unusual situation where you want a paper to appear in the
	% references without citing it in the main text, use \nocite
	\nocite{langley00}
	
	\bibliographystyle{icml2025}
	\bibliography{icml2025_segment_ref_camReady}

	%%%%%%%%%%%%%%%%%%%%%%%%%%%%%%%%%%%%%%%%%%%%%%%%%%%%%%%%%%%%%%%%%%%%%%%%%%%%%%%
	%%%%%%%%%%%%%%%%%%%%%%%%%%%%%%%%%%%%%%%%%%%%%%%%%%%%%%%%%%%%%%%%%%%%%%%%%%%%%%%
	% APPENDIX
	%%%%%%%%%%%%%%%%%%%%%%%%%%%%%%%%%%%%%%%%%%%%%%%%%%%%%%%%%%%%%%%%%%%%%%%%%%%%%%%
	%%%%%%%%%%%%%%%%%%%%%%%%%%%%%%%%%%%%%%%%%%%%%%%%%%%%%%%%%%%%%%%%%%%%%%%%%%%%%%%
	\clearpage
	\appendix
\onecolumn
\begin{center}
	\vspace*{0.01em}
	\LARGE{\textbf{Appendix}}
	\vspace*{0.3em}
\end{center}

%\tableofcontents
%\clearpage

\section{Details of the Experimental Setup} \label{apx:experimental_setup}

In this section, we detail the instances used in our experiments. 

For the binary segment feedback setting, we consider an MDP as in Figure~\ref{fig:experiment_bi}: There are $9$ states and $5$ actions. For any $a \in \cA$, we have $r(s_0,a)=0$, $r(s_i,a)=r_{\max}$ for any $i \in \{1,3,5,7\}$ (called good states), and $r(s_i,a)=-r_{\max}$ for any $i \in \{2,4,6,8\}$ (called bad states).  There is an optimal action $a^*$ and four suboptimal actions $a^{\sub}$ for all states. The agent starts from an initial state $s_0$. For any $0\leq i \leq 6$, in state $s_i$, under the optimal action $a^*$, the agent transitions to the good state and bad state at the next horizon with probabilities $0.9$ and $0.1$, respectively; Under the suboptimal action $a^{\sub}$, the agent transitions to the good state and bad state at the next horizon with probabilities $0.1$ and $0.9$, respectively. 
In $s_7$ or $s_8$, under the optimal action $a^*$, the agent transitions to $s_1$ and $s_2$ with probabilities $0.9$ and $0.1$, respectively; Under the suboptimal action $a^{\sub}$, the agent transitions to $s_1$ and $s_2$ with probabilities $0.1$ and $0.9$, respectively.

\begin{figure}[h]
	\centering
	\includegraphics[width=0.7\columnwidth]{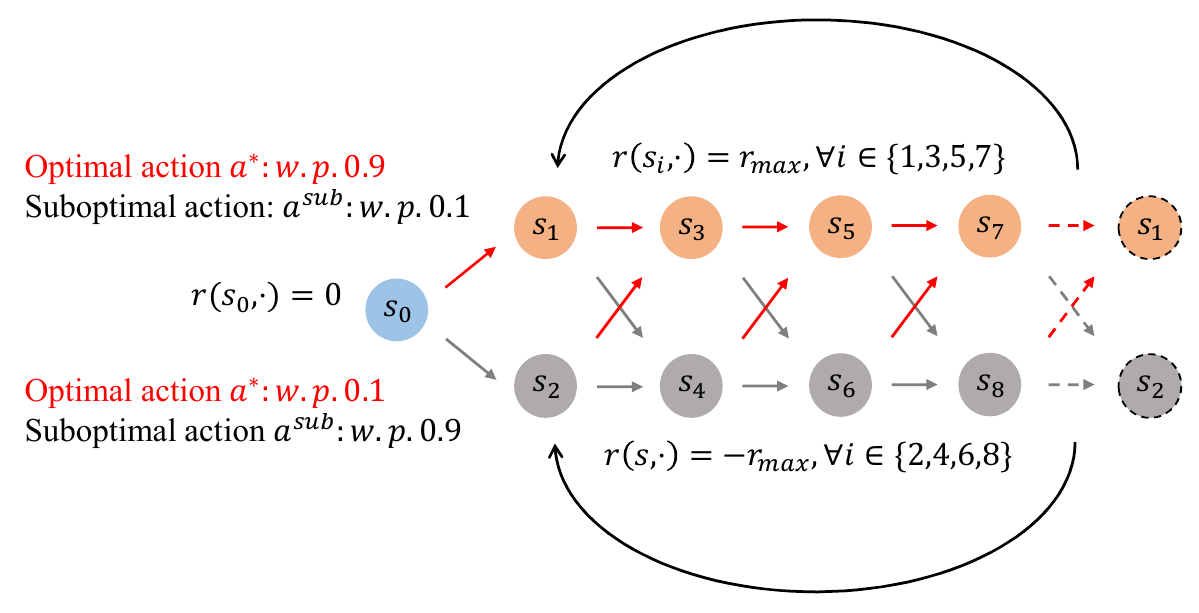} \label{fig:lb_instance_bi}
	\caption{Instance used in the experiment for RL with binary segment feedback.
	}
\end{figure}

For the sum segment feedback setting, since algorithms $\edlinucbseg$ and $\linucbtranseg$ are computationally inefficient (which are mainly designed for revealing the optimal dependency on $H$ and $m$), we consider a smaller MDP as in Figure~\ref{fig:experiment_sum}: There are $3$ states and $5$ actions. For any $a \in \cA$, we have $r(s_0,a)=0$, $r(s_1,a)=r_{\max}$ (called a good state), and $r(s_2,a)=-r_{\max}$ (called a bad state). There is an optimal action $a^*$ and four suboptimal actions $a^{\sub}$ for all states. The agent starts from an initial state $s_0$. In any state $s \in \cS$, under the optimal action $a^*$, the agent transitions to $s_1$ and $s_2$ with probabilities $0.9$ and $0.1$, respectively; Under the suboptimal action $a^{\sub}$, the agent transitions to $s_1$ and $s_2$ with probabilities $0.1$ and $0.9$, respectively. 

\begin{figure}[h]
	\centering   
	\includegraphics[width=0.5\columnwidth]{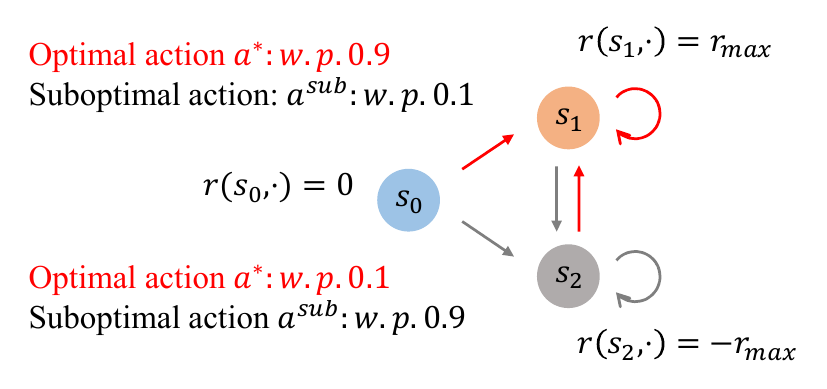} \label{fig:lb_instance_sum}
	\caption{Instance used in the experiment for RL with sum segment feedback.
	}
\end{figure}

\section{Rounding Procedure $\round$} \label{apx:rounding_procedure}

Algorithm $\edlinucbseg$ calls a rounding procedure $\round$~\citep{allen2021near} in the experimental design literature. Taking $X_1,\dots,X_n \in \mathbb{S}^d_{+}$, distribution $w \in \triangle_{\{X_1,\dots,X_n\}}$, rounding approximation error $\gamma>0$ and the number of samples $T \geq \frac{d}{\gamma^2}$ as inputs, $\round$ rounds sampling distribution $w$ into a discrete sampling sequence $(Y_1,\dots,Y_T) \in \{X_1,\dots,X_n\}^T$ that satisfies
\begin{align*}
	\Bigg\| \Bigg( \sum_{t=1}^{T}  Y_t \Bigg)^{-1} \Bigg\| 
	\leq (1+\gamma) \Bigg\| \Bigg( T \sum_{i \in [n]} w(X_i) X_i \Bigg)^{-1} \Bigg\| .
\end{align*}

In implementation, we can regard $x x^\top$ in \citep{allen2021near} as $\sum_{i=1}^{m} \ex_{\tau_i \sim \pi} [\phi^{\tau_i} (\phi^{\tau_i})^\top]$, and regard sampling weight on $x$ as the sampling weight on $\pi$ in our work.

\section{Proofs for RL with Binary Segment Feedback}

In this section, we present the proofs for RL with binary segment feedback.

\subsection{Proof for the Regret Upper Bound with Known Transition}\label{apx:ub_binary_known_tran}

First, we prove the regret upper bound (Theorem~\ref{thm:ub_bits}) of algorithm $\bitsseg$ for known transition.

	For any $k>0$ and $\theta \in \Theta$, define
\begin{align}
	Z_k &:= \sum_{k'=1}^{k} \sum_{i=1}^{m} \varepsilon_{k',i} \cdot \phi^{\tau^{k'}_i} , \nonumber
	\\
	g_{k}(\theta) &:= \sum_{k'=1}^{k} \sum_{i=1}^{m} \mu( (\phi^{\tau^{k'}_i})^\top\theta ) \cdot \phi^{\tau^{k'}_i} + \lambda \theta , \label{eq:def_g_k}
	\\
	\Lambda_{k}(\theta) &:=  \sum_{k'=1}^{k} \sum_{i=1}^{m} \mu'( (\phi^{\tau^{k'}_i})^\top\theta ) \cdot \phi^{\tau^{k'}_i} (\phi^{\tau^{k'}_i})^\top + \lambda I . \label{eq:def_Lambda_k}
\end{align}

\begin{lemma} \label{lemma:Lambda_k}
	For any $k>0$ and $\theta \in \Theta$, we have
	\begin{align*}
		\det(\Lambda_k(\theta)) \leq \sbr{ \frac{H^2 \mu'_{\max} k}{|\cS||\cA| m} + \lambda  }^{|\cS||\cA|} .
	\end{align*}
\end{lemma}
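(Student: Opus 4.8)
The plan is to combine the elementary determinant--trace inequality with a crude bound on $\trace(\Lambda_k(\theta))$. Write $d := |\cS||\cA|$ for the ambient dimension. Since $\mu'(\cdot) \ge 0$, every summand $\mu'((\phi^{\tau^{k'}_i})^\top\theta)\,\phi^{\tau^{k'}_i}(\phi^{\tau^{k'}_i})^\top$ is positive semidefinite, so $\Lambda_k(\theta)$ is symmetric positive definite. Applying the AM--GM inequality to its eigenvalues $\lambda_1,\dots,\lambda_d > 0$ gives
\[
\det(\Lambda_k(\theta)) = \prod_{j=1}^{d} \lambda_j \le \left(\frac{1}{d}\sum_{j=1}^{d} \lambda_j\right)^{d} = \left(\frac{\trace(\Lambda_k(\theta))}{d}\right)^{d},
\]
so it remains only to upper bound the trace.

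For that, I would use $\trace(\phi\phi^\top) = \|\phi\|_2^2$ together with linearity of the trace to write $\trace(\Lambda_k(\theta)) = \sum_{k'=1}^{k}\sum_{i=1}^{m} \mu'((\phi^{\tau^{k'}_i})^\top\theta)\,\|\phi^{\tau^{k'}_i}\|_2^2 + \lambda d$. Two clean facts then control each summand. First, $\mu'(x) = \mu(x)(1-\mu(x)) \le \mu'_{\max}$ for every $x \in \R$. Second, each segment $\tau^{k'}_i$ has fixed length $H/m$, so the nonnegative integer visitation vector $\phi^{\tau^{k'}_i}$ satisfies $\|\phi^{\tau^{k'}_i}\|_1 = H/m$, and hence $\|\phi^{\tau^{k'}_i}\|_2^2 \le \|\phi^{\tau^{k'}_i}\|_1^2 = H^2/m^2$. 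Since there are $k$ episodes, each contributing $m$ segments, summing yields $\trace(\Lambda_k(\theta)) \le km \cdot \mu'_{\max} \cdot \frac{H^2}{m^2} + \lambda d = \frac{\mu'_{\max} H^2 k}{m} + \lambda d$.

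Plugging this trace bound into the determinant--trace inequality gives
\[
\det(\Lambda_k(\theta)) \le \left(\frac{\mu'_{\max} H^2 k / m + \lambda d}{d}\right)^{d} = \left(\frac{H^2 \mu'_{\max} k}{|\cS||\cA| m} + \lambda\right)^{|\cS||\cA|},
\]
which is exactly the claimed bound. I do not expect any genuine obstacle here: the only points requiring a moment of care are verifying that $\Lambda_k(\theta)$ is positive semidefinite so that AM--GM applies to its (real, nonnegative) eigenvalues, and deriving the two bounds $\mu' \le \mu'_{\max}$ and $\|\phi^{\tau^{k'}_i}\|_1 = H/m$ from the fixed segment length; the rest is routine bookkeeping.
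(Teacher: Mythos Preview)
Your proposal is correct and follows essentially the same approach as the paper: apply the AM--GM (determinant--trace) inequality, then bound $\trace(\Lambda_k(\theta))$ using $\mu' \le \mu'_{\max}$ and $\|\phi^{\tau^{k'}_i}\|_2^2 \le (H/m)^2$ over the $km$ segments plus the $\lambda d$ contribution from $\lambda I$. The paper's proof is just a more compressed version of exactly these steps.
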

\begin{proof}
	For any $k>0$, we have
	\begin{align*}
		\det(\Lambda_k(\theta)) &\leq \sbr{ \frac{\trace(\Lambda_k(\theta))}{|\cS||\cA|} }^{|\cS||\cA|}
		\\
		&\leq \sbr{ \frac{1}{|\cS||\cA|} \cdot \sbr{ k m \cdot \mu'_{\max} \cdot \sbr{\frac{H}{m}}^2 + \lambda |\cS||\cA| } }^{|\cS||\cA|}
		\\
		&= \sbr{ \frac{H^2 \mu'_{\max} k}{|\cS||\cA| m} + \lambda  }^{|\cS||\cA|} .
	\end{align*}
\end{proof}

For any $k>0$, let $F_k$ denote the filtration that includes all events up to the end of episode $k$, and $\tilde{F}_{k}$ denote the filtration that includes all events before playing $\pi^k$ in episode $k$. Then, $\pi^k$ is $\tilde{F}_{k}$-measurable.

For any $k>0$ and $i \in [m]$, let $\varepsilon_{k,i}:=y^k_i-(\phi^{\tau^k_i})^\top \theta^*$ denote the noise of binary feedback, and $v_{k,i}^2:=\ex[\varepsilon_{k,i}^2 | \tilde{F}_{k}]=(\phi^{\tau^k_i})^\top \theta^* \cdot (1- (\phi^{\tau^k_i})^\top \theta^*)=\mu'((\phi^{\tau^k_i})^\top \theta^*)$ denote the variance of $\varepsilon_{k,i}$ conditioning on $\tilde{F}_{k}$.

Then, we have
\begin{align*}
	\Lambda_{k}(\theta^*) &:= \sum_{k'=1}^{k} \sum_{i=1}^{m} \mu'( (\phi^{\tau^{k'}_i})^\top \theta^* ) \cdot \phi^{\tau^{k'}_i} (\phi^{\tau^{k'}_i})^\top + \lambda I 
	\\
	&= \sum_{k'=1}^{k} \sum_{i=1}^{m} v_{k',i}^2 \cdot \phi^{\tau^{k'}_i} (\phi^{\tau^{k'}_i})^\top + \lambda I .
\end{align*}

\begin{lemma}[Concentration of Noises under Binary Feedback]\label{lemma:noise_seq_martingale}
	With probability at least $1-\delta'$, for any $k>0$,
	\begin{align*}
		\nbr{\sum_{k'=1}^{k} \sum_{i=1}^{m} \varepsilon_{k',i} \cdot \phi^{\tau^{k'}_i}}_{\Lambda_{k}^{-1}(\theta^*)} \leq \frac{\sqrt{\lambda}}{2} + \frac{|\cS||\cA|}{\sqrt{\lambda}} \log \sbr{ \frac{4}{\delta'} \cdot \sbr{ 1 + \frac{H^2 \mu'_{\max} k}{|\cS||\cA| m \lambda}   }  } .
	\end{align*}
\end{lemma}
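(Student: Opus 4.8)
The plan is to recognize this as a \emph{variance-weighted} self-normalized concentration inequality for a vector-valued martingale, of exactly the type developed for logistic bandits~\citep{faury2020improved,russac2021self}, and to feed it the determinant bound of Lemma~\ref{lemma:Lambda_k}. The first step is to set up the martingale structure correctly, which is delicate precisely because segments within an episode are dependent. I would re-index the double sum $\sum_{k'=1}^{k}\sum_{i=1}^{m}$ as a single sequence over the segment indices $(k',i)$ ordered lexicographically, and introduce a segment-level filtration $\cG_{k',i}$ containing everything up to and including the \emph{trajectory} of segment $i$ in episode $k'$ but \emph{before} its binary outcome $y^{k'}_i$ is drawn. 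Under this filtration the feature $\phi^{\tau^{k'}_i}$ is predictable (it is a deterministic function of the visited states and actions), while the deviation of $y^{k'}_i$ from its conditional mean $\mu((\phi^{\tau^{k'}_i})^\top\theta^*)$ is a fresh Bernoulli fluctuation. Hence $\{\varepsilon_{k',i}\}$ is a martingale difference sequence with $|\varepsilon_{k',i}|\le 1$ and conditional variance $v_{k',i}^2=\mu'((\phi^{\tau^{k'}_i})^\top\theta^*)$, and the normalizing matrix $\Lambda_k(\theta^*)=\sum_{k'=1}^{k}\sum_{i=1}^{m} v_{k',i}^2\,\phi^{\tau^{k'}_i}(\phi^{\tau^{k'}_i})^\top+\lambda I$ is exactly the empirical covariance weighted by these conditional variances.

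Next I would invoke (or reconstruct via the method of mixtures / Laplace's method) a self-normalized Bernstein inequality for this setting. Using the boundedness $|\varepsilon_{k',i}|\le 1$ together with the conditional variance $v_{k',i}^2$, one builds, for each fixed multiplier $\xi\in\R^{|\cS||\cA|}$, a supermartingale from $\exp(\xi^\top\!\sum \varepsilon_{k',i}\phi^{\tau^{k'}_i}-\tfrac12\|\xi\|_{\Lambda_k(\theta^*)-\lambda I}^2)$, then mixes over $\xi$ against a Gaussian prior and applies Ville's inequality. This yields a generic tail bound of the form
\begin{align*}
	\nbr{\sum_{k'=1}^{k}\sum_{i=1}^{m}\varepsilon_{k',i}\,\phi^{\tau^{k'}_i}}_{\Lambda_k^{-1}(\theta^*)}
	\le \frac{\sqrt{\lambda}}{2}+\frac{c}{\sqrt{\lambda}}\log\sbr{\frac{\det(\Lambda_k(\theta^*))^{1/2}\,\lambda^{-|\cS||\cA|/2}}{\delta'}}+\frac{c'\,|\cS||\cA|}{\sqrt{\lambda}},
\end{align*}
holding simultaneously for all $k>0$ (the uniform-in-$k$ guarantee is automatic from Ville's inequality, not a union bound). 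I would then substitute the determinant estimate $\det(\Lambda_k(\theta^*))\le\big(\tfrac{H^2\mu'_{\max}k}{|\cS||\cA|m}+\lambda\big)^{|\cS||\cA|}$ from Lemma~\ref{lemma:Lambda_k}, so that $\det(\Lambda_k(\theta^*))^{1/2}\lambda^{-|\cS||\cA|/2}\le\big(1+\tfrac{H^2\mu'_{\max}k}{|\cS||\cA|m\lambda}\big)^{|\cS||\cA|/2}$, and absorb the additive dimension constants and the numerical constants into the log argument to collapse everything into the single term $\tfrac{|\cS||\cA|}{\sqrt{\lambda}}\log\!\big(\tfrac{4}{\delta'}(1+\tfrac{H^2\mu'_{\max}k}{|\cS||\cA|m\lambda})\big)$, matching the claimed right-hand side.

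The main obstacle is the second step rather than the algebra: getting a clean \emph{variance-weighted} self-normalized bound whose normalizer is exactly $\Lambda_k(\theta^*)$ (built from the conditional variances $\mu'(\cdot)$, not a worst-case constant) is what makes the final confidence radius tight, and it requires carefully exploiting both the boundedness and the Bernstein-type control of the moment generating function of the Bernoulli increments inside the supermartingale construction. A secondary but conceptually important point, flagged already in the introduction, is justifying the martingale property despite intra-episode dependence among segments; this is handled cleanly by the segment-level filtration above, since the transition dynamics and the binary-outcome randomness are independent, so predictability of $\phi^{\tau^{k'}_i}$ and the zero conditional mean of $\varepsilon_{k',i}$ hold regardless of how earlier segments in the same episode were generated. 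The remaining simplifications (trace/determinant manipulations and constant bookkeeping) are routine once the inequality is in hand.
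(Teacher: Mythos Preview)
Your proposal is correct and matches the paper's approach: the paper simply invokes Theorem~1 of \citet{faury2020improved} (the variance-weighted self-normalized Bernstein inequality you describe), substitutes the determinant bound from Lemma~\ref{lemma:Lambda_k}, and does the same constant-absorption algebra. Your extra care in spelling out the segment-level filtration and the martingale structure is a welcome elaboration of what the paper leaves implicit in the citation.
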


\begin{proof}
	According to Theorem 1 in \citep{faury2020improved}, we have that with probability at least $1-\delta'$, for any $k>0$,
	\begin{align*}
		\nbr{\sum_{k'=1}^{k} \sum_{i=1}^{m} \varepsilon_{k',i} \cdot \phi^{\tau^{k'}_i}}_{\Lambda_{k}^{-1}(\theta^*)} &\leq \frac{\sqrt{\lambda}}{2} + \frac{2}{\sqrt{\lambda}} \log \sbr{ \frac{\det(\Lambda_{k}(\theta^*))^{\frac{1}{2}} \cdot \lambda^{-\frac{|\cS| |\cA|}{2}}}{\delta'}  } + \frac{2}{\sqrt{\lambda}} |\cS| |\cA| \log(2)
		\\
		&\overset{\textup{(a)}}{\leq} \frac{\sqrt{\lambda}}{2} \!+\! \frac{2}{\sqrt{\lambda}} \log \sbr{ \frac{1}{\delta'}  \sbr{ 1 + \frac{H^2 \mu'_{\max} k}{|\cS||\cA| m \lambda}   }^{\frac{|\cS||\cA|}{2}}  }   \!+\! \frac{2}{\sqrt{\lambda}} |\cS| |\cA| \log(2)
		\\
		&\leq \frac{\sqrt{\lambda}}{2} + \frac{|\cS||\cA|}{\sqrt{\lambda}} \log \sbr{ \frac{1}{\delta'}  \sbr{ 1 + \frac{H^2 \mu'_{\max} k}{|\cS||\cA| m \lambda}   }  }   + \frac{2}{\sqrt{\lambda}} |\cS| |\cA| \log(2)
		\\
		&\leq \frac{\sqrt{\lambda}}{2} + \frac{|\cS||\cA|}{\sqrt{\lambda}} \log \sbr{ \frac{4}{\delta'}  \sbr{ 1 + \frac{H^2 \mu'_{\max} k}{|\cS||\cA| m \lambda}   }  } ,
	\end{align*}
	where (a) uses Lemma~\ref{lemma:Lambda_k}.
\end{proof}

Define event
\begin{align*}
	\cE:=\lbr{ \nbr{g_k(\hat{\theta}_k) - g_k(\theta^*) }_{\Lambda_k^{-1}(\theta^*)} \leq \omega(k), \  \forall k>0 } .
\end{align*}
\begin{lemma}
	It holds that
	\begin{align*}
		\Pr \mbr{ \cE } \geq 1-\delta' .
	\end{align*}
\end{lemma}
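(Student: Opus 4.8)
The plan is to exploit the first-order optimality condition of the regularized MLE that defines $\hat{\theta}_k$ (Line~\ref{line:hat_theta_bits}), and then reduce the claim to the noise concentration already established in Lemma~\ref{lemma:noise_seq_martingale}. First I would differentiate the regularized negative log-likelihood objective and set its gradient to zero at $\hat{\theta}_k$. Using the identity $\frac{d}{dx}[y\log\mu(x)+(1-y)\log(1-\mu(x))]=y-\mu(x)$ for the logistic loss, together with the chain rule for $x=\phi^\top\theta$, the stationarity condition becomes
\begin{align*}
	\sum_{k'=1}^{k}\sum_{i=1}^{m} \mu((\phi^{\tau^{k'}_i})^\top \hat{\theta}_k)\, \phi^{\tau^{k'}_i} + \lambda \hat{\theta}_k = \sum_{k'=1}^{k}\sum_{i=1}^{m} y^{k'}_i\, \phi^{\tau^{k'}_i} ,
\end{align*}
whose left-hand side is exactly $g_k(\hat{\theta}_k)$ by the definition in Eq.~\eqref{eq:def_g_k}.

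Next I would subtract $g_k(\theta^*)=\sum_{k'=1}^{k}\sum_{i=1}^{m}\mu((\phi^{\tau^{k'}_i})^\top\theta^*)\phi^{\tau^{k'}_i}+\lambda\theta^*$ and recall that $\varepsilon_{k',i}=y^{k'}_i-\mu((\phi^{\tau^{k'}_i})^\top\theta^*)$, which gives the clean decomposition $g_k(\hat{\theta}_k)-g_k(\theta^*)=Z_k-\lambda\theta^*$ with $Z_k$ as in Eq.~\eqref{eq:def_g_k}. Applying the triangle inequality in the $\Lambda_k^{-1}(\theta^*)$-norm then yields
\begin{align*}
	\nbr{g_k(\hat{\theta}_k)-g_k(\theta^*)}_{\Lambda_k^{-1}(\theta^*)} \leq \nbr{Z_k}_{\Lambda_k^{-1}(\theta^*)} + \lambda\nbr{\theta^*}_{\Lambda_k^{-1}(\theta^*)} .
\end{align*}

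Finally I would bound the two terms separately. The first term is controlled, uniformly over all $k>0$ and with probability at least $1-\delta'$, by Lemma~\ref{lemma:noise_seq_martingale}; this is where all the probabilistic content lives, so the event $\cE$ in the statement is precisely the good event of that lemma. For the deterministic second term, since $\Lambda_k(\theta^*)\succeq\lambda I$ we have $\Lambda_k^{-1}(\theta^*)\preceq\lambda^{-1}I$, hence $\lambda\nbr{\theta^*}_{\Lambda_k^{-1}(\theta^*)}\leq\sqrt{\lambda}\,\nbr{\theta^*}_2\leq\sqrt{\lambda}\, r_{\max}\sqrt{|\cS||\cA|}$, using that every entry of $\theta^*$ lies in $[-r_{\max},r_{\max}]$. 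Summing the two bounds and substituting $\mu'_{\max}=\mu'(0)=\tfrac14$ into the logarithmic term of Lemma~\ref{lemma:noise_seq_martingale} reproduces exactly the definition of $\omega(k)$ in Eq.~\eqref{eq:def_omega}, so $\cE$ holds on that event. The only real subtlety -- rather than difficulty -- is matching constants: one must identify the left side of the stationarity condition with $g_k$ and carry the factor $\mu'_{\max}=\tfrac14$ so that the $\frac{H^2 k}{4|\cS||\cA|\lambda m}$ term lines up with $\omega(k)$; all the genuine work has already been carried out in proving Lemma~\ref{lemma:noise_seq_martingale}.
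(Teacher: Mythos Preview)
Your proposal is correct and follows essentially the same route as the paper: derive the stationarity condition of the regularized MLE to identify $g_k(\hat{\theta}_k)$, subtract $g_k(\theta^*)$ to obtain $g_k(\hat{\theta}_k)-g_k(\theta^*)=Z_k-\lambda\theta^*$, apply the triangle inequality in the $\Lambda_k^{-1}(\theta^*)$-norm, invoke Lemma~\ref{lemma:noise_seq_martingale} for the $Z_k$ term, and bound the $\lambda\theta^*$ term via $\Lambda_k(\theta^*)\succeq\lambda I$; substituting $\mu'_{\max}=\tfrac14$ then matches $\omega(k)$ exactly. The paper's own proof is organized identically (it also cites \citep{faury2020improved}), so there is no substantive difference.
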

\begin{proof}
	This proof is similar to that for Lemma 8 in \citep{faury2020improved}.
	
	Define
	\begin{align*}
		\cL_k(\theta) \!:=\! - \sbr{ \sum_{k'=1}^{k} \sum_{i=1}^{m} \sbr{ y^{k'}_i \cdot \log\sbr{\mu((\phi^{\tau^{k'}_i})^\top \theta)} \!+\! (1 \!-\! y^{k'}_i) \cdot \log\sbr{1 \!-\! \mu((\phi^{\tau^{k'}_i})^\top \theta) } } \!-\! \frac{1}{2} \lambda \|\theta\|_2^2 } .
	\end{align*}
	Recall that $\hat{\theta}_k = \argmin_{\theta} \cL_k(\theta)$.
	Using the facts that $\nabla \cL_k(\hat{\theta}_k)=0$ and $\mu'(x)=\mu(x)(1-\mu(x))$, we have
	\begin{align*}
		\underbrace{\sum_{k'=1}^{k} \sum_{i=1}^{m} \mu( (\phi^{\tau^{k'}_i})^\top \hat{\theta}_k ) \cdot \phi^{\tau^{k'}_i} + \lambda \hat{\theta}_k}_{g_k(\hat{\theta}_k)} = 	\sum_{k'=1}^{k} \sum_{i=1}^{m} y^{k'}_i \cdot \phi^{\tau^{k'}_i} .
	\end{align*}
	
	Hence, we have
	\begin{align}
		g_k(\hat{\theta}_k) - g_k(\theta^*) &= \sum_{k'=1}^{k} \sum_{i=1}^{m} y^{k'}_i \cdot \phi^{\tau^{k'}_i} - \sbr{ \sum_{k'=1}^{k} \sum_{i=1}^{m} \mu( (\phi^{\tau^{k'}_i})^\top \theta^* ) \cdot \phi^{\tau^{k'}_i} + \lambda \theta^* }
		\nonumber\\
		&= \sum_{k'=1}^{k} \sum_{i=1}^{m} \varepsilon_{k',i} \cdot \phi^{\tau^{k'}_i} -  \lambda \theta^* . \label{eq:g_hat_theta-g_theta_star}
	\end{align}

	Then, using Lemma~\ref{lemma:noise_seq_martingale}, we have that with probability at least $1-\delta'$, for any $k>0$,
	\begin{align*}
		\nbr{ g_k(\hat{\theta}_k) - g_k(\theta^*) }_{\Lambda_k^{-1}(\theta^*)} &\leq \nbr{ \sum_{k'=1}^{k} \sum_{i=1}^{m} \varepsilon_{k',i} \cdot \phi^{\tau^{k'}_i} }_{\Lambda_k^{-1}(\theta^*)} + r_{\max} \sqrt{\lambda |\cS| |\cA|} 
		\\
		&\leq \frac{\sqrt{\lambda}}{2} + \frac{|\cS||\cA|}{\sqrt{\lambda}} \log \sbr{ \frac{4}{\delta'} \cdot \sbr{ 1 + \frac{H^2 \mu'_{\max} k}{|\cS||\cA| m \lambda}   }  } + r_{\max} \sqrt{\lambda |\cS| |\cA|}  
		\\
		&= \omega(k) .
	\end{align*}

\end{proof}

For any $\phi \in \R^{|\cS||\cA|}$ and $\theta_1,\theta_2 \in \Theta$, define
\begin{align*}
	b(\phi,\theta_1,\theta_2)&:=\int_{z=0}^{1} \mu'( (1-z) \cdot \phi^\top \theta_1 + z \cdot \phi^\top \theta_2) dz .
\end{align*}
For any $k>0$ and $\theta_1,\theta_2 \in \Theta$, define
\begin{align*}
	\Gamma_k(\theta_1,\theta_2)&:= \sum_{k'=1}^{k} \sum_{i=1}^{m} b(\phi,\theta_1,\theta_2) \cdot \phi^{\tau^{k'}_i} (\phi^{\tau^{k'}_i})^\top + \lambda I .
\end{align*}
In the definitions of $b(\phi,\theta_1,\theta_2)$ and $\Gamma_k(\theta_1,\theta_2)$, $\theta_1$ and $\theta_2$ have the same roles and can be interchanged.

Recall that 
\begin{align*}
	\alpha := \exp(\frac{H r_{\max}}{m})+\exp(-\frac{H r_{\max}}{m})+2 .
\end{align*}
Then, we have
\begin{align*}
	\sup_{\tau^{\seg}, \theta} \frac{1}{\mu'((\phi^{\tau^{\seg}})^\top \theta)} \leq \alpha ,
\end{align*}
where $\tau^{\seg}$ denotes the visitation indicator of any possible trajectory segment.

\begin{lemma} \label{lemma:transform_Lambda_Sigma}
	For any $k\geq 1$ and $\theta \in \Theta$, we have
	\begin{align*}
		%			\Gamma_k(\theta_1,\theta_2) &\succeq \sbr{1+\frac{2Hr_{\max}}{m}}^{-1} \Lambda_k(\theta_1) ,
		%			\\
		\Sigma_k &\preceq \alpha  \Lambda_k(\theta) .
		%			\\
		%			\Gamma_k(\theta_1,\theta_2) &\succeq \frac{1}{\alpha} \Sigma_k .
	\end{align*}
\end{lemma}
\begin{proof}
	We have
	\begin{align*}
		\frac{1}{\alpha} = \inf_{\tau^{\seg}, \theta} \mu'((\phi^{\tau^{\seg}})^\top \theta) .
	\end{align*}
	
	Then, it holds that
	\begin{align*}
		\Sigma_k &= \sum_{k'=1}^{k} \sum_{i=1}^{m} \phi^{\tau^{k'}_i} (\phi^{\tau^{k'}_i})^\top + \alpha \lambda I
		\\
		&= \alpha  \sbr{ \sum_{k'=1}^{k} \sum_{i=1}^{m} \frac{1}{\alpha} \cdot \phi^{\tau^{k'}_i} (\phi^{\tau^{k'}_i})^\top +  \lambda I }
		\\
		&\preceq \alpha  \sbr{ \sum_{k'=1}^{k} \sum_{i=1}^{m} \mu'( (\phi^{\tau^{k'}_i})^\top\theta ) \cdot \phi^{\tau^{k'}_i} (\phi^{\tau^{k'}_i})^\top + \lambda I }
		\\
		&= \alpha  \Lambda_k(\theta) .
	\end{align*}
\end{proof}

\begin{lemma} \label{lemma:two_equalities}
	For any $\phi \in \R^{|\cS||\cA|}$ and $\theta_1,\theta_2 \in \Theta$, we have
	\begin{align*}
		\mu(\phi^\top \theta_1) - \mu(\phi^\top \theta_2) = b(\phi,\theta_2,\theta_1) \cdot \phi^\top (\theta_1-\theta_2) .
	\end{align*}
	
	In addition, for any $k>0$ and $\theta_1,\theta_2 \in \Theta$, we have
	\begin{align*}
		\nbr{\theta_1-\theta_2}_{\Gamma_{k}(\theta_2,\theta_1)} = \nbr{ g_k(\theta_1) - g_k(\theta_2) }_{\Gamma_{k}^{-1}(\theta_2,\theta_1)} .
	\end{align*}
\end{lemma}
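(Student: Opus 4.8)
The plan is to prove the two identities in sequence, deriving the second from the first together with an elementary fact about the $A$- and $A^{-1}$-norms of a symmetric positive definite matrix $A$. For the first identity, I would fix $\phi \in \R^{|\cS||\cA|}$ and $\theta_1,\theta_2$ and apply the fundamental theorem of calculus to the scalar function $f(z):=\mu((1-z)\phi^\top\theta_2 + z\phi^\top\theta_1)$ on $[0,1]$. Here $f(0)=\mu(\phi^\top\theta_2)$ and $f(1)=\mu(\phi^\top\theta_1)$, while the chain rule gives $f'(z)=\mu'((1-z)\phi^\top\theta_2 + z\phi^\top\theta_1)\cdot\phi^\top(\theta_1-\theta_2)$. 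Since the scalar $\phi^\top(\theta_1-\theta_2)$ does not depend on $z$, integrating $f'$ over $[0,1]$ and pulling this factor out of the integral yields $\mu(\phi^\top\theta_1)-\mu(\phi^\top\theta_2)=b(\phi,\theta_2,\theta_1)\,\phi^\top(\theta_1-\theta_2)$ directly from the definition of $b$.

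For the second identity, the key intermediate step is the factorization $g_k(\theta_1)-g_k(\theta_2)=\Gamma_k(\theta_2,\theta_1)(\theta_1-\theta_2)$. To obtain it, I would expand $g_k(\theta_1)-g_k(\theta_2)$ using the definition of $g_k$ in \eqref{eq:def_g_k}, apply the first identity to each summand with $\phi=\phi^{\tau^{k'}_i}$, so that $\mu((\phi^{\tau^{k'}_i})^\top\theta_1)-\mu((\phi^{\tau^{k'}_i})^\top\theta_2)=b(\phi^{\tau^{k'}_i},\theta_2,\theta_1)\,(\phi^{\tau^{k'}_i})^\top(\theta_1-\theta_2)$, and then factor $(\theta_1-\theta_2)$ out on the right. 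The rank-one terms $b(\phi^{\tau^{k'}_i},\theta_2,\theta_1)\,\phi^{\tau^{k'}_i}(\phi^{\tau^{k'}_i})^\top$ assemble into the summation part of $\Gamma_k(\theta_2,\theta_1)$, while the regularizer contributes $\lambda\theta_1-\lambda\theta_2=\lambda I(\theta_1-\theta_2)$, matching the $\lambda I$ term in the definition of $\Gamma_k(\theta_2,\theta_1)$. Note that the parameter order $(\theta_2,\theta_1)$ already agrees with the order produced by the first identity, so no interchange of arguments is needed (though $b$ is in any case symmetric in its last two arguments, as one sees via the substitution $z\mapsto 1-z$).

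Finally, setting $A:=\Gamma_k(\theta_2,\theta_1)$ and $u:=\theta_1-\theta_2$, I would observe that $A$ is symmetric and positive definite — each rank-one term is positive semidefinite because $\mu'>0$, and $\lambda I$ is positive definite — hence invertible, so the $A^{-1}$-norm is well defined and the factorization reads $g_k(\theta_1)-g_k(\theta_2)=Au$. The claimed equality $\nbr{u}_{A}=\nbr{Au}_{A^{-1}}$ then follows by squaring both sides: $\nbr{Au}_{A^{-1}}^2=(Au)^\top A^{-1}(Au)=u^\top A^\top A^{-1} A u=u^\top A u=\nbr{u}_{A}^2$, where I used the symmetry $A^\top=A$. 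This argument is essentially mechanical; the only points deserving a moment's care are checking that the parameter orderings line up so that the factorization produces precisely $\Gamma_k(\theta_2,\theta_1)$, and confirming invertibility of $\Gamma_k$ so that both norms make sense — neither of which presents a genuine obstacle.
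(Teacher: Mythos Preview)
Your proposal is correct and follows essentially the same route as the paper: the first identity via the fundamental theorem of calculus applied to $z\mapsto\mu((1-z)\phi^\top\theta_2+z\phi^\top\theta_1)$ (the paper phrases this as ``the mean-value theorem''), then the factorization $g_k(\theta_1)-g_k(\theta_2)=\Gamma_k(\theta_2,\theta_1)(\theta_1-\theta_2)$ by applying the first identity termwise, followed by the direct computation $\nbr{Au}_{A^{-1}}^2=u^\top A u=\nbr{u}_A^2$. Your additional remarks on positive definiteness and parameter ordering are helpful checks the paper leaves implicit, but the argument is the same.
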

\begin{proof}
	The first statement follows from the mean-value theorem.
	
	Then, using the first statement, we have that for any $k>0$,
	\begin{align*}
		g_k(\theta_1) - g_k(\theta_2) &= \sum_{k'=1}^{k} \sum_{i=1}^{m} \sbr{\mu( (\phi^{\tau^{k'}_i})^\top \theta_1 ) - \mu( (\phi^{\tau^{k'}_i})^\top \theta_2 )} \cdot \phi^{\tau^{k'}_i} + \lambda \sbr{ \theta_1 - \theta_2 }
		\\
		&= \sum_{k'=1}^{k} \sum_{i=1}^{m} b(\phi^{\tau^{k'}_i},\theta_2,\theta_1) \cdot \phi^{\tau^{k'}_i} (\phi^{\tau^{k'}_i})^\top (\theta_1-\theta_2) + \lambda \sbr{ \theta_1 - \theta_2 }
		\\
		&= \Gamma_{k}(\theta_2,\theta_1) \cdot (\theta_1-\theta_2) ,
	\end{align*}
	and thus
	\begin{align*}
		\nbr{\theta_1-\theta_2}_{\Gamma_{k}(\theta_2,\theta_1)} &= \sqrt{ \sbr{\theta_1-\theta_2}^\top \cdot \Gamma_{k}(\theta_2,\theta_1) \cdot \sbr{\theta_1-\theta_2}  }
		\\
		&= \sqrt{ \sbr{\theta_1-\theta_2}^\top \cdot \Gamma_{k}(\theta_2,\theta_1) \cdot \Gamma_{k}^{-1}(\theta_2,\theta_1) \cdot \Gamma_{k}(\theta_2,\theta_1) \cdot \sbr{\theta_1-\theta_2}  }
		\\
		&= \nbr{ g_k(\theta_1) - g_k(\theta_2) }_{\Gamma_{k}^{-1}(\theta_2,\theta_1)} ,
	\end{align*}
	which gives the second statement.
\end{proof}

Recall that for any $k>0$, $Z_k := \sum_{k'=1}^{k} \sum_{i=1}^{m} \varepsilon_{k',i} \cdot \phi^{\tau^{k'}_i}$.

\begin{lemma} \label{lemma:eff_bound_Z_k}
	For any $k>0$, we have
	\begin{align*}
		\Gamma_k(\theta^*,\hat{\theta}_k) &\succeq \sbr{ 1+  \frac{H r_{\max} \sqrt{|\cS| |\cA|}}{m}   + \frac{H}{m\sqrt{\lambda}} \nbr{ Z_k }_{\Gamma_k^{-1}(\theta^*,\hat{\theta}_k)} }^{-1} \Lambda_k(\theta^*) ,
		\\
		\nbr{Z_k}_{\Gamma_k^{-1}(\theta^*,\hat{\theta}_k)} &\leq \sqrt{ 1+  \frac{H r_{\max} \sqrt{|\cS| |\cA|}}{m} } \nbr{Z_k}_{\Lambda_k^{-1}(\theta^*)} + \frac{H}{m\sqrt{\lambda}}  \nbr{Z_k}_{\Lambda_k^{-1}(\theta^*)}^2 .
	\end{align*}
	
	Furthermore, assuming that event $\cE$ holds, we have
	\begin{align*}
		\nbr{Z_k}_{\Gamma_k^{-1}(\theta^*,\hat{\theta}_k)} \leq \sqrt{ 1+  \frac{H r_{\max} \sqrt{|\cS| |\cA|}}{m} } \cdot \omega(k) + \frac{H}{m\sqrt{\lambda}} \cdot \omega(k)^2 .
	\end{align*}
\end{lemma}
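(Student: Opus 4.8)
The plan is to run the self-concordance argument from the logistic-bandit literature (as in the referenced work of \citet{faury2020improved}), adapted to segment features. The core difficulty is that the target comparison between $\Gamma_k(\theta^*,\hat{\theta}_k)$ and $\Lambda_k(\theta^*)$ is \emph{self-referential}: the factor controlling it already contains $\nbr{Z_k}_{\Gamma_k^{-1}(\theta^*,\hat{\theta}_k)}$. So I would first prove the implicit matrix inequality (the first display), then resolve the circularity by inverting and solving a quadratic inequality (the second display), and finally substitute the concentration bound under $\cE$ (the third display).

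For the first statement, I would start from the scalar self-concordance property $\abr{\mu''}\le\mu'$, which yields $\mu'(z)\ge\mu'(z_0)e^{-\abr{z-z_0}}$ and hence, for any $\phi$, $b(\phi,\theta^*,\hat{\theta}_k)=\int_0^1\mu'(\phi^\top\theta^*+z\,\phi^\top(\hat{\theta}_k-\theta^*))\,dz\ge \mu'(\phi^\top\theta^*)\cdot\frac{1-e^{-\abr{\Delta}}}{\abr{\Delta}}\ge \frac{\mu'(\phi^\top\theta^*)}{1+\abr{\phi^\top(\hat{\theta}_k-\theta^*)}}$, where $\Delta=\phi^\top(\hat{\theta}_k-\theta^*)$ and the last step uses $e^t\ge 1+t$. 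The key is then a \emph{uniform} bound on the segment deviations $\abr{(\phi^{\tau^{k'}_i})^\top(\hat{\theta}_k-\theta^*)}$. Using Lemma~\ref{lemma:two_equalities} (with $\theta_1=\hat{\theta}_k$, $\theta_2=\theta^*$) together with Eq.~\eqref{eq:g_hat_theta-g_theta_star}, I have $\nbr{\hat{\theta}_k-\theta^*}_{\Gamma_k(\theta^*,\hat{\theta}_k)}=\nbr{g_k(\hat{\theta}_k)-g_k(\theta^*)}_{\Gamma_k^{-1}(\theta^*,\hat{\theta}_k)}=\nbr{Z_k-\lambda\theta^*}_{\Gamma_k^{-1}(\theta^*,\hat{\theta}_k)}$; the triangle inequality and $\Gamma_k\succeq\lambda I$ give $\nbr{\hat{\theta}_k-\theta^*}_{\Gamma_k}\le \nbr{Z_k}_{\Gamma_k^{-1}}+\sqrt{\lambda}\,r_{\max}\sqrt{\abr{\cS}\abr{\cA}}$. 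Since each segment visitation vector satisfies $\nbr{\phi^{\tau^{k'}_i}}_2\le\nbr{\phi^{\tau^{k'}_i}}_1=\tfrac{H}{m}$, I get $\nbr{\phi^{\tau^{k'}_i}}_{\Gamma_k^{-1}}\le\tfrac{H}{m\sqrt{\lambda}}$, and Cauchy--Schwarz in the $\Gamma_k$-norm bounds $\abr{(\phi^{\tau^{k'}_i})^\top(\hat{\theta}_k-\theta^*)}$ by exactly $\tfrac{H r_{\max}\sqrt{\abr{\cS}\abr{\cA}}}{m}+\tfrac{H}{m\sqrt{\lambda}}\nbr{Z_k}_{\Gamma_k^{-1}}$. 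Plugging this uniform bound into the self-concordance inequality term by term (and using that the constant $C:=1+\tfrac{H r_{\max}\sqrt{\abr{\cS}\abr{\cA}}}{m}+\tfrac{H}{m\sqrt{\lambda}}\nbr{Z_k}_{\Gamma_k^{-1}}\ge 1$ lets me also scale down the $\lambda I$ term) yields $\Gamma_k(\theta^*,\hat{\theta}_k)\succeq C^{-1}\Lambda_k(\theta^*)$, which is the first statement.

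For the second statement, I would invert the matrix inequality to $\Gamma_k^{-1}(\theta^*,\hat{\theta}_k)\preceq C\,\Lambda_k^{-1}(\theta^*)$ and evaluate the quadratic form at $Z_k$, obtaining $\nbr{Z_k}_{\Gamma_k^{-1}}^2\le C\,\nbr{Z_k}_{\Lambda_k^{-1}}^2$. Writing $x=\nbr{Z_k}_{\Gamma_k^{-1}}$ and $a=\nbr{Z_k}_{\Lambda_k^{-1}(\theta^*)}$, this is precisely $x^2\le \big(1+\tfrac{H r_{\max}\sqrt{\abr{\cS}\abr{\cA}}}{m}\big)a^2+\tfrac{H}{m\sqrt{\lambda}}a^2 x$, a quadratic inequality of the form $x^2\le c_1+c_2 x$ with $c_1,c_2\ge 0$. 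The standard resolution $x\le c_2+\sqrt{c_1}$ (from $\tfrac{c_2+\sqrt{c_2^2+4c_1}}{2}\le c_2+\sqrt{c_1}$) gives exactly $x\le \sqrt{1+\tfrac{H r_{\max}\sqrt{\abr{\cS}\abr{\cA}}}{m}}\,a+\tfrac{H}{m\sqrt{\lambda}}a^2$, the second display. Finally, the right-hand side is monotone increasing in $a$, so under $\cE$, where Lemma~\ref{lemma:noise_seq_martingale} already gives $\nbr{Z_k}_{\Lambda_k^{-1}(\theta^*)}\le\tfrac{\sqrt{\lambda}}{2}+\tfrac{\abr{\cS}\abr{\cA}}{\sqrt{\lambda}}\log(\cdots)\le\omega(k)$, substituting $a\le\omega(k)$ yields the third display.

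The main obstacle is getting the \emph{uniform} deviation bound in the second paragraph to land exactly on the factor $C$ while keeping the argument non-circular: the bound on $\abr{(\phi^{\tau^{k'}_i})^\top(\hat{\theta}_k-\theta^*)}$ is itself expressed through $\nbr{Z_k}_{\Gamma_k^{-1}}$, so the first statement can only be an implicit inequality. The quadratic-inequality step is what closes the loop, and I would be careful that the feature-norm estimate $\nbr{\phi^{\tau^{k'}_i}}_2\le H/m$ and the ball bound $\nbr{\theta^*}_2\le r_{\max}\sqrt{\abr{\cS}\abr{\cA}}$ are the only structural inputs needed, so that the constants in $C$ match the statement precisely.
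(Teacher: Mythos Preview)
Your proposal is correct and follows essentially the same route as the paper's proof, which likewise applies the self-concordance lower bound $b(\phi,\theta^*,\hat{\theta}_k)\ge \mu'(\phi^\top\theta^*)/(1+|\phi^\top(\hat{\theta}_k-\theta^*)|)$, bounds the segment deviation via $\hat{\theta}_k-\theta^*=\Gamma_k^{-1}(Z_k-\lambda\theta^*)$ together with $\|\phi^{\tau^{k'}_i}\|_{\Gamma_k^{-1}}\le \tfrac{H}{m\sqrt{\lambda}}$ and $\|\lambda\theta^*\|_{\Gamma_k^{-1}}\le\sqrt{\lambda}\,r_{\max}\sqrt{|\cS||\cA|}$, and then closes the self-referential loop by the same quadratic-inequality argument (the paper cites Proposition~6 and Corollary~5 of \citet{russac2021self} for this step). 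Your handling of the $\lambda I$ term via $C\ge 1$ and your monotonicity argument for the substitution $a\le\omega(k)$ are also exactly what the paper does.
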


\begin{proof}
	This proof follows the analysis of Proposition 6 and Corollary 5 in \citep{russac2021self}.
	
	From Eq.~\eqref{eq:g_hat_theta-g_theta_star}, we have that for any $k>0$,
	\begin{align*}
		g_k(\hat{\theta}_k) - g_k(\theta^*) = Z_k -  \lambda \theta^* .
	\end{align*}
	
	Using Lemma~\ref{lemma:self_concordance}, we have that for any $\phi \in \R^{|\cS| |\cA|}$ such that $\|\phi\|_2 \leq L_{\phi}$, 
	\begin{align*}
		b(\phi,\theta^*,\hat{\theta}_k) &\geq \sbr{ 1+\abr{\phi^\top (\theta^*-\hat{\theta}_k)} }^{-1} \mu'(\phi^\top \theta^*)
		\\
		&= \sbr{ 1+\abr{\phi^\top \Gamma_k^{-1}(\theta^*,\hat{\theta}_k)\cdot (g_k(\theta^*)-g_k(\hat{\theta}_k))} }^{-1} \mu'(\phi^\top \theta^*)
		\\
		&\geq \sbr{ 1+\nbr{\phi}_{ \Gamma_k^{-1}(\theta^*,\hat{\theta}_k)}  \nbr{g_k(\theta^*)-g_k(\hat{\theta}_k)}_{\Gamma_k^{-1}(\theta^*,\hat{\theta}_k)} }^{-1} \mu'(\phi^\top \theta^*)
		\\
		&\geq \sbr{ 1+\frac{L_{\phi}}{\sqrt{\lambda}} \nbr{g_k(\theta^*)-g_k(\hat{\theta}_k)}_{\Gamma_k^{-1}(\theta^*,\hat{\theta}_k)} }^{-1} \mu'(\phi^\top \theta^*)
		\\
		&= \sbr{ 1+\frac{L_{\phi}}{\sqrt{\lambda}} \nbr{ Z_k-\lambda \theta^* }_{\Gamma_k^{-1}(\theta^*,\hat{\theta}_k)} }^{-1} \mu'(\phi^\top \theta^*)
		\\
		&\geq \sbr{ 1+ L_{\phi} r_{\max} \sqrt{|\cS| |\cA|}   + \frac{L_{\phi}}{\sqrt{\lambda}} \nbr{ Z_k }_{\Gamma_k^{-1}(\theta^*,\hat{\theta}_k)} }^{-1} \mu'(\phi^\top \theta^*) .
	\end{align*}
	
	Using the above equation with $\phi=\phi^{\tau^{k'}_i}$ and $L_{\phi}=\frac{H}{m}$, we have
	\begin{align*}
		\Gamma_k(\theta^*,\hat{\theta}_k)&:= \sum_{k'=1}^{k} \sum_{i=1}^{m} b(\phi^{\tau^{k'}_i},\theta^*,\hat{\theta}_k) \cdot \phi^{\tau^{k'}_i} (\phi^{\tau^{k'}_i})^\top + \lambda I 
		\\
		&\succeq \!\!\sum_{k'=1}^{k} \!\sum_{i=1}^{m} \sbr{ 1 \!+\!  \frac{H r_{\max} \sqrt{|\cS| |\cA|}}{m}   \!+\! \frac{H}{m\sqrt{\lambda}} \nbr{ Z_k }_{\Gamma_k^{-1}(\theta^*,\hat{\theta}_k)} }^{\!\!-1} \!\!\!\! \mu'(\phi^\top \theta^*) \cdot \phi^{\tau^{k'}_i} (\phi^{\tau^{k'}_i})^{\!\top} \!\!+\! \lambda I 
		\\
		&=  \sbr{ 1+  \frac{H r_{\max} \sqrt{|\cS| |\cA|}}{m}   + \frac{H}{m\sqrt{\lambda}} \nbr{ Z_k }_{\Gamma_k^{-1}(\theta^*,\hat{\theta}_k)} }^{-1} \Lambda_k(\theta^*) .
	\end{align*}
	
	This implies
	\begin{align*}
		\nbr{Z_k}_{\Gamma_k^{-1}(\theta^*,\hat{\theta}_k)}^2 \leq \sbr{ 1+  \frac{H r_{\max} \sqrt{|\cS| |\cA|}}{m}   + \frac{H}{m\sqrt{\lambda}} \nbr{ Z_k }_{\Gamma_k^{-1}(\theta^*,\hat{\theta}_k)} } \nbr{Z_k}_{\Lambda_k^{-1}(\theta^*)}^2 ,
	\end{align*}
	which is equivalent to
	\begin{align*}	\nbr{Z_k}_{\Gamma_k^{-1}(\theta^*,\hat{\theta}_k)}^2 - \frac{H}{m\sqrt{\lambda}}  \nbr{Z_k}_{\Lambda_k^{-1}(\theta^*)}^2 \nbr{ Z_k }_{\Gamma_k^{-1}(\theta^*,\hat{\theta}_k)} - \sbr{ 1+  \frac{H r_{\max} \sqrt{|\cS| |\cA|}}{m} } \nbr{Z_k}_{\Lambda_k^{-1}(\theta^*)}^2 \leq 0 .
	\end{align*}
	
	By analysis of quadratic functions, we have
	\begin{align*}
		\nbr{Z_k}_{\Gamma_k^{-1}(\theta^*,\hat{\theta}_k)} &\leq \sqrt{ 1+  \frac{H r_{\max} \sqrt{|\cS| |\cA|}}{m} } \nbr{Z_k}_{\Lambda_k^{-1}(\theta^*)} + \frac{H}{m\sqrt{\lambda}}  \nbr{Z_k}_{\Lambda_k^{-1}(\theta^*)}^2 
		\\
		&\leq \sqrt{ 1+  \frac{H r_{\max} \sqrt{|\cS| |\cA|}}{m} } \cdot \omega(k) + \frac{H}{m\sqrt{\lambda}} \cdot \omega(k)^2  .
	\end{align*}
	
\end{proof}

\begin{lemma}[Concentration of $\phi^\top \hat{\theta}_k$ under Binary Feedback] \label{lemma:confence_interval_proj_free}
	Assume that event $\cE$ holds. Then, for any $k>0$ and $\phi \in \R^{|\cS| |\cA|}$,
	\begin{align*}
		|\phi^\top \theta^* - \phi^\top \hat{\theta}_k| \leq \sqrt{\alpha} \cdot \nu(k) \nbr{\phi}_{\Sigma_k^{-1}}  .
	\end{align*}
\end{lemma}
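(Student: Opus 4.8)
The plan is to bound the directional error $\phi^\top(\theta^* - \hat\theta_k)$ by a Cauchy--Schwarz split in the $\Gamma_k$-metric, and then convert the resulting $\Gamma_k^{-1}$-norm of $\phi$ into a $\Sigma_k^{-1}$-norm through the chain of matrix inequalities already established. First I would combine the identity $g_k(\theta^*) - g_k(\hat\theta_k) = \Gamma_k(\theta^*,\hat\theta_k)(\theta^*-\hat\theta_k)$ from Lemma~\ref{lemma:two_equalities} with the generalized Cauchy--Schwarz inequality $\abr{\phi^\top v} \le \nbr{\phi}_{\Gamma_k^{-1}}\nbr{v}_{\Gamma_k}$ to obtain
\[
\abr{\phi^\top(\theta^* - \hat\theta_k)} \le \nbr{\phi}_{\Gamma_k^{-1}(\theta^*,\hat\theta_k)}\,\nbr{\theta^* - \hat\theta_k}_{\Gamma_k(\theta^*,\hat\theta_k)} .
\]
The second statement of Lemma~\ref{lemma:two_equalities} rewrites the second factor as $\nbr{g_k(\theta^*) - g_k(\hat\theta_k)}_{\Gamma_k^{-1}}$, and Eq.~\eqref{eq:g_hat_theta-g_theta_star} identifies $g_k(\theta^*)-g_k(\hat\theta_k) = \lambda\theta^* - Z_k$, so by the triangle inequality this factor is at most $\nbr{Z_k}_{\Gamma_k^{-1}} + \lambda\nbr{\theta^*}_{\Gamma_k^{-1}}$.

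Next I would control each piece using the lemmas in hand. For the $\phi$ factor, Lemma~\ref{lemma:eff_bound_Z_k} gives $\Gamma_k(\theta^*,\hat\theta_k) \succeq c^{-1}\Lambda_k(\theta^*)$ with $c := 1 + \frac{Hr_{\max}\sqrt{|\cS||\cA|}}{m} + \frac{H}{m\sqrt\lambda}\nbr{Z_k}_{\Gamma_k^{-1}}$, hence $\nbr{\phi}_{\Gamma_k^{-1}} \le \sqrt c\,\nbr{\phi}_{\Lambda_k^{-1}(\theta^*)}$; Lemma~\ref{lemma:transform_Lambda_Sigma} ($\Sigma_k \preceq \alpha\Lambda_k(\theta^*)$) then gives $\nbr{\phi}_{\Lambda_k^{-1}(\theta^*)} \le \sqrt\alpha\,\nbr{\phi}_{\Sigma_k^{-1}}$, so $\nbr{\phi}_{\Gamma_k^{-1}} \le \sqrt{c\alpha}\,\nbr{\phi}_{\Sigma_k^{-1}}$. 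For the error factor, on event $\cE$ the last display of Lemma~\ref{lemma:eff_bound_Z_k} gives $\nbr{Z_k}_{\Gamma_k^{-1}} \le \sqrt{A}\,\omega(k) + B\,\omega(k)^2$, where I abbreviate $A := 1 + \frac{Hr_{\max}\sqrt{|\cS||\cA|}}{m}$ and $B := \frac{H}{m\sqrt\lambda}$; and since $\Gamma_k \succeq \lambda I$ and each entry of $\theta^*$ lies in $[-r_{\max},r_{\max}]$, we have $\lambda\nbr{\theta^*}_{\Gamma_k^{-1}} \le \sqrt\lambda\,r_{\max}\sqrt{|\cS||\cA|}$.

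The main bookkeeping — and the place where the precise form of $\nu(k)$ matters — is to verify that these pieces reassemble into exactly $\nu(k)$. Writing $c_{\max} := A + B\sqrt A\,\omega(k) + B^2\omega(k)^2$, the bound on $\cE$ yields $c \le c_{\max}$, while the inner expression of Eq.~\eqref{eq:def_nu} is precisely $c_{\max}$ and the prefactor is $\frac{m\sqrt\lambda}{H} = B^{-1}$, so $\nu(k) = B^{-1}c_{\max}^{3/2}$. Collecting the error-factor bounds and using $\sqrt\lambda\,r_{\max}\sqrt{|\cS||\cA|} \le A/B$ gives $\nbr{\theta^* - \hat\theta_k}_{\Gamma_k} \le \tfrac{A}{B} + \sqrt A\,\omega(k) + B\,\omega(k)^2 = c_{\max}/B$. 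Combining with $\nbr{\phi}_{\Gamma_k^{-1}} \le \sqrt{c_{\max}\alpha}\,\nbr{\phi}_{\Sigma_k^{-1}}$ then gives
\[
\abr{\phi^\top(\theta^* - \hat\theta_k)} \le \sqrt{\alpha}\,\nbr{\phi}_{\Sigma_k^{-1}}\cdot \sqrt{c_{\max}}\cdot\frac{c_{\max}}{B} = \sqrt\alpha\,\nu(k)\,\nbr{\phi}_{\Sigma_k^{-1}} .
\]
I expect the obstacle to be this final accounting rather than any new inequality: one must split the single factor $c_{\max}^{3/2}$ so that one power $c_{\max}^{1/2}$ absorbs the metric-conversion constant $\sqrt c$ while the remaining $c_{\max}$ absorbs $B\cdot\nbr{\theta^*-\hat\theta_k}_{\Gamma_k}$, and the inequality $\lambda\nbr{\theta^*}_{\Gamma_k^{-1}} \le A/B$ is exactly what lets the regularization term fit inside $c_{\max}$.
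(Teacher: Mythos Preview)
Your proposal is correct and follows essentially the same route as the paper: Cauchy--Schwarz in the $\Gamma_k(\theta^*,\hat\theta_k)$ metric, Lemma~\ref{lemma:two_equalities} to pass to $\nbr{g_k(\theta^*)-g_k(\hat\theta_k)}_{\Gamma_k^{-1}}=\nbr{Z_k-\lambda\theta^*}_{\Gamma_k^{-1}}$, Lemma~\ref{lemma:eff_bound_Z_k} for both the metric conversion $\Gamma_k^{-1}\to\Lambda_k^{-1}$ and the bound on $\nbr{Z_k}_{\Gamma_k^{-1}}$, and Lemma~\ref{lemma:transform_Lambda_Sigma} for $\Lambda_k^{-1}\to\Sigma_k^{-1}$. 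Your algebraic bookkeeping with $A,B,c_{\max}$ is clean and matches the paper's exponent split $c_{\max}^{3/2}=c_{\max}^{1/2}\cdot c_{\max}$; the paper organizes the same computation slightly differently by observing $\tfrac{H}{m\sqrt\lambda}\nbr{Z_k}_{\Gamma_k^{-1}}+\tfrac{Hr_{\max}\sqrt{|\cS||\cA|}}{m}\le c$ directly before passing to $c_{\max}$, but the content is identical.
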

\begin{proof}
	We have
	\begin{align*}
		&\quad\ |\phi^\top \theta^* - \phi^\top \hat{\theta}_k| 
		\\
		&= 	\nbr{\phi}_{\Gamma_k^{-1}(\theta^*,\hat{\theta}_k)} \nbr{\theta^* - \hat{\theta}_k}_{\Gamma_k(\theta^*,\hat{\theta}_k)}
		\nonumber\\
		&\overset{\textup{(a)}}{\leq} \sqrt{ 1+  \frac{H r_{\max} \sqrt{|\cS| |\cA|}}{m}   + \frac{H}{m\sqrt{\lambda}} \nbr{ Z_k }_{\Gamma_k^{-1}(\theta^*,\hat{\theta}_k)} } \nbr{\phi}_{\Lambda_k^{-1}(\theta^*)} \nbr{g_k(\theta^*) - g_k(\hat{\theta}_k)}_{\Gamma_k^{-1}(\theta^*,\hat{\theta}_k)}
		\nonumber\\
		&= \sqrt{ 1+  \frac{H r_{\max} \sqrt{|\cS| |\cA|}}{m}   + \frac{H}{m\sqrt{\lambda}} \nbr{ Z_k }_{\Gamma_k^{-1}(\theta^*,\hat{\theta}_k)} } \nbr{\phi}_{\Lambda_k^{-1}(\theta^*)} \nbr{ Z_k-\lambda \theta^* }_{\Gamma_k^{-1}(\theta^*,\hat{\theta}_k)}
		\nonumber\\
		&\leq \sqrt{ 1 \!+\!  \frac{H r_{\max} \sqrt{|\cS| |\cA|}}{m} \!+\! \frac{H}{m\sqrt{\lambda}} \nbr{ Z_k }_{\Gamma_k^{-1}(\theta^*,\hat{\theta}_k)} } \nbr{\phi}_{\Lambda_k^{-1}(\theta^*)} \sbr{\nbr{ Z_k }_{\Gamma_k^{-1}(\theta^*,\hat{\theta}_k)} \!+\! r_{\max} \sqrt{\lambda |\cS| |\cA|}  }
		\nonumber\\
		&= \frac{m \sqrt{\lambda}}{H} \sqrt{ 1+  \frac{H r_{\max} \sqrt{|\cS| |\cA|}}{m}   + \frac{H}{m\sqrt{\lambda}} \nbr{ Z_k }_{\Gamma_k^{-1}(\theta^*,\hat{\theta}_k)} } \nbr{\phi}_{\Lambda_k^{-1}(\theta^*)} \cdot 
		\\
		&\quad \sbr{ \frac{H}{m \sqrt{\lambda}} \nbr{ Z_k }_{\Gamma_k^{-1}(\theta^*,\hat{\theta}_k)} + \frac{Hr_{\max}\sqrt{|\cS| |\cA|} }{m } } 
		\nonumber\\
		&\leq \frac{m \sqrt{\lambda}}{H} \sbr{ 1+  \frac{H r_{\max} \sqrt{|\cS| |\cA|}}{m}   + \frac{H}{m\sqrt{\lambda}} \nbr{ Z_k }_{\Gamma_k^{-1}(\theta^*,\hat{\theta}_k)} }^{\frac{3}{2}} \nbr{\phi}_{\Lambda_k^{-1}(\theta^*)}  
		\nonumber\\
		&\overset{\textup{(b)}}{\leq} \!\frac{m \sqrt{\alpha \lambda}}{H} \!\! \sbr{ 1 \!+\!  \frac{H r_{\max} \sqrt{|\cS| |\cA|}}{m}   \!+\! \frac{H}{m\sqrt{\lambda}} \sbr{ \sqrt{ 1 \!+\!  \frac{H r_{\max} \sqrt{|\cS| |\cA|}}{m} } \omega(k) \!+\! \frac{H}{m\sqrt{\lambda}} \omega(k)^2 } }^{\!\!\frac{3}{2}} \!\!\nbr{\phi}_{\Sigma_k^{-1}} 
		\\
		&= \sqrt{\alpha} \cdot \nu(k) \nbr{\phi}_{\Sigma_k^{-1}} ,
	\end{align*}
	where inequality (a) is due to Lemmas~\ref{lemma:two_equalities} and \ref{lemma:eff_bound_Z_k}, and inequality (b) follows from  Lemmas~\ref{lemma:transform_Lambda_Sigma} and \ref{lemma:eff_bound_Z_k}.
\end{proof}

\begin{lemma}[Gaussian Anti-Concentration] \label{lemma:Gaussian_anti_concentration}
	Assume that event $\cE$ holds. Then, for any $k>0$ and $F_{k-1}$-measurable random variable $X \in \R^{|\cS| |\cA|}$, we have
	\begin{align*}
		\Pr\mbr{ X^\top \tilde{\theta}_k > X^\top \theta^* \ |\ F_{k-1} } \geq \frac{1}{2\sqrt{2\pi e}} .
	\end{align*}
\end{lemma}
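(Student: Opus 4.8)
\emph{Proof plan.} The plan is to condition on $F_{k-1}$ and reduce the claim to a one-dimensional Gaussian tail estimate. Since $X$, $\hat{\theta}_{k-1}$ and $\Sigma_{k-1}$ are all $F_{k-1}$-measurable, the only randomness in $\tilde{\theta}_k = \hat{\theta}_{k-1} + \xi_k$ conditioned on $F_{k-1}$ comes from $\xi_k \sim \cN(0, \alpha \cdot \nu(k-1)^2 \cdot \Sigma_{k-1}^{-1})$ (Line~\ref{line:noise_bits}). Hence, conditioned on $F_{k-1}$, the scalar $X^\top \tilde{\theta}_k$ is Gaussian with mean $X^\top \hat{\theta}_{k-1}$ and standard deviation $\sigma := \sqrt{\alpha}\, \nu(k-1) \nbr{X}_{\Sigma_{k-1}^{-1}}$, because $\var(X^\top \xi_k \mid F_{k-1}) = \alpha \nu(k-1)^2 X^\top \Sigma_{k-1}^{-1} X$. (The degenerate case $X=\vzero$, where $\sigma=0$, is excluded, as there $X^\top\tilde\theta_k = X^\top\theta^*$.)

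First I would standardize. Setting $Z := (X^\top \tilde{\theta}_k - X^\top \hat{\theta}_{k-1})/\sigma$, which is $\cN(0,1)$ conditioned on $F_{k-1}$, the target probability rewrites as $\Pr[Z > t \mid F_{k-1}]$ with the deterministic threshold $t := (X^\top \theta^* - X^\top \hat{\theta}_{k-1})/\sigma$. The key step is to show $t \le 1$. Applying Lemma~\ref{lemma:confence_interval_proj_free} at index $k-1$ with $\phi = X$ (legitimate under event $\cE$) gives $|X^\top \theta^* - X^\top \hat{\theta}_{k-1}| \le \sqrt{\alpha}\, \nu(k-1) \nbr{X}_{\Sigma_{k-1}^{-1}} = \sigma$, so $t \in [-1,1]$ and in particular $t \le 1$. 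Since $t \mapsto \Pr[Z > t]$ is monotonically decreasing, we obtain $\Pr[Z > t \mid F_{k-1}] \ge \Pr[Z > 1]$.

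Finally I would invoke the standard Gaussian anti-concentration tail bound $\Pr[Z > t] \ge \frac{1}{\sqrt{2\pi}} \cdot \frac{t}{t^2+1} \cdot e^{-t^2/2}$ for $t>0$, evaluated at $t=1$, which yields exactly $\frac{1}{2\sqrt{2\pi}} e^{-1/2} = \frac{1}{2\sqrt{2\pi e}}$, as claimed.

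This argument is short, and the only real subtlety — which is also its crux — is that the conditional standard deviation $\sigma = \sqrt{\alpha}\,\nu(k-1)\nbr{X}_{\Sigma_{k-1}^{-1}}$ must coincide \emph{exactly} with the confidence radius furnished by Lemma~\ref{lemma:confence_interval_proj_free}, so that the threshold $t$ lands inside $[-1,1]$. This matching is precisely the design reason for choosing the posterior sampling covariance in Line~\ref{line:noise_bits} to be $\alpha \cdot \nu(k-1)^2 \cdot \Sigma_{k-1}^{-1}$; with any smaller noise scale the threshold could exceed $1$ and the constant lower bound would degrade. I would therefore highlight this alignment as the one place requiring care, the remaining steps being routine.
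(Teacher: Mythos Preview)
Your proposal is correct and follows essentially the same argument as the paper: standardize $X^\top \tilde{\theta}_k$ to a $\cN(0,1)$ variable, use Lemma~\ref{lemma:confence_interval_proj_free} at index $k-1$ to show the normalized threshold is at most $1$, and then apply the Gaussian lower tail bound $\Pr[Z>z] \ge \frac{1}{\sqrt{2\pi}}\frac{z}{1+z^2}e^{-z^2/2}$ at $z=1$. Your added remarks on the degenerate case $X=\vzero$ and on why the posterior covariance must be scaled exactly by $\alpha\,\nu(k-1)^2$ are correct and a bit more explicit than the paper, but the substance is identical.
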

\begin{proof}
	This proof is originated from the analysis of Lemma 11 in \citep{efroni2021reinforcement}. 
	
	Using Lemma~\ref{lemma:confence_interval_proj_free}, we have that for any $k>0$,
	\begin{align*}
		|X^\top \theta^* - X^\top \hat{\theta}_{k-1}| \leq  \sqrt{\alpha} \cdot \nu(k-1) \nbr{X}_{\Sigma_{k-1}^{-1}}  .
	\end{align*}
	
	It holds that
	\begin{align*}
		&\quad\ \Pr\mbr{ X^\top \tilde{\theta}_k > X^\top \theta^* \ |\ F_{k-1} }
		\\
		&= \Pr\mbr{ \frac{X^\top \tilde{\theta}_k - X^\top \hat{\theta}_{k-1} }{ \sqrt{\alpha} \cdot \nu(k-1) \|X\|_{\Sigma_{k-1}^{-1}} } > \frac{X^\top \theta^* - X^\top \hat{\theta}_{k-1} }{ \sqrt{\alpha} \cdot \nu(k-1) \|X\|_{\Sigma_{k-1}^{-1}} } \ |\ F_{k-1} } .
	\end{align*}
	
	Here given $F_{k-1}$, $X^\top \tilde{\theta}_k - X^\top \hat{\theta}_{k-1} = X^\top \xi_k$ is a Gaussian random variable with mean $0$ and standard deviation $\sqrt{\alpha} \cdot \nu(k-1) \|X\|_{\Sigma_{k-1}^{-1}}$.
	
	Since when event $\cE$ holds,
	\begin{align*}
		\frac{X^\top \theta^* - X^\top \hat{\theta}_{k-1}}{ \sqrt{\alpha} \cdot \nu(k-1) \|X\|_{\Sigma_{k-1}^{-1}} }  \leq \frac{ \sqrt{\alpha} \cdot \nu(k-1) \|X\|_{\Sigma_{k-1}^{-1}} }{ \sqrt{\alpha} \cdot \nu(k-1) \|X\|_{\Sigma_{k-1}^{-1}} } =1 ,
	\end{align*}
	we have
	\begin{align*}
		\Pr\mbr{ X^\top \tilde{\theta}_k > X^\top \theta^* \ |\ F_{k-1} }
		&\geq \Pr\mbr{ \frac{X^\top \tilde{\theta}_k - X^\top \hat{\theta}_{k-1}}{ \sqrt{\alpha} \cdot \nu(k-1) \|X\|_{\Sigma_{k-1}^{-1}} } > 1 \ |\ F_{k-1} }
		\\
		&= \Pr\mbr{ \frac{ X^\top \xi_k }{ \sqrt{\alpha} \cdot \nu(k-1) \|X\|_{\Sigma_{k-1}^{-1}} } > 1 \ |\ F_{k-1} }
		\\
		&\overset{\textup{(a)}}{\geq} \frac{1}{2\sqrt{2\pi e}} ,
	\end{align*}
	where inequality (a) comes from that if $Z \sim \cF^{\bi}_{\textup{UTran}}(0,1)$,  $\Pr[Z>z]\geq \frac{1}{\sqrt{2\pi}} \cdot \frac{z}{1+z^2} e^{-\frac{z^2}{2}}$~\citep{borjesson1979simple}.
\end{proof}

\begin{lemma} \label{lemma:xi_xi'}
	Let $\xi_k,\xi'_k \in \R^{|\cS||\cA|}$ be i.i.d. random variables given $F_{k-1}$. Let $\tilde{p}$ be a $F_{k-1}$-measurable transition model, and $x_{k-1} \in \R^{|\cS||\cA|}$ be  a $F_{k-1}$-measurable random variable. For any policy $\pi$, denote the visitation indicator under policy $\pi$ on MDP $\tilde{p}$ by $\tilde{\phi}^{\pi}$. Let $\tilde{\pi}^k:= \argmax_{\pi} (\tilde{\phi}^{\pi})^\top (x_{k-1} + \xi_k)$. Then, we have
	\begin{align*}
		& \ex\mbr{ \sbr{ (\tilde{\phi}^{\tilde{\pi}^k})^\top \sbr{ x_{k-1} + \xi_k } - \ex\mbr{ (\tilde{\phi}^{\tilde{\pi}^k})^\top \sbr{ x_{k-1} + \xi_k } \ |\ F_{k-1}  } }^{+} \ \Big|\ F_{k-1} }
		\\
		&\leq \ex\mbr{ | (\tilde{\phi}^{\tilde{\pi}^k})^\top \xi_k | + | (\tilde{\phi}^{\tilde{\pi}^k})^\top \xi'_k| \ \Big|\ F_{k-1} } .
	\end{align*}
\end{lemma}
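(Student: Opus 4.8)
The plan is to run the standard Thompson-sampling ``optimism via an i.i.d.\ copy'' argument, exploiting that $\tilde{\pi}^k$ maximizes the objective built from $\xi_k$ but is only one feasible policy for the objective built from $\xi'_k$. First I would introduce the shorthand $f(\xi):=\max_{\pi}(\tilde{\phi}^{\pi})^\top(x_{k-1}+\xi)$, so that by definition of $\tilde{\pi}^k$ we have $Y:=(\tilde{\phi}^{\tilde{\pi}^k})^\top(x_{k-1}+\xi_k)=f(\xi_k)$. Since $\tilde{p}$ and $x_{k-1}$ are $F_{k-1}$-measurable, $f$ is a deterministic (measurable) function of $\xi$ once we condition on $F_{k-1}$; and since $\xi_k,\xi'_k$ are i.i.d.\ given $F_{k-1}$, the two conditional means agree:
\begin{align*}
\ex\mbr{ f(\xi_k) \mid F_{k-1} } = \ex\mbr{ f(\xi'_k) \mid F_{k-1} } .
\end{align*}
This lets me replace the subtracted conditional mean $\ex[Y\mid F_{k-1}]$ by $\ex[f(\xi'_k)\mid F_{k-1}]$.

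Next I would condition additionally on $\xi_k$ and use that $\xi'_k$ is independent of $\xi_k$ given $F_{k-1}$, so that $\ex[f(\xi'_k)\mid F_{k-1},\xi_k]=\ex[f(\xi'_k)\mid F_{k-1}]$. Applying Jensen's inequality to the convex map $z\mapsto z^{+}$ then yields
\begin{align*}
\sbr{ f(\xi_k) - \ex\mbr{ f(\xi'_k) \mid F_{k-1} } }^{+} \leq \ex\mbr{ \sbr{ f(\xi_k) - f(\xi'_k) }^{+} \mid F_{k-1}, \xi_k } .
\end{align*}
Taking expectation over $\xi_k$ given $F_{k-1}$ collapses the left-hand side into exactly the target quantity, leaving me to bound $\ex[(f(\xi_k)-f(\xi'_k))^{+}\mid F_{k-1}]$.

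Finally I would bound the integrand pointwise using the asymmetry between the two samples. Because $\tilde{\pi}^k$ is the \emph{maximizer} for $\xi_k$ but only a \emph{feasible} policy for $\xi'_k$, we have $f(\xi_k)=(\tilde{\phi}^{\tilde{\pi}^k})^\top(x_{k-1}+\xi_k)$ with equality, while $f(\xi'_k)\geq(\tilde{\phi}^{\tilde{\pi}^k})^\top(x_{k-1}+\xi'_k)$. Subtracting cancels the $x_{k-1}$ term and gives $f(\xi_k)-f(\xi'_k)\leq(\tilde{\phi}^{\tilde{\pi}^k})^\top(\xi_k-\xi'_k)$; then $(a)^{+}\leq|a|$ together with the triangle inequality yields $(f(\xi_k)-f(\xi'_k))^{+}\leq|(\tilde{\phi}^{\tilde{\pi}^k})^\top\xi_k|+|(\tilde{\phi}^{\tilde{\pi}^k})^\top\xi'_k|$, and a final conditional expectation closes the argument. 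The only real obstacle here is measurability bookkeeping: one must check that $\tilde{p},x_{k-1}$ being $F_{k-1}$-measurable makes $f$ a fixed function of the sampled noise, so that the swap of the subtracted mean onto the $\xi'_k$ copy and the conditional-Jensen step are both legitimate; the algebra itself is routine once that is in place.
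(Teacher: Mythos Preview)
Your proposal is correct and follows essentially the same route as the paper's proof: introduce the i.i.d.\ copy $\xi'_k$ to rewrite the subtracted conditional mean, use convexity of $(\cdot)^{+}$ together with conditional independence to pull the positive part inside, exploit that $\tilde{\pi}^k$ is the maximizer for $\xi_k$ but only feasible for $\xi'_k$ to cancel $x_{k-1}$, and finish with $(\cdot)^{+}\le|\cdot|$ and the triangle inequality. The only cosmetic difference is ordering: the paper first replaces the inner max by the $\tilde{\pi}^k$ evaluation and then applies Jensen to $|\cdot|$, whereas you apply Jensen to $(\cdot)^{+}$ first and then drop the max; both are valid and yield the same bound.
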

\begin{proof}
	This proof is originated from Lemma 12 in \citep{efroni2021reinforcement}.
	
	First, using the definition of $\tilde{\pi}^k$ and the fact that $\xi_k$ and $\xi'_k$ follow the same distribution, we have
	\begin{align}
		\ex\mbr{ (\tilde{\phi}^{\tilde{\pi}^k})^\top \sbr{ x_{k-1} + \xi_k } \ |\ F_{k-1}  } = \ex\mbr{ \max_{\pi} (\tilde{\phi}^{\pi})^\top \sbr{ x_{k-1} + \xi'_k } \ |\ F_{k-1}  } . \label{eq:connect_xi_xi'}
	\end{align}
	
	Then, since given $F_{k-1}$, $\xi_k$ and $\xi'_k$ are independent, we have
	\begin{align}
		\ex\mbr{ \max_{\pi} (\tilde{\phi}^{\pi})^\top \sbr{ x_{k-1} + \xi'_k } \ |\ F_{k-1}  } &= \ex\mbr{ \max_{\pi} (\tilde{\phi}^{\pi})^\top \sbr{ x_{k-1} + \xi'_k } \ |\ F_{k-1} , \xi_k, \tilde{\pi}_k }
		\nonumber\\
		&\geq \ex\mbr{ (\phi^{\tilde{\pi}_k})^\top \sbr{ x_{k-1} + \xi'_k } \ |\ F_{k-1}, \xi_k, \tilde{\pi}_k  } . \label{eq:relax_max_xi'}
	\end{align}
	
	Hence, combining Eqs.~\eqref{eq:connect_xi_xi'} and \eqref{eq:relax_max_xi'}, we have
	\begin{align*}
		&\quad\ \ex\mbr{ \sbr{(\phi^{\tilde{\pi}_k})^\top \sbr{ x_{k-1} + \xi_k } - \ex\mbr{ (\phi^{\tilde{\pi}_k})^\top \sbr{ x_{k-1} + \xi_k } \ |\ F_{k-1}  } }^{+} \ \Big|\ F_{k-1} }
		\\
		&\leq \ex\mbr{ \sbr{ (\phi^{\tilde{\pi}_k})^\top \sbr{ x_{k-1} + \xi_k }  - \ex\mbr{ (\phi^{\tilde{\pi}_k})^\top \sbr{ x_{k-1} + \xi'_k } \ |\ F_{k-1} , \xi_k, \tilde{\pi}_k  } }^{+} \ \Big|\ F_{k-1} } 
		\\
		&= \ex\mbr{ \sbr{ \ex\mbr{ (\phi^{\tilde{\pi}_k})^\top \sbr{ x_{k-1} + \xi_k }  -  (\phi^{\tilde{\pi}_k})^\top \sbr{ x_{k-1} + \xi'_k } \ |\ F_{k-1} , \xi_k, \tilde{\pi}_k  } }^{+} \ \Big|\ F_{k-1} } 
		\\
		&= \ex\mbr{ \sbr{ \ex\mbr{ (\phi^{\tilde{\pi}_k})^\top \xi_k   -  (\phi^{\tilde{\pi}_k})^\top \xi'_k \ |\ F_{k-1} , \xi_k, \tilde{\pi}_k  } }^{+} \ \Big|\ F_{k-1} }
		\\
		&\leq \ex\mbr{ \abr{ \ex\mbr{ (\phi^{\tilde{\pi}_k})^\top \xi_k   -  (\phi^{\tilde{\pi}_k})^\top \xi'_k \ |\ F_{k-1} , \xi_k, \tilde{\pi}_k  } } \ \Big|\ F_{k-1} }
		\\
		&\leq \ex\mbr{  \ex\mbr{ |(\phi^{\tilde{\pi}_k})^\top \xi_k   -  (\phi^{\tilde{\pi}_k})^\top \xi'_k| \ |\ F_{k-1} , \xi_k, \tilde{\pi}_k   } \ \Big|\ F_{k-1} }
		\\
		&= \ex\mbr{   | (\phi^{\tilde{\pi}_k})^\top \xi_k   -  (\phi^{\tilde{\pi}_k})^\top \xi'_k | \ \Big|\ F_{k-1} }
		\\
		&\leq \ex\mbr{   |(\phi^{\tilde{\pi}_k})^\top \xi_k| \ \Big|\ F_{k-1} } + \ex\mbr{   |  (\phi^{\tilde{\pi}_k})^\top \xi'_k| \ \Big|\ F_{k-1} } .
	\end{align*}
	
\end{proof}

For any $k>0$ and $\delta_k \in (0,1)$, define event
\begin{align*}
	\cM_k(\delta_k):=\lbr{ \forall \phi \in \R^{|\cS||\cA|} :\  |\phi^\top \xi_k| \leq \sqrt{\alpha} \cdot \nu(k-1) \sbr{ \sqrt{|\cS||\cA|} + 2 \sqrt{\log\sbr{\frac{1}{\delta_k}}} } \nbr{\phi}_{\Sigma_{k-1}^{-1}} } .
\end{align*}

\begin{lemma} \label{lemma:X_top_xi_k}
	For any $k>0$ and $\delta_k \in (0,1)$, we have
	\begin{align*}
		\Pr\mbr{ \cM_k(\delta_k) \ |\ F_{k-1} } \geq 1 - \delta_k .
	\end{align*}
	
	In addition, for a random variable $X \in \R^{|\cS| |\cA|}$ such that $\|X\|_{\Sigma_{k-1}^{-1}} \leq L_{X}$, we have
	\begin{align*}
		\ex\mbr{|X^\top \xi_k| \ | F_{k-1}} 
		&\leq \sqrt{\alpha} \cdot \nu(k-1) \sbr{ \sqrt{|\cS||\cA|} + 2 \sqrt{\log\sbr{\frac{1}{\delta_k}}} } \ex\mbr{ \nbr{X}_{\Sigma_{k-1}^{-1}} \ | F_{k-1}} 
		\\
		&\quad + \sqrt{\alpha} \cdot \nu(k-1) \cdot L_{X} \sqrt{ |\cS| |\cA| \delta_k } .
	\end{align*}
\end{lemma}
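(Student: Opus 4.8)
The plan is to reduce the uniform-over-$\phi$ statement defining $\cM_k(\delta_k)$ to a one-dimensional tail bound on the norm of a standard Gaussian vector, prove the high-probability claim by Gaussian concentration, and then reuse the same event to split the expectation in the second claim into a ``good'' part controlled uniformly and a ``bad'' tail part controlled by Cauchy--Schwarz.

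First I would observe that $\sup_{\phi \neq 0} |\phi^\top \xi_k| / \nbr{\phi}_{\Sigma_{k-1}^{-1}}$ equals the dual norm $\nbr{\xi_k}_{\Sigma_{k-1}}$, via the Cauchy--Schwarz identity $|\phi^\top \xi_k| \leq \nbr{\phi}_{\Sigma_{k-1}^{-1}} \nbr{\xi_k}_{\Sigma_{k-1}}$ with equality attainable. Hence $\cM_k(\delta_k)$ is exactly the event $\{ \nbr{\xi_k}_{\Sigma_{k-1}} \leq \sqrt{\alpha}\,\nu(k-1)(\sqrt{|\cS||\cA|} + 2\sqrt{\log(1/\delta_k)}) \}$. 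Since given $F_{k-1}$ we may write $\xi_k = \sqrt{\alpha}\,\nu(k-1)\,\Sigma_{k-1}^{-1/2} g$ with $g \sim \cN(0, I)$, a direct computation gives $\nbr{\xi_k}_{\Sigma_{k-1}} = \sqrt{\alpha}\,\nu(k-1)\nbr{g}_2$, so the event becomes $\{ \nbr{g}_2 \leq \sqrt{|\cS||\cA|} + 2\sqrt{\log(1/\delta_k)} \}$. Because $g \mapsto \nbr{g}_2$ is $1$-Lipschitz and $\ex\nbr{g}_2 \leq \sqrt{\ex\nbr{g}_2^2} = \sqrt{|\cS||\cA|}$, Gaussian concentration yields $\Pr[\nbr{g}_2 > \sqrt{|\cS||\cA|} + t \mid F_{k-1}] \leq \exp(-t^2/2)$; taking $t = 2\sqrt{\log(1/\delta_k)}$ makes this at most $\delta_k^2 \leq \delta_k$, which is the first claim.

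For the expectation bound I would condition on $\cM_k(\delta_k)$ and its complement. On $\cM_k(\delta_k)$ the defining inequality holds simultaneously for all $\phi$, in particular for the (possibly $\xi_k$-dependent) vector $X$; this is the crucial point, since it gives $|X^\top \xi_k|\,\indicator\{\cM_k(\delta_k)\} \leq \sqrt{\alpha}\,\nu(k-1)(\sqrt{|\cS||\cA|}+2\sqrt{\log(1/\delta_k)})\nbr{X}_{\Sigma_{k-1}^{-1}}$ without any measurability assumption on $X$, producing the first term after taking conditional expectations. On the complement I would use $|X^\top \xi_k| \leq \nbr{X}_{\Sigma_{k-1}^{-1}} \nbr{\xi_k}_{\Sigma_{k-1}} \leq L_X \nbr{\xi_k}_{\Sigma_{k-1}}$, then apply Cauchy--Schwarz for the conditional expectation together with $\ex[\nbr{\xi_k}_{\Sigma_{k-1}}^2 \mid F_{k-1}] = \alpha\,\nu(k-1)^2 |\cS||\cA|$ and $\Pr[\cM_k(\delta_k)^c \mid F_{k-1}] \leq \delta_k$ from the first claim, obtaining $L_X \sqrt{\alpha}\,\nu(k-1)\sqrt{|\cS||\cA|\delta_k}$, which is the second term.

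The main obstacle is not the computation but the careful handling of the dependence of $X$ on $\xi_k$: a naive argument that treats $\nbr{X}_{\Sigma_{k-1}^{-1}}$ as fixed or pulls the concentration inequality inside the expectation would be invalid. The device that makes everything go through is that $\cM_k(\delta_k)$ is a statement uniform over all $\phi$, so restricting to it is precisely what controls $X^\top \xi_k$ regardless of how $X$ is formed, while the tail contribution is rendered negligible by the second-moment Cauchy--Schwarz estimate.
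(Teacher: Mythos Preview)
Your proposal is correct and follows essentially the same approach as the paper: both reduce the event $\cM_k(\delta_k)$ to a tail bound on $\|g\|_2$ for a standard Gaussian vector $g$, and both handle the expectation by splitting on $\cM_k(\delta_k)$ and applying Cauchy--Schwarz with the second moment $\ex[\|\xi_k\|_{\Sigma_{k-1}}^2\mid F_{k-1}]=\alpha\,\nu(k-1)^2|\cS||\cA|$ on the complement. The only cosmetic difference is that the paper invokes the Laurent--Massart chi-squared tail bound whereas you use Gaussian Lipschitz concentration; both yield the required inequality.
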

\begin{proof}
	This proof is similar to the analysis of Lemma 13 in \citep{efroni2021reinforcement}.
	
	First, we prove the first statement.
	
	For any $\phi \in \R^{|\cS||\cA|}$, we have
	\begin{align}
		|\phi^\top \xi_k| &= |\phi^\top \Sigma_{k-1}^{-\frac{1}{2}} \Sigma_{k-1}^{\frac{1}{2}} \xi_k|
		\nonumber\\
		&\leq \nbr{ \Sigma_{k-1}^{-\frac{1}{2}} \phi}_2 \nbr{\Sigma_{k-1}^{\frac{1}{2}} \xi_k}_2
		\nonumber\\
		&= \sqrt{\alpha} \cdot \nu(k-1) \nbr{\phi}_{\Sigma_{k-1}^{-1}} \nbr{\frac{1}{\sqrt{\alpha} \cdot \nu(k-1)} \Sigma_{k-1}^{\frac{1}{2}} \xi_k}_2 . \label{eq:known_tran_phi_xi_decomposition}
	\end{align}
	
	Since given $F_{k-1}$, $\frac{1}{\sqrt{\alpha} \cdot \nu(k-1)} \Sigma_{k-1}^{\frac{1}{2}} \xi_k \in \R^{|\cS| |\cA|}$ is a vector with each entry being a standard Gaussian random variable, we have that $\|\frac{1}{\sqrt{\alpha} \cdot \nu(k-1)} \Sigma_{k-1}^{\frac{1}{2}} \xi_k\|_2$ is chi-distributed with parameter $|\cS| |\cA|$. 
	
	Then, using Lemma 1 in \citep{laurent2000adaptive}, we have that with probability at least $1-\delta_k$,
	\begin{align*}
		\nbr{\frac{1}{\sqrt{\alpha} \cdot \nu(k-1)} \Sigma_{k-1}^{\frac{1}{2}} \xi_k}_2 &\leq \sqrt{|\cS| |\cA| + 2\sqrt{|\cS| |\cA| \log\sbr{\frac{1}{\delta_k}}} + 2 \log\sbr{\frac{1}{\delta_k}} }
		\\
		&= \sqrt{ \sbr{ \sqrt{|\cS| |\cA|} + \sqrt{\log\sbr{\frac{1}{\delta_k}}} }^2 +  \log\sbr{\frac{1}{\delta_k}} } 
		\\
		&\leq \sqrt{|\cS| |\cA|} + 2 \sqrt{\log\sbr{\frac{1}{\delta_k}}} .
	\end{align*}
	
	Next, we prove the second statement.
	
	For a random variable $X \in \R^{|\cS| |\cA|}$, we have
	\begin{align*}
		\ex\mbr{|X^\top \xi_k| \ | F_{k-1}} &= \Pr\mbr{ \cM_k(\delta_k) } \cdot \ex\mbr{|X^\top \xi_k| \ | F_{k-1}, \cM_k(\delta_k)} \\& \quad + \Pr\mbr{ \bar{\cM}_k(\delta_k) } \cdot  \ex\mbr{|X^\top \xi_k| \ | F_{k-1}, \bar{\cM}_k(\delta_k) }
		\\
		&\leq \sqrt{\alpha} \cdot \nu(k-1) \sbr{ \sqrt{|\cS||\cA|} + 2 \sqrt{\log\sbr{\frac{1}{\delta_k}}} } \ex\mbr{ \nbr{X}_{\Sigma_{k-1}^{-1}} \ | F_{k-1}} 
		\\
		&\quad + \sqrt{\Pr\mbr{ \bar{\cM}_k(\delta_k) } \cdot  \ex\mbr{|X^\top \xi_k|^2 \ | F_{k-1}, \bar{\cM}_k(\delta_k)} }
		\\
		&\overset{\textup{(a)}}{\leq} \sqrt{\alpha} \cdot \nu(k-1) \sbr{ \sqrt{|\cS||\cA|} + 2 \sqrt{\log\sbr{\frac{1}{\delta_k}}} } \ex\mbr{ \nbr{X}_{\Sigma_{k-1}^{-1}} \ | F_{k-1}} 
		\\
		&\quad + \sqrt{\alpha} \cdot \nu(k-1) \sqrt{ \delta_k  \ex\mbr{ \nbr{X}^2_{\Sigma_{k-1}^{-1}} \cdot \nbr{\frac{1}{\sqrt{\alpha} \cdot \nu(k-1)} \Sigma_{k-1}^{\frac{1}{2}} \xi_k}_2^2 \ | F_{k-1}}  }
				\\
		&\leq \sqrt{\alpha} \cdot \nu(k-1) \sbr{ \sqrt{|\cS||\cA|} + 2 \sqrt{\log\sbr{\frac{1}{\delta_k}}} } \ex\mbr{ \nbr{X}_{\Sigma_{k-1}^{-1}} \ | F_{k-1}} 
		\\
		&\quad + \sqrt{\alpha} \cdot \nu(k-1) \sqrt{ \delta_k L_{X}^2 \ex\mbr{  \nbr{\frac{1}{\sqrt{\alpha} \cdot \nu(k-1)} \Sigma_{k-1}^{\frac{1}{2}} \xi_k}_2^2 \ | F_{k-1}}  }
		\\
		&\overset{\textup{(b)}}{\leq} \sqrt{\alpha} \cdot \nu(k-1) \sbr{ \sqrt{|\cS||\cA|} + 2 \sqrt{\log\sbr{\frac{1}{\delta_k}}} } \ex\mbr{ \nbr{X}_{\Sigma_{k-1}^{-1}} \ | F_{k-1}} 
		\\
		&\quad + \sqrt{\alpha} \cdot \nu(k-1) \cdot L_{X} \sqrt{ |\cS| |\cA| \delta_k }  .
	\end{align*}
	Here inequality (a) follows from the Cauchy-Schwarz inequality. Inequality (b) is due to the fact that given $F_{k-1}$, $\|\frac{1}{\sqrt{\alpha} \cdot \nu(k-1)} \Sigma_{k-1}^{\frac{1}{2}} \xi_k\|_2$ is chi-distributed with parameter $|\cS| |\cA|$, and then $\ex[ \|\frac{1}{\sqrt{\alpha} \cdot \nu(k-1)} \Sigma_{k-1}^{\frac{1}{2}} \xi_k\|_2^2 \ | F_{k-1}] = |\cS| |\cA|$.
\end{proof}

Define event 
\begin{align}
	\cF^{\bi}_{\textup{KTran}}:=\Bigg\{& \abr{\sum_{k'=1}^{k} \sbr{ \ex_{\tau \sim \pi^{k'}}\mbr{ \nbr{ \phi^{\tau}}_{(\Sigma_{k'-1})^{-1}} | F_{k'-1}} - \nbr{ \phi^{\tau}}_{(\Sigma_{k'-1})^{-1}} } } \leq 4H\sqrt{ \frac{k}{\alpha \lambda} \log\sbr{\frac{4k}{\delta'}} } ,
	\nonumber\\
	&\abr{\sum_{k'=1}^{k} \sbr{ \ex\mbr{ (\phi^{\pi^{k'}})^\top \theta^* | F_{k'-1} } - (\phi^{\pi^{k'}})^\top \theta^* }} \leq 4 H r_{\max} \sqrt{ k \log\sbr{\frac{4k}{\delta'}} } ,\ \forall k>0 \Bigg\} .
\end{align}

\begin{lemma} \label{lemma:cF_binary}
	It holds that
	\begin{align*}
		\Pr \mbr{ \cF^{\bi}_{\textup{KTran}} } \geq 1-2\delta' .
	\end{align*}
\end{lemma}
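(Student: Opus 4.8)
The plan is to recognize each of the two sums inside $\cF^{\bi}_{\textup{KTran}}$ as a martingale difference sequence adapted to the filtration $\{F_k\}$, and then apply the two-sided Azuma--Hoeffding inequality together with a union bound over $k$. Writing $X_{k'} := \ex_{\tau\sim\pi^{k'}}[\nbr{\phi^{\tau}}_{(\Sigma_{k'-1})^{-1}}\mid F_{k'-1}] - \nbr{\phi^{\tau^{k'}}}_{(\Sigma_{k'-1})^{-1}}$ and $Y_{k'} := \ex[(\phi^{\pi^{k'}})^\top\theta^*\mid F_{k'-1}] - (\phi^{\pi^{k'}})^\top\theta^*$, I would first note that $\Sigma_{k'-1}$ is $F_{k'-1}$-measurable and that, by construction, $\ex[X_{k'}\mid F_{k'-1}]=0$ and $\ex[Y_{k'}\mid F_{k'-1}]=0$, so $\{X_{k'}\}$ and $\{Y_{k'}\}$ are martingale difference sequences.

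Next I would bound the increments. Since $\Sigma_{k'-1}\succeq\alpha\lambda I$ (Line~\ref{line:Sigma_bits}), we have $(\Sigma_{k'-1})^{-1}\preceq\frac{1}{\alpha\lambda}I$; and since any trajectory visits exactly $H$ state-action pairs, $\nbr{\phi^{\tau}}_2\leq\nbr{\phi^{\tau}}_1=H$. Hence $\nbr{\phi^{\tau}}_{(\Sigma_{k'-1})^{-1}}\leq\frac{H}{\sqrt{\alpha\lambda}}$, giving $|X_{k'}|\leq\frac{2H}{\sqrt{\alpha\lambda}}$. Similarly $|(\phi^{\tau})^\top\theta^*|\leq\nbr{\phi^{\tau}}_1\nbr{\theta^*}_\infty\leq Hr_{\max}$, giving $|Y_{k'}|\leq 2Hr_{\max}$.

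Then, for each fixed $k$, Azuma--Hoeffding yields $\Pr[|\sum_{k'=1}^{k} X_{k'}|\geq t]\leq 2\exp(-\frac{t^2\alpha\lambda}{8kH^2})$ and $\Pr[|\sum_{k'=1}^{k} Y_{k'}|\geq t]\leq 2\exp(-\frac{t^2}{8kH^2r_{\max}^2})$. Substituting the stated thresholds $t=4H\sqrt{\frac{k}{\alpha\lambda}\log(\frac{4k}{\delta'})}$ and $t=4Hr_{\max}\sqrt{k\log(\frac{4k}{\delta'})}$ respectively makes each exponent equal to $2\log(\frac{4k}{\delta'})$, so each per-$k$ failure probability is at most $2(\frac{\delta'}{4k})^2=\frac{\delta'^2}{8k^2}$. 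Summing over $k\geq 1$ gives $\sum_{k\geq 1}\frac{\delta'^2}{8k^2}=\frac{\pi^2\delta'^2}{48}\leq\delta'$ for $\delta'\leq 1$, so each of the two events fails for some $k$ with probability at most $\delta'$, and a final union bound over the two events yields $\Pr[\cF^{\bi}_{\textup{KTran}}]\geq 1-2\delta'$.

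The concentration and union bound are routine; I expect the only genuinely delicate point to be verifying the martingale structure, since $\pi^{k'}$ depends on the fresh sampling noise $\xi_{k'}$ and is therefore only $\tilde{F}_{k'}$-measurable rather than $F_{k'-1}$-measurable. This is resolved by reading the conditional expectation $\ex[\,\cdot\mid F_{k'-1}]$ as averaging over all randomness generated in episode $k'$, including both $\xi_{k'}$ and the realized trajectory $\tau^{k'}$. Under this convention the realized summand minus its $F_{k'-1}$-conditional mean has zero conditional mean and the bounded magnitude computed above, which is exactly what Azuma--Hoeffding requires.
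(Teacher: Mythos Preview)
Your proposal is correct and follows essentially the same approach as the paper: bound each increment, apply Azuma--Hoeffding for each fixed $k$, then take a union bound over $k\geq 1$ with failure probabilities summable to $\delta'$ per inequality. Your added discussion of the measurability of $\pi^{k'}$ (via $\xi_{k'}$) is a point the paper leaves implicit, and your resolution---reading $\ex[\,\cdot\mid F_{k'-1}]$ as averaging over all episode-$k'$ randomness---is exactly the intended interpretation; one minor slip is that in bounding $|Y_{k'}|$ you wrote $(\phi^{\tau})^\top\theta^*$ rather than $(\phi^{\pi^{k'}})^\top\theta^*$, but since $\|\phi^{\pi}\|_1=H$ for any policy the bound $Hr_{\max}$ is unchanged.
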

\begin{proof}
	We prove the first inequality as follows.
	
	For any $k'\geq 1$, we have that $\nbr{ \phi^{\tau}}_{(\Sigma_{k'-1})^{-1}} \leq \frac{H}{\sqrt{\alpha \lambda}}$, and then $|\ex_{\tau \sim \pi^{k'}} [\nbr{ \phi^{\tau}}_{(\Sigma_{k'-1})^{-1}} | F_{k'-1}] - \nbr{ \phi^{\tau}}_{(\Sigma_{k'-1})^{-1}}| \leq \frac{2H}{\sqrt{\alpha \lambda}}$.

	Using the Azuma-Hoeffding inequality, we have that for any fixed $k>0$, with probability at least $1-\frac{\delta'}{2k^2}$,
	\begin{align*}
		\abr{\sum_{k'=1}^{k} \sbr{ \ex_{\tau \sim \pi^{k'}}\mbr{ \nbr{ \phi^{\tau}}_{(\Sigma_{k'-1})^{-1}} | F_{k'-1}} - \nbr{ \phi^{\tau}}_{(\Sigma_{k'-1})^{-1}} } } &\leq \sqrt{ 2 \cdot \frac{4H^2}{\alpha \lambda} \cdot k \log\sbr{\frac{4k^2}{\delta'}} } .
	\end{align*}
	
	Since $\sum_{k=1}^{\infty} \frac{\delta'}{2k^2} \leq \delta'$, by a union bound over $k$, we have that with probability at least $\delta'$, for any $k\geq 1$,
	\begin{align*}
		\abr{\sum_{k'=1}^{k} \sbr{ \ex_{\tau \sim \pi^{k'}}\mbr{ \nbr{ \phi^{\tau}}_{(\Sigma_{k'-1})^{-1}} | F_{k'-1}} - \nbr{ \phi^{\tau}}_{(\Sigma_{k'-1})^{-1}} } } &\leq \sqrt{ 2 \cdot \frac{4H^2}{\alpha \lambda} \cdot k \log\sbr{\frac{4k^2}{\delta'}} } 
		\\
		&\leq 4H \sqrt{ \frac{k}{\alpha \lambda} \log\sbr{\frac{4k}{\delta'}} } .
	\end{align*}
	The second inequality can be obtained by a similar argument and the fact that $|(\phi^{\pi^{k}})^\top \theta^*| \leq Hr_{\max}$ for any $k>0$.
\end{proof}

\begin{lemma} \label{lemma:sum_phi_seg_binary}
	For any $K\geq 1$, we have
	\begin{align*}
		\sum_{k=1}^{K} \sum_{i=1}^{m} \nbr{ \phi^{\tau^k_i}}_{(\Sigma_{k-1})^{-1}} \leq \sqrt{ 2Km |\cS| |\cA| \cdot \max\lbr{ \frac{H^2}{m \alpha \lambda}, 1} \cdot \log \sbr{ 1+ \frac{  KH^2 }{ \alpha \lambda |\cS| |\cA| m } } } .
	\end{align*}
\end{lemma}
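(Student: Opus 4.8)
The plan is to combine a Cauchy--Schwarz step with an elliptical-potential (log-determinant) argument, the one wrinkle being that $\Sigma_{k-1}$ is refreshed only once per episode rather than once per segment.

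First I would apply Cauchy--Schwarz across the $Km$ summands,
\[
\sum_{k=1}^{K}\sum_{i=1}^{m}\nbr{\phi^{\tau^k_i}}_{(\Sigma_{k-1})^{-1}} \leq \sqrt{Km\sum_{k=1}^{K}\sum_{i=1}^{m}\nbr{\phi^{\tau^k_i}}_{(\Sigma_{k-1})^{-1}}^2},
\]
so it suffices to bound the sum of squared norms by $2|\cS||\cA|\max\{\frac{H^2}{m\alpha\lambda},1\}\log(1+\frac{KH^2}{\alpha\lambda|\cS||\cA|m})$.

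Next, writing $a_k:=\sum_{i=1}^m\nbr{\phi^{\tau^k_i}}_{(\Sigma_{k-1})^{-1}}^2$, I would treat the $m$ segment updates of episode $k$ as a single rank-$m$ update, $\Sigma_k=\Sigma_{k-1}+\sum_{i=1}^m\phi^{\tau^k_i}(\phi^{\tau^k_i})^\top$. Factoring $\Sigma_{k-1}^{1/2}$ out of $\det\Sigma_k$ and using $\det(I+M)\geq 1+\trace(M)$ for positive semidefinite $M$ gives $\log\det\Sigma_k-\log\det\Sigma_{k-1}\geq\log(1+a_k)$. Because each segment visits exactly $\frac{H}{m}$ state-action pairs, $\nbr{\phi^{\tau^k_i}}_2\leq\frac{H}{m}$, and since $\Sigma_{k-1}\succeq\alpha\lambda I$ every term is at most $\frac{H^2}{m^2\alpha\lambda}$, whence $a_k\leq\frac{H^2}{m\alpha\lambda}=:c$. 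This bundling step is the crux of the argument: summing the $m$ rank-one terms into one potential increment is exactly what converts the naive per-segment bound $\frac{H^2}{m^2\alpha\lambda}$ into the $\frac{H^2}{m\alpha\lambda}$ appearing in the $\max\{\cdot,1\}$ factor, and it is the main place where the per-episode (rather than per-step) update structure must be handled with care.

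Finally, I would linearize and telescope. On $[0,c]$ the increasing function $a\mapsto a/\log(1+a)$ is at most $\max\{c,1\}/\log 2\leq 2\max\{c,1\}$, so $a_k\leq 2\max\{c,1\}\,\log(1+a_k)\leq 2\max\{c,1\}(\log\det\Sigma_k-\log\det\Sigma_{k-1})$. Summing over $k$ telescopes to $2\max\{c,1\}\log\frac{\det\Sigma_K}{\det\Sigma_0}$ with $\Sigma_0=\alpha\lambda I$. Bounding $\det\Sigma_K\leq(\trace(\Sigma_K)/(|\cS||\cA|))^{|\cS||\cA|}$ by AM--GM together with $\trace(\Sigma_K)\leq\frac{KH^2}{m}+\alpha\lambda|\cS||\cA|$ gives $\log\frac{\det\Sigma_K}{\det\Sigma_0}\leq|\cS||\cA|\log(1+\frac{KH^2}{\alpha\lambda|\cS||\cA|m})$, and substituting back into the Cauchy--Schwarz bound yields the claim. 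Aside from the rank-$m$ bundling, every step is a routine determinant/trace manipulation.
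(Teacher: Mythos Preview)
Your proof is correct and follows essentially the same route as the paper: Cauchy--Schwarz over the $Km$ terms, bundling the $m$ per-segment rank-one updates of each episode into a single potential increment $a_k=\sum_i\nbr{\phi^{\tau^k_i}}_{(\Sigma_{k-1})^{-1}}^2$, the linearization $x\leq 2\max\{c,1\}\log(1+x)$ on $[0,c]$ with $c=\frac{H^2}{m\alpha\lambda}$, and the AM--GM trace bound on $\det\Sigma_K$. Your use of $\det(I+M)\geq 1+\trace(M)$ in the bundling step is in fact slightly more careful than the paper, which writes that step as an equality.
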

\begin{proof}		
	We have
	\begin{align}
		\sum_{k=1}^{K} \sum_{i=1}^{m} \nbr{ \phi^{\tau^k_i}}_{(\Sigma_{k-1})^{-1}} &\leq \sqrt{ Km   \sum_{k=1}^{K} \sum_{i=1}^{m} \nbr{ \phi^{\tau^k_i}}_{(\Sigma_{k-1})^{-1}}^2 }
		\nonumber\\
		&\overset{\textup{(a)}}{\leq} \sqrt{ 2Km \cdot \max\lbr{ \frac{H^2}{m \alpha \lambda}, 1} \cdot \sum_{k=1}^{K} \log \sbr{ 1+ \sum_{i=1}^{m} \nbr{ \phi^{\tau^k_i}}_{(\Sigma_{k-1})^{-1}}^2} }
		\nonumber\\ 
		&= \sqrt{ 2Km \cdot \max\lbr{ \frac{H^2}{m \alpha \lambda}, 1} \cdot \log \sbr{ \frac{\det(\Sigma_K)}{\det(\alpha \lambda I)} } } 
		\nonumber\\
		&\leq \sqrt{ 2Km |\cS| |\cA| \cdot \max\lbr{ \frac{H^2}{m \alpha \lambda}, 1} \cdot \log \sbr{ 1+ \frac{  KH^2 }{ \alpha \lambda |\cS| |\cA| m } } } , \label{eq:sum_indicator_derivation}
	\end{align}
	where inequality (a) is due to that for any $x \in [0,c]$ with constant $c\geq0$, it holds that $x \leq 2 \max\{c,1\} \cdot \log(1+x)$.
\end{proof}

\begin{proof}[Proof of Theorem~\ref{thm:ub_bits}]
	Letting $\delta'=\frac{\delta}{3}$, we have $\Pr[\cE \cap \cF^{\bi}_{\textup{KTran}}] \leq 1-\delta$. Then, to prove this theorem, it suffices to prove
	the regret bound when event $\cE \cap \cF^{\bi}_{\textup{KTran}}$ holds.
	
	Assume that event $\cE \cap \cF^{\bi}_{\textup{KTran}}$ holds. Then, we have
	\begin{align}
		\cR(K) &= \sum_{k=1}^{K} \sbr{ (\phi^{\pi^*})^\top \theta^* - (\phi^{\pi^k})^\top \theta^* }
		\nonumber\\
		&= \sum_{k=1}^{K} \sbr{ \ex\mbr{ (\phi^{\pi^*})^\top \theta^* - (\phi^{\pi^k})^\top \theta^* | F_{k-1} } + \ex\mbr{ (\phi^{\pi^k})^\top \theta^* | F_{k-1} } - (\phi^{\pi^k})^\top \theta^* } 
		\nonumber\\
		&\leq \sum_{k=1}^{K} \sbr{ \ex\mbr{ (\phi^{\pi^*})^\top \theta^* - (\phi^{\pi^k})^\top \theta^* | F_{k-1} } } + 4 H r_{\max} \sqrt{ K \log\sbr{\frac{4K}{\delta'}} } . \label{eq:known_tran_regret_decomposition}
	\end{align}

	For the first term, we have
	\begin{align}
		&\sum_{k=1}^{K} \ex\mbr{ (\phi^{\pi^*})^\top \theta^* - (\phi^{\pi^k})^\top \theta^* | F_{k-1} } \nonumber\\
		&= \sum_{k=1}^{K} \sbr{ \ex\mbr{ (\phi^{\pi^*})^\top \theta^* - (\phi^{\pi^k})^\top \tilde{\theta}_k | F_{k-1} } + \ex\mbr{ (\phi^{\pi^k})^\top \tilde{\theta}_k - (\phi^{\pi^k})^\top \theta^* | F_{k-1} } } . \label{eq:known_tran_ex_phi_star-phi_pi_k_theta}
	\end{align}
	
	In the following, we prove
	\begin{align}
		\ex\mbr{ (\phi^{\pi^*})^{\!\top} \theta^* \!-\! (\phi^{\pi^k})^{\!\top} \tilde{\theta}_k | F_{k-1} } \!\leq\! 2\sqrt{2\pi e} \cdot \ex\mbr{ \sbr{ (\phi^{\pi^k})^{\!\top} \tilde{\theta}_k  \!-\! \ex\mbr{ (\phi^{\pi^k})^{\!\top} \tilde{\theta}_k  | F_{k-1} } }^{\!+} | F_{k-1} } . \label{eq:known_tran_fist_term_leq_positive}
	\end{align}
	
	If $\ex[ (\phi^{\pi^*})^\top \theta^* - (\phi^{\pi^k})^\top \tilde{\theta}_k | F_{k-1} ]<0$, then Eq.~\eqref{eq:known_tran_fist_term_leq_positive} trivially holds. 
	
	Otherwise, letting $z:= \ex[ (\phi^{\pi^*})^\top \theta^* - (\phi^{\pi^k})^\top \tilde{\theta}_k | F_{k-1} ]$, we have
	\begin{align*}
		&\quad\ \ex\mbr{ \sbr{ (\phi^{\pi^k})^\top \tilde{\theta}_k  - \ex\mbr{ (\phi^{\pi^k})^\top \tilde{\theta}_k  | F_{k-1} } }^{+} | F_{k-1} }
		\\
		&\geq z \Pr\mbr{  (\phi^{\pi^k})^\top \tilde{\theta}_k  - \ex\mbr{ (\phi^{\pi^k})^\top \tilde{\theta}_k   | F_{k-1} } \geq z | F_{k-1} }
		\\
		&\geq \sbr{\ex\mbr{ (\phi^{\pi^*})^\top \theta^* - (\phi^{\pi^k})^\top \tilde{\theta}_k | F_{k-1} } } \cdot \Pr\mbr{  (\phi^{\pi^k})^\top \tilde{\theta}_k   \geq (\phi^{\pi^*})^\top \theta^* | F_{k-1} }
		\\
		&\overset{\textup{(a)}}{\geq} \sbr{\ex\mbr{ (\phi^{\pi^*})^\top \theta^* - (\phi^{\pi^k})^\top \tilde{\theta}_k | F_{k-1} } } \cdot \Pr\mbr{  (\phi^{\pi^*})^\top \tilde{\theta}_k   \geq (\phi^{\pi^*})^\top \theta^* | F_{k-1} }
		\\
		&\overset{\textup{(b)}}{\geq} \sbr{\ex\mbr{ (\phi^{\pi^*})^\top \theta^* - (\phi^{\pi^k})^\top \tilde{\theta}_k | F_{k-1} } } \cdot \frac{1}{2\sqrt{2\pi e}} ,
	\end{align*}
	where inequality (a) uses the definition of $\pi^k$, and inequality (b) follows from Lemma~\ref{lemma:Gaussian_anti_concentration}. Thus, we complete the proof of Eq.~\eqref{eq:known_tran_fist_term_leq_positive}.
	
	Let $\xi'_k \in \R^{|\cS||\cA|}$ be a random variable that is i.i.d. with $\xi$ given $F_{k-1}$.
	Then, using Lemma~\ref{lemma:xi_xi'} with $p'=p$, $x_{k-1}=\hat{\theta}_{k-1}$ and $\tilde{\pi}^k=\pi^k$, we have
	\begin{align*}
		\ex\mbr{ (\phi^{\pi^*})^\top \theta^* - (\phi^{\pi^k})^\top \tilde{\theta}_k | F_{k-1} } &\leq 2\sqrt{2\pi e} \cdot \ex\mbr{ \sbr{ (\phi^{\pi^k})^\top \tilde{\theta}_k  - \ex\mbr{ (\phi^{\pi^k})^\top \tilde{\theta}_k  | F_{k-1} } }^{+} | F_{k-1} } 
		\\
		&\leq 2\sqrt{2\pi e} \cdot \ex\mbr{ | \phi(\pi^k)^\top \xi_k | + | \phi(\pi^k)^\top \xi'_k| \ | F_{k-1} } .
	\end{align*}
	
	Plugging the above inequality into Eq.~\eqref{eq:known_tran_ex_phi_star-phi_pi_k_theta} and using Lemma~\ref{lemma:X_top_xi_k} with $\delta_k=\frac{1}{k^4}$ and $L_{X}=\frac{H}{\sqrt{\alpha \lambda}}$, we have
	\begin{align}
		&\quad \sum_{k=1}^{K} \ex\mbr{ (\phi^{\pi^*})^\top \theta^* - (\phi^{\pi^k})^\top \theta^* | F_{k-1} } 
		\nonumber\\
		&= \!\sum_{k=1}^{K}\! \bigg(\! 2\sqrt{2\pi e}\   \ex\mbr{ | (\phi^{\pi^k})^\top \xi_k | \!+\! | (\phi^{\pi^k})^\top \xi'_k| \ | F_{k-1} } \!+\! \ex\mbr{ (\phi^{\pi^k})^\top \sbr{\hat{\theta}_{k-1} \!+\! \xi_k} \!-\! (\phi^{\pi^k})^\top \theta^* | F_{k-1} } \!\bigg)
		\nonumber\\
		&= \sum_{k=1}^{K} \bigg( \sbr{2\sqrt{2\pi e} + 1} \cdot \ex\mbr{ | (\phi^{\pi^k})^\top \xi_k | \ | F_{k-1}} + 2\sqrt{2\pi e} \cdot \ex\mbr{ | (\phi^{\pi^k})^\top \xi'_k| \ | F_{k-1} } 
		\nonumber\\
		&\quad + \ex\mbr{ (\phi^{\pi^k})^\top \hat{\theta}_{k-1} - (\phi^{\pi^k})^\top \theta^* | F_{k-1} } \bigg)
		\nonumber\\
		&\overset{\textup{(a)}}{\leq} \sum_{k=1}^{K} \Bigg( \sbr{4\sqrt{2\pi e} + 2} \sqrt{\alpha} \cdot \nu(k-1) \sbr{ \sqrt{|\cS||\cA|} + 4 \sqrt{\log\sbr{k}} } \cdot \ex\mbr{ \nbr{ \phi^{\pi^k} }_{\Sigma_{k-1}^{-1}} \ | F_{k-1}} 
		\nonumber\\
		&\quad + \sbr{4\sqrt{2\pi e} + 1} \sqrt{\alpha} \cdot \nu(k-1)  \frac{\sqrt{|\cS| |\cA|}}{k^2}  \cdot \frac{H}{\sqrt{\alpha \lambda}} \Bigg) , \label{eq:ex_phi_star_theta_star-phi_k_theta_star}
	\end{align}
	where inequality (a) uses Lemmas~\ref{lemma:confence_interval_proj_free} and \ref{lemma:X_top_xi_k}.
	
	Here according to the definition of event $\cF^{\bi}_{\textup{KTran}}$ and Lemma~\ref{lemma:sum_phi_seg_binary}, we have
	\begin{align}
		\sum_{k=1}^{K} \ex\mbr{ \nbr{ \phi^{\pi^k} }_{\Sigma_{k-1}^{-1}} \ | F_{k-1}} &= \sum_{k=1}^{K} \sbr{\ex\mbr{ \nbr{ \phi^{\pi^k} }_{\Sigma_{k-1}^{-1}} \ | F_{k-1}} - \nbr{ \phi^{\pi^k} }_{\Sigma_{k-1}^{-1}} } + \sum_{k=1}^{K}  \nbr{ \phi^{\pi^k} }_{\Sigma_{k-1}^{-1}}
		\nonumber\\
		&\leq 4H\sqrt{ \frac{K}{\alpha \lambda} \log\sbr{\frac{4K}{\delta'}} } 
		\nonumber\\
		&\quad + \sqrt{ 2Km |\cS| |\cA| \cdot \max\lbr{ \frac{H^2}{m \alpha \lambda}, 1} \cdot \log \sbr{ 1+ \frac{  KH^2 }{ \alpha \lambda |\cS| |\cA| m } } } . \label{eq:ex_sum_phi_seg}
	\end{align}

	Therefore, plugging the above two equations into Eq.~\eqref{eq:known_tran_regret_decomposition}, we have
	\begin{align*}
		\cR(K) &\leq \sbr{4\sqrt{2\pi e} + 2} \sqrt{\alpha} \cdot \nu(K) \sbr{ \sqrt{|\cS||\cA|} + 4 \sqrt{\log\sbr{K}} } \cdot 
		\\
		&\quad \sbr{ 4H\sqrt{ \frac{K}{\alpha \lambda} \log\sbr{\frac{4K}{\delta'}} } + \sqrt{ 2Km |\cS| |\cA|  \max\lbr{ \frac{H^2}{m \alpha \lambda}, 1}  \log \sbr{ 1+ \frac{  KH^2 }{ \alpha \lambda |\cS| |\cA| m } } }  } 
		\\
		&\quad + 2 \sbr{4\sqrt{2\pi e} + 1} H \cdot \nu(K)  \sqrt{ \frac{|\cS| |\cA|}{\lambda} }  + 4 H r_{\max} \sqrt{ K \log\sbr{\frac{4K}{\delta'}} } 
		\\
		&\overset{\textup{(a)}}{=} \tilde{O} \Bigg( \exp(\frac{Hr_{\max}}{2m}) \cdot \nu(K) \sqrt{|\cS||\cA|} \sbr{ \sqrt{ Km |\cS| |\cA| \cdot \max\lbr{ \frac{H^2}{m \alpha \lambda}, 1}  } + H\sqrt{ \frac{K}{\alpha \lambda} }  }
		\Bigg) ,
	\end{align*}
	where in equality (a), the last two terms are absorbed into $\tilde{O}(\cdot)$.
\end{proof}

\subsection{Proof for the Regret Lower Bound with Known Transition} \label{apx:lb_bi_known_tran}

In the following, we prove the regret lower bound (Theorem~\ref{thm:lb_bi_known_tran}) for RL with binary segment feedback and known transition.
	
\begin{proof}[Proof of Theorem~\ref{thm:lb_bi_known_tran}]
	\begin{figure}[t]
		\centering   
		\includegraphics[width=0.8\textwidth]{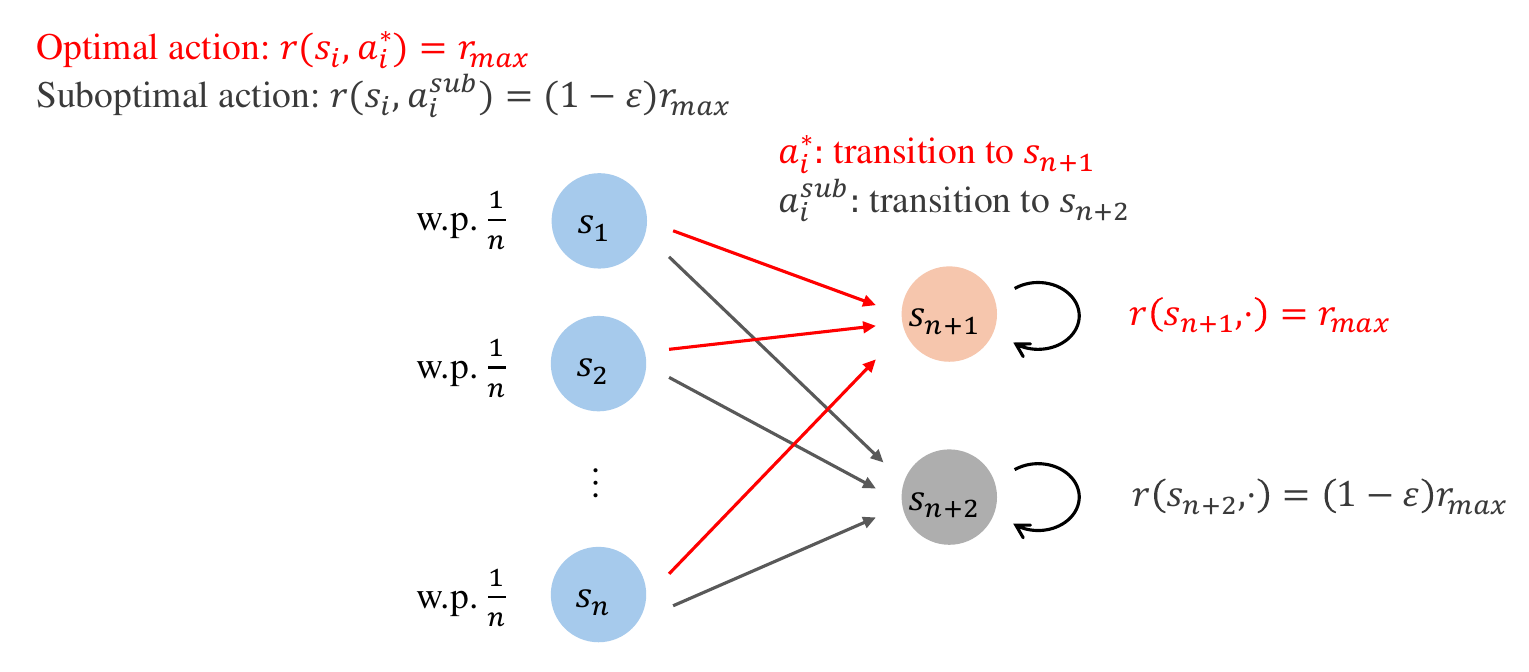}
		\caption{Instance for the lower bound under binary segment feedback and known transition.
		} \label{fig:lower_bound_binary}
	\end{figure}
	
	We construct a random instance $\cI$ as follows. As shown in Figure~\ref{fig:lower_bound_binary}, there are $n$ bandit states $s_1,\dots,s_n$ (i.e., there is an optimal action and multiple suboptimal actions), a good absorbing state $s_{n+1}$ and a bad absorbing state $s_{n+2}$. 
	The agent starts from $s_1,\dots,s_n$ with equal probability $\frac{1}{n}$. 
	For any $i \in [n]$, in state $s_i$, one action $a_J$ is uniformly chosen from $\cA$ as the optimal action.
	In state $s_i$, under the optimal action $a_J$, the agent transitions to $s_{n+1}$ deterministically, and $r(s_i,a_J)=r_{\max}$; Under any suboptimal action $a \in \cA \setminus \{s_J\}$, the agent transitions to $s_{n+2}$ deterministically, and $r(s_i,a)=(1-\varepsilon)r_{\max}$, where $\varepsilon \in (0,\frac{1}{2})$ is a parameter specified later. 
	For all actions $a \in \cA$, $r(s_{n+1},a)=r_{\max}$ and $r(s_{n+2},a)=(1-\varepsilon)r_{\max}$.

	In this proof, we will also use an alternative uniform instance $\cI_{\unif}$. The only difference between $\cI_{\unif}$ and $\cI$ is that for any $i \in [n]$, in state $s_i$, under all actions $a \in \cA$, the agent transitions to $s_{n+2}$ deterministically, and $r(s_i,a)=(1-\varepsilon)r_{\max}$.
	
	Fix an algorithm $\A$.
	Let $\ex_{\unif}[\cdot]$ denote the expectation with respect to $\cI_{\unif}$. Let $\ex_*[\cdot]$ denote the expectation with respect to $\cI$. For any $i \in [n]$ and $j \in [|\cA|]$, let $\ex_{i,j}[\cdot]$ denote the expectation with respect to the case where $a_j$ is the optimal action in state $s_i$, and  $N_{i,j}$ denote the number of episodes where algorithm $\A$ chooses $a_j$ in state $s_i$, i.e., $N_{i,j}=\sum_{k=1}^{K} \indicator\{\pi^k_1(s_i)=a_j\}$. 
	
	The KL divergence of binary observations if taking $a_J$ in $s_i$ in each episode between $\cI_{\unif}$ and $\cI$ is
	\begin{align*}
		& \quad \sum_{i=1}^{m} \kl \sbr{ \bernoulli\sbr{ \mu\sbr{(1-\varepsilon) r_{\max} \cdot \frac{H}{m}} } \Big\| \bernoulli\sbr{ \mu\sbr{ r_{\max} \cdot \frac{H}{m}} } } 
		\\
		&\overset{\textup{(a)}}{\leq} m \cdot \frac{ \sbr{ \mu\sbr{(1-\varepsilon) r_{\max} \cdot \frac{H}{m}} - \mu\sbr{ r_{\max} \cdot \frac{H}{m}} }^2 }{ \mu'\sbr{ r_{\max} \cdot \frac{H}{m}} }
		\\
		&\overset{\textup{(b)}}{\leq} m \cdot \frac{ \mu'\sbr{ (1-\varepsilon) \frac{Hr_{\max}}{m} }^2 \sbr{\varepsilon \cdot \frac{Hr_{\max}}{m}}^2  }{ \mu'\sbr{ \frac{Hr_{\max}}{m} } } ,
	\end{align*}
	where inequality (a) uses the fact that $\kl(\bernoulli(p) \| \bernoulli(q)) \leq \frac{(p-q)^2}{q(1-q)}$, and inequality (b) is due to that $\mu'(x)$ is monotonically decreasing when $x > 0$.  
	
	In addition, the agent has probability only $\frac{1}{n}$ to arrive at (observe) state $s_i$.
	
	Thus, using Lemma A.1 in \citep{auer2002nonstochastic}, we have that for any $i \in [n]$, in state $s_i$,
	\begin{align*}
		\ex_{i,j}[N_{i,j}] &\leq  \ex_{\unif}[N_{i,j}] + \frac{K}{2} \sqrt{ \frac{1}{n} \cdot \ex_{\unif}[N_{i,j}] \cdot m \cdot \frac{ \mu'\sbr{ (1-\varepsilon) \frac{Hr_{\max}}{m} }^2 \sbr{\varepsilon \cdot \frac{Hr_{\max}}{m}}^2  }{ \mu'\sbr{ \frac{Hr_{\max}}{m} } } } 
		\\
		&=  \ex_{\unif}[N_{i,j}] + \frac{K}{2} \cdot \varepsilon \cdot \frac{Hr_{\max}}{m} \sqrt{ \frac{m}{n} \cdot \ex_{\unif}[N_{i,j}] \cdot \frac{ \mu'\sbr{ (1-\varepsilon) \frac{Hr_{\max}}{m} }^2 }{ \mu'\sbr{ \frac{Hr_{\max}}{m} } } } .
	\end{align*}
	
	Summing over $j \in [|\cA|]$, using the Cauchy-Schwarz inequality and the fact that $\sum_{j=1}^{|\cA|} \ex_{\unif}[N_{i,j}]=K$, we have
	\begin{align*}
		\sum_{j=1}^{|\cA|}  \ex_{i,j}[N_{i,j}] &\leq K + \frac{K Hr_{\max} \varepsilon}{2} \sqrt{ \frac{|\cA|  K}{m n} \cdot \frac{ \mu'\sbr{ (1-\varepsilon) \frac{Hr_{\max}}{m} }^2 }{ \mu'\sbr{ \frac{Hr_{\max}}{m} } } } 
		\\
		&\leq K + \frac{K Hr_{\max} \varepsilon}{2}  \sqrt{ \frac{|\cA| K}{m n} \cdot \frac{ \mu'\sbr{ (1-c_0) \frac{Hr_{\max}}{m} }^2 }{ \mu'\sbr{ \frac{Hr_{\max}}{m} } } }  ,
	\end{align*}
	where $c_0 \in (0,\frac{1}{2})$ is a constant which satisfies $c_0 \geq \varepsilon$. We will specify how to make $c_0 \geq \varepsilon$ to satisfy this condition later.
	
	Then, we have
	\begin{align*}
		\cR(K) &= \sum_{k=1}^{K} \ex_{*}\mbr{V^*-V^{\pi^k}}
		\\
		&= r_{\max} HK - \frac{1}{n} \sum_{i=1}^{n} \sbr{ (1-\varepsilon)r_{\max} HK + \varepsilon r_{\max} H \cdot \frac{1}{|\cA|} \sum_{j=1}^{|\cA|} \ex_{i,j}[N_{i,j}] }
		\\
		&\geq \varepsilon r_{\max} H \sbr{K - \frac{K}{|\cA|} - \frac{K Hr_{\max} \varepsilon}{2}  \sqrt{ \frac{K}{|\cA| m n} \cdot \frac{ \mu'\sbr{ (1-c_0) \frac{Hr_{\max}}{m} }^2 }{ \mu'\sbr{ \frac{Hr_{\max}}{m} } } }  } .
	\end{align*}

	Let
	\begin{align*}
		\varepsilon= \frac{1}{2 Hr_{\max}} \sqrt{ \frac{|\cA| m n}{K} \cdot \frac{ \mu'\sbr{ \frac{Hr_{\max}}{m} } }{ \mu'\sbr{ (1-c_0) \frac{Hr_{\max}}{m} }^2 } } .
	\end{align*}
	
	Then, the constant $c_0$ should satisfy
	\begin{align*}
		\varepsilon= \frac{1}{2 Hr_{\max}} \sqrt{ \frac{|\cA| m n}{K} \cdot \frac{ \mu'\sbr{ \frac{Hr_{\max}}{m} } }{ \mu'\sbr{ (1-c_0) \frac{Hr_{\max}}{m} }^2 } } \leq c_0 .
	\end{align*}
	
	Since 
	\begin{align*}
		\frac{ \mu'\sbr{ \frac{Hr_{\max}}{m} } }{ \mu'\sbr{ (1-c_0) \frac{Hr_{\max}}{m} }^2 } &= \frac{ \sbr{ \exp\sbr{ (1-c_0) \frac{Hr_{\max}}{m} } + \exp\sbr{ - (1-c_0) \frac{Hr_{\max}}{m} } +2 }^2 }{ \exp\sbr{ \frac{Hr_{\max}}{m} } + \exp\sbr{ - \frac{Hr_{\max}}{m} } + 2 }
		\\
		&\leq \frac{ \sbr{4 \exp\sbr{ (1-c_0) \frac{Hr_{\max}}{m} } }^2 }{ \exp\sbr{ \frac{Hr_{\max}}{m} } }
		\\
		&= 16 \exp\sbr{ \Big(1-2c_0\Big) \frac{Hr_{\max}}{m} }   ,
	\end{align*}
	it suffices to let $c_0$ satisfy
	\begin{align*}
		\frac{1}{2 Hr_{\max}} \sqrt{ \frac{|\cA| m n}{K} \cdot 16 \exp\sbr{ (1-2c_0) \frac{Hr_{\max}}{m} } } \leq c_0 ,
	\end{align*}
	which is equivalent to
	$
	K \geq \frac{ 4|\cA| m n }{ H^2 r_{\max}^2 c_0^2}  \exp( (1-2c_0) \frac{Hr_{\max}}{m} )
	$.
	
	It suffices to let 
	\begin{align*}
		K \geq \frac{4 |\cA| m n }{ H^2 r_{\max}^2 c_0^2}  \exp\sbr{ \frac{Hr_{\max}}{m} } ,
	\end{align*}
	and then $c_0$ can be any constant in $(0,\frac{1}{2})$.

	Let $|\cS|\geq 3$, $|\cA|\geq 2$, $c_0 \in (0,\frac{1}{2})$ and $K \geq \frac{4 |\cA| m n}{H^2 r_{\max}^2 c_0^2} \exp( \frac{Hr_{\max}}{m} )$. 
	Since 
	\begin{align*}
		\frac{ \mu'\sbr{ \frac{Hr_{\max}}{m} } }{ \mu'\sbr{ (1-c_0) \frac{Hr_{\max}}{m} }^2 } &= \frac{ \sbr{ \exp\sbr{ (1-c_0) \frac{Hr_{\max}}{m} } + \exp\sbr{ - (1-c_0) \frac{Hr_{\max}}{m} } +2 }^2 }{ \exp\sbr{ \frac{Hr_{\max}}{m} } + \exp\sbr{ - \frac{Hr_{\max}}{m} } + 2 }
		\\
		&\geq \frac{ \sbr{ \exp\sbr{ (1-c_0) \frac{Hr_{\max}}{m} } }^2 }{ 4\exp\sbr{ \frac{Hr_{\max}}{m} } }
		\\
		&= \frac{1}{4} \exp\sbr{ \Big(1-2c_0\Big) \frac{Hr_{\max}}{m} }   ,
	\end{align*}
	we have 
	\begin{align*}
		\cR(K) &\geq \frac{1}{2 Hr_{\max}} \sqrt{ \frac{|\cA| m n}{K} \cdot \frac{ \mu'\sbr{ \frac{Hr_{\max}}{m} } }{ \mu'\sbr{ (1-c_0) \frac{Hr_{\max}}{m} }^2 } } \cdot r_{\max} H \sbr{K - \frac{K}{|\cA|} - \frac{K}{4} }
		\\&= \Omega\sbr{  \sqrt{\exp\sbr{ (1-2c_0) \frac{Hr_{\max}}{m} }  |\cS| |\cA| m K } } 
		\\
		&= \Omega\sbr{ \exp\sbr{ \Big(\frac{1}{2}-c_0\Big) \frac{Hr_{\max}}{m} }  \sqrt{ |\cS| |\cA| m K } } .
	\end{align*}
\end{proof}

\subsection{Pseudo-code and Detailed Description of Algorithm $\bitssegtran$} \label{apx:alg_bi_unknown_tran}

\begin{algorithm}[t]
	\caption{$\bitssegtran$} \label{alg:bits_tran_sum_regret}
	\begin{algorithmic}[1]
	\STATE {\bfseries Input:} $\delta,\delta':=\frac{\delta}{8},\lambda$.
	\FOR{$k=1,\dots,K$}
		\STATE $\hat{\theta}_{k-1} \leftarrow \argmin_{\theta} -(\sum_{k'=1}^{k-1} \sum_{i=1}^{m} ( y^{k'}_i \cdot \log(\mu((\phi^{\tau^{k'}_i})^\top \theta)) + (1-y^{k'}_i) \cdot \log(1-\mu((\phi^{\tau^{k'}_i})^\top \theta) ) ) - \frac{1}{2} \lambda \|\theta\|_2^2)$\; \label{line:hat_theta_bitstran}
		\STATE $\Sigma_{k-1} \leftarrow  \sum_{k'=1}^{k-1} \sum_{i=1}^{m} \phi^{\tau^{k'}_i} (\phi^{\tau^{k'}_i})^\top + \alpha \lambda I$\; \label{line:Sigma_bitstran}
		\STATE Draw a noise $\xi_k \sim \cN(0, \alpha \cdot \nu(k-1)^2 \cdot \Sigma_{k-1}^{-1})$, where $\nu(k-1)$ is defined in Eq.~\eqref{eq:def_nu}\; \label{line:noise_bitstran}
		\STATE $b^{pv}_{k-1}(s,a) \leftarrow \min\{ 2Hr_{\max} \sqrt{\frac{\log\sbr{\frac{KH|\cS||\cA| }{\delta'}}}{n_{k-1}(s,a)}} ,\ Hr_{\max}\}$ for any $(s,a) \in \cS \times \cA$\; \label{line:b_pv_bitstran}
		\STATE $\tilde{\theta}^{b}_k \leftarrow \hat{\theta}_{k-1}+\xi_k+b^{pv}_{k-1}$\; \label{line:tilde_theta_bitstran}
		\STATE $\pi^k \leftarrow \argmax_{\pi} (\hat{\phi}^{\pi}_{k-1})^\top \tilde{\theta}^{b}_k$, where $\hat{\phi}^{\pi}_{k-1}$ is defined in Eq.~\eqref{eq:def_hat_phi}\; \label{line:pi_k_bitstran}
		\STATE Play episode $k$ with policy $\pi^k$. Observe $\tau^k$ and binary segment feedback $\{y^k_i\}_{i=1}^{m}$\; \label{line:play_bitstran}
		%		Update $\hat{p}_k(s'|s,a)$ and $n_k(s,a)$ for any $(s,a,s') \in \cS \times \cA \times \cS$\; \label{line:update_hat_p_bitstran}
	\ENDFOR
	\end{algorithmic}
\end{algorithm}

Algorithm~\ref{alg:bits_tran_sum_regret} illustrates the procedure of $\bitssegtran$. In episode $k$, similar to $\bitsseg$, $\bitssegtran$ first uses MLE with past binary segment observations to obtain a reward estimate $\hat{\theta}_{k-1}$, and calculates the covariance matrix of past observations $\Sigma_{k-1}$ (Lines~\ref{line:hat_theta_bitstran}-\ref{line:Sigma_bitstran}). After that, $\bitssegtran$ samples a Gaussian noise $\xi_k$ using $\Sigma_{k-1}$ (Line~\ref{line:Sigma_bitstran}).

For any $k>0$ and $(s,a) \in \cS \times \cA$, let $\hat{p}_k(\cdot|s,a)$ denote the empirical estimate of $p(\cdot|s,a)$, and $n_k(s,a)$ denote the number of times $(s,a)$ was visited at the end of episode $k$.
Then, $\bitssegtran$ constructs a transition bonus $b^{pv}_{k-1}(s,a)$, which represents the uncertainty on transition estimation.  Incorporating the MLE estimate $\hat{\theta}_{k-1}$, noise $\xi_k$ and transition bonus $b^{pv}_{k-1}(s,a)$, $\bitssegtran$ constitutes a posterior estimate of the reward parameter $\tilde{\theta}_k$ (Line~\ref{line:tilde_theta_bitstran}).

For any policy $\pi$, $k>0$ and $(s,a) \in \cS \times \cA$, we define 
\begin{align}
	\hat{\phi}^{\pi}_k(s,a):=\ex_{\hat{p}_k}\mbr{ \sum_{h=1}^{H} \indicator\{s_h=s, a_h=a\} | \pi } , \label{eq:def_hat_phi}
\end{align}
which denotes the expected number of times $(s,a)$ is visited in an episode under policy $\pi$ on the empirical MDP $\hat{p}_k$.
In addition, let $\hat{\phi}^{\pi}_k:=[\hat{\phi}^{\pi}_k(s,a)]_{(s,a) \in \cS \times \cA} \in \R^{|\cS||\cA|}$.

Then, $\bitssegtran$ finds the optimal policy via $\argmax_{\pi} (\hat{\phi}^{\pi}_{k-1})^\top \tilde{\theta}^{b}_k$, which can be efficiently solved by any MDP planning algorithm with transition $\hat{p}_{k-1}$ and reward $\tilde{\theta}^{b}_k$ (Line~\ref{line:pi_k_bitstran}). With the computed optimal policy $\pi^k$,  $\bitssegtran$ plays episode $k$, and observes a trajectory and binary feedback on each segment (Line~\ref{line:play_bitstran}).

\subsection{Proof for the Regret Upper Bound with Unknown Transition}

In the following, we prove the regret upper bound (Theorem~\ref{thm:ub_bits_tran}) of algorithm $\bitssegtran$ for unknown transition.

Define event
\begin{align*}
	\cG_{\textup{Hoeff}}:=\Biggl\{  &\abr{\hat{p}_{k-1}(\cdot|s,a)^\top V^*_{h+1} - p(\cdot|s,a)^\top V^*_{h+1}} \leq \Bigg( 2H r_{\max} \sqrt{\frac{\log\sbr{\frac{KH|\cS||\cA|}{\delta'}}}{n_{k-1}(s,a)}} \wedge Hr_{\max} \Bigg) , 
	\\
	&\forall (s,a) \in \cS \times \cA,\ \forall k>0 \Biggr\} .
\end{align*}
\begin{lemma}
	It holds that
	\begin{align*}
		\Pr\mbr{ \cG_{\textup{Hoeff}} } \geq 1-2\delta' .
	\end{align*}
\end{lemma}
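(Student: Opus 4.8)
The plan is to view the left-hand side as the deviation of an empirical average of i.i.d.\ bounded random variables, and then invoke Hoeffding's inequality followed by a union bound. The crucial observation is that, by the Markov property, the successor states recorded immediately after the visits to a fixed pair $(s,a)$ form an i.i.d.\ sequence $s'_1, s'_2, \dots$ with each $s'_j \sim p(\cdot\,|\,s,a)$, and this holds even though the policies $\pi^1,\pi^2,\dots$ are selected adaptively. Since $V^*_{h+1}$ is a fixed function of the known MDP with $\nbr{V^*_{h+1}}_\infty \le H r_{\max}$, the values $X_j := V^*_{h+1}(s'_j)$ are i.i.d., bounded in $[-Hr_{\max}, Hr_{\max}]$, and have common mean $p(\cdot\,|\,s,a)^\top V^*_{h+1}$. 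Because $\hat p_{k-1}(\cdot\,|\,s,a)$ is precisely the empirical distribution of the first $n_{k-1}(s,a)$ of these successor samples, the quantity to be controlled equals $\abr{\frac{1}{n}\sum_{j=1}^{n} X_j - \ex[X_1]}$ evaluated at $n = n_{k-1}(s,a)$.

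For a \emph{fixed} sample size $n$, Hoeffding's inequality (each $X_j$ has range $2Hr_{\max}$) would give
\begin{align*}
\Pr\mbr{ \abr{ \tfrac{1}{n}\sum_{j=1}^{n} X_j - \ex[X_1] } > 2Hr_{\max} \sqrt{ \tfrac{\log(KH|\cS||\cA|/\delta')}{n} } } \le \frac{2\delta'^2}{(KH|\cS||\cA|)^2} .
\end{align*}
Since the realized count $n_{k-1}(s,a)$ is random and data-dependent, rather than conditioning on it I would union bound over all of its possible values $n \in \{1,\dots,KH\}$ (at most $KH$ visits occur in total), over the $|\cS||\cA|$ pairs $(s,a)$, and over the relevant steps $h$. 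Even allowing a full factor $H$ for $h$, the number of events is at most $KH^2|\cS||\cA|$, and multiplying by the per-event probability above leaves a total failure probability of at most $2\delta'^2/(K|\cS||\cA|) \le 2\delta'$; the squared $\log$-argument provides ample slack. On the complementary event the Hoeffding bound holds simultaneously at the realized $n_{k-1}(s,a)$ for every $(s,a)$, $h$, and $k$, which is exactly the first branch of the $\wedge$. The second branch, $Hr_{\max}$, is the trivial deterministic fallback: both $\hat p_{k-1}(\cdot\,|\,s,a)^\top V^*_{h+1}$ and $p(\cdot\,|\,s,a)^\top V^*_{h+1}$ are weighted averages of values of $V^*_{h+1}$, so their difference is controlled by the boundedness $\nbr{V^*_{h+1}}_\infty \le Hr_{\max}$ regardless of any concentration.

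The step I expect to be the main obstacle is justifying the i.i.d.\ structure under adaptive exploration: the policies determine which pairs are sampled and how often, so $n_{k-1}(s,a)$ is not a fixed constant and the samples are collected at data-dependent times. The resolution, as above, is that the successor-state sequence for a given $(s,a)$ is i.i.d.\ by the Markov property \emph{independently} of the sampling rule, so taking the union bound over all possible sample counts $n$ — instead of trying to handle the random $n_{k-1}(s,a)$ directly — makes the concentration uniform in $n$ and therefore valid at its realized value. Combining all branches yields $\Pr[\cG_{\textup{Hoeff}}] \ge 1 - 2\delta'$.
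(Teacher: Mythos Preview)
Your proposal is correct and follows the same approach as the paper: Hoeffding's inequality applied to the i.i.d.\ successor-state samples at each $(s,a)$, followed by a union bound over the possible sample counts $n\in[KH]$ and over $(s,a)$. Your treatment is in fact slightly more careful than the paper's one-line sketch, since you explicitly justify the i.i.d.\ structure under adaptive sampling and include the union over $h$ (which the paper omits but for which, as you note, the squared $\delta'$ slack easily absorbs).
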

\begin{proof}
	This lemma follows from the Hoeffding inequality and a union bound over $n_{k-1}(s,a) \in [KH]$ and $(s,a) \in \cS \times \cA$.
\end{proof}

\begin{lemma}[Optimism of Thompson Sampling with Unknown Transition] \label{lemma:ts_optimism}
	Assume that event $\cE$ and $\cG_{\textup{Hoeff}}$ holds. Then, for any $k>0$, we have
	\begin{align*}
		\Pr\mbr{ \hat{\phi}_{k-1}(\pi^k)^\top \tilde{\theta}^{b}_k > (\phi^{\pi^*})^\top \theta^* \ |\ F_{k-1} } \geq \frac{1}{2\sqrt{2\pi e}} .
	\end{align*}
\end{lemma}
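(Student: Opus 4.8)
The plan is to reduce this optimism statement for the unknown-transition algorithm to the known-transition Gaussian anti-concentration result, Lemma~\ref{lemma:Gaussian_anti_concentration}, by showing that the transition bonus $b^{pv}_{k-1}$ exactly absorbs the error incurred by planning on the empirical MDP $\hat p_{k-1}$ rather than the true MDP $p$. First I would exploit the greedy choice of $\pi^k$. Since $\pi^k=\argmax_{\pi}(\hat\phi^{\pi}_{k-1})^\top\tilde\theta^b_k$ is the maximizer on the empirical model, we have $\hat\phi_{k-1}(\pi^k)^\top\tilde\theta^b_k \geq \hat\phi_{k-1}(\pi^*)^\top\tilde\theta^b_k$, so it suffices to lower bound $\Pr[\hat\phi_{k-1}(\pi^*)^\top\tilde\theta^b_k > (\phi^{\pi^*})^\top\theta^*\mid F_{k-1}]$, replacing the random $\pi^k$ by the fixed $\pi^*$.

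Second, I would control the model mismatch. Writing the optimal value as $(\phi^{\pi^*})^\top\theta^*=V^{\pi^*}_1(s_1;p,\theta^*)$ and its empirical-model counterpart as $\hat\phi_{k-1}(\pi^*)^\top\theta^*=V^{\pi^*}_1(s_1;\hat p_{k-1},\theta^*)$, the value difference (simulation) lemma, expanded along the $\hat p_{k-1}$-trajectory distribution, gives
\[
\hat\phi_{k-1}(\pi^*)^\top\theta^* - (\phi^{\pi^*})^\top\theta^* = \sum_{h=1}^H \ex_{\hat p_{k-1},\pi^*}\!\mbr{(\hat p_{k-1}(\cdot|s_h,a_h)-p(\cdot|s_h,a_h))^\top V^{\pi^*}_{h+1}(\cdot;p,\theta^*)} .
\]
The key point is that $V^{\pi^*}_{h+1}(\cdot;p,\theta^*)=V^*_{h+1}$, so each summand is precisely the quantity controlled on event $\cG_{\textup{Hoeff}}$ by $b^{pv}_{k-1}(s_h,a_h)$. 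Since $b^{pv}_{k-1}$ is $h$-independent, taking absolute values and aggregating visitations yields
\[
\abr{\hat\phi_{k-1}(\pi^*)^\top\theta^* - (\phi^{\pi^*})^\top\theta^*} \leq \sum_{h=1}^H \ex_{\hat p_{k-1},\pi^*}\mbr{b^{pv}_{k-1}(s_h,a_h)} = \hat\phi_{k-1}(\pi^*)^\top b^{pv}_{k-1},
\]
and in particular $(\phi^{\pi^*})^\top\theta^* \leq \hat\phi_{k-1}(\pi^*)^\top(\theta^*+b^{pv}_{k-1})$.

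Finally, I would let the bonus cancel. Substituting $\tilde\theta^b_k=\hat\theta_{k-1}+\xi_k+b^{pv}_{k-1}$ together with the previous bound, the event $\{\hat\phi_{k-1}(\pi^*)^\top\tilde\theta^b_k > (\phi^{\pi^*})^\top\theta^*\}$ is implied by $\{\hat\phi_{k-1}(\pi^*)^\top(\hat\theta_{k-1}+\xi_k+b^{pv}_{k-1}) > \hat\phi_{k-1}(\pi^*)^\top(\theta^*+b^{pv}_{k-1})\}$, in which the $b^{pv}_{k-1}$ terms cancel to leave $\{\hat\phi_{k-1}(\pi^*)^\top(\hat\theta_{k-1}+\xi_k) > \hat\phi_{k-1}(\pi^*)^\top\theta^*\}$. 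Setting $X:=\hat\phi_{k-1}(\pi^*)$, which is $F_{k-1}$-measurable because it depends only on the fixed true-MDP optimal policy $\pi^*$ and the empirical transition $\hat p_{k-1}$, this is exactly $\{X^\top\tilde\theta_k > X^\top\theta^*\}$ with $\tilde\theta_k=\hat\theta_{k-1}+\xi_k$. Applying Lemma~\ref{lemma:Gaussian_anti_concentration} with this $X$ (valid under event $\cE$) delivers the claimed lower bound $\tfrac{1}{2\sqrt{2\pi e}}$.

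I expect the main obstacle to be the second step: ensuring the reference value function in the simulation lemma is exactly $V^*_{h+1}$, so that $\cG_{\textup{Hoeff}}$ applies verbatim, which forces the expansion to be taken along the empirical-model trajectory distribution while evaluating against the true-model optimal value. One must also verify that the visitation aggregation $\sum_h \ex_{\hat p_{k-1},\pi^*}[b^{pv}_{k-1}(s_h,a_h)]=\hat\phi_{k-1}(\pi^*)^\top b^{pv}_{k-1}$ remains valid despite the per-step $h$-dependence of $V^*_{h+1}$; this succeeds only because $b^{pv}_{k-1}$ itself is independent of $h$. Once the bonus cancellation is in place, the conclusion is immediate from the known-transition anti-concentration lemma.
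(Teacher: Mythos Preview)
Your proposal is correct and follows essentially the same route as the paper: reduce $\pi^k$ to $\pi^*$ via the argmax property, use the value-difference lemma on the empirical trajectory distribution so that the reference value is exactly $V^*_{h+1}$, let the Hoeffding bonus $b^{pv}_{k-1}$ absorb the transition error, and finish with Lemma~\ref{lemma:Gaussian_anti_concentration} applied to the $F_{k-1}$-measurable vector $X=\hat\phi_{k-1}(\pi^*)$. The only cosmetic difference is that the paper applies the value-difference lemma directly to $\hat\phi_{k-1}(\pi^*)^\top\tilde\theta^b_k-(\phi^{\pi^*})^\top\theta^*$ with the rewards $\tilde\theta^b_k$ and $\theta^*$ already plugged in, whereas you first compare $\hat\phi_{k-1}(\pi^*)^\top\theta^*$ to $(\phi^{\pi^*})^\top\theta^*$ and then cancel the bonus in a separate algebraic step; the two computations are line-for-line equivalent.
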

\begin{proof}
	This proof follows the analysis of Lemma 17 in \citep{efroni2021reinforcement}.
	
	Using the value difference lemma (see Lemma~\ref{lemma:value_diff_lemma}), we have
	\begin{align*}
		&\quad \hat{\phi}_{k-1}(\pi^*)^\top \tilde{\theta}^{b}_k - (\phi^{\pi^*})^\top \theta^*
		\\
		&= \ex_{\hat{p}_{k-1},\pi^*} \mbr{ \sum_{h=1}^{H} \sbr{ \tilde{\theta}^{b}_k(s_h,a_h) - \theta^*(s_h,a_h)  + \sbr{ \hat{p}_{k-1}(\cdot|s_h,a_h) - p(\cdot|s_h,a_h) }^\top V^*_{h+1} } }
		\\
		&= \ex_{\hat{p}_{k-1},\pi^*} \!\mbr{ \sum_{h=1}^{H} \sbr{ \tilde{\theta}_k(s_h,a_h) \!-\! \theta^*(s_h,a_h) \!+\! b^{pv}_{k-1}(s_h,a_h)  \!+\! \sbr{ \hat{p}_{k-1}(\cdot|s_h,a_h) \!-\! p(\cdot|s_h,a_h) }^{\!\top} \! V^*_{h+1} } }
		\\
		&\overset{\textup{(a)}}{\geq} \ex_{\hat{p}_{k-1},\pi^*} \mbr{ \sum_{h=1}^{H} \sbr{ \tilde{\theta}_k(s_h,a_h) - \theta^*(s_h,a_h) + b^{pv}_{k-1}(s_h,a_h) - b^{pv}_{k-1}(s_h,a_h) } }
		\\
		&= \ex_{\hat{p}_{k-1},\pi^*} \mbr{ \sum_{h=1}^{H} \sbr{ \tilde{\theta}_k(s_h,a_h) - \theta^*(s_h,a_h) } }
		\\
		&= \hat{\phi}_{k-1}(\pi^*)^\top \tilde{\theta}_k - \hat{\phi}_{k-1}(\pi^*)^\top \theta^* ,
	\end{align*}
	where inequality (a) uses the definition of event $\cG_{\textup{Hoeff}}$.
	
	Thus, by the definition of $\pi^k$, we have
	\begin{align*}
		\Pr \mbr{\hat{\phi}_{k-1}(\pi^k)^\top \tilde{\theta}^{b}_k > (\phi^{\pi^*})^\top \theta^* \ |\ F_{k-1}} &\overset{\textup{(a)}}{\geq} \Pr \mbr{\hat{\phi}_{k-1}(\pi^*)^\top \tilde{\theta}^{b}_k > (\phi^{\pi^*})^\top \theta^* \ |\ F_{k-1}}
		\\
		&= \Pr \mbr{\hat{\phi}_{k-1}(\pi^*)^\top \tilde{\theta}^{b}_k - (\phi^{\pi^*})^\top \theta^* > 0 \ |\ F_{k-1}}
		\\
		&\geq \Pr \mbr{ \hat{\phi}_{k-1}(\pi^*)^\top \tilde{\theta}_k - \hat{\phi}_{k-1}(\pi^*)^\top \theta^* >0 \ |\ F_{k-1}}
		\\
		&\overset{\textup{(b)}}{\geq} \frac{1}{2\sqrt{2\pi e}} ,
	\end{align*}
	where inequality (a) is due to the definition of $\pi^k$, and inequality (b) follows from Lemma~\ref{lemma:Gaussian_anti_concentration}.
\end{proof}

Define event
\begin{align}
	\cG_{\textup{KL}} := \lbr{ \kl(\hat{p}_{k-1}(\cdot|s,a),p(\cdot|s,a)) \leq \frac{L}{n_{k-1}(s,a)} ,\  \forall k>0, \forall (s,a) \in \cS \times \cA } . \label{eq:def_event_tran_kl}
\end{align}

\begin{lemma}[Concentration of Transition] \label{lemma:con_transition}
	It holds that
	\begin{align*}
		\Pr[\cG_{\textup{KL}}] \geq 1-\delta' .
	\end{align*}	
\end{lemma}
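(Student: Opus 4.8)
The plan is to prove this by applying the classical method of types to each state--action pair separately, combined with a union bound over $(s,a)$ and over all possible visitation counts. The key observation is that $\hat{p}_{k-1}(\cdot|s,a)$ is a deterministic function of the next-states observed on the first $n_{k-1}(s,a)$ visits to $(s,a)$, and these next-states are i.i.d.\ draws from $p(\cdot|s,a)$ (since transitions are sampled independently at each step, conditioning on visiting $(s,a)$ makes the subsequent next-states an i.i.d.\ sequence). Hence, once we condition on the count $n_{k-1}(s,a)=n$, the quantity $\kl(\hat{p}_{k-1}(\cdot|s,a),p(\cdot|s,a))$ is exactly the KL divergence between the empirical distribution of $n$ i.i.d.\ samples from $p(\cdot|s,a)\in\triangle_{\cS}$ and $p(\cdot|s,a)$ itself. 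This reduction eliminates the dependence on the episode index $k$ and is what allows a time-uniform statement ``for all $k>0$'' to be obtained by paying only a union bound over the finitely many possible counts.

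First I would fix $(s,a)$ and an integer $n$, and let $\hat{p}^{(n)}$ denote the empirical distribution (type) of $n$ i.i.d.\ samples from $p(\cdot|s,a)$. The method of types supplies two standard facts: (i) for any achievable type $q$, $\Pr[\hat{p}^{(n)}=q]\leq \exp(-n\,\kl(q,p(\cdot|s,a)))$; and (ii) the number of distinct types of $n$ samples over $|\cS|$ outcomes is at most $(n+1)^{|\cS|}$. Summing (i) over all types $q$ with $\kl(q,p(\cdot|s,a))\geq\epsilon$ and bounding their number by (ii) yields the tail inequality $\Pr[\kl(\hat{p}^{(n)},p(\cdot|s,a))\geq\epsilon]\leq (n+1)^{|\cS|}e^{-n\epsilon}$.

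Next I would set $\epsilon=L/n$ and require the right-hand side to be at most $\delta'/(|\cS||\cA|KH)$. Solving $(n+1)^{|\cS|}e^{-L}\leq \delta'/(|\cS||\cA|KH)$ and using $n+1\leq KH+1$ (there are at most $KH$ steps in total, so $n_{k-1}(s,a)\leq KH$) shows it suffices to take $L$ of order $|\cS|\log(KH+1)+\log(|\cS||\cA|KH/\delta')$, which matches the choice of $L$ used in the paper. A union bound over the at most $KH$ possible values of $n$ and the $|\cS||\cA|$ pairs $(s,a)$ then shows that, with probability at least $1-\delta'$, $\kl(\hat{p}_{k-1}(\cdot|s,a),p(\cdot|s,a))\leq L/n_{k-1}(s,a)$ holds simultaneously for all $k>0$ and all $(s,a)\in\cS\times\cA$, which is exactly the event $\cG_{\textup{KL}}$. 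Counts $n_{k-1}(s,a)=0$ are vacuous, since then the bound $L/n_{k-1}(s,a)$ is infinite.

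The main obstacle is obtaining the bound uniformly over all episodes $k$ at once rather than for a single fixed $k$; the device that resolves it is precisely to condition on the visitation count and union bound over counts instead of over episodes, which sidesteps all filtration and stopping-time subtleties and reduces each conditional event to a plain i.i.d.\ empirical-distribution concentration. The only secondary technical point is to justify the i.i.d.\ structure of the per-$(s,a)$ next-state samples, which is the standard argument for the tabular setting noted above.
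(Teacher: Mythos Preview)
Your approach is correct and gives a self-contained elementary proof, but it differs from the paper's. The paper does not argue from scratch: it simply invokes Theorem~3 and Lemma~3 of \citep{menard2021fast}, which give a \emph{time-uniform} KL concentration for the empirical transition kernel via a method-of-mixtures (Dirichlet-prior) argument. That device handles ``for all $n$ simultaneously'' through a single supermartingale inequality, without paying a union bound over the $KH$ possible counts.

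The practical difference shows up in the constant. Your union bound over $n\in\{1,\dots,KH\}$ forces $L \gtrsim |\cS|\log(KH+1) + \log(|\cS||\cA|KH/\delta')$, i.e.\ an extra $\log K$ in the second term. The paper's $L = \log(3|\cS||\cA|H/\delta') + |\cS|\log(8e(1+KH))$ has only $\log(3|\cS||\cA|H/\delta')$ there, so your claim that your $L$ ``matches the choice of $L$ used in the paper'' is accurate only up to that $\log K$ slack. This is harmless downstream (all regret bounds are stated in $\tilde{O}$), but it is the reason the paper cites the sharper time-uniform result rather than the method of types. Your i.i.d.\ reduction for the per-$(s,a)$ next-state sequence is correct; the clean way to justify it is the standard coupling where one pre-samples an infinite i.i.d.\ sequence from $p(\cdot|s,a)$ for each $(s,a)$ and reads off the $j$-th entry at the $j$-th visit.
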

\begin{proof}
	This lemma can be obtained by Theorem 3 and Lemma 3 in \citep{menard2021fast}.
\end{proof}

Recall that for any $k>0$ and $(s,a) \in \cS \times \cA$,  $n_k(s,a)$ denotes the cumulative number of times that $(s,a)$ is visited at the end of episode $k$. For any $k>0$, $h \in [H]$ and $(s,a) \in \cS \times \cA$, let $w_{k,h}(s,a)$ denote the probability that $(s,a)$ is visited at step $h$ in episode $k$, and let $w_{k}(s,a):=\sum_{h=1}^{H}w_{k,h}(s,a)$.

Define event
\begin{align}
	\cH := \lbr{ n_k(s,a) \geq \frac{1}{2} \sum_{k'=1}^{k} w_{k'}(s,a) - H \log\sbr{\frac{|\cS||\cA|H}{\delta'}} ,\  \forall k>0, \forall (s,a) \in \cS \times \cA } . \label{eq:def_event_visitation_bernoulli}
\end{align}
\begin{lemma}[Concentration of the Number of Visitations] \label{lemma:con_visitation}
	It holds that
	\begin{align*}
		\Pr[\cH] \geq 1-\delta' .
	\end{align*}	
\end{lemma}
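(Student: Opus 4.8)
The plan is to prove the bound for a fixed pair $(s,a)$ via an exponential supermartingale, and then take a union bound over the $|\cS||\cA|$ pairs. Fix $(s,a)$ and let $X_{k'} := \sum_{h=1}^{H} \indicator\{s^{k'}_h = s,\ a^{k'}_h = a\}$ denote the number of visits to $(s,a)$ during episode $k'$, so that $n_k(s,a) = \sum_{k'=1}^{k} X_{k'}$ and $X_{k'} \in [0,H]$. Since the policy $\pi^{k'}$ is $\cF_{k'-1}$-measurable and the visitation probabilities are then determined by $\pi^{k'}$ and $p$, we have $\ex[X_{k'} \mid \cF_{k'-1}] = w_{k'}(s,a)$, i.e.\ $\{X_{k'} - w_{k'}(s,a)\}_{k'}$ is a martingale difference sequence adapted to $\{\cF_{k'}\}$.

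The key step is a lower-tail Chernoff bound for this bounded, non-negative sequence. For $x \in [0,1]$ and $\lambda > 0$, convexity of $e^{-\lambda x}$ gives $e^{-\lambda x} \le 1 - (1-e^{-\lambda})x \le \exp(-(1-e^{-\lambda})x)$; applying this to $x = X_{k'}/H \in [0,1]$ and taking conditional expectations yields $\ex[\exp(-\lambda X_{k'}/H) \mid \cF_{k'-1}] \le \exp(-(1-e^{-\lambda}) w_{k'}(s,a)/H)$. Consequently,
\begin{align*}
	Z_k := \exp\!\sbr{ -\frac{\lambda}{H}\sum_{k'=1}^{k} X_{k'} + \frac{1-e^{-\lambda}}{H}\sum_{k'=1}^{k} w_{k'}(s,a) }
\end{align*}
is a non-negative supermartingale with $Z_0 = 1$. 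By Ville's maximal inequality, $\Pr[\sup_{k} Z_k \ge 1/\delta''] \le \delta''$, so with probability at least $1-\delta''$, simultaneously for all $k>0$,
\begin{align*}
	n_k(s,a) \ge \frac{1-e^{-\lambda}}{\lambda}\sum_{k'=1}^{k} w_{k'}(s,a) - \frac{H}{\lambda}\log\!\sbr{\frac{1}{\delta''}} .
\end{align*}

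I would then take $\lambda = 1$, for which $\tfrac{1-e^{-1}}{1} > \tfrac12$ and $\tfrac{H}{\lambda} = H$, giving $n_k(s,a) \ge \tfrac12 \sum_{k'=1}^{k} w_{k'}(s,a) - H\log(1/\delta'')$. Finally, setting $\delta'' = \delta'/(|\cS||\cA|)$ and taking a union bound over all $(s,a) \in \cS\times\cA$ establishes the event $\cH$ with probability at least $1-\delta'$, since $\log(|\cS||\cA|/\delta') \le \log(|\cS||\cA|H/\delta')$.

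The main obstacle I anticipate is the bookkeeping for the uniform-in-$k$ guarantee: a naive fixed-$k$ Chernoff bound would require an additional union over episodes, so it is cleanest to package the argument as a single non-negative supermartingale and invoke a maximal inequality (Ville's inequality) to obtain the ``for all $k>0$'' statement in one shot. A secondary point worth checking is that the per-episode count $X_{k'}$ is only bounded in $[0,H]$ rather than $\{0,1\}$, so the convexity bound must be applied to the rescaled variable $X_{k'}/H$; this is precisely what produces the factor $H$ multiplying the logarithmic term, and it shows that the stated bound (with the extra $H$ inside the logarithm) holds with room to spare.
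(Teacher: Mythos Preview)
Your argument is correct. The paper's own proof simply invokes Lemma~F.4 of \citet{dann2017unifying}, which gives the analogous bound per triple $(s,a,h)$ with $\{0,1\}$-valued indicators, and then sums over $h\in[H]$; this is where the paper picks up both the factor $H$ in front of the log and the extra $H$ inside the log (from the union bound over $h$). You instead work directly at the episode level with the $[0,H]$-valued visit count $X_{k'}$, rescale to $[0,1]$ via the convexity inequality, and run a single Ville supermartingale per $(s,a)$. The underlying mechanism is the same multiplicative-Chernoff idea, but your packaging is self-contained, needs a union bound only over $|\cS||\cA|$ pairs, and in fact yields the slightly stronger constant $H\log(|\cS||\cA|/\delta')$ in place of $H\log(|\cS||\cA|H/\delta')$, as you note. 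One small bookkeeping remark: in the paper's filtration, $\pi^{k'}$ (and hence $w_{k'}(s,a)$) is $\tilde F_{k'}$-measurable rather than $F_{k'-1}$-measurable because of the internal randomization $\xi_{k'}$; running your supermartingale with respect to $\{\tilde F_{k'}\}$ fixes this without changing anything else.
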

\begin{proof}
	This lemma can be obtained from Lemma F.4 in \citep{dann2017unifying} and summing over $h \in [H]$.
\end{proof}

Define event 
\begin{align*}
	\cF^{\bi}_{\textup{UTran}}:=\Bigg\{ 
	&\abr{\sum_{k'=1}^{k} \sbr{ \ex\mbr{ (\phi^{\pi^{k'}})^\top b^{pv}_{k'-1} | F_{k'-1} } - (\phi^{\pi^{k'}})^\top b^{pv}_{k'-1} }} \leq 4 H^2 r_{\max} \sqrt{ k \log\sbr{\frac{4k}{\delta'}} } ,
	\nonumber\\
	&\abr{\sum_{k'=1}^{k} \sbr{\ex\mbr{ \nbr{ \hat{\phi}_{k'-1}(\pi^{k'}) - \phi(\pi^{k'}) }_1 | F_{k'-1} } - \nbr{ \hat{\phi}_{k'-1}(\pi^{k'}) - \phi(\pi^{k'}) }_1 }} 
	\\
	& \leq 8H \sqrt{ k \log\sbr{\frac{4k}{\delta'}} } ,\ \forall k>0 \Bigg\} .
\end{align*}

\begin{lemma}
	It holds that
	\begin{align*}
		\Pr \mbr{ \cF^{\bi}_{\textup{UTran}} } \geq 1-2\delta' .
	\end{align*}
\end{lemma}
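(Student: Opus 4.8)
The plan is to prove each of the two inequalities defining $\cF^{\bi}_{\textup{UTran}}$ separately via the Azuma--Hoeffding inequality followed by a union bound over episodes, exactly mirroring the proof of Lemma~\ref{lemma:cF_binary}. In both inequalities the summand is a martingale difference with respect to the filtration $\{F_{k'}\}$: for fixed $k'$, the policy $\pi^{k'}$, the bonus $b^{pv}_{k'-1}$ and the estimate $\hat{\phi}_{k'-1}$ are all $F_{k'-1}$-measurable, so each term of the form $\ex[X_{k'} \mid F_{k'-1}] - X_{k'}$ has conditional mean zero.

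First I would verify the boundedness of each summand, which is the only step requiring a specific argument. For the first inequality, recall that $\sum_{(s,a)} \phi^{\pi}(s,a) = H$ for any policy $\pi$ (the visitation indicator sums to the horizon), and that by construction $b^{pv}_{k'-1}(s,a) \le Hr_{\max}$ for every $(s,a)$. Hence $\abr{(\phi^{\pi^{k'}})^\top b^{pv}_{k'-1}} \le H \cdot Hr_{\max} = H^2 r_{\max}$, so the martingale difference $\ex[(\phi^{\pi^{k'}})^\top b^{pv}_{k'-1} \mid F_{k'-1}] - (\phi^{\pi^{k'}})^\top b^{pv}_{k'-1}$ is bounded in absolute value by $2H^2 r_{\max}$. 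For the second inequality, both $\hat{\phi}_{k'-1}(\pi^{k'})$ and $\phi(\pi^{k'})$ are visitation indicators with nonnegative entries summing to $H$, so $\nbr{\hat{\phi}_{k'-1}(\pi^{k'}) - \phi(\pi^{k'})}_1 \le \nbr{\hat{\phi}_{k'-1}(\pi^{k'})}_1 + \nbr{\phi(\pi^{k'})}_1 = 2H$, giving a martingale difference bounded by $4H$.

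Next, for each fixed $k$ I would apply Azuma--Hoeffding with failure probability $\frac{\delta'}{2k^2}$; with the per-term bounds $2H^2 r_{\max}$ and $4H$, this yields (with probability at least $1 - \frac{\delta'}{2k^2}$) the tail bounds
$$
\sqrt{2 (2H^2 r_{\max})^2 k \log\sbr{\tfrac{4k^2}{\delta'}}} \quad\text{and}\quad \sqrt{2 (4H)^2 k \log\sbr{\tfrac{4k^2}{\delta'}}}
$$
respectively. Using $\log(\frac{4k^2}{\delta'}) \le 2\log(\frac{4k}{\delta'})$ (valid since $\delta' < 4$), these simplify to the stated bounds $4H^2 r_{\max}\sqrt{k\log(\frac{4k}{\delta'})}$ and $8H\sqrt{k\log(\frac{4k}{\delta'})}$. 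Since $\sum_{k=1}^{\infty} \frac{\delta'}{2k^2} \le \delta'$, a union bound over $k$ shows each inequality holds for all $k>0$ with probability at least $1-\delta'$; a final union bound over the two inequalities gives $\Pr[\cF^{\bi}_{\textup{UTran}}] \ge 1 - 2\delta'$.

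The argument is essentially routine; the only real obstacle is getting the constants right in the boundedness step---in particular, remembering that both the true visitation indicator $\phi^{\pi}$ and the empirical one $\hat{\phi}^{\pi}_{k'-1}$ sum to exactly $H$, so that $(\phi^{\pi})^\top b^{pv} \le H^2 r_{\max}$ and $\nbr{\hat{\phi} - \phi}_1 \le 2H$, and then tracking these constants through the Azuma--Hoeffding exponent so that they collapse cleanly into the factors $4H^2 r_{\max}$ and $8H$ claimed in the definition of the event.
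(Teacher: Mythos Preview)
Your proposal is correct and follows exactly the paper's approach, which simply invokes the analysis of Lemma~\ref{lemma:cF_binary} together with the bounds $|(\phi^{\pi^k})^\top b^{pv}_{k-1}| \leq H^2 r_{\max}$ and $\|\hat{\phi}_{k-1}(\pi^k) - \phi^{\pi^k}\|_1 \leq 2H$. One small slip in your justification: $\pi^{k'}$ is \emph{not} $F_{k'-1}$-measurable (it depends on the freshly sampled noise $\xi_{k'}$), but this is harmless---the martingale-difference property $\ex[\,\ex[X_{k'}\mid F_{k'-1}] - X_{k'}\mid F_{k'-1}] = 0$ holds automatically, and what you actually need is only that $X_{k'}$ is $F_{k'}$-measurable and bounded, both of which you correctly verify.
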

\begin{proof}
	This lemma can be obtained by a similar analysis as  Lemma~\ref{lemma:cF_binary}, and the facts that  $|(\phi^{\pi^k})^\top b^{pv}_{k-1}| \leq H^2 r_{\max}$ and $ \| \hat{\phi}_{k-1}(\pi^k) - \phi^{\pi^k} \|_1  \leq 2H$ for any $k\geq1$.
\end{proof}

\begin{lemma} \label{lemma:phi_hat-phi_l1}
	Assume that event $\cF^{\bi}_{\textup{UTran}} \cap \cG_{\textup{KL}} \cap \cH$ holds. Then, we have
	\begin{align*}
		\sum_{k=1}^{K} \ex\mbr{ \nbr{ \hat{\phi}_{k-1}(\pi^k) - \phi^{\pi^k} }_1 | F_{k-1} } &\leq 24 e^{12} |\cS|^{\frac{3}{2}} |\cA|^{\frac{3}{2}} H^{\frac{3}{2}}  \sqrt{ KL\log(2KH) } 
		\\
		&\quad + 192 e^{12} |\cS|^2 |\cA|^2 H^2L \log\sbr{\frac{2KH|\cS||\cA|}{\delta'}}  .
	\end{align*}
\end{lemma}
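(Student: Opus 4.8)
The plan is to reduce the $\ell_1$ discrepancy between the empirical and true visitation indicators to the transition estimation error, and then control that error by a pigeonhole argument driven by the count lower bound of event $\cH$. First I would establish a pointwise bound, valid given $F_{k-1}$ and the realized $\pi^k$, by a simulation-lemma argument: expanding the difference of the occupancy measures of $\hat{p}_{k-1}$ and $p$ step by step across the horizon (equivalently, applying the value difference lemma, Lemma~\ref{lemma:value_diff_lemma}, with indicator rewards) yields
\[ \nbr{\hat\phi_{k-1}(\pi^k) - \phi^{\pi^k}}_1 \le H\sum_{(s,a)} w_k(s,a)\, \nbr{\hat{p}_{k-1}(\cdot|s,a)-p(\cdot|s,a)}_1 , \]
where I use $\phi^{\pi^k}(s,a)=w_k(s,a)$. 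On event $\cG_{\textup{KL}}$, Pinsker's inequality converts the KL bound into $\nbr{\hat{p}_{k-1}(\cdot|s,a)-p(\cdot|s,a)}_1 \le \sqrt{2L/n_{k-1}(s,a)}$, so that $\nbr{\hat\phi_{k-1}(\pi^k)-\phi^{\pi^k}}_1 \le H\sqrt{2L}\sum_{(s,a)} w_k(s,a)/\sqrt{n_{k-1}(s,a)}$, with unvisited pairs ($n_{k-1}=0$) handled by the trivial estimate $\nbr{\hat{p}-p}_1\le 2$.

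Next I would pass from the sum of conditional expectations we must bound to the corresponding realized sum, invoking the second inequality of event $\cF^{\bi}_{\textup{UTran}}$:
\[ \sum_{k=1}^{K}\ex\mbr{\nbr{\hat\phi_{k-1}(\pi^k)-\phi^{\pi^k}}_1 \mid F_{k-1}} \le \sum_{k=1}^{K}\nbr{\hat\phi_{k-1}(\pi^k)-\phi^{\pi^k}}_1 + 8H\sqrt{K\log\sbr{4K/\delta'}} . \]
The additive term is $\tilde O(H\sqrt K)$ and is dominated by the leading term, so it only remains to bound $\sum_k\sum_{(s,a)} w_k(s,a)/\sqrt{n_{k-1}(s,a)}$ with all quantities realized.

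The core is the pigeonhole step. Fixing $(s,a)$ and writing $W_{k-1}:=\sum_{k'<k}w_{k'}(s,a)$, event $\cH$ gives $n_{k-1}(s,a)\ge \tfrac12 W_{k-1}-c$ with $c=H\log(|\cS||\cA|H/\delta')$. I would split the episodes into the regime $W_{k-1}\ge 4c$, where $n_{k-1}\ge \tfrac14 W_{k-1}$ and hence $w_k/\sqrt{n_{k-1}}\le 2w_k/\sqrt{W_{k-1}}$ telescopes to $O(\sqrt{W_K(s,a)})$ (using $w_k\le H\le c\le W_{k-1}$), and the burn-in regime $W_{k-1}<4c$, whose total contribution is $O(c)$ per pair. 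This gives $\sum_k w_k(s,a)/\sqrt{n_{k-1}(s,a)} = O(\sqrt{W_K(s,a)}+c)$. Summing over $(s,a)$ by Cauchy--Schwarz, with $\sum_{(s,a)}W_K(s,a)=\sum_k\sum_{(s,a)}w_k(s,a)=KH$, yields $\sum_{(s,a)}\sqrt{W_K(s,a)}\le\sqrt{|\cS||\cA|KH}$ together with $|\cS||\cA|$ copies of the burn-in term; multiplying back by $H\sqrt{2L}$ produces the claimed $\sqrt{K}$ term and the $H^2 L$ burn-in term, the precise polynomial and logarithmic powers (and the displayed constants) following from the deliberately non-optimized crude bounds and from the constants in the cited concentration results.

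The hardest part will be this pigeonhole/potential step: the offset $-c$ in the count lower bound forces a careful separate treatment of the burn-in regime and of pairs with $n_{k-1}=0$, and one must correctly align the realized expected-visitation partial sums $W_{k-1}$ appearing in $\cH$ with the denominators $n_{k-1}$ produced by Pinsker on $\cG_{\textup{KL}}$. By contrast, the simulation-lemma reduction, Pinsker's inequality, and the martingale conversion through $\cF^{\bi}_{\textup{UTran}}$ are routine once the relevant high-probability events are intersected.
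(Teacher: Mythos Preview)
Your proposal is correct and takes a more elementary route than the paper. The paper reuses the Bernstein-type recursive bonus $B^{\pi;s',a';k}$ of Lemmas~\ref{lemma:error_in_visitation}--\ref{lemma:ub_B} (developed for the sum-feedback analysis, where variance-awareness is needed to sharpen the $H$ dependence), then bounds the variance crudely by $H^2$, picking up the $e^{12}$ factor from the $(1+12/H)^H$ unrolling and an extra $|\cS||\cA|$ from summing over $(s',a')$; its pigeonhole step is on $\sum_{k,h,(s,a)\in D_k} w_{k,h}/n_{k-1}$ via Lemma~\ref{lemma:visitation_ratio} combined with Cauchy--Schwarz. Your direct occupancy-measure expansion plus Pinsker is simpler and in fact yields a leading term smaller by a factor of roughly $|\cS||\cA|$, which of course still implies the stated inequality; your telescoping of $\sum_k w_k/\sqrt{W_{k-1}}$ replaces the paper's $D_k$ bookkeeping. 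One caveat: the parenthetical justification of your first display via Lemma~\ref{lemma:value_diff_lemma} with indicator rewards requires, after summing over $(s',a')$, the observation $\sum_{s',a'} G^{\hat p;s',a'}_{h+1}(s)=H-h$ to avoid reintroducing an $|\cS||\cA|$ factor; the step-by-step occupancy recursion you describe first already handles this cleanly.
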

\begin{proof}
	First, from Lemmas~\ref{lemma:error_in_visitation} and \ref{lemma:ub_B}, we have
	\begin{align*}
		&\quad \sum_{k=1}^{K} \nbr{ \hat{\phi}_{k-1}(\pi) - \phi(\pi) }_1 
		\\
		&\leq e^{12} |\cS||\cA| \sum_{k=1}^{K} \sum_{h=1}^{H} \sum_{(s,a) \in D_k}  w^{\pi^k}_h(s,a)  \sbr{ 8 H \sqrt{ \frac{L}{n_{k-1}(s,a)} }  + \frac{46  H^2L}{n_{k-1}(s,a)} } 
		\\
		&\quad + e^{12} |\cS||\cA|H \sum_{k=1}^{K} \sum_{h=1}^{H} \sum_{(s,a) \notin D_k}  w^{\pi^k}_h(s,a)
		\\
		&\leq  8 e^{12} |\cS||\cA| H \sqrt{L} \sqrt{\sum_{k=1}^{K} \sum_{h=1}^{H} \sum_{(s,a) \in D_k}  w^{\pi^k}_h(s,a) }  \sqrt{ \sum_{k=1}^{K} \sum_{h=1}^{H} \sum_{(s,a) \in D_k} \frac{w^{\pi^k}_h(s,a)}{n_{k-1}(s,a)} }  \\
		&\quad + 46 e^{12} |\cS||\cA| H^2L \sum_{k=1}^{K} \sum_{h=1}^{H} \sum_{(s,a) \in D_k}   \frac{ w^{\pi^k}_h(s,a) }{n_{k-1}(s,a)}  
		+ 8e^{12} |\cS|^2 |\cA|^2 H^2  \log\sbr{\frac{|\cS||\cA|H}{\delta'}}
		\\
		&\leq  16 e^{12} |\cS|^{\frac{3}{2}} |\cA|^{\frac{3}{2}} H^{\frac{3}{2}}  \sqrt{ KL\log(2KH) } + 184 e^{12} |\cS|^2 |\cA|^2 H^2L \log(2KH)  
		\\
		&\quad + 8e^{12} |\cS|^2 |\cA|^2 H^2  \log\sbr{\frac{|\cS||\cA|H}{\delta'}} 
		\\
		&\leq  16 e^{12} |\cS|^{\frac{3}{2}} |\cA|^{\frac{3}{2}} H^{\frac{3}{2}}  \sqrt{ KL\log(2KH) } + 192 e^{12} |\cS|^2 |\cA|^2 H^2L \log\sbr{\frac{2KH|\cS||\cA|}{\delta'}} .
	\end{align*}
	
	Next, we have
	\begin{align*}
		&\quad \sum_{k=1}^{K} \ex\mbr{ \nbr{ \hat{\phi}_{k-1}(\pi^k) - \phi^{\pi^k} }_1 | F_{k-1} } 
		\\
		&\leq \sum_{k=1}^{K} \nbr{ \hat{\phi}_{k-1}(\pi^k) - \phi^{\pi^k} }_1 + \sum_{k=1}^{K} \sbr{\ex\mbr{ \nbr{ \hat{\phi}_{k-1}(\pi^k) - \phi^{\pi^k} }_1 | F_{k-1} } - \nbr{ \hat{\phi}_{k-1}(\pi^k) - \phi^{\pi^k} }_1 }
		\\
		&\leq 16 e^{12} |\cS|^{\frac{3}{2}} |\cA|^{\frac{3}{2}} H^{\frac{3}{2}}  \sqrt{ KL\log(2KH) } + 192 e^{12} |\cS|^2 |\cA|^2 H^2L \log\sbr{\frac{2KH|\cS||\cA|}{\delta'}} 
		\\
		&\quad + 8H \sqrt{ K \log\sbr{\frac{4K}{\delta'}} } 
		\\
		&\leq 24 e^{12} |\cS|^{\frac{3}{2}} |\cA|^{\frac{3}{2}} H^{\frac{3}{2}}  \sqrt{ KL\log(2KH) } + 192 e^{12} |\cS|^2 |\cA|^2 H^2L \log\sbr{\frac{2KH|\cS||\cA|}{\delta'}} .
	\end{align*}
\end{proof}

\begin{lemma} \label{lemma:phi_b_pv}
	Assume that event $\cF^{\bi}_{\textup{UTran}}$ holds. Then, we have
	\begin{align*}
		\sum_{k=1}^{K} \ex\mbr{ (\phi^{\pi^k})^\top b^{pv}_{k-1} | F_{k-1} } &\leq 20|\cS| |\cA| H^{2} r_{\max} \sqrt{K } \log\sbr{\frac{4KH|\cS||\cA|}{\delta'}} .
	\end{align*}
\end{lemma}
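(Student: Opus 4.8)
The plan is to first use the martingale concentration encoded in $\cF^{\bi}_{\textup{UTran}}$ to replace the conditional expectations by their realized values, and then bound the realized bonus sum by a standard pigeonhole argument powered by the visitation concentration event $\cH$ (Lemma~\ref{lemma:con_visitation}).

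First I would invoke the first inequality in the definition of $\cF^{\bi}_{\textup{UTran}}$ (which we assume holds) with $k=K$. Since $\abr{(\phi^{\pi^k})^\top b^{pv}_{k-1}} \le H^2 r_{\max}$, this yields
\[
	\sum_{k=1}^{K} \ex\mbr{ (\phi^{\pi^k})^\top b^{pv}_{k-1} \mid F_{k-1} } \le \sum_{k=1}^{K} (\phi^{\pi^k})^\top b^{pv}_{k-1} + 4 H^2 r_{\max} \sqrt{ K \log\sbr{\tfrac{4K}{\delta'}} } ,
\]
so it remains to control the realized sum $\sum_{k} (\phi^{\pi^k})^\top b^{pv}_{k-1}$. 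Writing $w_k(s,a):=\phi^{\pi^k}(s,a)$ for the expected visitation of $(s,a)$ under $\pi^k$ and the true transition, I would expand $(\phi^{\pi^k})^\top b^{pv}_{k-1} = \sum_{(s,a)} w_k(s,a)\, b^{pv}_{k-1}(s,a)$ and use the explicit form of the bonus, namely $b^{pv}_{k-1}(s,a) \le 2Hr_{\max}\sqrt{L'/(n_{k-1}(s,a)\vee 1)}$ with $L' := \log(KH|\cS||\cA|/\delta')$. The goal then reduces to bounding $\sum_{(s,a)} \sum_{k} w_k(s,a)/\sqrt{n_{k-1}(s,a)\vee 1}$.

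The crux is a pigeonhole argument linking the realized counts $n_{k-1}(s,a)$ to the cumulative expected visitation $W_{k-1}(s,a) := \sum_{k'<k} w_{k'}(s,a)$. On event $\cH$ we have $n_{k-1}(s,a) \ge \tfrac12 W_{k-1}(s,a) - H\log(|\cS||\cA|H/\delta')$. For each $(s,a)$ I would split the episodes into a warm-up phase, where $W_{k-1}(s,a)$ is below a $\Theta(H\log)$ threshold (so the guaranteed count may be trivial) and whose total contribution is capped using $b^{pv}_{k-1}\le Hr_{\max}$ together with the fact that the accumulated mass $\sum w_k$ in this phase is at most the threshold plus $H$, and a steady phase, where $n_{k-1}(s,a) \ge \tfrac14 W_{k-1}(s,a)$ and the standard estimate $\sum_k w_k(s,a)/\sqrt{W_{k-1}(s,a)} = O(\sqrt{W_K(s,a)})$ applies (using $w_k \le H$ to pass from $W_{k-1}$ to $W_k$). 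Summing the steady-phase terms over $(s,a)$ by the Cauchy-Schwarz inequality together with the identity $\sum_{(s,a)} W_K(s,a) = \sum_k \sum_{(s,a)} w_k(s,a) = KH$ gives an $O\big(H r_{\max}\sqrt{L'}\cdot\sqrt{|\cS||\cA|KH}\big)$ contribution, while the warm-up terms summed over $(s,a)$ contribute $O(|\cS||\cA|H^2 r_{\max}\log)$. Bounding these crudely, and absorbing the martingale term from the first step, collapses everything into the stated $20|\cS||\cA|H^2 r_{\max}\sqrt{K}\,\log(4KH|\cS||\cA|/\delta')$.

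The main obstacle is this pigeonhole step: correctly translating the expected visitation bound on event $\cH$ into a usable lower bound on $n_{k-1}(s,a)$, handling the warm-up regime where the guaranteed count is trivial, and controlling the discrepancy between $W_{k-1}(s,a)$ and $W_k(s,a)$ when $w_k(s,a)$ is comparable to the accumulated mass. Everything else is bookkeeping and loose upper bounding, since the target bound is far from tight and leaves ample room to discard lower-order factors.
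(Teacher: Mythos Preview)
Your proposal is correct and follows essentially the same route as the paper: use the martingale bound in $\cF^{\bi}_{\textup{UTran}}$ to pass to the realized sum, then split state--action pairs into a warm-up regime (bounded via the $Hr_{\max}$ cap) and a steady regime where $n_{k-1}(s,a)\gtrsim W_{k-1}(s,a)$ on event $\cH$, and finish with a pigeonhole/Cauchy--Schwarz argument giving $O(\sqrt{|\cS||\cA|KH})$. The paper packages the split via the set $D_k$ and Lemmas~\ref{lemma:B_k_con_visit}, \ref{lemma:regret_notin_D_k}, \ref{lemma:visitation_ratio}, and applies Cauchy--Schwarz as $\sum w/\sqrt{n}\le\sqrt{\sum w}\sqrt{\sum w/n}$ rather than per-$(s,a)$, but the content is the same; you also correctly flag that event $\cH$ is needed even though the lemma statement only lists $\cF^{\bi}_{\textup{UTran}}$.
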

\begin{proof}
	It holds that
	\begin{align*}
		&\quad \sum_{k=1}^{K} \ex\mbr{ (\phi^{\pi^k})^\top b^{pv}_{k-1} | F_{k-1} } 
		\\
		&= \sum_{k=1}^{K} (\phi^{\pi^k})^\top b^{pv}_{k-1} + \sum_{k=1}^{K} \sbr{ \ex\mbr{ (\phi^{\pi^k})^\top b^{pv}_{k-1} | F_{k-1} } - (\phi^{\pi^k})^\top b^{pv}_{k-1} }
		\\
		&\leq \sum_{k=1}^{K} \sum_{h=1}^{H} \sum_{s,a} w^{\pi^k}_h(s,a) \sbr{2H r_{\max} \sqrt{\frac{\log\sbr{\frac{KH|\cS||\cA|}{\delta'}}}{n_{k-1}(s,a)}} \wedge H r_{\max}} + 4 H^2 r_{\max} \sqrt{ K \log\sbr{\frac{4K}{\delta'}} }
		\\
		&\leq 2H r_{\max} \sqrt{\log\sbr{\frac{KH|\cS||\cA|}{\delta'}}} \sum_{k=1}^{K} \sum_{h=1}^{H} \sum_{(s,a) \in D_k}   \frac{w^{\pi^k}_h(s,a)}{\sqrt{n_{k-1}(s,a)}}   
		\\
		&\quad + H r_{\max} \sum_{k=1}^{K} \sum_{h=1}^{H} \sum_{(s,a) \notin D_k} w^{\pi^k}_h(s,a) + 4 H^2 r_{\max} \sqrt{ K \log\sbr{\frac{4K}{\delta'}} }
		\\
		&\leq 2H r_{\max} \sqrt{\log\sbr{\frac{KH|\cS||\cA|}{\delta'}}} \cdot \sqrt{KH} \cdot \sqrt{\sum_{k=1}^{K} \sum_{h=1}^{H} \sum_{(s,a) \in D_k}   \frac{w^{\pi^k}_h(s,a)}{n_{k-1}(s,a)}} 
		\\
		&\quad  + 8|\cS||\cA|H^2 r_{\max} \log\sbr{\frac{|\cS||\cA|H}{\delta'}} + 4 H^2 r_{\max} \sqrt{ K \log\sbr{\frac{4K}{\delta'}} }
		\\
		&\leq 2H r_{\max} \sqrt{\log\sbr{\frac{KH|\cS||\cA|}{\delta'}}} \cdot \sqrt{KH} \cdot \sqrt{4 |\cS| |\cA| \log(2KH)} 
		\\
		&\quad +  8|\cS||\cA|H^2 r_{\max} \log\sbr{\frac{|\cS||\cA|H}{\delta'}} + 4 H^2 r_{\max} \sqrt{ K \log\sbr{\frac{4K}{\delta'}} }
		\\
		&\leq 16|\cS| |\cA| H^{2} r_{\max} \sqrt{K } \log\sbr{\frac{4KH|\cS||\cA|}{\delta'}} .
	\end{align*}
\end{proof}

\begin{proof}[Proof of Theorem~\ref{thm:ub_bits_tran}]
	Letting $\delta'=\frac{\delta}{8}$, we have $\Pr[\cE \cap \cF^{\bi}_{\textup{KTran}} \cap \cG_{\textup{Hoeff}} \cap \cG_{\textup{KL}} \cap \cH \cap \cF^{\bi}_{\textup{UTran}}] \leq 1-\delta$. Then, to prove this theorem, it suffices to prove
	the regret bound when event $\cE \cap \cF^{\bi}_{\textup{KTran}} \cap \cG_{\textup{Hoeff}} \cap \cG_{\textup{KL}} \cap \cH \cap \cF^{\bi}_{\textup{UTran}}$ holds.
	
	Assume that event $\cE \cap \cF^{\bi}_{\textup{KTran}} \cap \cG_{\textup{Hoeff}} \cap \cG_{\textup{KL}} \cap \cH \cap \cF^{\bi}_{\textup{UTran}}$ holds. Then, we have
	\begin{align}
		\cR(K) &= \sum_{k=1}^{K} \sbr{ (\phi^{\pi^*})^\top \theta^* - (\phi^{\pi^k})^\top \theta^* }
		\nonumber\\
		&= \sum_{k=1}^{K} \sbr{ \ex\mbr{ (\phi^{\pi^*})^\top \theta^* - (\phi^{\pi^k})^\top \theta^* | F_{k-1} } + \ex\mbr{ (\phi^{\pi^k})^\top \theta^* | F_{k-1} } - (\phi^{\pi^k})^\top \theta^* } 
		\nonumber\\
		&= \sum_{k=1}^{K} \sbr{ \ex\mbr{ (\phi^{\pi^*})^\top \theta^* - (\phi^{\pi^k})^\top \theta^* | F_{k-1} } } + 4 H r_{\max} \sqrt{ K \log\sbr{\frac{4K}{\delta'}} } . \label{eq:regret_decomposition}
	\end{align}
	
	%	First, for the second term, we have
	%	\begin{align*}
		%	\sum_{k=1}^{K} \sbr{ \ex\mbr{ (\phi^{\pi^k})^\top \theta^* | F_{k-1} } - (\phi^{\pi^k})^\top \theta^* } \leq 4 H r_{\max} \sqrt{ K \log\sbr{\frac{4K}{\delta'}} } .
		%	\end{align*}
	
	For the first term, we have
	\begin{align}
		&\sum_{k=1}^{K} \ex\mbr{ (\phi^{\pi^*})^\top \theta^* - (\phi^{\pi^k})^\top \theta^* | F_{k-1} } 
		\nonumber\\
		&= \sum_{k=1}^{K} \sbr{ \ex\mbr{ (\phi^{\pi^*})^\top \theta^* - \hat{\phi}_{k-1}(\pi^k)^\top \tilde{\theta}^{b}_k | F_{k-1} } + \ex\mbr{ \hat{\phi}_{k-1}(\pi^k)^\top \tilde{\theta}^{b}_k - (\phi^{\pi^k})^\top \theta^* | F_{k-1} } } . \label{eq:ex_phi_star-phi_pi_k_theta}
	\end{align}
	
	In the following, we prove
	\begin{align}
		&\ex\mbr{ (\phi^{\pi^*})^\top \theta^* - \hat{\phi}_{k-1}(\pi^k)^\top \tilde{\theta}^{b}_k | F_{k-1} } 
		\nonumber\\
		&\leq 2\sqrt{2\pi e} \cdot \ex\mbr{ \sbr{ \hat{\phi}_{k-1}(\pi^k)^\top \tilde{\theta}^{b}_k  - \ex\mbr{ \hat{\phi}_{k-1}(\pi^k)^\top \tilde{\theta}^{b}_k  | F_{k-1} } }^{+} | F_{k-1} } . \label{eq:fist_term_leq_positive}
	\end{align}
	
	If $\ex[ (\phi^{\pi^*})^\top \theta^* - \hat{\phi}_{k-1}(\pi^k)^\top \tilde{\theta}^{b}_k | F_{k-1} ]<0$, then Eq.~\eqref{eq:fist_term_leq_positive} trivially holds. 
	
	Otherwise, letting $z:= \ex[ (\phi^{\pi^*})^\top \theta^* - \hat{\phi}_{k-1}(\pi^k)^\top \tilde{\theta}^{b}_k | F_{k-1} ]$, we have
	\begin{align*}
		&\quad \ex\mbr{ \sbr{ \hat{\phi}_{k-1}(\pi^k)^\top \tilde{\theta}^{b}_k  - \ex\mbr{ \hat{\phi}_{k-1}(\pi^k)^\top \tilde{\theta}^{b}_k  | F_{k-1} } }^{+} | F_{k-1} }
		\\
		&\geq z \Pr\mbr{  \hat{\phi}_{k-1}(\pi^k)^\top \tilde{\theta}^{b}_k  - \ex\mbr{ \hat{\phi}_{k-1}(\pi^k)^\top \tilde{\theta}^{b}_k   | F_{k-1} } \geq z | F_{k-1} }
		\\
		&\geq \sbr{\ex\mbr{ (\phi^{\pi^*})^\top \theta^* - \hat{\phi}_{k-1}(\pi^k)^\top \tilde{\theta}^{b}_k | F_{k-1} } } \cdot \Pr\mbr{  \hat{\phi}_{k-1}(\pi^k)^\top \tilde{\theta}^{b}_k   \geq (\phi^{\pi^*})^\top \theta^* | F_{k-1} }
		\\
		&\overset{\textup{(a)}}{\geq} \sbr{\ex\mbr{ (\phi^{\pi^*})^\top \theta^* - \hat{\phi}_{k-1}(\pi^k)^\top \tilde{\theta}^{b}_k | F_{k-1} } } \cdot \frac{1}{2\sqrt{2\pi e}} ,
	\end{align*}
	where inequality (a) uses Lemma~\ref{lemma:ts_optimism}. Thus, we complete the proof of Eq.~\eqref{eq:fist_term_leq_positive}.
	
	Let $\xi'_k \in \R^{|\cS||\cA|}$ be an i.i.d. random variable with $\xi$ given $F_{k-1}$.
	Then, using Lemma~\ref{lemma:xi_xi'} with $p'=\hat{p}_{k-1}$, $x_{k-1}=\hat{\theta}_{k-1}+b_{k-1}^{pv}$ and $\tilde{\pi}^k=\pi^k$, we have
	\begin{align*}
		&\quad\ \ex\mbr{ (\phi^{\pi^*})^\top \theta^* - \hat{\phi}_{k-1}(\pi^k)^\top \tilde{\theta}^{b}_k | F_{k-1} } 
		\\
		&\leq 2\sqrt{2\pi e} \cdot \ex\mbr{ \sbr{ \hat{\phi}_{k-1}(\pi^k)^\top \tilde{\theta}^{b}_k  - \ex\mbr{ \hat{\phi}_{k-1}(\pi^k)^\top \tilde{\theta}^{b}_k  | F_{k-1} } }^{+} | F_{k-1} } 
		\\
		&\leq 2\sqrt{2\pi e} \cdot \ex\mbr{ | \hat{\phi}_{k-1}(\pi^k)^\top \xi_k | + | \hat{\phi}_{k-1}(\pi^k)^\top \xi'_k| \ | F_{k-1} } .
	\end{align*}
	
	Plugging the above inequality into Eq.~\eqref{eq:ex_phi_star-phi_pi_k_theta} and using Lemma~\ref{lemma:X_top_xi_k} with $\delta_k=\frac{1}{k^4}$ and $L_{X}=\frac{H}{\sqrt{\alpha \lambda}}$, we have
	\begin{align}
		&\quad\  \sum_{k=1}^{K} \ex\mbr{ (\phi^{\pi^*})^\top \theta^* - (\phi^{\pi^k})^\top \theta^* | F_{k-1} } 
		\nonumber\\
		&= \sum_{k=1}^{K} \bigg( 2\sqrt{2\pi e} \cdot \ex\mbr{ | \hat{\phi}_{k-1}(\pi^k)^\top \xi_k | + | \hat{\phi}_{k-1}(\pi^k)^\top \xi'_k| \ | F_{k-1} } 
		\nonumber\\
		&\quad + \ex\mbr{ \hat{\phi}_{k-1}(\pi^k)^\top \sbr{\hat{\theta}_{k-1} + b^{pv}_{k-1} + \xi_k} - (\phi^{\pi^k})^\top \theta^* | F_{k-1} } \bigg)
		\nonumber\\
		&= \sum_{k=1}^{K} \bigg( \sbr{2\sqrt{2\pi e} + 1} \cdot \ex\mbr{ | \hat{\phi}_{k-1}(\pi^k)^\top \xi_k | \ | F_{k-1}} + 2\sqrt{2\pi e} \cdot \ex\mbr{ | \hat{\phi}_{k-1}(\pi^k)^\top \xi'_k| \ | F_{k-1} } 
		\nonumber\\
		&\quad + \ex\mbr{ \hat{\phi}_{k-1}(\pi^k)^\top \sbr{\hat{\theta}_{k-1} + b^{pv}_{k-1}} - (\phi^{\pi^k})^\top \theta^* | F_{k-1} } \bigg)
		\nonumber\\
		&\leq \sum_{k=1}^{K} \Bigg( \sbr{4\sqrt{2\pi e} + 1} \sqrt{\alpha} \cdot \nu(k-1) \sbr{ \sqrt{|\cS||\cA|} + 4 \sqrt{\log\sbr{k}} }  \ex\mbr{ \nbr{ \hat{\phi}_{k-1}(\pi^k) }_{\Sigma_{k-1}^{-1}} \ | F_{k-1}} 
		\nonumber\\
		&\quad + \sbr{4\sqrt{2\pi e} + 1} \sqrt{\alpha} \cdot \nu(k-1)  \frac{\sqrt{|\cS| |\cA|}}{k^2}  \cdot \frac{H}{\sqrt{\alpha \lambda}}
		\nonumber\\
		&\quad + \ex\mbr{ \hat{\phi}_{k-1}(\pi^k)^\top \sbr{\hat{\theta}_{k-1} + b^{pv}_{k-1}} - (\phi^{\pi^k})^\top \theta^* | F_{k-1} } \Bigg) . \label{eq:known_tran_ex_phi_star_theta_star-phi_k_theta_star}
	\end{align}
	
	We have
	\begin{align*}
		&\quad\  \ex\mbr{ \hat{\phi}_{k-1}(\pi^k)^\top \sbr{\hat{\theta}_{k-1} + b^{pv}_{k-1}} - (\phi^{\pi^k})^\top \theta^* | F_{k-1} }
		\nonumber\\
		&= \ex\mbr{ \hat{\phi}_{k-1}(\pi^k)^\top \sbr{\hat{\theta}_{k-1} - \theta^*} | F_{k-1} } + \ex\mbr{ \sbr{ \hat{\phi}_{k-1}(\pi^k) - \phi^{\pi^k} }^\top \theta^* | F_{k-1} } 
		\nonumber\\
		&\quad + \ex\mbr{ \hat{\phi}_{k-1}(\pi^k)^\top b^{pv}_{k-1} | F_{k-1} }
		\nonumber\\
		&\leq \sqrt{\alpha} \cdot \nu(k-1) \ex\mbr{ \nbr{ \hat{\phi}_{k-1}(\pi^k) }_{\Sigma_{k-1}^{-1}} | F_{k-1} } + r_{\max} \ex\mbr{ \nbr{ \hat{\phi}_{k-1}(\pi^k) - \phi^{\pi^k} }_1 | F_{k-1} }  
		\nonumber\\
		&\quad + \ex\mbr{ \hat{\phi}_{k-1}(\pi^k)^\top b^{pv}_{k-1} | F_{k-1} } . \label{eq:known_tran_error_phi_theta_tran}
	\end{align*}
	
	Hence, plugging the above inequality into Eq.~\eqref{eq:known_tran_ex_phi_star_theta_star-phi_k_theta_star}, we have
	\begin{align*}
		&\quad\  \sum_{k=1}^{K} \ex\mbr{ (\phi^{\pi^*})^\top \theta^* - (\phi^{\pi^k})^\top \theta^* | F_{k-1} } 
		\\
		&\leq \sum_{k=1}^{K} \Bigg( \sbr{4\sqrt{2\pi e} + 2} \sqrt{\alpha} \cdot \nu(k-1) \sbr{ \sqrt{|\cS||\cA|} + 4 \sqrt{\log\sbr{k}} } \cdot \ex\mbr{ \nbr{ \hat{\phi}_{k-1}(\pi^k) }_{\Sigma_{k-1}^{-1}} \ | F_{k-1}} 
		\\
		&\quad + \sbr{4\sqrt{2\pi e} + 1} \cdot \nu(k-1) \frac{H}{k^2} \sqrt{ \frac{|\cS| |\cA|}{\lambda} }
		+ r_{\max} \ex\mbr{ \nbr{ \hat{\phi}_{k-1}(\pi^k) - \phi^{\pi^k} }_1 | F_{k-1} }  
		\\
		&\quad + \ex\mbr{ \hat{\phi}_{k-1}(\pi^k)^\top b^{pv}_{k-1} | F_{k-1} } \Bigg) .
	\end{align*}
	
	Here we have
	\begin{align*}
		&\quad\ \sum_{k=1}^{K} \ex\mbr{ \nbr{ \hat{\phi}_{k-1}(\pi^k) }_{\Sigma_{k-1}^{-1}} \ | F_{k-1}} 
		\\
		&\leq \sum_{k=1}^{K} \ex\mbr{ \nbr{ \phi^{\pi^k} }_{\Sigma_{k-1}^{-1}} \ | F_{k-1}} + \sum_{k=1}^{K} \ex\mbr{ \nbr{ \hat{\phi}_{k-1}(\pi^k) - \phi^{\pi^k} }_{\Sigma_{k-1}^{-1}} \ | F_{k-1}}
		\\
		&\leq \sum_{k=1}^{K} \ex\mbr{ \nbr{ \phi^{\pi^k} }_{\Sigma_{k-1}^{-1}} \ | F_{k-1}} + \frac{1}{\sqrt{\alpha \lambda}} \sum_{k=1}^{K} \ex\mbr{ \nbr{ \hat{\phi}_{k-1}(\pi^k) - \phi^{\pi^k} }_1 \ | F_{k-1}}
		\\
		&\overset{\textup{(a)}}{\leq}  4H\sqrt{ \frac{K}{\alpha \lambda} \log\sbr{\frac{4K}{\delta'}} } + \sqrt{ 2Km |\cS| |\cA| \cdot \max\lbr{ \frac{H^2}{m \alpha \lambda}, 1} \cdot \log \sbr{ 1+ \frac{  KH^2 }{ \alpha \lambda |\cS| |\cA| m } } } 
		\\
		&\quad + \frac{1}{\sqrt{\alpha \lambda}} \sum_{k=1}^{K} \ex\mbr{ \nbr{ \hat{\phi}_{k-1}(\pi^k) - \phi^{\pi^k} }_1 \ | F_{k-1}} ,
	\end{align*}
	where inequality (a) uses Eq.~\eqref{eq:ex_sum_phi_seg}. 
	
	In addition, we have
	\begin{align*}
		&\quad\ \sum_{k=1}^{K} \ex\mbr{ \hat{\phi}_{k-1}(\pi^k)^\top b^{pv}_{k-1} | F_{k-1}  }
		\\
		&\leq \sum_{k=1}^{K} \ex\mbr{ (\phi^{\pi^k})^\top b^{pv}_{k-1} | F_{k-1} } + \sum_{k=1}^{K} \ex\mbr{ \nbr{\hat{\phi}_{k-1}(\pi^k) - \phi^{\pi^k} }_1 \nbr{b^{pv}_{k-1}}_{\infty} | F_{k-1} }
		\\
		&\leq \sum_{k=1}^{K} \ex\mbr{ (\phi^{\pi^k})^\top b^{pv}_{k-1} | F_{k-1} } + H r_{\max} \sum_{k=1}^{K} \ex\mbr{ \nbr{\hat{\phi}_{k-1}(\pi^k) - \phi^{\pi^k} }_1  | F_{k-1} } .
	\end{align*}
	
	Therefore, plugging the above three equations into Eq.~\eqref{eq:regret_decomposition}, we have
	\begin{align*}
	\cR(K) &\leq \sbr{4\sqrt{2\pi e} + 2} \sqrt{\alpha} \cdot \nu(K) \sbr{ \sqrt{|\cS||\cA|} + 4 \sqrt{\log\sbr{K}} } \cdot 
		\\
		&\quad \sbr{ 4H\sqrt{ \frac{K}{\alpha \lambda} \log\sbr{\frac{4K}{\delta'}} } + \sqrt{ 2Km |\cS| |\cA| \max\lbr{ \frac{H^2}{m \alpha \lambda}, 1} \log \sbr{ 1+ \frac{  KH^2 }{ \alpha \lambda |\cS| |\cA| m } } }  }
		\\
		&\quad 
		+\! \sbr{\!\sbr{4\sqrt{2\pi e} \!+\! 2}\!   \frac{\nu(K)}{\sqrt{\lambda}}\! \sbr{\! \sqrt{|\cS||\cA|} \!+\! 4 \sqrt{ \log(K) } } \!+\! 2Hr_{\max} \!} \sum_{k=1}^{K} \ex\mbr{ \nbr{ \hat{\phi}_{k-1}(\pi^k) - \phi^{\pi^k} }_1 \!| F_{k-1} }  
		\\
		&\quad +\! \sum_{k=1}^{K} \ex\mbr{ (\phi^{\pi^k})^\top b^{pv}_{k-1} | F_{k-1} } 
		\!+\! 2 \sbr{4\sqrt{2\pi e} + 1} H \cdot \nu(K)  \sqrt{ \frac{|\cS| |\cA|}{\lambda} }  \!+\! 4 H r_{\max} \sqrt{ K \log\sbr{\frac{4K}{\delta'}} } 
		\\
		&\overset{\textup{(a)}}{=} \tilde{O} \Bigg( \exp\sbr{\frac{Hr_{\max}}{m}} \nu(K) \sqrt{|\cS||\cA|} \sbr{ \sqrt{ Km |\cS| |\cA| \max\lbr{ \frac{H^2}{m \alpha \lambda}, 1}  } + H\sqrt{ \frac{K}{\alpha \lambda} }  }
		\\
		&\qquad \quad
		+ \sbr{ \nu(K) \sqrt{ \frac{|\cS||\cA|}{\lambda} }  + Hr_{\max} } |\cS|^2 |\cA|^{\frac{3}{2}} H^{\frac{3}{2}}  \sqrt{ K }
		\Bigg) ,
	\end{align*}
	where in equality (a), we use Lemmas~\ref{lemma:phi_hat-phi_l1} and \ref{lemma:phi_b_pv}, and the last three terms are absorbed into $\tilde{O}(\cdot)$.
\end{proof}

\section{Proofs for RL with Sum Segment Feedback}

In this section, we provide the proofs for RL with sum segment feedback.

\subsection{Proof for the Regret Upper Bound with Known Transition}

We first prove the regret upper bound (Theorem~\ref{thm:ub_sum_known_tran}) of algorithm $\edlinucbseg$ for known transition.

Define event
\begin{align}
	& \cJ:=\Bigg\{ \nbr{ \sum_{k=1}^{K_0} \sbr{ \sum_{i=1}^{m} \phi^{\tau^k_i} (\phi^{\tau^k_i})^\top  - \ex_{\tau_i \sim \pi^k}\mbr{\sum_{i=1}^{m} \phi(\tau_i) \phi(\tau_i)^\top} } } 
	\nonumber\\
	&\leq \frac{4H^2}{m} \sqrt{K_0\log\sbr{\frac{2|\cS||\cA|}{\delta'}}}  + \frac{4H^2}{m} \log\sbr{\frac{2|\cS||\cA|}{\delta'}} \Bigg\} .
\end{align}

\begin{lemma}[Concentration of Initial Sampling] 
	It holds that
	\begin{align*}
		\Pr \mbr{ \cJ } \geq 1-\delta' .
	\end{align*}
\end{lemma}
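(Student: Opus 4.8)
The plan is to recognize the event $\cJ$ as a matrix concentration statement and establish it via a matrix Bernstein (Freedman) inequality. For each episode $k \le K_0$, define the symmetric random matrix $Y_k := \sum_{i=1}^{m} \phi^{\tau^k_i}(\phi^{\tau^k_i})^\top$ together with its mean $\bar{Y}_k := \ex_{\tau_i \sim \pi^k}[\sum_{i=1}^{m} \phi(\tau_i)\phi(\tau_i)^\top]$. Because the exploration policies $\pi^1,\dots,\pi^{K_0}$ are fixed in advance by $\round$ (Line~\ref{line:round}) before any interaction, the trajectories are drawn independently, so $\{X_k := Y_k - \bar{Y}_k\}_{k=1}^{K_0}$ is a sequence of independent, conditionally mean-zero symmetric matrices in $\R^{|\cS||\cA| \times |\cS||\cA|}$ (equivalently a matrix martingale difference sequence with respect to the natural filtration). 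The quantity inside $\cJ$ is exactly $\nbr{\sum_{k=1}^{K_0} X_k}$.

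The two ingredients needed are a uniform range bound and a variance bound. For the range, each segment visitation vector $\phi^{\tau^k_i}$ is a nonnegative count vector over a length-$\frac{H}{m}$ segment, so $\nbr{\phi^{\tau^k_i}}_1 = \frac{H}{m}$ and hence $\nbr{\phi^{\tau^k_i}}_2^2 \le (\frac{H}{m})^2$; summing over $i$ gives $0 \preceq Y_k \preceq \frac{H^2}{m} I$, and the same holds for $\bar{Y}_k$, so $\nbr{X_k} \le \frac{H^2}{m} =: R$. For the variance, using $Y_k \succeq 0$ and the semidefinite ordering $Y_k^2 \preceq \nbr{Y_k} Y_k \preceq \frac{H^2}{m} Y_k$, I obtain $\ex[X_k^2] = \ex[Y_k^2] - \bar{Y}_k^2 \preceq \ex[Y_k^2] \preceq \frac{H^2}{m}\bar{Y}_k$, and therefore $\nbr{\sum_k \ex[X_k^2]} \le \frac{H^2}{m}\nbr{\sum_k \bar{Y}_k} \le \frac{H^2}{m}\cdot K_0 \frac{H^2}{m} = \frac{H^4 K_0}{m^2} =: \sigma^2$.

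Applying the matrix Bernstein inequality with ambient dimension $d = |\cS||\cA|$, range $R$, and variance proxy $\sigma^2$ then yields, with probability at least $1-\delta'$,
\begin{align*}
	\nbr{\sum_{k=1}^{K_0} X_k} \le \sqrt{2\sigma^2 \log\sbr{\tfrac{2d}{\delta'}}} + \frac{2R}{3}\log\sbr{\tfrac{2d}{\delta'}} \le \frac{H^2 \sqrt{K_0}}{m}\sqrt{2\log\sbr{\tfrac{2|\cS||\cA|}{\delta'}}} + \frac{2H^2}{3m}\log\sbr{\tfrac{2|\cS||\cA|}{\delta'}} ,
\end{align*}
which is dominated by the claimed bound $\frac{4H^2}{m}\sqrt{K_0\log(\frac{2|\cS||\cA|}{\delta'})} + \frac{4H^2}{m}\log(\frac{2|\cS||\cA|}{\delta'})$ since $\sqrt{2}, \tfrac{2}{3} \le 4$.

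The main obstacle is the variance computation: one must correctly pass to the semidefinite bound $Y_k^2 \preceq \frac{H^2}{m} Y_k$ and track that the variance proxy scales as $\frac{H^4 K_0}{m^2}$ (so its square root gives the $\sqrt{K_0}$ leading term while the range $R = \frac{H^2}{m}$ gives the additive $\log$ correction), and to confirm the dimension entering the $\log(2d)$ factor is $d = |\cS||\cA|$. Establishing the independence (or martingale-difference) structure is routine here precisely because $\round$ produces all exploration policies before play begins, so no adaptivity enters the first $K_0$ episodes.
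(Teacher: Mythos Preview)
Your proposal is correct and takes essentially the same approach as the paper: both note that the initial-exploration policies $\pi^1,\dots,\pi^{K_0}$ are fixed before sampling so the centered matrices are independent with $\|Y_k\|\le \frac{H^2}{m}$, and then invoke the matrix Bernstein inequality (the paper cites Theorem 6.1.1 in Tropp). You actually supply more detail than the paper's sketch---in particular the variance computation $\ex[X_k^2]\preceq \frac{H^2}{m}\bar{Y}_k$ and the resulting $\sigma^2 = \frac{H^4 K_0}{m^2}$---which the paper leaves implicit.
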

\begin{proof}
	Note that $\pi^1,\dots,\pi^{K_0}$ and $K_0$ are fixed before sampling, $\ex[\sum_{i=1}^{m} \phi^{\tau^k_i} (\phi^{\tau^k_i})^\top]=\ex_{\tau_i \sim \pi^k}\mbr{\sum_{i=1}^{m} \phi(\tau_i) \phi(\tau_i)^\top}$, and $\|\sum_{i=1}^{m} \phi^{\tau^k_i} (\phi^{\tau^k_i})^\top\| \leq \frac{H^2}{m}$.
	Then, using the matrix Bernstein inequality (Theorem 6.1.1 in \citep{tropp2015introduction}), we can obtain this lemma.
\end{proof}

\begin{lemma}[E-optimal Design] \label{lemma:seg_e_optimal_design}
	Assume that event $\cJ$ holds. Then, we have
	\begin{align*}
		\nbr{ \sbr{\sum_{k=1}^{K_0} \sum_{i=1}^{m} \phi^{\tau^k_i} (\phi^{\tau^k_i})^\top}^{-1} } \leq \frac{1}{H^2} .
	\end{align*}
\end{lemma}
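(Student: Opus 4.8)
The plan is to translate the operator-norm bound into a statement about the smallest eigenvalue: since the empirical matrix $\hat{\Sigma} := \sum_{k=1}^{K_0}\sum_{i=1}^{m}\phi^{\tau^k_i}(\phi^{\tau^k_i})^\top$ is symmetric positive semidefinite, once I show $\lambda_{\min}(\hat{\Sigma}) \ge H^2$ (hence $\hat\Sigma$ is invertible) the claim $\nbr{\hat{\Sigma}^{-1}} \le 1/H^2$ follows immediately. I would compare $\hat{\Sigma}$ against its conditional mean $\bar{\Sigma} := \sum_{k=1}^{K_0} M_{\pi^k}$, where $M_{\pi} := \sum_{i=1}^{m}\ex_{\tau_i\sim\pi}[\phi^{\tau_i}(\phi^{\tau_i})^\top]$, controlling the conditioning of the mean through the E-optimal design and the rounding guarantee, and the deviation $\nbr{\hat\Sigma-\bar\Sigma}$ through event $\cJ$.

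First I would lower bound $\lambda_{\min}(\bar\Sigma)$. The rounding guarantee of Line~\ref{line:round} gives $\nbr{\bar\Sigma^{-1}} \le (1+\gamma)\nbr{(K_0\sum_{\pi}w^*(\pi)M_{\pi})^{-1}} = \frac{(1+\gamma)z^*}{K_0}$, using that $K_0$ is a positive scalar and that $z^* = \nbr{(\sum_\pi w^*(\pi)M_\pi)^{-1}}$ by definition. As $\bar\Sigma$ is symmetric PSD this reads $\lambda_{\min}(\bar\Sigma)\ge \frac{K_0}{(1+\gamma)z^*}$. Next, on event $\cJ$ I have $\nbr{\hat\Sigma-\bar\Sigma}\le \frac{4H^2}{m}\sqrt{K_0 L}+\frac{4H^2}{m}L$ with $L:=\log(2|\cS||\cA|/\delta')$, and bounding $\frac1m\le 1$. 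Weyl's inequality then yields $\lambda_{\min}(\hat\Sigma)\ge \lambda_{\min}(\bar\Sigma)-\nbr{\hat\Sigma-\bar\Sigma}\ge \frac{K_0}{(1+\gamma)z^*}-4H^2\sqrt{K_0 L}-4H^2 L$.

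Finally I would plug in $K_0\ge 26(1+\gamma)^2(z^*)^2 H^4 L$ to conclude the right-hand side is $\ge H^2$. Writing $a:=(1+\gamma)z^*$, the bound $\sqrt{K_0}\ge \sqrt{26}\,aH^2\sqrt{L}$ gives $4H^2\sqrt{K_0 L}\le \frac{4}{\sqrt{26}}\cdot\frac{K_0}{a}$, so $\frac{K_0}{a}-4H^2\sqrt{K_0 L}\ge (1-\tfrac{4}{\sqrt{26}})\frac{K_0}{a}\ge 0.2\cdot 26\,aH^4 L=\Omega(aH^4L)$. To absorb the remaining $-4H^2L$ and the target $+H^2$, I would invoke the a priori lower bound $z^*\ge m/H^2\ge 1/H^2$, which follows from $\lambda_{\min}(\sum_\pi w^*(\pi)M_\pi)\le \max_\pi \trace(M_\pi)\le H^2/m$, since each segment visitation vector $\phi^{\tau_i}$ has $\ell_1$-norm $H/m$ and hence $\trace(M_\pi)=\sum_i \ex[\nbr{\phi^{\tau_i}}_2^2]\le m\,(H/m)^2$. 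This makes $aH^2\ge 1$, so the slack $\Omega(aH^4L)$ comfortably dominates $4H^2 L + H^2$.

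The hard part will be the constant bookkeeping in this last step: the additive, $K_0$-independent error $4H^2L/m$ and the target value $H^2$ are not automatically dominated by the $\Theta(K_0)$ term unless $z^*$ is bounded below, and establishing $z^*\ge m/H^2$ through the trace bound on the segment feature matrices is the one piece that is easy to overlook. It is precisely this lower bound on $z^*$, together with the factor $26$ baked into $K_0$, that guarantees the leading linear term beats both the $\sqrt{K_0}$ concentration error and the extra $H^2$ we must leave behind.
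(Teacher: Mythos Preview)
Your proof is correct and follows essentially the same approach as the paper: rounding guarantee for $\lambda_{\min}(\bar\Sigma)$, event $\cJ$ plus Weyl for the deviation, then plugging in $K_0$ together with the a priori lower bound on $z^*$ to verify the constants. The only cosmetic differences are that the paper keeps the $1/m$ factor from $\cJ$ and frames the final verification as solving a quadratic in $\sqrt{K_0}$, while you drop $1/m$ and argue via the ratio $4H^2\sqrt{K_0L}\le\frac{4}{\sqrt{26}}\frac{K_0}{a}$; your trace bound $z^*\ge m/H^2$ is in fact slightly sharper than the paper's operator-norm bound $z^*\ge 1/H^2$, but either suffices.
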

\begin{proof}
	Using the guarantee of the rounding procedure $\round$ (Theorem 1.1 in \citep{allen2021near}) and the fact that $K_0 \geq \frac{|\cS| |\cA|}{\gamma^2}$, we have
	\begin{align*}
		&\quad \nbr{ \sbr{\sum_{k=1}^{K_0}  \ex_{\tau_i \sim \pi^k}\mbr{\sum_{i=1}^{m} \phi(\tau_i) \phi(\tau_i)^\top} }^{-1} }
		\\
		&\leq (1+\gamma) \nbr{  \sbr{ K_0 \sum_{\pi \in \Pi} w^*(\pi) \cdot  \ex_{\tau_i \sim \pi^k}\mbr{\sum_{i=1}^{m} \phi(\tau_i) \phi(\tau_i)^\top} }^{-1} }
		\\
		&\leq \frac{(1+\gamma) z^*}{K_0} .
	\end{align*}

	Let $\sigma_{\min}(\cdot)$ denote the minimum eigenvalue.
	Then, we have
	\begin{align}
		&\quad\ \sigma_{\min}\sbr{ \sum_{k=1}^{K_0} \sum_{i=1}^{m} \phi^{\tau^k_i} (\phi^{\tau^k_i})^\top }
		\nonumber\\
		&= \sigma_{\min}\!\sbr{ \sum_{k=1}^{K_0} \ex_{\tau_i \sim \pi^k}\mbr{\sum_{i=1}^{m} \phi(\tau_i) \phi(\tau_i)^{\!\top}} \!+\! \sum_{k=1}^{K_0} \sum_{i=1}^{m} \phi^{\tau^k_i} (\phi^{\tau^k_i})^{\!\top} \!-\! \sum_{k=1}^{K_0} \ex_{\tau_i \sim \pi^k}\!\mbr{\sum_{i=1}^{m} \phi(\tau_i) \phi(\tau_i)^{\!\top}} }
		\nonumber\\
		&\geq \!\sigma_{\min}\!\sbr{ \sum_{k=1}^{K_0} \ex_{\tau_i \sim \pi^k}\!\mbr{\sum_{i=1}^{m} \phi(\tau_i) \phi(\tau_i)^{\!\top}} } \!-\! \nbr{ \sum_{k=1}^{K_0} \sum_{i=1}^{m} \phi^{\tau^k_i} (\phi^{\tau^k_i})^{\!\top} \!-\! \sum_{k=1}^{K_0} \ex_{\tau_i \sim \pi^k}\!\mbr{\sum_{i=1}^{m} \phi(\tau_i) \phi(\tau_i)^{\!\top}} }
		\nonumber\\
		&\geq \frac{ K_0 }{(1+\gamma) z^*} - \frac{4H^2}{m}\sqrt{\log\sbr{\frac{2|\cS||\cA|}{\delta'}}} \cdot \sqrt{K_0} - \frac{4H^2}{m} \log\sbr{\frac{2|\cS||\cA|}{\delta'}} . \label{eq:seg_minimum_eig_derivation}
	\end{align}

	Let $x=\sqrt{K_0}$ and
	\begin{align*}
		f(x) = \frac{ 1 }{(1+\gamma) z^*} \cdot x^2 - \frac{4H^2}{m} \sqrt{\log\sbr{\frac{2|\cS||\cA|}{\delta'}}} \cdot x - \frac{4H^2}{m} \log\sbr{\frac{2|\cS||\cA|}{\delta'}} - H^2 .
	\end{align*}
	
	According to the property of quadratic functions, when
	\begin{align}
		x \geq \frac{ \frac{4H^2}{m} \sqrt{\log\sbr{\frac{2|\cS||\cA|}{\delta'}}} + \sqrt{ \sbr{ \frac{4H^2}{m} \sqrt{\log\sbr{\frac{2|\cS||\cA|}{\delta'}}} }^2 + 4 \cdot \frac{ 1 }{(1+\gamma) z^*} \sbr{\frac{4H^2}{m} \log\sbr{\frac{2|\cS||\cA|}{\delta'}} + H^2 } } }{ 2 \cdot \frac{ 1 }{(1+\gamma) z^*} } , \label{eq:condition_quadratic_function}
	\end{align} 
	we have $f(x)\geq0$.
	
	To make Eq.~\eqref{eq:condition_quadratic_function} hold, it suffices to set 
	\begin{align*}
		K_0 &\geq \frac{(1+\gamma)^2 (z^*)^2 }{4} \cdot \Bigg( 2 \cdot \sbr{\frac{4H^2}{m}\sqrt{\log\sbr{\frac{2|\cS||\cA|}{\delta'}}}}^2 + 2 \cdot \sbr{ \frac{4H^2}{m}\sqrt{\log\sbr{\frac{2|\cS||\cA|}{\delta'}}} }^2 \\&\quad +  \frac{8  }{(1+\gamma) z^*} \cdot 5H^2 \log\sbr{\frac{2|\cS||\cA|}{\delta'}}   \Bigg)
		\\
		&=  \sbr{ \frac{ 16H^4 (1+\gamma)^2 (z^*)^2 }{m^2 } + 10 H^2 (1+\gamma) z^* } \log\sbr{\frac{2|\cS||\cA|}{\delta'}} .
	\end{align*}
	
	Furthermore, since $\|\sum_{\pi \in \Pi} w^*(\pi)  \ex_{\tau_i \sim \pi^k}\mbr{\sum_{i=1}^{m} \phi(\tau_i) \phi(\tau_i)^\top}\| \leq H^2$ and then $z^* \geq \frac{1}{H^2}$, to make the right-hand-side in Eq.~\eqref{eq:seg_minimum_eig_derivation} no smaller than $H^2$, it suffices to set
	\begin{align*}
		K_0 \geq 26H^4 (1+\gamma)^2 (z^*)^2  \log\sbr{\frac{2|\cS||\cA|}{\delta'}} .
	\end{align*}
	
	Therefore, combining the definition of $K_0$ and Eq.~\eqref{eq:seg_minimum_eig_derivation}, we have
	\begin{align*}
		\sigma_{\min}\sbr{ \sum_{k=1}^{K_0} \phi(\tau^k) \phi(\tau^k)^\top } \geq H^2 ,
	\end{align*}
	which completes the proof.
\end{proof}

\begin{lemma} \label{lemma:log_det_seg}
	For any $k>0$,
	\begin{align*}
		\sum_{k'=1}^{k} \log\sbr{ 1+\sum_{i=1}^{m} \nbr{\phi^{\tau^{k'}_i}}_{(\Sigma_{k'-1})^{-1}}^2 } = \log \sbr{ \frac{\det(\Sigma_k)}{\det(\lambda I)} }
		\leq |\cS| |\cA| \log \sbr{ 1+ \frac{  kH^2 }{ \lambda |\cS| |\cA| m } } .
	\end{align*}
\end{lemma}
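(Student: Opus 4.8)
The plan is to reduce the statement to the standard determinant--trace (elliptical potential) argument, adapted to the feature that in algorithm $\edlinucbseg$ each episode contributes $m$ rank-one terms to the covariance matrix at once rather than a single one. First I would record the telescoping identity $\log\sbr{\det(\Sigma_k)/\det(\lambda I)} = \sum_{k'=1}^{k}\log\sbr{\det(\Sigma_{k'})/\det(\Sigma_{k'-1})}$, which is immediate from $\Sigma_0=\lambda I$ and the update $\Sigma_{k'}=\Sigma_{k'-1}+\sum_{i=1}^{m}\phi^{\tau^{k'}_i}(\phi^{\tau^{k'}_i})^\top$ from Line~\ref{line:Sigma}. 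This handles the passage to $\log\sbr{\det(\Sigma_k)/\det(\lambda I)}$ in one line.

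The substantive step is to control each per-episode factor. I would collect the segment features of episode $k'$ into $B_{k'}:=[\phi^{\tau^{k'}_1}\mid\cdots\mid\phi^{\tau^{k'}_m}]\in\R^{|\cS||\cA|\times m}$, so that $\sum_{i=1}^{m}\phi^{\tau^{k'}_i}(\phi^{\tau^{k'}_i})^\top=B_{k'}B_{k'}^\top$, and apply Sylvester's determinant identity to get $\det(\Sigma_{k'})/\det(\Sigma_{k'-1}) = \det\sbr{I_m+B_{k'}^\top\Sigma_{k'-1}^{-1}B_{k'}}$. Writing $N_{k'}:=B_{k'}^\top\Sigma_{k'-1}^{-1}B_{k'}\succeq 0$, its trace is $\trace(N_{k'})=\sum_{i=1}^{m}\nbr{\phi^{\tau^{k'}_i}}_{\Sigma_{k'-1}^{-1}}^2$, and for any PSD matrix $\det(I_m+N)=\prod_j(1+\lambda_j(N))\geq 1+\sum_j\lambda_j(N)=1+\trace(N)$ because every cross term is nonnegative. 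Combining these gives $\det(\Sigma_{k'})/\det(\Sigma_{k'-1})\geq 1+\sum_{i=1}^{m}\nbr{\phi^{\tau^{k'}_i}}_{\Sigma_{k'-1}^{-1}}^2$, and taking logs and summing over $k'$ yields $\sum_{k'=1}^{k}\log\sbr{1+\sum_{i=1}^{m}\nbr{\phi^{\tau^{k'}_i}}_{\Sigma_{k'-1}^{-1}}^2}\leq\log\sbr{\det(\Sigma_k)/\det(\lambda I)}$, which is the relation used downstream.

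For the closing inequality I would invoke the AM--GM determinant bound $\det(\Sigma_k)\leq\sbr{\trace(\Sigma_k)/(|\cS||\cA|)}^{|\cS||\cA|}$ and then bound the trace. Since each segment has length $H/m$, the entries of $\phi^{\tau^{k'}_i}$ are nonnegative visitation counts summing to $H/m$, so $\nbr{\phi^{\tau^{k'}_i}}_2^2\leq\nbr{\phi^{\tau^{k'}_i}}_1^2=(H/m)^2$; with $m$ segments per episode over $k$ episodes this gives $\trace(\Sigma_k)\leq|\cS||\cA|\lambda+km(H/m)^2=|\cS||\cA|\lambda+kH^2/m$. Hence $\det(\Sigma_k)/\det(\lambda I)\leq\sbr{1+\tfrac{kH^2}{\lambda|\cS||\cA|m}}^{|\cS||\cA|}$, and taking logarithms delivers the stated bound.

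The one genuinely nonroutine point, and the step I would flag as the main obstacle, is the per-episode factor: because $m$ rank-one updates are batched, the clean rank-one identity $\det(\Sigma_{k'})/\det(\Sigma_{k'-1})=1+\nbr{\phi}_{\Sigma_{k'-1}^{-1}}^2$ fails, and one must route through Sylvester's identity together with $\det(I_m+N)\geq 1+\trace(N)$. This is precisely where the cross terms $(\phi^{\tau^{k'}_i})^\top\Sigma_{k'-1}^{-1}\phi^{\tau^{k'}_j}$ are dropped, so the first relation in the statement is really an \emph{inequality} bounding the log-sum from above (the direction actually needed in the regret proof) rather than a literal equality; everything else is the standard determinant--trace computation.
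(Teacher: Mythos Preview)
Your approach is essentially the paper's: telescope the determinants, bound each per-episode factor, then apply AM--GM on the trace. The paper writes exactly your trace bound $\trace(\Sigma_k)\le |\cS||\cA|\lambda + km(H/m)^2$, so the second inequality is identical.

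Where you differ is the point you flag at the end, and you are right to flag it. The paper's proof asserts the \emph{equality}
\[
\det\!\Bigl(I+\sum_{i=1}^m \Sigma_{k-1}^{-1/2}\phi^{\tau^k_i}(\phi^{\tau^k_i})^\top\Sigma_{k-1}^{-1/2}\Bigr)=1+\sum_{i=1}^m\nbr{\phi^{\tau^k_i}}_{\Sigma_{k-1}^{-1}}^2,
\]
which, as you observe, would require the rank-one pieces to be collinear. In general one only has $\det(I+N)\ge 1+\trace(N)$ for $N\succeq 0$, so the first ``$=$'' in the lemma is really ``$\le$''. Your Sylvester route makes this explicit and yields exactly the direction needed in Lemmas~\ref{lemma:ellip_potent_with_opt_init_seg} and~\ref{lemma:sum_phi_original_seg}, so nothing downstream is affected. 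In short: your argument is correct and slightly more careful than the paper's on this step.
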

\begin{proof}
	For any $k>0$, it holds that
	\begin{align*}
		\det(\Sigma_k)&=\det\sbr{\Sigma_{k-1}+ \sum_{i=1}^{m} \phi^{\tau^k_i} (\phi^{\tau^k_i})^\top}
		\\
		&=\det(\Sigma_{k-1}) \det\sbr{ I + \sum_{i=1}^{m} (\Sigma_{k-1})^{-\frac{1}{2}}   \phi^{\tau^k_i} (\phi^{\tau^k_i})^\top  (\Sigma_{k-1})^{-\frac{1}{2}} }
		\\
		&=\det(\Sigma_{k-1}) \sbr{ 1+ \sum_{i=1}^{m} \nbr{\phi^{\tau^k_i}}_{(\Sigma_{k-1})^{-1}}^2 }
		\\
		&=\det(\lambda I) \prod_{k'=1}^{k} \sbr{ 1+ \sum_{i=1}^{m} \nbr{\phi^{\tau^{k'}_i}}_{(\Sigma_{k'-1})^{-1}}^2 } .
	\end{align*}
	
	Taking the logarithm on both sides, we have
	\begin{align*}
		\log \det(\Sigma_k) = \log \det(\lambda I) +  \sum_{k'=1}^{k} \log\sbr{ 1+ \sum_{i=1}^{m} \nbr{\phi^{\tau^{k'}_i}}_{(\Sigma_{k'-1})^{-1}}^2 } .
	\end{align*}
	Then,
	\begin{align*}
		\sum_{k'=1}^{k} \log\sbr{ 1+\sum_{i=1}^{m} \nbr{\phi^{\tau^{k'}_i}}_{(\Sigma_{k'-1})^{-1}}^2 } &= \log \sbr{ \frac{\det(\Sigma_k)}{\det(\lambda I)} }
		\\
		&\overset{\textup{(a)}}{\leq} \log \sbr{ \frac{\sbr{ \frac{\trace(\Sigma_k)}{|\cS| |\cA|}}^{|\cS| |\cA|} }{ \lambda^{|\cS| |\cA|} } }
		\\
		&= |\cS| |\cA| \log \sbr{ \frac{  \trace(\Sigma_k) }{ \lambda |\cS| |\cA| } }
		\\
		&\leq |\cS| |\cA| \log \sbr{ \frac{ \lambda |\cS| |\cA| + km \cdot \frac{H^2}{m^2} }{ \lambda |\cS| |\cA| } }
		\\
		&= |\cS| |\cA| \log \sbr{ 1+\frac{ kH^2 }{ \lambda |\cS| |\cA| m } } ,
	\end{align*}
	where (a) uses the arithmetic mean-geometric mean inequality.
\end{proof}

\begin{lemma}[Elliptical Potential with Optimized Initialization]\label{lemma:ellip_potent_with_opt_init_seg}
	Assume that event $\cJ$ holds. 
	Then, for any $k \geq K_0+1$,
	\begin{align*}
		\sum_{i=1}^{m} \nbr{ \phi^{\tau^k_i}}_{(\Sigma_{k-1})^{-1}}^2 \leq 1 .
	\end{align*}
	Furthermore, for any $K\geq K_0+1$,
	\begin{align*}
		\sum_{k=K_0+1}^{K} \sum_{i=1}^{m} \nbr{ \phi^{\tau^k_i}}_{(\Sigma_{k-1})^{-1}} \leq \sqrt{2Km |\cS| |\cA| \log \sbr{ 1+ \frac{  KH^2 }{ \lambda |\cS| |\cA|m } }} .
	\end{align*}
\end{lemma}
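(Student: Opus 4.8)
The plan is to prove the two claims in order, since the per-episode bound of the first claim is exactly the ingredient that makes the elliptical-potential summation in the second claim go through. For the first claim, I would exploit the conditioning guaranteed by the initial exploration. Because event $\cJ$ holds, Lemma~\ref{lemma:seg_e_optimal_design} gives $\nbr{(\sum_{k=1}^{K_0}\sum_{i=1}^{m}\phi^{\tau^k_i}(\phi^{\tau^k_i})^\top)^{-1}} \leq \frac{1}{H^2}$, which is equivalent to the minimum-eigenvalue lower bound $\sum_{k=1}^{K_0}\sum_{i=1}^{m}\phi^{\tau^k_i}(\phi^{\tau^k_i})^\top \succeq H^2 I$. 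For any $k \geq K_0+1$ the matrix $\Sigma_{k-1}$ contains all of these initial-exploration terms (plus $\lambda I$ and later terms), so $\Sigma_{k-1} \succeq H^2 I$ and hence $(\Sigma_{k-1})^{-1} \preceq \frac{1}{H^2} I$. Since each segment has length $\frac{H}{m}$, its visitation indicator obeys $\nbr{\phi^{\tau^k_i}}_2 \leq \nbr{\phi^{\tau^k_i}}_1 = \frac{H}{m}$, so $\nbr{\phi^{\tau^k_i}}_{(\Sigma_{k-1})^{-1}}^2 \leq \frac{1}{H^2}\nbr{\phi^{\tau^k_i}}_2^2 \leq \frac{1}{m^2}$. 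Summing over the $m$ segments then gives $\sum_{i=1}^{m}\nbr{\phi^{\tau^k_i}}_{(\Sigma_{k-1})^{-1}}^2 \leq \frac{1}{m} \leq 1$.

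For the second claim, I would first apply Cauchy--Schwarz across the $(K-K_0)m \leq Km$ index pairs $(k,i)$, which reduces the sum of norms to $\sqrt{Km}$ times the square root of $\sum_{k=K_0+1}^{K}\sum_{i=1}^{m}\nbr{\phi^{\tau^k_i}}_{(\Sigma_{k-1})^{-1}}^2$. To control this sum of squares I would invoke the first claim: since $\sum_{i=1}^{m}\nbr{\phi^{\tau^k_i}}_{(\Sigma_{k-1})^{-1}}^2 \leq 1$ for every $k \geq K_0+1$, the elementary inequality $x \leq 2\log(1+x)$ (valid for $x \in [0,1]$) lets me replace each per-episode potential by $2\log(1 + \sum_{i=1}^{m}\nbr{\phi^{\tau^k_i}}_{(\Sigma_{k-1})^{-1}}^2)$. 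Summing over $k$ and applying Lemma~\ref{lemma:log_det_seg} bounds the total by $2|\cS||\cA|\log(1 + \frac{KH^2}{\lambda|\cS||\cA|m})$, and substituting back into the Cauchy--Schwarz estimate produces precisely $\sqrt{2Km|\cS||\cA|\log(1 + \frac{KH^2}{\lambda|\cS||\cA|m})}$.

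The only genuinely substantive step is the first claim, and it rests entirely on the E-optimal design guarantee of Lemma~\ref{lemma:seg_e_optimal_design}: without the initial exploration forcing $\Sigma_{k-1} \succeq H^2 I$, one would have only the trivial bound $\nbr{\phi^{\tau^k_i}}_{(\Sigma_{k-1})^{-1}} \leq \frac{H}{m\sqrt{\lambda}}$ coming from $\Sigma_{k-1} \succeq \lambda I$, which is exactly the weaker estimate a plain least-squares analysis uses and which costs the $\sqrt{H}$ factor this algorithm is designed to save. The second claim is then a routine elliptical-potential computation; the one point worth flagging is that $x \leq 2\log(1+x)$ requires $x \leq 1$, which is precisely why establishing the per-episode bound of $1$ must precede the summation.
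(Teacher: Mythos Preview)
Your proposal is correct and follows essentially the same approach as the paper: for the first claim you use the E-optimal design guarantee (Lemma~\ref{lemma:seg_e_optimal_design}) to get $\Sigma_{k-1}\succeq H^2 I$ and combine it with $\|\phi^{\tau^k_i}\|_2\leq H/m$, and for the second claim you apply Cauchy--Schwarz, the inequality $x\leq 2\log(1+x)$ for $x\in[0,1]$, and Lemma~\ref{lemma:log_det_seg}. The only cosmetic difference is that the paper explicitly extends the sum over $k$ from $K_0+1,\dots,K$ to $1,\dots,K$ (adding nonnegative terms) before invoking Lemma~\ref{lemma:log_det_seg}; you should mention this one-line step, but it changes nothing substantive.
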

\begin{proof}
	Using Lemma~\ref{lemma:seg_e_optimal_design}, for any $k\geq K_0+1$, we have
	\begin{align*}
		\sum_{i=1}^{m} \nbr{ \phi^{\tau^k_i}}_{(\Sigma_{k-1})^{-1}}^2 &= \sum_{i=1}^{m} \nbr{ \phi^{\tau^k_i}}_{\sbr{\lambda I + \sum_{k'=1}^{K_0} \phi^{\tau^{k'}} (\phi^{\tau^{k'}})^\top + \sum_{k'=K_0+1}^{k-1} \phi^{\tau^{k'}} (\phi^{\tau^{k'}})^\top }^{-1}}^2
		\\
		&\leq \sum_{i=1}^{m} \nbr{ \phi^{\tau^i_i} }_{\sbr{ \sum_{k'=1}^{K_0} \phi^{\tau^{k'}} (\phi^{\tau^{k'}})^\top }^{-1}}^2
		\\
		&\leq m \cdot \frac{H^2}{m^2} \cdot \frac{1}{H^2}
		\\
		&\leq 1 .
	\end{align*}
	
	Then, we have
	\begin{align*}
		\sum_{k=K_0+1}^{K} \sum_{i=1}^{m} \nbr{ \phi^{\tau^k_i}}_{(\Sigma_{k-1})^{-1}} &\leq \sqrt{ K m \sum_{k=K_0+1}^{K} \sum_{i=1}^{m} \nbr{ \phi^{\tau^k_i}}_{(\Sigma_{k-1})^{-1}}^2 }
		\\
		&\overset{\textup{(a)}}{\leq} \sqrt{ K m \cdot 2 \sum_{k=K_0+1}^{K} \log \sbr{ 1 +  \sum_{i=1}^{m} \nbr{ \phi^{\tau^k_i}}_{(\Sigma_{k-1})^{-1}}^2 } }
		\\
		&\leq \sqrt{ K m \cdot 2 \sum_{k=1}^{K} \log \sbr{ 1 +  \sum_{i=1}^{m} \nbr{ \phi^{\tau^k_i}}_{(\Sigma_{k-1})^{-1}}^2 } }
		\\
		&\overset{\textup{(b)}}{\leq} \sqrt{2K m |\cS| |\cA| \log \sbr{ 1+ \frac{ KH^2 }{ \lambda |\cS| |\cA|m } }} ,
	\end{align*}
	where (a) uses the fact that $x \leq 2\log(1+x)$ for any $x \in [0,1]$, and (b) follows from Lemma~\ref{lemma:log_det_seg}.
\end{proof}

Define event 
\begin{align}
	\cK\!:=\!\lbr{ \nbr{\hat{\theta}_{k} \!-\! \theta^*}_{\Sigma_{k}} \!\leq\! \sqrt{ \frac{H |\cS||\cA|}{m} \log\sbr{1 \!+\! \frac{kH^2}{\lambda|\cS||\cA|m}} \!+\! 2\log\sbr{\frac{1}{\delta'}} } \!+\! r_{\max}\sqrt{\lambda|\cS||\cA|} ,\  \forall k>0 }\! . \label{eq:concentration_theta_seg}
\end{align}

\begin{lemma}[Concentration of $\hat{\theta}_k$ under Sum Feedback] 
	It holds that
	\begin{align*}
		\Pr \mbr{ \cK } \geq 1-\delta' .
	\end{align*}
\end{lemma}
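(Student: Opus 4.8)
The plan is to run the standard self-normalized analysis of ridge regression, adjusted for the fact that each segment reward carries noise whose variance proxy is $\frac{H}{m}$ rather than a constant. Writing $\eta^{k'}_i:=\sum_{t=\frac{H}{m}(i-1)+1}^{\frac{H}{m}i}\varepsilon^{k'}_t$ for the accumulated noise of segment $i$ in episode $k'$, we have $R^{k'}_i=(\phi^{\tau^{k'}_i})^\top\theta^*+\eta^{k'}_i$, and substituting the definition of $\hat{\theta}_k$ from Line~\ref{line:hat_theta} together with $\sum_{k',i}\phi^{\tau^{k'}_i}(\phi^{\tau^{k'}_i})^\top=\Sigma_k-\lambda I$ gives
\begin{align*}
	\hat{\theta}_k-\theta^* = \Sigma_k^{-1}S_k - \lambda\Sigma_k^{-1}\theta^*, \qquad S_k:=\sum_{k'=1}^{k}\sum_{i=1}^{m}\phi^{\tau^{k'}_i}\eta^{k'}_i.
\end{align*}
Taking the $\Sigma_k$-norm, using the identity $\nbr{\Sigma_k^{-1}x}_{\Sigma_k}=\nbr{x}_{\Sigma_k^{-1}}$, and applying the triangle inequality then reduces the goal to bounding $\nbr{S_k}_{\Sigma_k^{-1}}+\lambda\nbr{\theta^*}_{\Sigma_k^{-1}}$.

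The bias term is immediate: since $\Sigma_k\succeq\lambda I$ we have $\Sigma_k^{-1}\preceq\frac1\lambda I$, so $\lambda\nbr{\theta^*}_{\Sigma_k^{-1}}\leq\sqrt\lambda\nbr{\theta^*}_2$; and because each coordinate of $\theta^*=[r(s,a)]$ lies in $[-r_{\max},r_{\max}]$, we have $\nbr{\theta^*}_2\leq r_{\max}\sqrt{|\cS||\cA|}$, which produces exactly the additive term $r_{\max}\sqrt{\lambda|\cS||\cA|}$ appearing in $\cK$. For the stochastic term I would first set up a filtration in which, at each index $(k',i)$, the feature $\phi^{\tau^{k'}_i}$ is revealed before its noise $\eta^{k'}_i$. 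This is legitimate because $\varepsilon$ is independent of the transitions: the trajectory, hence $\phi^{\tau^{k'}_i}$, is predictable, while $\eta^{k'}_i$ is a zero-mean increment that, as a sum of $\frac{H}{m}$ independent $1$-sub-Gaussian variables, is $\sqrt{H/m}$-sub-Gaussian conditionally on the past. The uniform-in-$k$ self-normalized concentration bound of \citet{abbasi2011improved} with this variance proxy then gives, with probability at least $1-\delta'$ and simultaneously for all $k>0$,
\begin{align*}
	\nbr{S_k}_{\Sigma_k^{-1}}^2 \leq \frac{H}{m}\log\sbr{\frac{\det(\Sigma_k)}{\det(\lambda I)}} + 2\log\sbr{\frac{1}{\delta'}}.
\end{align*}
Invoking Lemma~\ref{lemma:log_det_seg} to replace $\log(\det(\Sigma_k)/\det(\lambda I))$ by $|\cS||\cA|\log(1+\frac{kH^2}{\lambda|\cS||\cA|m})$ yields the first term under the square root in $\cK$, and combining with the bias bound via the triangle inequality completes the argument.

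The main obstacle is the careful bookkeeping in the self-normalized step: one must correctly identify the predictable sequence $\{\phi^{\tau^{k'}_i}\}$ relative to the segment noise and, crucially, track the sub-Gaussian parameter $\sqrt{H/m}$ of the segment-summed noise, since this scale propagates directly into the leading $\frac{H|\cS||\cA|}{m}$ factor (through Lemma~\ref{lemma:log_det_seg}) and thus into the confidence radius $\beta(k)$ that drives the whole regret analysis. Everything else---the decomposition, the norm identity, and the bias bound---is routine.
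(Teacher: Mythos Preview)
Your approach is essentially identical to the paper's: the paper's proof is a one-liner that invokes Lemma~2 of \citet{abbasi2011improved} after noting that the segment noise is $\sqrt{H/m}$-sub-Gaussian given the transitions and that $\|\theta^*\|_2\le r_{\max}\sqrt{|\cS||\cA|}$, and you spell out exactly this argument (the ridge decomposition, the filtration making $\phi^{\tau^{k'}_i}$ predictable, the self-normalized bound, and Lemma~\ref{lemma:log_det_seg} for the determinant). The only caveat is cosmetic: a literal application of the self-normalized bound with variance proxy $H/m$ yields $\frac{2H}{m}\log(1/\delta')$ rather than $2\log(1/\delta')$ inside the square root, but this matches the paper's stated $\cK$ and does not affect the downstream regret order.
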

\begin{proof}
	Since the sum feedback on each segment is $\frac{H}{m}$-sub-Gaussian given the observation of transition and $\|\theta^*\|\leq r_{\max}\sqrt{|\cS| |\cA|}$, using Lemma 2 in \citep{abbasi2011improved}, we can obtain this lemma.
\end{proof}

Define event 
\begin{align}
	&\cF^{\summing}_{\textup{opt}}:=\Bigg\{ \abr{\sum_{k'=K_0+1}^{k} \sbr{ \ex_{\tau \sim \pi^{k'}}\mbr{ \nbr{ \phi^{\tau}}_{(\Sigma_{k'-1})^{-1}} | F_{k'-1}} - \nbr{ \phi^{\tau}}_{(\Sigma_{k'-1})^{-1}} } } 
	\nonumber\\
	&\hspace*{5em} \leq 4\sqrt{ k \log\sbr{\frac{4k}{\delta'}} } ,\ \forall k \geq K_0+1 \Bigg\} . \label{eq:def_martingale}
\end{align}

\begin{lemma}[Concentration of Visitation Indicators] 
	It holds that
	\begin{align*}
		\Pr \mbr{ \cF^{\summing}_{\textup{opt}} } \geq 1-\delta' .
	\end{align*}
\end{lemma}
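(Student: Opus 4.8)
The plan is to recognize the summand as a bounded martingale difference sequence and apply the Azuma--Hoeffding inequality together with a union bound over $k$, exactly paralleling the proof of Lemma~\ref{lemma:cF_binary}; the only genuinely new ingredient is a sharp almost-sure bound on the increments, which the E-optimal-design initialization supplies.

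First I would fix $k' \ge K_0+1$ and set $X_{k'} := \ex_{\tau \sim \pi^{k'}}[\nbr{\phi^{\tau}}_{(\Sigma_{k'-1})^{-1}} \mid F_{k'-1}] - \nbr{\phi^{\tau^{k'}}}_{(\Sigma_{k'-1})^{-1}}$. Since $\pi^{k'}$ and $\Sigma_{k'-1}$ are both $F_{k'-1}$-measurable (they are determined before episode $k'$ is played) and, under known transition, $\tau^{k'}$ is generated by rolling out $\pi^{k'}$ on $p$, we have $\ex[\nbr{\phi^{\tau^{k'}}}_{(\Sigma_{k'-1})^{-1}} \mid F_{k'-1}] = \ex_{\tau \sim \pi^{k'}}[\nbr{\phi^{\tau}}_{(\Sigma_{k'-1})^{-1}} \mid F_{k'-1}]$, so $\ex[X_{k'} \mid F_{k'-1}] = 0$ and $\{X_{k'}\}_{k' \ge K_0+1}$ is a martingale difference sequence.

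Next I would bound the increments. On event $\cJ$, Lemma~\ref{lemma:seg_e_optimal_design} gives $\sigma_{\min}(\sum_{k=1}^{K_0} \sum_{i=1}^{m} \phi^{\tau^k_i}(\phi^{\tau^k_i})^\top) \ge H^2$, and since $\Sigma_{k'-1} \succeq \sum_{k=1}^{K_0} \sum_{i=1}^{m} \phi^{\tau^k_i}(\phi^{\tau^k_i})^\top$ for every $k' \ge K_0+1$, it follows that $\sigma_{\min}(\Sigma_{k'-1}) \ge H^2$. Because any realizable trajectory visits exactly $H$ state-action pairs, $\nbr{\phi^{\tau}}_2 \le \nbr{\phi^{\tau}}_1 = H$, whence $\nbr{\phi^{\tau}}_{(\Sigma_{k'-1})^{-1}} \le \nbr{\phi^{\tau}}_2 / H \le 1$. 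Thus both quantities defining $X_{k'}$ lie in $[0,1]$, so $|X_{k'}| \le 2$ on $\cJ$ for all $k' \ge K_0+1$. Since $\cJ$ is $F_{K_0}$-measurable and the first $K_0$ episodes use fixed, non-adaptive policies, I can condition on $F_{K_0}$ restricted to $\cJ$ and treat $\{X_{k'}\}$ as a uniformly bounded martingale difference sequence.

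I would then invoke Azuma--Hoeffding for each fixed $k \ge K_0+1$: using $\sum_{k'=K_0+1}^{k} |X_{k'}|^2 \le 4k$, it gives $\abr{\sum_{k'=K_0+1}^{k} X_{k'}} \le \sqrt{8 k \log(4k^2/\delta')} \le 4\sqrt{k \log(\frac{4k}{\delta'})}$ with probability at least $1 - \frac{\delta'}{2k^2}$. A union bound over $k$, using $\sum_{k \ge 1} \frac{\delta'}{2k^2} \le \delta'$, then yields $\Pr[\cF^{\summing}_{\textup{opt}}] \ge 1-\delta'$. The hard part is not any calculation but securing the correct scale of the increment bound: the target rate carries no $\frac{H}{\sqrt{\lambda}}$ factor, so the naive deterministic bound $\nbr{\phi^{\tau}}_{(\Sigma_{k'-1})^{-1}} \le \frac{H}{\sqrt{\lambda}}$ (which, for $\lambda = \frac{H}{r_{\max}^2 m}$, equals $r_{\max}\sqrt{Hm}$) is far too weak; one genuinely needs the minimum-eigenvalue guarantee $\sigma_{\min}(\Sigma_{k'-1}) \ge H^2$ furnished by the E-optimal-design exploration on event $\cJ$, and the only technical care required is to keep the martingale increments bounded by invoking the $F_{K_0}$-measurability of $\cJ$.
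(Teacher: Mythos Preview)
Your proposal is correct and follows essentially the same approach as the paper: establish that each summand is a martingale difference bounded by $2$ once the E-optimal-design initialization guarantees $\sigma_{\min}(\Sigma_{k'-1})\ge H^2$, then apply Azuma--Hoeffding with failure probability $\frac{\delta'}{2k^2}$ and union-bound over $k$. The paper cites Lemma~\ref{lemma:ellip_potent_with_opt_init_seg} for the increment bound whereas you go straight to Lemma~\ref{lemma:seg_e_optimal_design}, but these invoke the same eigenvalue fact; your explicit remark that $\cJ$ is $F_{K_0}$-measurable is a welcome bit of extra care that the paper leaves implicit.
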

\begin{proof}
	According to Lemma~\ref{lemma:ellip_potent_with_opt_init_seg}, we have that for any $k'\geq K_0+1$, $\nbr{ \phi^{\tau}}_{(\Sigma_{k'-1})^{-1}} \leq 1$, and then $|\ex_{\tau \sim \pi^{k'}} [\nbr{ \phi^{\tau}}_{(\Sigma_{k'-1})^{-1}} | F_{k'-1}] - \nbr{ \phi^{\tau}}_{(\Sigma_{k'-1})^{-1}}| \leq 2$.

	Using the Azuma-Hoeffding inequality, we have that for any fixed $k \geq K_0+1$, with probability at least $1-\frac{\delta'}{2k^2}$,
	\begin{align*}
		\abr{\sum_{k'=K_0+1}^{k} \!\! \sbr{ \ex_{\tau \sim \pi^{k'}}\mbr{ \nbr{ \phi^{\tau}}_{(\Sigma_{k'-1})^{-1}} | F_{k'-1}} \!-\! \nbr{ \phi^{\tau}}_{(\Sigma_{k'-1})^{-1}} } } &\leq \sqrt{ 2 \cdot 4 (k-K_0-1) \log\sbr{\frac{4k^2}{\delta'}} } .
	\end{align*}
	
	Since $\sum_{k=K_0+1}^{\infty} \frac{\delta'}{2k^2} \leq \delta'$, by a union bound over $k$, we have that with probability at least $\delta'$, for any $k\geq K_0+1$,
	\begin{align*}
		\abr{\sum_{k'=K_0+1}^{k} \sbr{ \ex_{\tau \sim \pi^{k'}}\mbr{ \nbr{ \phi^{\tau}}_{(\Sigma_{k'-1})^{-1}} | F_{k'-1}} - \nbr{ \phi^{\tau}}_{(\Sigma_{k'-1})^{-1}} } } &\leq \sqrt{ 2 \cdot 4 (k-K_0-1) \log\sbr{\frac{4k^2}{\delta'}} } 
		\\
		&\leq 4 \sqrt{ k \log\sbr{\frac{4k}{\delta'}} } .
	\end{align*}
\end{proof}

\begin{proof}[Proof of Theorem~\ref{thm:ub_sum_known_tran}]
	Let $\delta'=\frac{\delta}{3}$. We have $\Pr[\cJ \cap \cK \cap \cF^{\summing}_{\textup{opt}}] \geq 1-\delta$.
	To prove this theorem, it suffices to prove the regret bound when event $\cJ \cap \cK \cap \cF^{\summing}_{\textup{opt}}$ holds. 
	
	Assume that event $\cJ \cap \cK \cap \cF^{\summing}_{\textup{opt}}$ holds.
	Then, we have
	\begin{align}
		\cR(K) &= \sum_{k=1}^{K} \sbr{ (\phi^{\pi^*})^\top \theta - (\phi^{\pi^k})^\top \theta }
		\nonumber\\
		&\overset{\textup{(a)}}{\leq} \sum_{k=K_0+1}^{K} \sbr{ (\phi^{\pi^*})^\top \hat{\theta}_{k-1} + \beta(k-1) \cdot \| \phi^{\pi^*} \|_{(\Sigma_{k-1})^{-1}}  - (\phi^{\pi^k})^\top \theta } + K_0 H
		\nonumber\\
		&\overset{\textup{(b)}}{\leq} \sum_{k=K_0+1}^{K} \sbr{ (\phi^{\pi^k})^\top \hat{\theta}_{k-1} + \beta(k-1) \cdot \| \phi^{\pi^k} \|_{(\Sigma_{k-1})^{-1}}  - (\phi^{\pi^k})^\top \theta } + K_0 H
		\nonumber\\
		&\leq \sum_{k=K_0+1}^{K} 2 \beta(k-1) \cdot \| \phi^{\pi^k} \|_{(\Sigma_{k-1})^{-1}} + K_0 H
		\nonumber\\
		&=  2 \beta(K)  \sum_{k=K_0+1}^{K} \nbr{ \ex_{\tau \sim \pi^k}\mbr{\phi^{\tau}| F_{k-1}} }_{(\Sigma_{k-1})^{-1}} + K_0 H
		\nonumber\\
		&\overset{\textup{(c)}}{\leq}  2 \beta(K)  \sum_{k=K_0+1}^{K} \ex_{\tau \sim \pi^k}\mbr{ \nbr{ \phi^{\tau}}_{(\Sigma_{k-1})^{-1}} | F_{k-1}} + K_0 H
		\nonumber\\
		&=  2 \beta(K) \!\!\!\!\!\! \sum_{k=K_0+1}^{K} \!\!\!\!\! \sbr{ \ex_{\tau \sim \pi^k}\mbr{ \nbr{ \phi^{\tau}}_{(\Sigma_{k-1})^{-1}} | F_{k-1}} \!-\! \nbr{ \phi^{\tau}}_{(\Sigma_{k-1})^{-1}} \!+\! \nbr{ \phi^{\tau}}_{(\Sigma_{k-1})^{-1}} } \!+\! K_0 H
		\nonumber\\
		&\leq  2 \beta(K) \!\!\!\! \sum_{k=K_0+1}^{K} \!\!\! \sbr{ \ex_{\tau \sim \pi^k}\mbr{ \nbr{ \phi^{\tau}}_{(\Sigma_{k-1})^{-1}} | F_{k-1}} \!-\! \nbr{ \phi^{\tau}}_{(\Sigma_{k-1})^{-1}} \!+\! \sum_{i=1}^{m} \nbr{ \phi^{\tau^k_i}}_{(\Sigma_{k-1})^{-1}} } 
		\nonumber\\
		&\quad + K_0 H , \label{eq:regret_decomposition_seg}
	\end{align}
	where inequality (a) follows from Eq.~\eqref{eq:concentration_theta_seg}, inequality (b) is due to the definition of $\pi^k$, and inequality (c) uses the Jensen inequality.
	
	Plugging  Eq.~\eqref{eq:def_martingale} and Lemma~\ref{lemma:ellip_potent_with_opt_init_seg} into Eq.~\eqref{eq:regret_decomposition_seg} and using the fact that $\lambda:=\frac{H}{r_{\max}^2 m}$, we have
	\begin{align*}
		\cR(K) & \leq 2 \sbr{ \sqrt{\frac{H|\cS||\cA|}{m}\log\sbr{1+\frac{KH^2}{\lambda|\cS||\cA|m}} + 2\log\sbr{\frac{1}{\delta'}}} + r_{\max}\sqrt{\lambda|\cS||\cA|} } \cdot
		\\
		&\quad \sbr{ 4\sqrt{ K \log\sbr{\frac{4K}{\delta}} } + \sqrt{2Km |\cS| |\cA| \log \sbr{ 1+\frac{  KH^2 }{ \lambda |\cS| |\cA|m } }} } 
		\\ 
		&\quad + H \left\lceil \max\lbr{ 26H^4 (1+\gamma)^2 (z^*)^2  \log\sbr{\frac{2|\cS||\cA|}{\delta'}} ,\ \frac{|\cS||\cA|}{\gamma^2} } \right\rceil
		\\
		&= O\!\sbr{ |\cS| |\cA| \sqrt{HK} \log\sbr{\sbr{1+\frac{KH r_{\max}^2}{|\cS||\cA|m}}  \frac{1}{\delta}} \!+\!   (z^*)^2 H^5 \log\sbr{\frac{|\cS||\cA|}{\delta}} \!+\! |\cS||\cA|H } \!.
	\end{align*}
\end{proof}

\subsection{Proof for the Regret Lower Bound with Known Transition}

Now we prove the regret lower bound (Theorem~\ref{thm:lb_sum_known_tran}) for RL with sum segment feedback and known transition.

\begin{proof}[Proof of Theorem~\ref{thm:lb_sum_known_tran}]
	\begin{figure}[t]
		\centering   
		\includegraphics[width=0.7\textwidth]{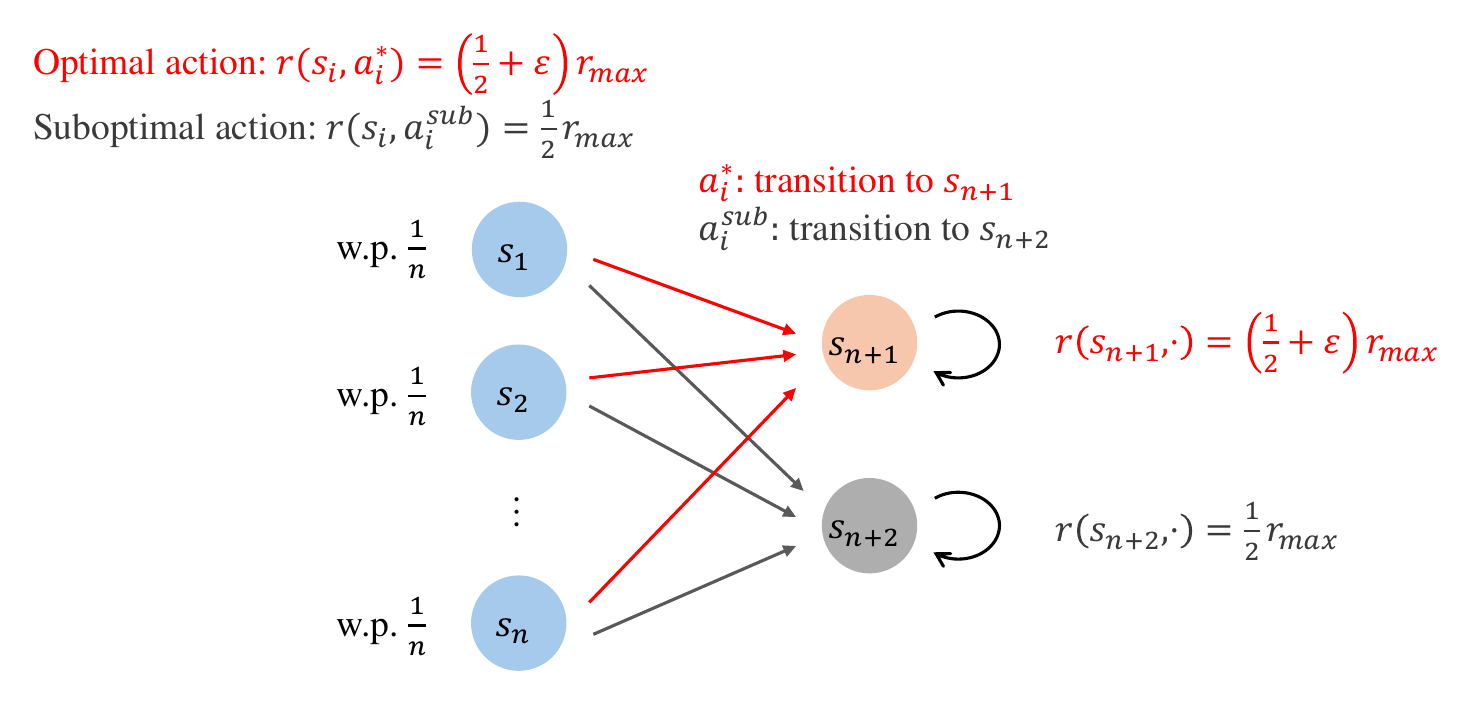}
		\caption{Instance for the lower bound under sum segment feedback and known transition.
		} \label{fig:lb_sum_known_tran}
	\end{figure}
	
	We construct a random instance $\cI$ as follows. As shown in Figure~\ref{fig:lb_sum_known_tran},
	there are $n$ bandit states $s_1,\dots,s_n$ (i.e., there is an optimal action and multiple suboptimal actions), a good absorbing state $s_{n+1}$ and a bad absorbing state $s_{n+2}$. 
	The agent starts from $s_1,\dots,s_n$ with equal probability $\frac{1}{n}$. 
	For any $i \in [n]$, in state $s_i$, one action $a_J$ is uniformly chosen from $\cA$ as the optimal action.
	In state $s_i$, under the optimal action $a_J$, the agent transitions to $s_{n+1}$ deterministically, and $r(s_i,a_J)=(\frac{1}{2}+\varepsilon)r_{\max}$, where $\varepsilon \in (0,\frac{1}{2}]$ is a parameter specified later; Under any suboptimal action $a \in \cA \setminus \{s_J\}$, the agent transitions to $s_{n+2}$ deterministically, and $r(s_i,a)=\frac{1}{2}r_{\max}$. 
	For all actions $a \in \cA$, $r(s_{n+1},a)=(\frac{1}{2}+\varepsilon)r_{\max}$ and $r(s_{n+2},a)=\frac{1}{2}r_{\max}$.
	For any $(s,a) \in \cS \times \cA$, the reward distribution of $(s,a)$ is Gaussian distribution $\cN(r(s,a),1)$.
	
	In this proof, we will also use an alternative uniform instance $\cI_{\unif}$. The only difference between $\cI_{\unif}$ and $\cI$ is that for any $i \in [n]$, in state $s_i$, under all actions $a \in \cA$, the agent transitions to $s_{n+2}$ deterministically, and $r(s_i,a)=\frac{1}{2}r_{\max}$.
	
	Fix an algorithm $\A$.
	Let $\ex_{\unif}[\cdot]$ denote the expectation with respect to $\cI_{\unif}$. Let $\ex_*[\cdot]$ denote the expectation with respect to $\cI$. For any $i \in [n]$ and $j \in [|\cA|]$, let $\ex_{i,j}[\cdot]$ denote the expectation with respect to the case where $a_j$ is the optimal action in state $s_i$, and  $N_{i,j}$ denote the number of episodes where algorithm $\A$ chooses $a_j$ in state $s_i$, i.e., $N_{i,j}=\sum_{k=1}^{K} \indicator\{\pi^k_1(s_i)=a_j\}$.

	The KL divergence of the reward observations if taking $a_J$ in $s_i$ ($i \in [n]$) between $\cI_{\unif}$ and $\cI$ is 
	\begin{align*}
		& \quad \sum_{i=1}^{m} \kl \sbr{ \cN\sbr{ \frac{1}{2} r_{\max} \cdot \frac{H}{m},\ \frac{H}{m}} \Big\| \cN\sbr{\sbr{\frac{1}{2}+\varepsilon} r_{\max} \cdot \frac{H}{m},\ \frac{H}{m}} } 
		\\
		&=  m \cdot \frac{ \sbr{\frac{H}{m} \cdot r_{\max} \varepsilon}^2 }{\frac{H}{m}}
		\\
		&=  H r_{\max}^2 \varepsilon^2 .
	\end{align*}  
	In addition, the agent has probability only $\frac{1}{n}$ to arrive at (observe) state $s_i$.
	
	Hence, using Lemma A.1 in \citep{auer2002nonstochastic}, we have that for any $i \in [n]$, in state $s_i$,
	\begin{align*}
		\ex_{i,j}[N_{i,j}] &\leq  \ex_{\unif}[N_{i,j}] + \frac{K}{2} \sqrt{ \frac{1}{n} \cdot \ex_{\unif}[N_{i,j}] \cdot H r_{\max}^2 \varepsilon^2 } .
	\end{align*}
	
	Summing over $j \in [|\cA|]$, using the Cauchy-Schwarz inequality and the fact that $\sum_{j=1}^{|\cA|} \ex_{\unif}[N_{i,j}]=K$, we have
	\begin{align*}
		\sum_{j=1}^{|\cA|}  \ex_{i,j}[N_{i,j}] &\leq K + \frac{K}{2} \sqrt{ \frac{|\cA|}{n} \cdot K \cdot H r_{\max}^2 \varepsilon^2 } 
		\\
		&= K + \frac{K r_{\max} \varepsilon}{2} \sqrt{ \frac{|\cA|HK}{n} } .
	\end{align*}
	Then, we have
	\begin{align*}
		\cR(K) &= \sum_{k=1}^{K} \ex_{*}\mbr{V^*-V^{\pi^k}}
		\\
		&=\sbr{\frac{1}{2}+\varepsilon} r_{\max}HK - \frac{1}{n} \sum_{i=1}^{n} \sbr{ \frac{1}{2}r_{\max}HK + \varepsilon r_{\max} H \cdot \frac{1}{|\cA|} \sum_{j=1}^{|\cA|} \ex_{i,j}[N_{i,j}] }
		\\
		&= \varepsilon r_{\max} H \sbr{K - \frac{K}{|\cA|} - \frac{K r_{\max} \varepsilon}{2} \sqrt{ \frac{HK}{n |\cA|} } } .
	\end{align*}
	
	Recall that $n=|\cS|-2$.
	Let $|\cS| \geq 3$, $|\cA|\geq 2$, $K\geq \frac{n |\cA|}{r_{\max}^2 H}$ and $\varepsilon=\frac{1}{2r_{\max}}\sqrt{\frac{n |\cA|}{HK}}$. Then, we have 
	\begin{align*}
		\cR(K) 
		=\Omega\sbr{ \sqrt{|\cS||\cA|HK} } .
	\end{align*}
\end{proof}

\subsection{Pseudo-code and Detailed Description of Algorithm $\linucbtranseg$} \label{apx:alg_sum_unknown_tran}

\begin{algorithm}[t]
	\caption{$\linucbtranseg$} \label{alg:linucb_tran_bonus_seg}
	\begin{algorithmic}[1]
	\STATE {\bfseries Input:} $\delta,\delta':=\frac{\delta}{4},\lambda:=\frac{H}{m},L:=\log(\frac{3|\cS||\cA|H}{\delta'})+S\log(8e(1+KH))$. For any $k\geq1$, $\beta(k):=\sqrt{\frac{H|\cS||\cA|}{m} \log(1+\frac{kH^2}{\lambda|\cS||\cA|m})+2\log(\frac{1}{\delta'})} + r_{\max} \sqrt{\lambda|\cS||\cA|}$.
	%	Initialize $\hat{\theta}_0=\mathbf{0}$ and $\Sigma_0=\lambda I$\;
	\FOR{$k=1,\dots,K$}
		\STATE $\hat{\theta}_{k-1} \leftarrow (\lambda I + \sum_{k'=1}^{k-1} \sum_{i=1}^{m} \phi^{\tau^{k'}_i} (\phi^{\tau^{k'}_i})^\top)^{-1} \sum_{k'=1}^{k-1} \sum_{i=1}^{m} \phi^{\tau^{k'}_i} R^{k'}_i$\; \label{line:hat_theta_linucb_tran}
		\STATE $\Sigma_{k-1} \leftarrow \lambda I + \sum_{k'=1}^{k-1} \sum_{i=1}^{m} \phi^{\tau^{k'}_i} (\phi^{\tau^{k'}_i})^\top$\; \label{line:Sigma_linucb_tran}
		\STATE $\pi^k \!\leftarrow\! \argmax_{\pi \in \Pi} ((\hat{\phi}_{k-1}^{\pi})^\top \hat{\theta}_{k-1} + \beta(k-1) \cdot  \|\hat{\phi}_{k-1}^{\pi}\|_{(\Sigma_{k-1})^{-1}} + \sum_{s',a'}\! \ex_{s_1 \sim \rho}[B^{\pi;s',a';k}_1\!(s_1)] )$, where $B^{\pi;s',a';k}_1(s_1)$ is defined in Eq.~\eqref{eq:def_tran_bonus}\; \label{line:pi_k_linucb_tran}
		\STATE Play episode $k$ with policy $\pi^k$. Observe $\tau^k$ and sum segment feedback $\{R^k_i\}_{i=1}^{m}$\; \label{line:play_linucb_tran}
	\ENDFOR	
	\end{algorithmic}
\end{algorithm}

Algorithm~\ref{alg:linucb_tran_bonus_seg} presents the pseudo-code of $\linucbtranseg$.
In each episode $k$, similar to algorithm $\edlinucbseg$, $\linucbtranseg$ first computes the least squares estimate of the reward parameter $\hat{\theta}_{k-1}$ and  covariance matrix $\Sigma_{k-1}$ with past observations (Lines~\ref{line:hat_theta_linucb_tran}-\ref{line:Sigma_linucb_tran}).

Then, we introduce the transition estimation in $\linucbtranseg$.
We first define some notation which also appears in algorithm $\bitssegtran$.
For any $k>0$ and $(s,a) \in \cS \times \cA$, let $\hat{p}_{k}(\cdot|s,a)$ denote the empirical estimate of $p(\cdot|s,a)$, and $n_{k}(s,a)$ denote the number of times $(s,a)$ was visited up to the end of episode $k$.
In addition, for any policy $\pi$, let $\hat{\phi}_{k}^{\pi}(s,a)$ denote the expected number of times $(s,a)$ is visited in an episode under policy $\pi$ on empirical MDP $\hat{p}_{k-1}$ (see Eq.~\eqref{eq:def_hat_phi} for the formal definition). 

Below we establish a bound for the deviation between  $\hat{\phi}_{k-1}^{\pi}$ and $\phi^{\pi}$. 
For ease of analysis, we first connect $\phi^{\pi}$ with a newly-defined visitation value function $G^{\pi;s',a'}_h(s;p)$. 
For any transition model $p'$, policy $\pi$ and $(s',a') \in \cS \times \cA$, if regarding hitting $(s',a')$ as an instantaneous reward one, then we can define a visitation value function:
\begin{equation}
	\left\{\begin{matrix}
		G^{\pi;s',a'}_h(s;p') =& \indicator\{ s=s', \pi_h(s)=a' \} + p(\cdot|s,\pi_h(s))^\top G^{\pi;s',a'}_{h+1}(\cdot) , \hspace*{0.5em} \forall s \in \cS,\ \forall h \in [H] ,
		\\
		G^{\pi;s',a'}_{H+1}(s;p') =& 0 , \hspace*{15.7em} \forall s \in \cS . \label{eq:visit_value_function}
	\end{matrix}\right. 
\end{equation}
$G^{\pi;s',a'}_h(s;p')$ denotes the expected cumulative number of times $(s',a')$ was hit starting from $s$ at step $h$ under policy $\pi$ on MDP $p'$,  till the end of this episode. It holds that $\phi^{\pi}(s',a')=\ex_{s_1 \sim \rho}[G^{\pi;s',a'}_1(s_1|p)]$ and $\hat{\phi}_{k-1}^{\pi}(s',a')=\ex_{s_1 \sim \rho}[G^{\pi;s',a'}_1(s_1|\hat{p}_{k-1})]$ for any $(s',a') \in \cS \times \cA$.

With the definition of $G^{\pi;s',a'}_h$, bounding the deviation between $\hat{\phi}_{k-1}^{\pi}$ and $\phi^{\pi}$ is similar to bounding the gap between the estimated and true value functions. Then, we can build a Bernstern-type uncertainty bound between $\hat{\phi}_{k-1}^{\pi}$ and $\phi^{\pi}$ using the variance of $G^{\pi;s',a'}_h$.
For any policy $\pi$, $(s',a') \in \cS \times \cA$ and $k>0$, define
\begin{equation}
	\left\{\begin{matrix}
		B^{\pi;s',a';k}_h(s)=& \hspace*{-5em} \min\bigg\{ \Big( 4\sqrt{\frac{ \var_{\hat{p}_{k-1}(\cdot|s,\pi_h(s))}(G^{\pi;s',a'}_{h+1}(\cdot|\hat{p}_{k-1})) \cdot L}{{n_{k-1}(s,\pi_h(s))}}}  + \frac{13H^2L}{n_{k-1}(s,\pi_h(s))} 
		\\& + \sbr{1+\frac{2}{H}} \hat{p}_{k-1}(\cdot|s,\pi_h(s))^\top B^{\pi;s',a';k}_{h+1}(\cdot) \Big) ,\ H \bigg\} , \hspace*{1em} \forall s \in \cS , \ \forall h \in [H] ,
		\\
		B^{\pi;s',a';k}_{H+1}(s)=&0 , \hspace*{15.7em} \forall s \in \cS .
	\end{matrix}\right. \label{eq:def_tran_bonus}
\end{equation}
The construction of $B^{\pi;s',a';k}_h(s)$ satisfies (see Lemma~\ref{lemma:error_in_visitation} for more details) 
\begin{align*}
	|\hat{\phi}_{k-1}^{\pi}(s',a')-\phi^{\pi}(s',a')| &\leq \ex_{s_1 \sim \rho}\mbr{B^{\pi;s',a';k}_1(s_1)}, \quad \forall (s',a') \in \cS \times \cA ,
	\\
	\|\hat{\phi}_{k-1}^{\pi} - \phi^{\pi}\|_1 &\leq \sum_{(s',a')} \ex_{s_1 \sim \rho}\mbr{B^{\pi;s',a';k}_1(s_1)} .
\end{align*} 
Incorporating this transition uncertainty $\ex_{s_1 \sim \rho}[B^{\pi;s',a';k}_1(s_1)]$ and reward uncertainty $\|\hat{\phi}_{k-1}^{\pi}\|_{(\Sigma_{k-1})^{-1}}$ into exploration bonuses, $\linucbtranseg$ computes the optimal policy $\pi^k$ under optimistic estimation (Line~\ref{line:pi_k_linucb_tran}). After that, $\linucbtranseg$ plays episode $k$ with $\pi^k$, and collects trajectory $\tau^k$ and reward observation on each segment $\{R^k_i\}_{i=1}^{m}$ (Line~\ref{line:play_linucb_tran}).

\subsection{Proof for the Regret Upper Bound with Unknown Transition} \label{apx:ub_sum_unknown_tran}

In the following, we prove the regret upper bound (Theorem~\ref{thm:ub_sum_unknown_tran}) of algorithm $\linucbtranseg$ for unknown transition.

Recall the definition of events $\cG_{\textup{KL}}$ and $\cH$ in Eqs.~\eqref{eq:def_event_tran_kl} and \eqref{eq:def_event_visitation_bernoulli}, respectively.

For any $k>0$, define the set of state-action pairs  
\begin{align}
	D_k:=\lbr{ (s,a) \in \cS \times \cA:\  \frac{1}{4} \sum_{k'=1}^{k} w_{k'}(s,a) \geq H \log\sbr{\frac{|\cS||\cA|H}{\delta'}} + H } . \label{eq:def_B_k}
\end{align}
$D_k$ stands for the set of state-action pairs which have sufficient visitations in expectation.

\begin{lemma}\label{lemma:B_k_con_visit}
	Assume that event $\cH$ holds. Then, if $(s,a) \in D_k$, 
	\begin{align*}
		n_{k-1}(s,a) \geq \frac{1}{4} \sum_{k'=1}^{k} w_{k'}(s,a) .
	\end{align*}
\end{lemma}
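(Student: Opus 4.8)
The plan is to instantiate the concentration event $\cH$ (Eq.~\eqref{eq:def_event_visitation_bernoulli}) at index $k-1$ and then exploit the defining inequality of $D_k$ (Eq.~\eqref{eq:def_B_k}) to absorb the additive slack terms.

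First I would apply the definition of $\cH$ with $k$ replaced by $k-1$, which gives $n_{k-1}(s,a) \geq \frac{1}{2}\sum_{k'=1}^{k-1} w_{k'}(s,a) - H\log(\frac{|\cS||\cA|H}{\delta'})$. Next I would rewrite the truncated sum in terms of the full sum appearing in the definition of $D_k$ via $\sum_{k'=1}^{k-1} w_{k'}(s,a) = \sum_{k'=1}^{k} w_{k'}(s,a) - w_k(s,a)$, and bound the dropped term using $w_k(s,a) = \sum_{h=1}^{H} w_{k,h}(s,a) \leq H$, since each $w_{k,h}(s,a)$ is a probability. Combining these yields $n_{k-1}(s,a) \geq \frac{1}{2}\sum_{k'=1}^{k} w_{k'}(s,a) - \frac{H}{2} - H\log(\frac{|\cS||\cA|H}{\delta'})$.

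The final step is to check that this lower bound is at least $\frac{1}{4}\sum_{k'=1}^{k} w_{k'}(s,a)$; rearranging, this is equivalent to $\frac{1}{4}\sum_{k'=1}^{k} w_{k'}(s,a) \geq \frac{H}{2} + H\log(\frac{|\cS||\cA|H}{\delta'})$. The membership condition $(s,a)\in D_k$ supplies exactly $\frac{1}{4}\sum_{k'=1}^{k} w_{k'}(s,a) \geq H\log(\frac{|\cS||\cA|H}{\delta'}) + H$, and since $H \geq \frac{H}{2}$ the required inequality follows immediately. There is no genuine obstacle in this lemma; the only point to watch is the bookkeeping of constants—namely verifying that the $\frac{H}{2}$ slack created by discarding the final episode's visitation mass $w_k(s,a)$ is comfortably covered by the $+H$ buffer deliberately built into the threshold defining $D_k$, which it is.
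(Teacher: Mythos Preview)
Your proposal is correct and is essentially identical to the paper's proof: both apply $\cH$ at index $k-1$, rewrite $\sum_{k'=1}^{k-1}w_{k'}(s,a)$ as $\sum_{k'=1}^{k}w_{k'}(s,a)-w_k(s,a)$, bound $w_k(s,a)\le H$, and then invoke the threshold in the definition of $D_k$ to absorb the slack. The only difference is cosmetic ordering of when the $\tfrac12$ is split into $\tfrac14+\tfrac14$ versus when $w_k(s,a)\le H$ is applied.
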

\begin{proof}
	We have
	\begin{align*}
		n_{k-1}(s,a) &\geq \frac{1}{2} \sum_{k'=1}^{k-1} w_{k'}(s,a) - H \log\sbr{\frac{|\cS||\cA|H}{\delta'}} 
		\\
		&= \frac{1}{4} \sum_{k'=1}^{k-1} w_{k'}(s,a) + \frac{1}{4} \sum_{k'=1}^{k-1} w_{k'}(s,a) - H \log\sbr{\frac{|\cS||\cA|H}{\delta'}}
		\\
		&= \frac{1}{4} \sum_{k'=1}^{k} w_{k'}(s,a) + \frac{1}{4} \sum_{k'=1}^{k} w_{k'}(s,a) - H \log\sbr{\frac{|\cS||\cA|H}{\delta'}} - \frac{1}{2} w_{k}(s,a)
		\\
		&\overset{\textup{(a)}}{\geq} \frac{1}{4} \sum_{k'=1}^{k} w_{k'}(s,a) + H - \frac{1}{2} w_{k}(s,a)
		\\
		&\geq \frac{1}{4} \sum_{k'=1}^{k} w_{k'}(s,a) ,
	\end{align*}
	where (a) is due to the definition of $D_k$ (Eq.~\eqref{eq:def_B_k}).
\end{proof}

\begin{lemma} \label{lemma:regret_notin_D_k}
	It holds that
	\begin{align*}
		\sum_{k=1}^{K} \sum_{h=1}^{H} \sum_{(s,a) \notin D_k} w_{k,h}(s,a) \leq 8|\cS||\cA|H \log\sbr{\frac{|\cS||\cA|H}{\delta'}} .
	\end{align*}
\end{lemma}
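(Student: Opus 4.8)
The plan is to collapse the double sum over steps into a sum of per-episode expected visitation counts, and then exploit the fact that the sets $D_k$ are monotone in $k$. First I would use the definition $w_k(s,a)=\sum_{h=1}^{H}w_{k,h}(s,a)$ to rewrite the left-hand side as $\sum_{k=1}^{K}\sum_{(s,a)\notin D_k} w_k(s,a)$, and then swap the order of summation to get $\sum_{(s,a)}\sum_{k:\,(s,a)\notin D_k} w_k(s,a)$. This isolates the problem into bounding, for each fixed pair $(s,a)$, the total expected visitation accrued over exactly those episodes in which $(s,a)$ has not yet entered $D_k$.

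The key observation is that membership in $D_k$ is monotone in $k$. The cumulative quantity $\sum_{k'=1}^{k} w_{k'}(s,a)$ is non-decreasing in $k$, while the threshold $H\log\sbr{\frac{|\cS||\cA|H}{\delta'}}+H$ defining $D_k$ is fixed; hence once $(s,a)\in D_k$ it remains in $D_{k'}$ for all $k'\geq k$. Consequently, for each $(s,a)$ the index set $\lbr{k:\,(s,a)\notin D_k}$ is an initial segment $\lbr{1,\dots,k_0}$. Letting $k_0$ be the last episode with $(s,a)\notin D_{k_0}$, the sum $\sum_{k=1}^{k_0} w_k(s,a)$ equals the cumulative sum at $k_0$, which by the negation of the membership condition for $D_{k_0}$ is strictly less than $4\sbr{H\log\sbr{\frac{|\cS||\cA|H}{\delta'}}+H}$. (If $(s,a)$ is never excluded, the per-pair sum is empty and the bound holds trivially.)

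Finally I would sum this per-pair bound over all $|\cS||\cA|$ pairs, giving $4|\cS||\cA|H\log\sbr{\frac{|\cS||\cA|H}{\delta'}}+4|\cS||\cA|H$, and absorb the second term into the first using $\log\sbr{\frac{|\cS||\cA|H}{\delta'}}\geq 1$ to arrive at the stated bound $8|\cS||\cA|H\log\sbr{\frac{|\cS||\cA|H}{\delta'}}$. I do not expect a genuine obstacle here: the only step requiring care is the monotonicity argument that turns the $k$-dependent exclusion set into an initial segment, so that the per-pair sum collapses to the cumulative count bounded directly by the defining inequality of $D_k$; the mild inequality $\log\sbr{\frac{|\cS||\cA|H}{\delta'}}\geq 1$ is what matches the constant factor in the claimed bound.
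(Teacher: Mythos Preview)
Your proposal is correct and matches the paper's proof essentially step for step: collapse the inner sum to $w_k(s,a)$, swap the order of summation, use that $(s,a)\notin D_k$ forces $\sum_{k'=1}^{k} w_{k'}(s,a) < 4\sbr{H\log\sbr{\frac{|\cS||\cA|H}{\delta'}}+H}$, and then absorb the additive $4|\cS||\cA|H$ term via $\log\sbr{\frac{|\cS||\cA|H}{\delta'}}\geq 1$. You make the monotonicity argument (that the exclusion set $\{k:(s,a)\notin D_k\}$ is an initial segment) more explicit than the paper does, which is a slight clarity improvement but not a different approach.
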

\begin{proof}
	If $(s,a) \notin D_k$, then
	\begin{align*}
		\frac{1}{4} \sum_{k'=1}^{k} w_{k'}(s,a) < H \log\sbr{\frac{|\cS||\cA|H}{\delta'}} + H .
	\end{align*}
	Thus, we have
	\begin{align*}
		\sum_{k=1}^{K} \sum_{h=1}^{H} \sum_{(s,a) \notin D_k} w_{k,h}(s,a) &= \sum_{(s,a)} \sum_{k=1}^{K} \sum_{h=1}^{H} \indicator\{ (s,a) \notin D_k \} \cdot w_{k,h}(s,a)
		\\
		&= \sum_{(s,a)} \sum_{k=1}^{K} \indicator\{ (s,a) \notin D_k \} \cdot w_{k}(s,a)
		\\
		&\leq 4|\cS||\cA|H \log\sbr{\frac{|\cS||\cA|H}{\delta'}} + 4|\cS||\cA|H 
		\\
		&\leq 8|\cS||\cA|H \log\sbr{\frac{|\cS||\cA|H}{\delta'}} .
	\end{align*}
	
\end{proof}

\begin{lemma} \label{lemma:visitation_ratio}
	Assume that event $\cH$ holds. Then, we have
	\begin{align*}
		\sum_{k=1}^{K} \sum_{h=1}^{H} \sum_{(s,a) \in D_k} \frac{w_{k,h}(s,a)}{n_{k-1}(s,a)} \leq 4 |\cS| |\cA| \log(2KH) .
	\end{align*}
\end{lemma}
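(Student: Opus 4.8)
The plan is to first collapse the innermost sum over the horizon and then reduce the whole expression to a standard ``log-telescoping'' estimate for each state--action pair separately. Since $w_k(s,a)=\sum_{h=1}^{H}w_{k,h}(s,a)$ by definition, summing over $h$ turns the left-hand side into $\sum_{k=1}^{K}\sum_{(s,a)\in D_k} w_k(s,a)/n_{k-1}(s,a)$. On event $\cH$, Lemma~\ref{lemma:B_k_con_visit} guarantees that $n_{k-1}(s,a)\geq \frac{1}{4}\sum_{k'=1}^{k}w_{k'}(s,a)$ whenever $(s,a)\in D_k$, so writing $S_k(s,a):=\sum_{k'=1}^{k}w_{k'}(s,a)$ I can upper bound each summand by $4\,w_k(s,a)/S_k(s,a)$ and pull out the factor $4$. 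This leaves me to prove that $\sum_{k:(s,a)\in D_k} w_k(s,a)/S_k(s,a)\leq \log(2KH)$ for each fixed $(s,a)$; summing over the $|\cS||\cA|$ pairs then yields the claimed bound.

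For the per-pair estimate I would first note that membership in $D_k$ is monotone in $k$: the partial sums $S_k(s,a)$ are non-decreasing, so once the threshold in Eq.~\eqref{eq:def_B_k} is crossed it stays crossed, and the set $\{k:(s,a)\in D_k\}$ is a suffix $\{k_0,\dots,K\}$. This monotonicity is what makes the sum telescope cleanly. For each $k$ in this range, the defining inequality of $D_k$ gives $S_k(s,a)\geq 4H$, while $w_k(s,a)\leq H$ (a sum of $H$ probabilities), hence $S_{k-1}(s,a)=S_k(s,a)-w_k(s,a)\geq 3H>0$. Positivity of $S_{k-1}$ is the crucial point, since it lets me apply the elementary inequality $1-x\leq \log(1/x)$ with $x=S_{k-1}(s,a)/S_k(s,a)$ to obtain
\begin{align*}
\frac{w_k(s,a)}{S_k(s,a)} = 1-\frac{S_{k-1}(s,a)}{S_k(s,a)} \leq \log\!\frac{S_k(s,a)}{S_{k-1}(s,a)} .
\end{align*}

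Summing this over $k=k_0,\dots,K$ telescopes to $\log\big(S_K(s,a)/S_{k_0-1}(s,a)\big)$, and since $S_K(s,a)\leq KH$ and $S_{k_0-1}(s,a)\geq 3H\geq 1$ this is at most $\log(KH)\leq\log(2KH)$. Reinstating the factor $4$ and summing over all $(s,a)\in\cS\times\cA$ produces $4|\cS||\cA|\log(2KH)$, exactly as required.

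The only real obstacle is the boundary handling: the log-telescoping inequality requires the denominator $S_{k-1}(s,a)$ to be strictly positive, which fails for early, rarely visited pairs. The restriction to $D_k$ (together with $w_k(s,a)\le H$) is precisely what rescues this, forcing $S_{k-1}(s,a)\geq 3H$ on every episode that actually contributes; I would therefore take care to invoke the definition of $D_k$ and the monotonicity of $D_k$-membership explicitly, rather than treating the telescoping as automatic.
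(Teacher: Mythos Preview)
Your proposal is correct and follows essentially the same approach as the paper: collapse the sum over $h$, invoke Lemma~\ref{lemma:B_k_con_visit} on event $\cH$ to replace $n_{k-1}(s,a)$ by $\frac{1}{4}S_k(s,a)$, and then bound the resulting harmonic-type sum by a log telescope. The paper delegates the last step to the analysis of Lemma~13 in \citep{zanette2019tighter}, whereas you spell it out explicitly via $1-x\leq\log(1/x)$ and the monotonicity of $D_k$-membership, with your boundary check $S_{k-1}(s,a)\geq 3H>0$ doing exactly the work that the restriction to $D_k$ is meant to buy.
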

\begin{proof}
	It holds that
	\begin{align*}
		\sum_{k=1}^{K} \sum_{h=1}^{H} \sum_{(s,a) \in D_k} \frac{w_{k,h}(s,a)}{n_{k-1}(s,a)} &= \sum_{k=1}^{K}  \sum_{(s,a) \in D_k} \frac{w_{k}(s,a)}{n_{k-1}(s,a)}
		\\
		&= \sum_{k=1}^{K}  \sum_{(s,a)} \frac{w_{k}(s,a)}{n_{k-1}(s,a)} \cdot \indicator\{ (s,a) \in D_k \}
		\\
		&\overset{\textup{(a)}}{\leq} 4 \sum_{k=1}^{K}  \sum_{(s,a)} \frac{w_{k}(s,a)}{\sum_{k'=1}^{k} w_k(s,a)} \cdot \indicator\{ (s,a) \in D_k \}
		\\
		&= 4 \sum_{(s,a)} \sum_{k=1}^{K} \frac{w_{k}(s,a)}{\sum_{k'=1}^{k} w_k(s,a)} \cdot \indicator\{ (s,a) \in D_k \}
		\\
		&\overset{\textup{(b)}}{\leq} 4 |\cS| |\cA| \log(2KH) ,
	\end{align*}
	where (a) uses Lemma~\ref{lemma:B_k_con_visit}, and (b) follows from the analysis of Lemma 13 in \citep{zanette2019tighter}.
\end{proof}

\begin{lemma}[Error in Visitation Vectors] \label{lemma:error_in_visitation}
	Assume that event $\cG_{\textup{KL}}$ holds. Then, for any $k>0$ and policy $\pi$,
	\begin{align*}
		\|\hat{\phi}_{k-1}(\pi) -\phi(\pi) \|_1  \leq \sum_{s',a'} \ex_{s_1 \sim \rho}\mbr{B^{\pi;s',a';k}_1(s_1)} .
	\end{align*}
\end{lemma}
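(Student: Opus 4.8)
The plan is to prove a pointwise bound on the visitation value functions $G^{\pi;s',a'}_h$ and then lift it to the stated $\ell_1$ inequality. First I would reduce the claim coordinate-by-coordinate. Since $\phi^{\pi}(s',a')=\ex_{s_1 \sim \rho}[G^{\pi;s',a'}_1(s_1;p)]$ and $\hat{\phi}^{\pi}_{k-1}(s',a')=\ex_{s_1 \sim \rho}[G^{\pi;s',a'}_1(s_1;\hat{p}_{k-1})]$, the triangle inequality gives
\begin{align*}
	\nbr{ \hat{\phi}_{k-1}(\pi) - \phi(\pi) }_1 \leq \sum_{s',a'} \ex_{s_1 \sim \rho}\mbr{ \abr{ G^{\pi;s',a'}_1(s_1;\hat{p}_{k-1}) - G^{\pi;s',a'}_1(s_1;p) } } ,
\end{align*}
so it suffices to establish the pointwise estimate $|G^{\pi;s',a'}_h(s;\hat{p}_{k-1}) - G^{\pi;s',a'}_h(s;p)| \leq B^{\pi;s',a';k}_h(s)$ for every $s \in \cS$ and $h \in [H]$, and then specialize to $h=1$ and take expectation over $s_1 \sim \rho$.

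I would prove this pointwise estimate by backward induction on $h$. The base case $h=H+1$ is immediate, since all three quantities vanish. For the inductive step, fix $(s',a')$, write $a=\pi_h(s)$, and note that the indicator term $\indicator\{s=s',\pi_h(s)=a'\}$ is identical under both transition models and cancels, leaving (suppressing the $(s',a')$ superscript on $G$)
\begin{align*}
	G_h(s;\hat{p}_{k-1}) - G_h(s;p) = \underbrace{ \hat{p}(\cdot|s,a)^\top \sbr{ G_{h+1}(\cdot;\hat{p}_{k-1}) - G_{h+1}(\cdot;p) } }_{(\mathrm{I})} + \underbrace{ \sbr{ \hat{p}(\cdot|s,a) - p(\cdot|s,a) }^\top G_{h+1}(\cdot;p) }_{(\mathrm{II})} .
\end{align*}
Term $(\mathrm{I})$ is controlled directly by the inductive hypothesis, giving $|(\mathrm{I})| \leq \hat{p}(\cdot|s,a)^\top B^{\pi;s',a';k}_{h+1}(\cdot)$, which matches the propagation term in the recursion defining $B^{\pi;s',a';k}_h$.

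The crux is bounding $(\mathrm{II})$. Using the event $\cG_{\textup{KL}}$, which provides $\kl(\hat{p}_{k-1}(\cdot|s,a),p(\cdot|s,a)) \leq L/n_{k-1}(s,a)$, together with a Bernstein-type transportation inequality, I would bound $|(\mathrm{II})|$ by a term of order $\sqrt{\var_{\hat{p}_{k-1}(\cdot|s,a)}(G_{h+1}(\cdot;p)) \cdot L / n_{k-1}(s,a)}$ plus a lower-order $H^2 L/n_{k-1}(s,a)$ term. The main obstacle is that the bonus $B^{\pi;s',a';k}_h$ is stated in terms of the variance of the \emph{empirical} value function $G_{h+1}(\cdot;\hat{p}_{k-1})$, whereas the concentration of $(\mathrm{II})$ naturally produces the variance of the \emph{true} value function $G_{h+1}(\cdot;p)$. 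To close this gap I would apply the elementary inequality $\var_{\hat{p}}(G_{h+1}(\cdot;p)) \leq 2\var_{\hat{p}}(G_{h+1}(\cdot;\hat{p}_{k-1})) + 2\,\hat{p}(\cdot|s,a)^\top (G_{h+1}(\cdot;p)-G_{h+1}(\cdot;\hat{p}_{k-1}))^2$, and bound the last term via $\|G_{h+1}(\cdot;p)-G_{h+1}(\cdot;\hat{p}_{k-1})\|_\infty \leq H$ and the inductive hypothesis by $H \cdot \hat{p}(\cdot|s,a)^\top B^{\pi;s',a';k}_{h+1}(\cdot)$. Applying $\sqrt{xy}\leq \tfrac{x}{H}+\tfrac{Hy}{4}$ to the cross term that results from this variance transfer is precisely what inflates the propagation coefficient from $1$ to $1+\tfrac{2}{H}$ and contributes the additive $\tfrac{13H^2L}{n_{k-1}(s,a)}$ constant. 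Finally, since $0 \leq G^{\pi;s',a'}_h(\cdot;\cdot) \leq H$ always forces $|(\mathrm{I})+(\mathrm{II})| \leq H$, combining the above bounds with the clipping $\min\{\cdot,H\}$ reproduces exactly the recursion defining $B^{\pi;s',a';k}_h(s)$, completing the induction and hence the lemma.
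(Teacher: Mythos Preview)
Your plan is essentially the paper's proof: backward induction on $h$, the same $(\mathrm{I})+(\mathrm{II})$ decomposition, Bernstein--KL transportation for $(\mathrm{II})$, a variance transfer to match the empirical variance appearing in $B^{\pi;s',a';k}_h$, and an AM--GM step to absorb the cross term into the $(1+\tfrac{2}{H})$ propagation factor.

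One small slip worth fixing. Under event $\cG_{\textup{KL}}$ you have $\kl(\hat{p}_{k-1}(\cdot|s,a)\|p(\cdot|s,a))\le L/n_{k-1}(s,a)$, and the transportation inequality (Lemma~\ref{lemma:bernstern_kl}) then yields $\var_{p(\cdot|s,a)}\!\big(G_{h+1}(\cdot;p)\big)$ --- variance under the \emph{true} $p$, not under $\hat{p}_{k-1}$ as you wrote. So there are two transfers needed, not one: first swap the measure $p\to\hat{p}_{k-1}$ (costing an additive $O(H^2L/n_{k-1})$, via Lemma~\ref{lemma:var_change_p}), and then swap the function $G_{h+1}(\cdot;p)\to G_{h+1}(\cdot;\hat{p}_{k-1})$ as you describe (Lemma~\ref{lemma:var_change_f}). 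The paper performs both, and it is the accumulation of these lower-order terms that produces the exact constants $4$ and $13$ in the bonus. With this correction your argument goes through verbatim.
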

\begin{proof}
	Since $\phi^{\pi}(s',a')=\ex_{s_1 \sim \rho}[G^{\pi;s',a'}_1(s_1|p)]$ and $\hat{\phi}_{k-1}^{\pi}(s',a')=\ex_{s_1 \sim \rho}[G^{\pi;s',a'}_1(s_1|\hat{p}_{k-1})]$, in this proof, we investigate the error in $G^{\pi;s',a'}_h$ due to the estimation of the transition model.
	
	In the following, we prove by induction that for any $h 
	\in [H]$ and $s \in \cS$, $|G^{\pi;s',a'}_{h}(s|\hat{p}_{k-1}) - G^{\pi;s',a'}_{h}(s|p)| \leq B^{\pi;s',a';k}_{h}(s)$.
	
	When $h=H+1$, by definition, we have $G^{\pi;s',a'}_{H+1}(s|\hat{p}_{k-1}) = G^{\pi;s',a'}_{H+1}(s|p) = B^{\pi;s',a';k}_{H+1}(s) = 0$ for any $s \in \cS$, and then the above statement trivially holds. 
	
	When $1\leq h \leq H$, if $|G^{\pi;s',a'}_{h+1}(\cdot|\hat{p}_{k-1}) - G^{\pi;s',a'}_{h+1}(\cdot|p)| \leq B^{\pi;s',a';k}_{h+1}(\cdot)$ element-wise, then for any $s \in \cS$, we have 
	\begin{align}
		&\quad |G^{\pi;s',a'}_h(s|\hat{p}_{k-1}) - G^{\pi;s',a'}_h(s|p)| 
		\nonumber\\
		&= \abr{\hat{p}_{k-1}(\cdot|s,\pi_h(s))^\top G^{\pi;s',a'}_{h+1}(\cdot|\hat{p}_{k-1}) - p(\cdot|s,\pi_h(s))^\top G^{\pi;s',a'}_{h+1}(\cdot|p)}
		\nonumber\\
		&= \hat{p}_{k-1}(\cdot|s,\pi_h(s))^\top \abr{G^{\pi;s',a'}_{h+1}(\cdot|\hat{p}_{k-1}) - G^{\pi;s',a'}_{h+1}(\cdot|p) }
		\nonumber\\
		&\quad + \abr{\sbr{\hat{p}_{k-1}(\cdot|s,\pi_h(s)) - p(\cdot|s,\pi_h(s))}^\top G^{\pi;s',a'}_{h+1}(\cdot|p) }
		\nonumber\\
		&\overset{\textup{(a)}}{\leq} \hat{p}_{k-1}(\cdot|s,\pi_h(s))^\top \abr{G^{\pi;s',a'}_{h+1}(\cdot|\hat{p}_{k-1}) - G^{\pi;s',a'}_{h+1}(\cdot|p) } +  2\sqrt{\frac{\var_{p(\cdot|s,\pi_h(s))}(G^{\pi;s',a'}_{h+1}(\cdot|p)) \cdot L}{n_{k-1}(s,\pi_h(s))}} 
		\nonumber\\
		&\quad + \frac{HL}{n_{k-1}(s,\pi_h(s))}  , \label{eq:diff_G_hat_p_G_p}
	\end{align}
	where (a) is due to Lemma~\ref{lemma:bernstern_kl}.

	Here, we have
	\begin{align*}
		&\quad\ \var_{p(\cdot|s,\pi_h(s))}(G^{\pi;s',a'}_{h+1}(\cdot|p)) \\
		&\overset{\textup{(a)}}{\leq} 2 \var_{\hat{p}_{k-1}(\cdot|s,\pi_h(s))}(G^{\pi;s',a'}_{h+1}(\cdot|p)) + \frac{4H^2L}{n_{k-1}(s,\pi_h(s))}
		\\
		&\overset{\textup{(b)}}{\leq} 4 \var_{\hat{p}_{k-1}(\cdot|s,\pi_h(s))}(G^{\pi;s',a'}_{h+1}(\cdot|\hat{p}_{k-1}))+ 4H \hat{p}_{k-1}(\cdot|s,\pi_h(s))^\top |G^{\pi;s',a'}_{h+1}(\cdot|\hat{p}_{k-1}) - G^{\pi;s',a'}_{h+1}(\cdot|p) | \\& + \frac{4H^2L}{n_{k-1}(s,\pi_h(s))} ,
	\end{align*}
	where (a) uses Lemma~\ref{lemma:var_change_p} and (b) comes from Lemma~\ref{lemma:var_change_f}.
	
	Then,
	\begin{align}
		&\quad\ \sqrt{\frac{\var_{p(\cdot|s,\pi_h(s))}(G^{\pi;s',a'}_{h+1}(\cdot|p)) \cdot L}{n_{k-1}(s,\pi_h(s))}}
		\\
		&\leq \sqrt{\frac{4 \var_{\hat{p}_{k-1}(\cdot|s,\pi_h(s))}(G^{\pi;s',a'}_{h+1}(\cdot|\hat{p}_{k-1})) \cdot L}{{n_{k-1}(s,\pi_h(s))}}} 
		\nonumber\\&\quad +\! \sqrt{\frac{1}{H} \hat{p}_{k-1}(\cdot|s,\pi_h(s))^\top |G^{\pi;s',a'}_{h+1}(\cdot|\hat{p}_{k-1}) \!-\! G^{\pi;s',a'}_{h+1}(\cdot|p) | \cdot  \frac{4H^2L}{{n_{k-1}(s,\pi_h(s))}}} \!+\! \frac{2HL}{n_{k-1}(s,\pi_h(s))}
		\nonumber\\
		&\overset{\textup{(a)}}{\leq} 2\sqrt{\frac{ \var_{\hat{p}_{k-1}(\cdot|s,\pi_h(s))}(G^{\pi;s',a'}_{h+1}(\cdot|\hat{p}_{k-1})) \cdot L}{{n_{k-1}(s,\pi_h(s))}}} 
		\nonumber\\&\quad + \frac{1}{H} \hat{p}_{k-1}(\cdot|s,\pi_h(s))^\top |G^{\pi;s',a'}_{h+1}(\cdot|\hat{p}_{k-1}) - G^{\pi;s',a'}_{h+1}(\cdot|p) |  + \frac{6H^2L}{n_{k-1}(s,\pi_h(s))} , \label{eq:var_G_p}
	\end{align}
	where (a) is due to the fact that $\sqrt{xy} \leq x+y$.
	
	Hence, plugging Eq.~\eqref{eq:var_G_p} into Eq.~\eqref{eq:diff_G_hat_p_G_p} and using the fact that $|G^{\pi;s',a'}_h(s)| \in [0,H]$, we have
	\begin{align*}
		&\quad\ |G^{\pi;s',a'}_h(s|\hat{p}_{k-1}) - G^{\pi;s',a'}_h(s|p)|
		\\
		&\leq \Bigg(4\sqrt{\frac{ \var_{\hat{p}_{k-1}(\cdot|s,\pi_h(s))}(G^{\pi;s',a'}_{h+1}(\cdot|\hat{p}_{k-1})) \cdot L}{{n_{k-1}(s,\pi_h(s))}}}  + \frac{13H^2L}{n_{k-1}(s,\pi_h(s))}
		\\& \quad + \sbr{1+\frac{2}{H}} \hat{p}_{k-1}(\cdot|s,\pi_h(s))^\top \abr{G^{\pi;s',a'}_{h+1}(\cdot|\hat{p}_{k-1}) - G^{\pi;s',a'}_{h+1}(\cdot|p) } \Bigg) \wedge H  .
		\\
		&\leq \Bigg(4\sqrt{\frac{ \var_{\hat{p}_{k-1}(\cdot|s,\pi_h(s))}(G^{\pi;s',a'}_{h+1}(\cdot|\hat{p}_{k-1})) \cdot L}{{n_{k-1}(s,\pi_h(s))}}}  + \frac{13H^2L}{n_{k-1}(s,\pi_h(s))}
		\\& \quad + \sbr{1+\frac{2}{H}} \hat{p}_{k-1}(\cdot|s,\pi_h(s))^\top B^{\pi;s',a';k}_{h+1}(\cdot) \Bigg) \wedge H 
		\\
		&= B^{\pi;s',a';k}_{h}(s) ,
	\end{align*}
	which completes the induction proof.

	Therefore, 
	\begin{align*}
		\Big|\hat{\phi}_{k-1}^{\pi}(s',a')-\phi^{\pi}(s',a')\Big| 
		&= \abr{ \ex_{s_1 \sim \rho}\mbr{G^{\pi;s',a'}_1(s_1|\hat{p}_{k-1})} - \ex_{s_1 \sim \rho}\mbr{G^{\pi;s',a'}_1(s_1|p)} }
		\\
		&\leq \ex_{s_1 \sim \rho}\mbr { \abr{ G^{\pi;s',a'}_1(s_1|\hat{p}_{k-1}) - G^{\pi;s',a'}_1(s_1|p) } }
		\\
		&\leq \ex_{s_1 \sim \rho}\mbr{B^{\pi;s',a';k}_1(s_1)} .
	\end{align*}
	Summing over $(s',a') \in \cS \times \cA$, we obtain this lemma.
\end{proof}

\begin{lemma}\label{lemma:ub_B}
	Assume that event $\cG_{\textup{KL}} \cap \cH$ holds. Then, for any $k>0$ and policy $\pi$,
	\begin{align*}
		&\ex_{s_1 \sim \rho}\mbr{B^{\pi;s',a';k}_1(s_1)} 
		\\
		&\leq e^{12} \sum_{h=1}^{H} \sum_{s,a}  w^{\pi}_h(s,a)  \sbr{ 8  \sqrt{ \frac{\var_{p(\cdot|s,a)}(G^{\pi;s',a'}_{h+1}(\cdot|p)) \cdot L}{n_{k-1}(s,a)} }  + \frac{46  H^2L}{n_{k-1}(s,a)} } \wedge H ,
	\end{align*}
	and
	\begin{align*}
		&\sum_{s',a'} \sum_{k=1}^{K}  \ex_{s_1 \sim \rho}\mbr{B^{\pi^k;s',a';k}_1(s_1)} 
		\\
		&\leq 16 e^{12} |\cS|^{\frac{3}{2}} |\cA|^{\frac{3}{2}} H \sqrt{K L \log(2KH) } 
		+ 192 e^{12} |\cS|^2 |\cA|^2 H^2 L \log(2KH) .
	\end{align*}
\end{lemma}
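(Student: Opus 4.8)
The plan is to prove the two displays separately: the per-target bound on $\ex_{s_1 \sim \rho}\mbr{B^{\pi;s',a';k}_1(s_1)}$ first (the technical core), and then the aggregate bound by summing it over $(s',a')$ and $k$.

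For the per-target bound I would unroll the recursion for $B^{\pi;s',a';k}_h$ in Eq.~\eqref{eq:def_tran_bonus}. Ignoring the outer $\min\{\cdot,H\}$ for the moment, $B_h(s) \le c_h(s) + \sbr{1+\frac{2}{H}} \hat{p}_{k-1}(\cdot|s,\pi_h(s))^\top B_{h+1}$ with local cost $c_h(s) = 4\sqrt{\var_{\hat{p}_{k-1}}(G^{\pi;s',a'}_{h+1}(\cdot|\hat{p}_{k-1}))L/n_{k-1}} + 13H^2 L/n_{k-1}$, so telescoping gives $\ex_{s_1 \sim \rho}\mbr{B_1(s_1)} \le \sbr{1+\frac{2}{H}}^{H} \ex_{\hat{p}_{k-1},\pi}\mbr{\sum_{h=1}^{H} c_h(s_h)}$ with $\sbr{1+\frac{2}{H}}^{H} \le e^2$. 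The crucial manipulation is to re-express the cost in true-MDP quantities. Applying Lemmas~\ref{lemma:var_change_f} and \ref{lemma:var_change_p} (the variance-transfer inequalities used in the reverse direction inside Lemma~\ref{lemma:error_in_visitation}) I would bound $\var_{\hat{p}_{k-1}}(G_{h+1}(\cdot|\hat{p}_{k-1}))$ by a constant multiple of $\var_{p}(G_{h+1}(\cdot|p))$ plus a cross term of order $H\, \hat{p}_{k-1}^\top|G_{h+1}(\cdot|\hat{p}_{k-1}) - G_{h+1}(\cdot|p)|$ and an additive $H^2 L/n_{k-1}$. By Lemma~\ref{lemma:error_in_visitation} the cross term is at most $H\,\hat{p}_{k-1}^\top B_{h+1}$; splitting the square root and applying AM-GM turns its contribution into $\frac{c}{H}\hat{p}_{k-1}^\top B_{h+1} + cH^2L/n_{k-1}$, which I fold back into the geometric recursion (the self-bounding step).

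I expect this variance change-of-measure together with the self-bounding absorption to be the main obstacle: one must check that the $\frac{1}{H}\hat{p}_{k-1}^\top B_{h+1}$ terms generated by the AM-GM split, added to the original $\frac{2}{H}$, leave the aggregate growth constant of the form $\sbr{1+\frac{c'}{H}}^{H} \le e^{c'}$, and that the accumulated additive errors collapse to the stated $46H^2L/n_{k-1}$. The remaining ingredient is to pass from the empirical occupancy $\hat{w}^{\pi}_h$ that arises when unrolling under $\hat{p}_{k-1}$ to the true occupancy $w^{\pi}_h$: under event $\cG_{\textup{KL}}$ the per-step kernels are close in KL (and controlled via Lemma~\ref{lemma:bernstern_kl}), and a change-of-measure argument on the occupancy bounds $\sum_h \hat{w}^{\pi}_h f_h$ by $e^{O(1)}\sum_h w^{\pi}_h f_h$ for nonnegative $f$. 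Multiplying the three $e^{O(1)}$ factors (unrolling, variance transfer, occupancy transfer) and re-attaching the $\wedge H$ truncation gives the first display with the constant $e^{12}$.

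For the aggregate bound I would instantiate the first display with $\pi=\pi^k$, sum over $(s',a')$ and $k$, and split the inner $(s,a)$-sum at the well-visited set $D_k$. For $(s,a)\notin D_k$ I use the $\wedge H$ truncation to bound each summand by $H$ and invoke Lemma~\ref{lemma:regret_notin_D_k}; with the factor $|\cS||\cA|$ from the $(s',a')$-sum this yields the additive term $192 e^{12}|\cS|^2|\cA|^2 H^2 L\log(2KH)$. For $(s,a)\in D_k$, Lemma~\ref{lemma:B_k_con_visit} lower-bounds $n_{k-1}(s,a)$ by $\frac14\sum_{k'\le k} w_{k'}(s,a)$, after which the $H^2L/n_{k-1}$ contribution is handled by Lemma~\ref{lemma:visitation_ratio}. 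The variance contribution is controlled through the law-of-total-variance identity $\sum_{s',a'} G^{\pi;s',a'}_{h+1}(\cdot|p) = H-h$, which gives $\sum_{s',a'}\var_{p(\cdot|s,a)}(G^{\pi;s',a'}_{h+1}(\cdot|p)) \le H^2$ and hence, by Cauchy-Schwarz over $(s',a')$, $\sum_{s',a'}\sqrt{\var_{p(\cdot|s,a)}(G^{\pi;s',a'}_{h+1}(\cdot|p))} \le H\sqrt{|\cS||\cA|}$; a second Cauchy-Schwarz over $(k,h,s,a)$ against $\sum_{k,h,(s,a)\in D_k} w^{\pi^k}_h(s,a) \le KH$ together with Lemma~\ref{lemma:visitation_ratio} then produces the leading $\sqrt{KL\log(2KH)}$ term. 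Collecting the constants gives the stated bound. $\square$
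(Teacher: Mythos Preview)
Your plan for the first display has a genuine gap at the occupancy change-of-measure step. After unrolling under $\hat{p}_{k-1}$ you obtain $\sum_h \hat{w}^{\pi}_h \tilde{c}_h$ and then assert that $\sum_h \hat{w}^{\pi}_h f_h \le e^{O(1)} \sum_h w^{\pi}_h f_h$ for nonnegative $f$ under $\cG_{\textup{KL}}$. This multiplicative bound does not hold: the event only gives $\kl(\hat{p}_{k-1}(\cdot|s,a),p(\cdot|s,a)) \le L/n_{k-1}(s,a)$, which is vacuous for poorly-visited $(s,a)$, and even on well-visited pairs the per-step discrepancy compounds over $H$ steps with no uniform $e^{O(1)}$ cap on the occupancy ratio. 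The paper avoids this by converting $\hat{p}_{k-1}^\top B_{h+1}$ to $p^\top B_{h+1}$ \emph{inside the recursion} before unrolling: it applies Lemma~\ref{lemma:bernstern_kl} to $(\hat{p}_{k-1}-p)^\top B_{h+1}$, bounds the resulting $\sqrt{\var_p(B_{h+1}) L/n_{k-1}}$ via $\var_p(B_{h+1}) \le H\, p^\top B_{h+1}$ and $\sqrt{xy}\le x+y$, and absorbs the generated $\tfrac{c}{H}\,p^\top B_{h+1}$ into the growth factor. Together with your variance-transfer step this yields $B_h \le (\tilde{c}_h \wedge H) + (1+\tfrac{12}{H})\, p(\cdot|s,\pi_h(s))^\top B_{h+1}$, so the unrolling takes place directly under $p$ with true occupancy $w^\pi_h$ and aggregate factor $(1+\tfrac{12}{H})^H \le e^{12}$; no separate occupancy transfer is needed.

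For the second display, your order of Cauchy--Schwarz does not reproduce the stated bound. Bounding $\sum_{s',a'}\sqrt{\var_p(G^{s',a'}_{h+1})} \le H\sqrt{|\cS||\cA|}$ pointwise in $(h,s,a)$ and then applying Cauchy--Schwarz over $(k,h,s,a)$ against $\sum w^{\pi^k}_h \le KH$ produces a leading term $16e^{12}|\cS||\cA|H^{3/2}\sqrt{KL\log(2KH)}$, not the stated $|\cS|^{3/2}|\cA|^{3/2}H$ scaling. The paper instead applies Cauchy--Schwarz over $(k,h,s,a)$ for each fixed $(s',a')$ and invokes Lemma~\ref{lemma:law_of_total_variance} with the indicator reward $\indicator\{\cdot=(s',a')\}$ to get $\sum_h \sum_{s,a} w^{\pi^k}_h(s,a)\var_p(G^{\pi^k;s',a'}_{h+1}) \le H^2$ per episode; this saves the extra $\sqrt{H}$ and, after summing the $|\cS||\cA|$ resulting terms, gives the claimed coefficient.
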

\begin{proof}
	First, we prove the first statement.
	
	For any policy $\pi$, $k>0$, $(s',a') \in \cS \times \cA$, $h \in [H]$ and $s \in \cS$, we have
	\begin{align}
		B^{\pi;s',a';k}_h(s) &\leq 4\sqrt{\frac{ \var_{\hat{p}_{k-1}(\cdot|s,\pi_h(s))}(G^{\pi;s',a'}_{h+1}(\cdot|\hat{p}_{k-1})) \cdot L}{{n_{k-1}(s,\pi_h(s))}}}  + \frac{13H^2L}{n_{k-1}(s,\pi_h(s))}
		\nonumber\\& \quad + \sbr{1+\frac{2}{H}} \hat{p}_{k-1}(\cdot|s,\pi_h(s))^\top B^{\pi;s',a';k}_{h+1}(\cdot)
		\nonumber\\
		&= 4\sqrt{\frac{ \var_{\hat{p}_{k-1}(\cdot|s,\pi_h(s))}(G^{\pi;s',a'}_{h+1}(\cdot|\hat{p}_{k-1})) \cdot L}{{n_{k-1}(s,\pi_h(s))}}}  + \frac{13H^2L}{n_{k-1}(s,\pi_h(s))} 
		\nonumber\\& \quad + \sbr{1+\frac{2}{H}} p(\cdot|s,\pi_h(s))^\top B^{\pi;s',a';k}_{h+1}(\cdot) \nonumber\\
		&\quad + \sbr{1+\frac{2}{H}} \sbr{\hat{p}_{k-1}(\cdot|s,\pi_h(s)) - p(\cdot|s,\pi_h(s))}^\top B^{\pi;s',a';k}_{h+1}(\cdot)
		\nonumber\\
		&\overset{\textup{(a)}}{\leq} 4\sqrt{\frac{ \var_{\hat{p}_{k-1}(\cdot|s,\pi_h(s))}(G^{\pi;s',a'}_{h+1}(\cdot|\hat{p}_{k-1})) \cdot L}{{n_{k-1}(s,\pi_h(s))}}}  + \frac{13H^2L}{n_{k-1}(s,\pi_h(s))} 
		\nonumber\\
		&\quad + \sbr{1+\frac{2}{H}} p(\cdot|s,\pi_h(s))^\top B^{\pi;s',a';k}_{h+1}(\cdot) 
		\nonumber\\& \quad +  \sbr{1+\frac{2}{H}} \cdot \sbr{ 2 \sqrt{ \frac{\var_{p(\cdot|s,\pi_h(s))}(B^{\pi;s',a';k}_{h+1}(\cdot)) \cdot L}{n_{k-1}(s,\pi_h(s))}} + \frac{HL}{n_{k-1}(s,\pi_h(s))} }
		\nonumber\\
		&\leq 4\sqrt{\frac{ \var_{\hat{p}_{k-1}(\cdot|s,\pi_h(s))}(G^{\pi;s',a'}_{h+1}(\cdot|\hat{p}_{k-1})) \cdot L}{{n_{k-1}(s,\pi_h(s))}}}  + \frac{13H^2L}{n_{k-1}(s,\pi_h(s))} 
		\nonumber\\
		&\quad + \sbr{1+\frac{2}{H}} p(\cdot|s,\pi_h(s))^\top B^{\pi;s',a';k}_{h+1}(\cdot) 
		\nonumber\\& \quad \!+\! \sbr{1 \!+\! \frac{2}{H}}  \sbr{ 2\sqrt{ \frac{1}{H} p(\cdot|s,\pi_h(s))^\top B^{\pi;s',a';k}_{h+1}(\cdot) \frac{H^2L}{n_{k-1}(s,\pi_h(s)) }} \!+\! \frac{HL}{n_{k-1}(s,\pi_h(s)) } }
		\nonumber\\
		&\overset{\textup{(b)}}{\leq} 4\sqrt{\frac{ \var_{\hat{p}_{k-1}(\cdot|s,\pi_h(s))}(G^{\pi;s',a'}_{h+1}(\cdot|\hat{p}_{k-1})) \cdot L}{{n_{k-1}(s,\pi_h(s))}}}  + \frac{22H^2L}{n_{k-1}(s,\pi_h(s))} 
		\nonumber\\
		&\quad + \sbr{1+\frac{8}{H}} p(\cdot|s,\pi_h(s))^\top B^{\pi;s',a';k}_{h+1}(\cdot) , \label{eq:unfold_B_k}
	\end{align}
	where (a) uses Lemma~\ref{lemma:bernstern_kl}, and (b) follows from the fact that $\sqrt{xy} \leq x + y$.

	In addition, we have
	\begin{align*}
		&\quad\  \var_{\hat{p}_{k-1}(\cdot|s,\pi_h(s))}(G^{\pi;s',a'}_{h+1}(\cdot|\hat{p}_{k-1})) \\
		&\overset{\textup{(a)}}{=} 2\var_{p(\cdot|s,\pi_h(s))}(G^{\pi;s',a'}_{h+1}(\cdot|\hat{p}_{k-1})) + \frac{4H^2L}{n_{k-1}(s,a)}
		\\
		&\overset{\textup{(b)}}{\leq} 4\var_{p(\cdot|s,\pi_h(s))}(G^{\pi;s',a'}_{h+1}(\cdot|p)) + 4H p(\cdot|s,\pi_h(s))^\top \abr{G^{\pi;s',a'}_{h+1}(\cdot|\hat{p}_{k-1}) - G^{\pi;s',a'}_{h+1}(\cdot|p) } \\& \quad + \frac{4H^2L}{n_{k-1}(s,a)} 
		\\
		&\leq 4\var_{p(\cdot|s,\pi_h(s))}(G^{\pi;s',a'}_{h+1}(\cdot|p)) + 4H p(\cdot|s,\pi_h(s))^\top B^{\pi;s',a'}_{h+1}(\cdot|\hat{p}_{k-1}) + \frac{4H^2L}{n_{k-1}(s,a)} ,
	\end{align*}
	where (a) uses Lemma~\ref{lemma:var_change_p}, and (b) comes from Lemma~\ref{lemma:var_change_f}.
	
	Then,
	\begin{align}
		&\quad\ \sqrt{\frac{ \var_{\hat{p}_{k-1}(\cdot|s,\pi_h(s))}(G^{\pi;s',a'}_{h+1}(\cdot|\hat{p}_{k-1})) \cdot L}{{n_{k-1}(s,\pi_h(s))}}} 
		\nonumber\\
		&\leq \sqrt{ \frac{4\var_{p(\cdot|s,\pi_h(s))}(G^{\pi;s',a'}_{h+1}(\cdot|p)) \cdot L}{n_{k-1}(s,\pi_h(s))} } + \sqrt{ \frac{1}{H} p(\cdot|s,\pi_h(s))^\top B^{\pi;s',a'}_{h+1}(\cdot|\hat{p}_{k-1}) \cdot \frac{4H^2L}{n_{k-1}(s,\pi_h(s))} } 
		\nonumber\\ 
		&\quad\  + \frac{2HL}{n_{k-1}(s,a)}
		\nonumber\\
		&\leq 2\sqrt{ \frac{\var_{p(\cdot|s,\pi_h(s))}(G^{\pi;s',a'}_{h+1}(\cdot|p)) \cdot L}{n_{k-1}(s,\pi_h(s))} } + \frac{1}{H} p(\cdot|s,\pi_h(s))^\top B^{\pi;s',a'}_{h+1}(\cdot|\hat{p}_{k-1}) 
		\nonumber\\ 
		&\quad\ + \frac{6H^2L}{n_{k-1}(s,\pi_h(s))} \label{eq:unfold_B_k_var} 
	\end{align}
	
	Plugging Eq.~\eqref{eq:unfold_B_k_var} into Eq.~\eqref{eq:unfold_B_k} and using the clipping definition of $B^{\pi;s',a';k}_h(s)$, we have
	\begin{align*}
		B^{\pi;s',a';k}_h(s) &\leq \sbr{ 8\sqrt{ \frac{\var_{p(\cdot|s,\pi_h(s))}(G^{\pi;s',a'}_{h+1}(\cdot|p)) \cdot L}{n_{k-1}(s,\pi_h(s))} }  + \frac{46H^2L}{n_{k-1}(s,\pi_h(s))} } \wedge H 
		\\
		&\quad + \sbr{1+\frac{12}{H}} p(\cdot|s,\pi_h(s))^\top B^{\pi;s',a';k}_{h+1}(\cdot)
	\end{align*}
	
	Using the above inequality, taking $s_1 \sim \rho$, and unfolding $B^{\pi;s',a';k}_1(s_1)$ over $h$, we have
	\begin{align}
		&\ex_{s_1 \sim \rho}\mbr{B^{\pi;s',a';k}_1(s_1)} 
		\nonumber\\
		&\leq e^{12} \sum_{h=1}^{H} \sum_{s,a}  w^{\pi}_h(s,a)  \sbr{ 8  \sqrt{ \frac{\var_{p(\cdot|s,a)}(G^{\pi;s',a'}_{h+1}(\cdot|p)) \cdot L}{n_{k-1}(s,a)} }  + \frac{46  H^2L}{n_{k-1}(s,a)} } \wedge H . \label{eq:bound_B}
	\end{align}
	
	Next, we prove the second statement.
	
	It holds that
	\begin{align*}
		&\quad \sum_{s',a'} \sum_{k=1}^{K}  \ex_{s_1 \sim \rho}\mbr{B^{\pi^k;s',a';k}_1(s_1)}
		\\
		&\leq  e^{12} \sum_{s',a'} \sum_{k=1}^{K} \sum_{h=1}^{H} \sum_{(s,a) \in D_k}  w_{k,h}(s,a) \sbr{ 8  \sqrt{ \frac{\var_{p(\cdot|s,a)}(G^{\pi^k;s',a'}_{h+1}(\cdot|p)) \cdot L}{n_{k-1}(s,a)} }  + \frac{46  H^2L}{n_{k-1}(s,a)} }
		\\&\quad + e^{12} H |\cS| |\cA| \sum_{k=1}^{K} \sum_{h=1}^{H} \sum_{(s,a) \notin D_k}  w_{k,h}(s,a)  
		\\
		&\overset{\textup{(a)}}{\leq}  8 e^{12} \sqrt{L} \sum_{s',a'} \sqrt{ \sum_{k=1}^{K} \sum_{h=1}^{H} \sum_{(s,a) \in D_k} w_{k,h}(s,a) \var_{p(\cdot|s,a)}(G^{\pi^k;s',a'}_{h+1}(\cdot|p)) } \cdot 
		\\&\quad \sqrt{ \sum_{k=1}^{K} \sum_{h=1}^{H} \sum_{(s,a) \in D_k} \frac{w_{k,h}(s,a) }{n_{k-1}(s,a)} } 
		+ e^{12} |\cS| |\cA| \cdot 46  H^2L \sum_{k=1}^{K} \sum_{h=1}^{H} \sum_{(s,a) \in D_k}   \frac{w_{k,h}(s,a) }{n_{k-1}(s,a)} 
		\\&\quad + 8 e^{12}  |\cS|^2 |\cA|^2 H^2 \log\sbr{\frac{|\cS||\cA|H}{\delta'}}
		\\
		&\overset{\textup{(b)}}{\leq}  8 e^{12} |\cS| |\cA| \sqrt{L}  \sqrt{K H^2} \cdot \sqrt{ 4 |\cS| |\cA| \log(2KH) } 
		+ 184 e^{12} |\cS|^2 |\cA|^2 H^2 L \log(2KH)  \\&\quad + 8 e^{12} |\cS|^2 |\cA|^2 H^2 \log\sbr{\frac{|\cS||\cA|H}{\delta'}} 
		\\
		&\leq  16 e^{12} |\cS|^{\frac{3}{2}} |\cA|^{\frac{3}{2}} H \sqrt{K L \log(2KH) } 
		+ 192 e^{12} |\cS|^2 |\cA|^2 H^2 L \log(2KH) ,
	\end{align*}
	where (a) is due to Lemma~\ref{lemma:regret_notin_D_k}, and (b) follows from Lemmas~\ref{lemma:law_of_total_variance} and \ref{lemma:visitation_ratio}.
\end{proof}

\begin{lemma}[Optimism under Sum Feedback and Unknown Transition]	\label{lemma:optimism}
	Assume that event $\cG_{\textup{KL}}$ holds. Then, for any $k>0$ and fixed policy $\pi$,
	\begin{align*}
		V^{\pi}_1(s_1) &\leq \hat{\phi}_{k-1}(\pi)^\top \hat{\theta}_{k-1} + \beta(k-1) \cdot  \|\hat{\phi}_{k-1}(\pi)\|_{(\Sigma_{k-1})^{-1}} + r_{\max} \sum_{s',a'} \ex_{s_1 \sim \rho}\mbr{B^{\pi;s',a';k}_1(s_1)} .
	\end{align*}
\end{lemma}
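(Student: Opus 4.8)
The plan is to bound the true value $V^{\pi}_1(s_1)=(\phi^{\pi})^\top\theta^*$ by inserting both the estimated visitation vector $\hat{\phi}_{k-1}(\pi)$ and the estimated reward $\hat{\theta}_{k-1}$, so that the gap splits into a reward-estimation part (absorbed by the elliptical bonus $\beta(k-1)\|\hat{\phi}_{k-1}(\pi)\|_{(\Sigma_{k-1})^{-1}}$) and a transition-estimation part (absorbed by the visitation bonus $\sum_{s',a'}\ex_{s_1\sim\rho}\mbr{B^{\pi;s',a';k}_1(s_1)}$). Concretely, recalling the paper's convention $V^{\pi}_1(s_1)=(\phi^{\pi})^\top\theta^*$, I would start from the exact identity
\begin{align*}
	(\phi^{\pi})^\top\theta^* &= \hat{\phi}_{k-1}(\pi)^\top\hat{\theta}_{k-1} + \hat{\phi}_{k-1}(\pi)^\top(\theta^*-\hat{\theta}_{k-1}) + \sbr{\phi^{\pi}-\hat{\phi}_{k-1}(\pi)}^\top\theta^* ,
\end{align*}
which anchors the decomposition at the quantity $\hat{\phi}_{k-1}(\pi)$ that the algorithm actually evaluates in Line~\ref{line:pi_k_linucb_tran}.

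First I would handle the transition-estimation term $(\phi^{\pi}-\hat{\phi}_{k-1}(\pi))^\top\theta^*$. By H\"older's inequality and $\|\theta^*\|_{\infty}\le r_{\max}$ (since $r(s,a)\in[-r_{\max},r_{\max}]$), it is at most $r_{\max}\nbr{\phi^{\pi}-\hat{\phi}_{k-1}(\pi)}_1$. Under event $\cG_{\textup{KL}}$, Lemma~\ref{lemma:error_in_visitation} bounds this $\ell_1$ deviation by $\sum_{s',a'}\ex_{s_1\sim\rho}\mbr{B^{\pi;s',a';k}_1(s_1)}$, which reproduces the third term of the claim exactly. Next I would handle the reward-estimation term $\hat{\phi}_{k-1}(\pi)^\top(\theta^*-\hat{\theta}_{k-1})$: a Cauchy--Schwarz step in the $\Sigma_{k-1}$-geometry gives $\hat{\phi}_{k-1}(\pi)^\top(\theta^*-\hat{\theta}_{k-1})\le\|\hat{\phi}_{k-1}(\pi)\|_{(\Sigma_{k-1})^{-1}}\|\theta^*-\hat{\theta}_{k-1}\|_{\Sigma_{k-1}}$, after which I apply the least-squares reward concentration $\|\hat{\theta}_{k-1}-\theta^*\|_{\Sigma_{k-1}}\le\beta(k-1)$ to recover the second term. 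Summing the three contributions then yields the stated bound.

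The main obstacle is conceptual rather than computational: the lemma nominally assumes only $\cG_{\textup{KL}}$, yet the appearance of $\beta(k-1)$ forces me to invoke the reward-concentration guarantee as well. I would make explicit that $\|\hat{\theta}_{k-1}-\theta^*\|_{\Sigma_{k-1}}\le\beta(k-1)$ is the analog of event $\cK$ from Eq.~\eqref{eq:concentration_theta_seg}, which transfers unchanged to the unknown-transition setting because it depends only on the $\frac{H}{m}$-sub-Gaussian segment-reward noise and is independent of the transition estimate; it is re-derived via Lemma~2 of \citep{abbasi2011improved}. The only other point requiring care is to keep the transition bonus attached to $\hat{\phi}_{k-1}(\pi)$ rather than to $\phi^{\pi}$ throughout, which is exactly why the decomposition is centered at the estimated visitation vector — this guarantees that the reward bonus and the visitation bonus are both measured against the computable surrogate, matching the definition of $\pi^k$ in the algorithm.
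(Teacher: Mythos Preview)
Your proposal is correct and follows essentially the same route as the paper: the paper writes $V^{\pi}_1(s_1)=\phi(\pi)^\top\theta=\hat{\phi}_{k-1}(\pi)^\top\hat{\theta}_{k-1}+(\phi(\pi)-\hat{\phi}_{k-1}(\pi))^\top\theta+\hat{\phi}_{k-1}(\pi)^\top(\theta-\hat{\theta}_{k-1})$, bounds the second summand via $\|\cdot\|_1\cdot\|\theta\|_\infty$ and Lemma~\ref{lemma:error_in_visitation}, and bounds the third via $\beta(k-1)\|\hat{\phi}_{k-1}(\pi)\|_{(\Sigma_{k-1})^{-1}}$, exactly as you do. Your observation that the lemma's stated hypothesis $\cG_{\textup{KL}}$ is insufficient and that the reward-concentration event $\cK$ is also implicitly used is accurate; the paper's own proof invokes $\beta(k-1)$ as a confidence radius without listing $\cK$ in the lemma statement, so your remark in fact flags a minor omission in the paper's presentation.
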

\begin{proof}
	It holds that
	\begin{align*}
		V^{\pi}_1(s_1) &= \phi(\pi)^\top \theta 
		\\
		&= \hat{\phi}_{k-1}(\pi)^\top \hat{\theta}_{k-1} + \phi(\pi)^\top \theta - \hat{\phi}_{k-1}(\pi)^\top \theta + \hat{\phi}_{k-1}(\pi)^\top \theta - \hat{\phi}_{k-1}(\pi)^\top \hat{\theta}_{k-1}
		\\
		&\leq \hat{\phi}_{k-1}(\pi)^\top \hat{\theta}_{k-1} + \|\phi(\pi) - \hat{\phi}_{k-1}(\pi)\|_1 \cdot \|\theta\|_{\infty} + \beta(k-1) \cdot  \|\hat{\phi}_{k-1}(\pi)\|_{(\Sigma_{k-1})^{-1}}
		\\
		&\overset{\textup{(a)}}{\leq} \hat{\phi}_{k-1}(\pi)^\top \hat{\theta}_{k-1} + \beta(k-1) \cdot  \|\hat{\phi}_{k-1}(\pi)\|_{(\Sigma_{k-1})^{-1}} + r_{\max} \sum_{s',a'} \ex_{s_1 \sim \rho}\mbr{B^{\pi;s',a';k}_1(s_1)} ,
	\end{align*}
	where (a) uses Lemma~\ref{lemma:error_in_visitation}.
\end{proof}

\begin{lemma} \label{lemma:sum_phi_original_seg}
	For any $K\geq 1$, we have
	\begin{align*}
		\sum_{k=1}^{K} \sum_{i=1}^{m} \nbr{ \phi^{\tau^k_i}}_{(\Sigma_{k-1})^{-1}} \leq H \sqrt{ \frac{2K |\cS| |\cA|}{\lambda} \log \sbr{ 1+ \frac{  KH^2 }{ \lambda |\cS| |\cA| m} }} .
	\end{align*}
\end{lemma}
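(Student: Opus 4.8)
The plan is to mimic the elliptical-potential argument already used for Lemma~\ref{lemma:sum_phi_seg_binary}, adapted to the covariance matrix $\Sigma_{k-1}=\lambda I+\sum_{k'=1}^{k-1}\sum_{i=1}^{m}\phi^{\tau^{k'}_i}(\phi^{\tau^{k'}_i})^\top$ used by $\linucbtranseg$ (here the ridge term is $\lambda I$ rather than the $\alpha\lambda I$ of the binary algorithm). First I would apply the Cauchy--Schwarz inequality over the $Km$ summands to pass from the $\ell_1$-type quantity to an $\ell_2$-type one:
\[
\sum_{k=1}^{K}\sum_{i=1}^{m}\nbr{\phi^{\tau^k_i}}_{(\Sigma_{k-1})^{-1}}\le\sqrt{Km\sum_{k=1}^{K}\sum_{i=1}^{m}\nbr{\phi^{\tau^k_i}}_{(\Sigma_{k-1})^{-1}}^{2}}.
\]

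Next I would control the per-episode sum $u_k:=\sum_{i=1}^{m}\nbr{\phi^{\tau^k_i}}_{(\Sigma_{k-1})^{-1}}^{2}$. Since $\Sigma_{k-1}\succeq\lambda I$ we have $(\Sigma_{k-1})^{-1}\preceq\lambda^{-1}I$, and a single segment visitation vector satisfies $\nbr{\phi^{\tau^k_i}}_2\le\nbr{\phi^{\tau^k_i}}_1=\frac{H}{m}$ because a segment contains $H/m$ steps; hence $\nbr{\phi^{\tau^k_i}}_{(\Sigma_{k-1})^{-1}}^{2}\le\frac{H^2}{\lambda m^2}$ and therefore $u_k\le\frac{H^2}{\lambda m}$. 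With $\lambda=\frac{H}{m}$ this bound equals $H\ge 1$, which is the key quantitative point that lets the generic $\max\{c,1\}$ factor collapse. Applying the elementary inequality $x\le 2\max\{c,1\}\log(1+x)$ for $x\in[0,c]$ with $c=\frac{H^2}{\lambda m}$ to $x=u_k$ and summing over $k$ gives $\sum_{k=1}^{K}u_k\le\frac{2H^2}{\lambda m}\sum_{k=1}^{K}\log(1+u_k)$.

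I would then invoke Lemma~\ref{lemma:log_det_seg}, which bounds the telescoping log-determinant, to obtain $\sum_{k=1}^{K}\log(1+u_k)=\log\frac{\det(\Sigma_K)}{\det(\lambda I)}\le|\cS||\cA|\log\!\sbr{1+\frac{KH^2}{\lambda|\cS||\cA|m}}$. Chaining the last two displays yields $\sum_{k,i}\nbr{\phi^{\tau^k_i}}_{(\Sigma_{k-1})^{-1}}^{2}\le\frac{2H^2|\cS||\cA|}{\lambda m}\log\!\sbr{1+\frac{KH^2}{\lambda|\cS||\cA|m}}$, and substituting this into the Cauchy--Schwarz bound cancels the factor $m$ (through $Km/m=K$) and produces exactly $H\sqrt{\frac{2K|\cS||\cA|}{\lambda}\log\!\sbr{1+\frac{KH^2}{\lambda|\cS||\cA|m}}}$, as claimed.

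Because every step is a standard elliptical-potential manipulation, I do not expect a deep obstacle; the only points requiring care are purely in the bookkeeping of constants, namely verifying $\nbr{\phi^{\tau^k_i}}_2\le H/m$ for a single segment and checking $\frac{H^2}{\lambda m}\ge 1$ so that the $\max\{\cdot,1\}$ collapses to the clean stated form. This last simplification is precisely where the present lemma differs from Lemma~\ref{lemma:sum_phi_seg_binary}, whose bound must retain the explicit $\max\{\cdot,1\}$ factor since its ridge scaling $\alpha\lambda$ can drive the analogous constant below one.
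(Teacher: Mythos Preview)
Your proposal is correct and follows essentially the same elliptical-potential route as the paper: Cauchy--Schwarz over the $Km$ terms, the bound $u_k\le H^2/(\lambda m)$, the inequality $x\le 2\max\{c,1\}\log(1+x)$ (the paper achieves the same effect by first normalizing so the argument lies in $[0,1]$ and then using $\lambda\le H^2/m$ to drop the normalization inside the log), and finally Lemma~\ref{lemma:log_det_seg}. The only cosmetic difference is that the paper states the needed condition as $\lambda\le H^2/m$ rather than invoking the specific choice $\lambda=H/m$; these are equivalent for the purpose of collapsing the $\max\{\cdot,1\}$ factor.
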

\begin{proof}		
	We have
	\begin{align*}
		\sum_{k=1}^{K} \sum_{i=1}^{m} \nbr{ \phi^{\tau^k_i}}_{(\Sigma_{k-1})^{-1}} &\leq \sqrt{ Km   \sum_{k=1}^{K} \sum_{i=1}^{m} \nbr{ \phi^{\tau^k_i}}_{(\Sigma_{k-1})^{-1}}^2 }
		\\
		&= \sqrt{ Km  \sum_{k=1}^{K} \min\lbr{ \sum_{i=1}^{m} \nbr{ \phi^{\tau^k_i}}_{(\Sigma_{k-1})^{-1}}^2 ,\ \frac{H^2}{m \lambda} } }
		\\
		&= \sqrt{ \frac{H^2 K}{\lambda} \sum_{k=1}^{K} \min\lbr{ \frac{m \lambda}{H^2} \sum_{i=1}^{m}   \nbr{ \phi^{\tau^k_i}}_{(\Sigma_{k-1})^{-1}}^2 ,\ 1 } }
		\\
		&\overset{\textup{(a)}}{\leq}  \sqrt{ \frac{2H^2 K}{\lambda}  \sum_{k=1}^{K} \log\sbr{ 1 +  \min\lbr{ \frac{m \lambda}{H^2} \sum_{i=1}^{m}  \nbr{ \phi^{\tau^k_i}}_{(\Sigma_{k-1})^{-1}}^2 ,\ 1 } } }
		\\
		&\overset{\textup{(b)}}{\leq} \sqrt{ \frac{2H^2 K}{\lambda}  \sum_{k=1}^{K} \log \sbr{ 1 + \sum_{i=1}^{m} \nbr{ \phi^{\tau^k_i}}_{(\Sigma_{k-1})^{-1}}^2 } }
		\\
		&\overset{\textup{(c)}}{\leq} \sqrt{ \frac{2KH^2 |\cS| |\cA|}{\lambda} \log \sbr{ 1+ \frac{ KH^2 }{ \lambda |\cS| |\cA| m} }} ,
	\end{align*}
	where inequality (a) uses the fact that $x \leq 2\log(1+x)$ for any $0 \leq x \leq 1$, inequality (b) is due to the fact that $\lambda \leq \frac{H^2}{m}$, and inequality (c) follows from Lemma~\ref{lemma:log_det_seg}.
\end{proof}

Define event 
\begin{align}
	\cF^{\summing}_{\textup{reg}}:=\Bigg\{& \abr{\sum_{k'=1}^{k} \sbr{ \ex_{\tau \sim \pi^{k'}}\mbr{ \nbr{ \phi^{\tau}}_{(\Sigma_{k'-1})^{-1}} | F_{k'-1}} - \nbr{ \phi^{\tau}}_{(\Sigma_{k'-1})^{-1}} } } \leq 4H\sqrt{ \frac{k}{\lambda} \log\sbr{\frac{4k}{\delta'}} } ,
	\nonumber\\ 
	&\forall k>0 \Bigg\} .
\end{align}
Event $\cF^{\summing}_{\textup{reg}}$ is similar to $\cF^{\summing}_{\textup{opt}}$, except that here the universal upper bound of $\nbr{ \phi^{\tau}}_{(\Sigma_{k'-1})^{-1}}$ is $\frac{H}{\sqrt{\lambda}}$ rather than $1$. 

\begin{lemma} 
	It holds that
	\begin{align*}
		\Pr \mbr{ \cF^{\summing}_{\textup{reg}} } \geq 1-\delta' .
	\end{align*}
\end{lemma}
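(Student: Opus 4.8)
The plan is to follow verbatim the bounded-difference martingale argument already used for $\cF^{\summing}_{\textup{opt}}$, with the single modification being the uniform upper bound on the per-episode term. First I would observe that for every $k' \geq 1$ and every trajectory $\tau$ we have $\nbr{\phi^{\tau}}_{(\Sigma_{k'-1})^{-1}} \leq \frac{H}{\sqrt{\lambda}}$: since $\Sigma_{k'-1} \succeq \lambda I$ by construction (Line~\ref{line:Sigma_linucb_tran}) and any trajectory visits exactly $H$ state-action pairs so that $\nbr{\phi^{\tau}}_2 \leq H$, it follows that $\nbr{\phi^{\tau}}_{(\Sigma_{k'-1})^{-1}} \leq \frac{1}{\sqrt{\lambda}} \nbr{\phi^{\tau}}_2 \leq \frac{H}{\sqrt{\lambda}}$. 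Consequently the centered increment $\ex_{\tau \sim \pi^{k'}}[\nbr{\phi^{\tau}}_{(\Sigma_{k'-1})^{-1}} \mid F_{k'-1}] - \nbr{\phi^{\tau}}_{(\Sigma_{k'-1})^{-1}}$ forms a martingale difference sequence with respect to the filtration $\{F_{k'}\}$, bounded in absolute value by $\frac{2H}{\sqrt{\lambda}}$.

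Next I would apply the Azuma--Hoeffding inequality for each fixed $k$, obtaining that with probability at least $1 - \frac{\delta'}{2k^2}$,
\[
\abr{\sum_{k'=1}^{k} \sbr{ \ex_{\tau \sim \pi^{k'}}\mbr{ \nbr{ \phi^{\tau}}_{(\Sigma_{k'-1})^{-1}} | F_{k'-1}} - \nbr{ \phi^{\tau}}_{(\Sigma_{k'-1})^{-1}} } } \leq \sqrt{ 2 \cdot \frac{4H^2}{\lambda} \cdot k \log\sbr{\frac{4k^2}{\delta'}} } .
\]
Then I would take a union bound over all $k \geq 1$, using $\sum_{k=1}^{\infty} \frac{\delta'}{2k^2} \leq \delta'$, so that the displayed inequality holds simultaneously for every $k$ with probability at least $1 - \delta'$. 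Finally I would simplify the right-hand side: since $\delta' < 1$ gives $\log(4k^2/\delta') \leq 2\log(4k/\delta')$, the bound $\sqrt{\frac{8H^2 k}{\lambda}\log(4k^2/\delta')}$ is at most $\sqrt{\frac{16H^2 k}{\lambda}\log(4k/\delta')} = 4H\sqrt{\frac{k}{\lambda}\log(\frac{4k}{\delta'})}$, which is exactly the threshold in the definition of $\cF^{\summing}_{\textup{reg}}$.

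There is no genuine obstacle in this argument: it is a routine application of bounded martingale concentration, structurally identical to the first inequality of Lemma~\ref{lemma:cF_binary} and to the concentration lemma for $\cF^{\summing}_{\textup{opt}}$. The only bookkeeping change, as the surrounding text already notes, is the substitution of the uniform bound $\frac{H}{\sqrt{\lambda}}$ in place of the sharper bound $1$; the latter held only for $k \geq K_0+1$ under the known-transition algorithm $\edlinucbseg$, because no initial E-optimal exploration phase is assumed here for $\linucbtranseg$, so the weaker bound must be used for all episodes.
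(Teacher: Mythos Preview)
Your proposal is correct and follows exactly the same approach as the paper: bound each increment by $\frac{2H}{\sqrt{\lambda}}$ via $\Sigma_{k'-1}\succeq \lambda I$ and $\|\phi^\tau\|_2\le H$, apply Azuma--Hoeffding with failure probability $\frac{\delta'}{2k^2}$ for each fixed $k$, union-bound over $k$, and simplify $\log(4k^2/\delta')\le 2\log(4k/\delta')$. Your write-up in fact supplies slightly more justification (for $\|\phi^\tau\|_2\le H$ and for the final log simplification) than the paper does, but the argument is identical.
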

\begin{proof}
	For any $k'\geq 1$, we have that $\nbr{ \phi^{\tau}}_{(\Sigma_{k'-1})^{-1}} \leq \frac{H}{\sqrt{\lambda}}$, and then $|\ex_{\tau \sim \pi^{k'}} [\nbr{ \phi^{\tau}}_{(\Sigma_{k'-1})^{-1}} | F_{k'-1}] - \nbr{ \phi^{\tau}}_{(\Sigma_{k'-1})^{-1}}| \leq \frac{2H}{\sqrt{\lambda}}$.

	Using the Azuma-Hoeffding inequality, we have that for any fixed $k>0$, with probability at least $1-\frac{\delta'}{2k^2}$,
	\begin{align*}
		\abr{\sum_{k'=1}^{k} \sbr{ \ex_{\tau \sim \pi^{k'}}\mbr{ \nbr{ \phi^{\tau}}_{(\Sigma_{k'-1})^{-1}} | F_{k'-1}} - \nbr{ \phi^{\tau}}_{(\Sigma_{k'-1})^{-1}} } } &\leq \sqrt{ 2 \cdot \frac{4H^2}{\lambda} \cdot k \log\sbr{\frac{4k^2}{\delta'}} } .
	\end{align*}
	
	Since $\sum_{k=1}^{\infty} \frac{\delta'}{2k^2} \leq \delta'$, by a union bound over $k$, we have that with probability at least $\delta'$, for any $k\geq 1$,
	\begin{align*}
		\abr{\sum_{k'=1}^{k} \sbr{ \ex_{\tau \sim \pi^{k'}}\mbr{ \nbr{ \phi^{\tau}}_{(\Sigma_{k'-1})^{-1}} | F_{k'-1}} - \nbr{ \phi^{\tau}}_{(\Sigma_{k'-1})^{-1}} } } &\leq \sqrt{ 2 \cdot \frac{4H^2}{\lambda} \cdot k \log\sbr{\frac{4k^2}{\delta'}} } 
		\\
		&\leq 4H \sqrt{ \frac{k}{\lambda} \log\sbr{\frac{4k}{\delta'}} } .
	\end{align*}
\end{proof}

\begin{proof}[Proof of Theorem~\ref{thm:ub_sum_unknown_tran}]
	Let $\delta'=\frac{\delta}{4}$. Then, we have $\Pr[\cK \cap \cF^{\summing}_{\textup{reg}} \cap \cG_{\textup{KL}} \cap \cH] \geq 1-\delta$.
	Thus, it suffices to prove the regret upper bound when event $\cK \cap \cF^{\summing}_{\textup{reg}} \cap \cG_{\textup{KL}} \cap \cH$ holds. 
	
	Assume that event $\cK \cap \cF^{\summing}_{\textup{reg}} \cap \cG_{\textup{KL}} \cap \cH$ holds.
	For any $k>0$, we have
	\begin{align}
		&\quad\ \sum_{k=1}^{K} \sbr{ V^*(s_1)-V^{\pi^k}(s_1) } 
		\nonumber\\
		&\overset{\textup{(a)}}{\leq}  \sum_{k=1}^{K} \Bigg( \hat{\phi}_{k-1}(\pi^*)^\top \hat{\theta}_{k-1} + \beta(k-1) \cdot  \|\hat{\phi}_{k-1}(\pi^*)\|_{(\Sigma_{k-1})^{-1}} + r_{\max} \sum_{s',a'} \ex_{s_1 \sim \rho}\mbr{B^{\pi^*;s',a';k}_1(s_1)} 
		\nonumber\\
		&\quad - V^{\pi^k} \Bigg)
		\nonumber\\
		&\overset{\textup{(b)}}{\leq}  \sum_{k=1}^{K} \Bigg( \hat{\phi}_{k-1}(\pi^k)^\top \hat{\theta}_{k-1} + \beta(k-1) \cdot  \|\hat{\phi}_{k-1}(\pi^k)\|_{(\Sigma_{k-1})^{-1}} + r_{\max} \sum_{s',a'} \ex_{s_1 \sim \rho}\mbr{B^{\pi^k;s',a';k}_1(s_1)} 
		\nonumber\\
		&\quad - V^{\pi^k} \Bigg)
		\nonumber\\
		&\leq \sum_{k=1}^{K} \Bigg( \hat{\phi}_{k-1}(\pi^k)^\top \hat{\theta}_{k-1} - \hat{\phi}_{k-1}(\pi^k)^\top\theta + \hat{\phi}_{k-1}(\pi^k)^\top\theta - (\phi^{\pi^k})^\top\theta \nonumber\\& \quad+ \beta(k-1) \cdot  \|\hat{\phi}_{k-1}(\pi^k)\|_{(\Sigma_{k-1})^{-1}}  + r_{\max} \sum_{s',a'} \ex_{s_1 \sim \rho}\mbr{B^{\pi^k;s',a';k}_1(s_1)} \Bigg) 
		\nonumber\\
		&\overset{\textup{(c)}}{\leq} \sum_{k=1}^{K} \sbr{2 \beta(k-1) \cdot  \|\hat{\phi}_{k-1}(\pi^k)\|_{(\Sigma_{k-1})^{-1}} + 2 r_{\max} \sum_{s',a'} \ex_{s_1 \sim \rho}\mbr{B^{\pi^k;s',a';k}_1(s_1)} }
		\nonumber\\
		&\leq  2 \beta(K)  \sum_{k=1}^{K} \|\hat{\phi}_{k-1}(\pi^k)\|_{(\Sigma_{k-1})^{-1}} + 2 r_{\max} \sum_{k=1}^{K} \sum_{s',a'} \ex_{s_1 \sim \rho}\mbr{B^{\pi^k;s',a';k}_1(s_1)} , \label{eq:regret_decomp_unknown_tran_seg}
	\end{align}
	where (a) uses Lemma~\ref{lemma:optimism}, (b) is due to the definition of $\pi^k$, and (c) follows from Lemma~\ref{lemma:error_in_visitation} and the definition of event $\cK$. 
	
	Next, we first bound $\sum_{k=1}^{K} \|\hat{\phi}_{k-1}(\pi^k)\|_{(\Sigma_{k-1})^{-1}}$.
	
	We have
	\begin{align}
		\sum_{k=1}^{K} \|\hat{\phi}_{k-1}(\pi^k)\|_{(\Sigma_{k-1})^{-1}} &\leq \sum_{k=1}^{K} \sbr{ \|\phi^{\pi^k}\|_{(\Sigma_{k-1})^{-1}} + \|\hat{\phi}_{k-1}(\pi^k) - \phi^{\pi^k}\|_{(\Sigma_{k-1})^{-1}} }
		\nonumber\\
		&\leq \sum_{k=1}^{K} \sbr{ \|\phi^{\pi^k}\|_{(\Sigma_{k-1})^{-1}} + \frac{1}{\sqrt{\lambda}} \cdot \|\hat{\phi}_{k-1}(\pi^k) - \phi^{\pi^k}\|_2 }
		\nonumber\\
		&\leq \sum_{k=1}^{K} \sbr{ \|\phi^{\pi^k}\|_{(\Sigma_{k-1})^{-1}} + \frac{1}{\sqrt{\lambda}} \cdot \|\hat{\phi}_{k-1}(\pi^k) - \phi^{\pi^k}\|_1 } . \label{eq:hat_phi_norm_seg}
	\end{align}
	
%	Recall that for any $k > 0$, $F_k$ denotes the filtration that includes all events up to the end of episode $k$.
	Here we have
	\begin{align}
		&\quad\ \sum_{k=1}^{K}  \|\phi^{\pi^k}\|_{(\Sigma_{k-1})^{-1}} 
		\nonumber\\
		&= \sum_{k=1}^{K} \nbr{ \ex_{\tau \sim \pi^k}\mbr{\phi^{\tau}| F_{k-1}} }_{(\Sigma_{k-1})^{-1}}
		\nonumber\\
		&\overset{\textup{(a)}}{\leq} \sum_{k=1}^{K} \ex_{\tau \sim \pi^k}\mbr{ \nbr{ \phi^{\tau}}_{(\Sigma_{k-1})^{-1}} | F_{k-1}}
		\nonumber\\
		&= \sum_{k=1}^{K} \sbr{ \ex_{\tau \sim \pi^k}\mbr{ \nbr{ \phi^{\tau}}_{(\Sigma_{k-1})^{-1}} | F_{k-1}} - \nbr{ \phi(\tau^k)}_{(\Sigma_{k-1})^{-1}} + \nbr{ \phi(\tau^k)}_{(\Sigma_{k-1})^{-1}} }
		\nonumber\\
		&\leq \sum_{k=1}^{K} \sbr{ \ex_{\tau \sim \pi^k}\mbr{ \nbr{ \phi^{\tau}}_{(\Sigma_{k-1})^{-1}} | F_{k-1}} - \nbr{ \phi(\tau^k)}_{(\Sigma_{k-1})^{-1}} + \sum_{i=1}^{m} \nbr{ \phi^{\tau^k_i}}_{(\Sigma_{k-1})^{-1}} }
		\nonumber\\
		&\overset{\textup{(b)}}{\leq} 4H\sqrt{ \frac{K}{\lambda} \log\sbr{\frac{4K}{\delta'}} } + H \sqrt{ \frac{2K |\cS| |\cA|}{\lambda} \log \sbr{  1+ \frac{ KH^2 }{ \lambda |\cS| |\cA| m } }} , \label{eq:phi_norm_seg}
	\end{align}
	where (a) uses the Jensen inequality, and (b) comes from the definition of $\cF^{\summing}_{\textup{reg}}$ and Lemma~\ref{lemma:sum_phi_original_seg}.
	
	Hence, plugging Eq.~\eqref{eq:phi_norm_seg} into Eq.~\eqref{eq:hat_phi_norm_seg} and using Lemma~\ref{lemma:error_in_visitation}, we have
	\begin{align}
		\sum_{k=1}^{K} \|\hat{\phi}_{k-1}(\pi^k)\|_{(\Sigma_{k-1})^{-1}} &\leq 4H\sqrt{ \frac{K}{\lambda} \log\sbr{\frac{4K}{\delta'}} } + H \sqrt{ \frac{2K |\cS| |\cA|}{\lambda} \log \sbr{ 1+ \frac{  KH^2 }{ \lambda |\cS| |\cA| } }} 
		\nonumber\\
		&\quad + \frac{1}{\sqrt{\lambda}} \sum_{k=1}^{K} \sum_{s',a'} \ex_{s_1 \sim \rho}\mbr{B^{\pi;s',a';k}_1(s_1)} . \label{eq:bound_hat_phi_seg}
	\end{align}
	
	On the other hand, according to Eq.~\eqref{eq:bound_B}, we have
	\begin{align*}
		\ex_{s_1 \sim \rho}\!\mbr{B^{\pi;s',a';k}_1(s_1)} \!\leq\! e^{12} \!\sum_{h=1}^{H} \!\sum_{s,a}  \!w^{\pi}_h(s,a)\!  \sbr{\! 8  \sqrt{ \frac{\var_{p(\cdot|s,a)}(G^{\pi;s',a'}_{h+1}(\cdot|p)) \cdot L}{n_{k-1}(s,a)} }  \!+\! \frac{46  H^2L}{n_{k-1}(s,a)} \!} \!\wedge\! H .
	\end{align*}

	Therefore, plugging Eqs.~\eqref{eq:bound_hat_phi_seg} and \eqref{eq:bound_B} into Eq.~\eqref{eq:regret_decomp_unknown_tran_seg}, we have
	\begin{align*}
		&\quad\ \sum_{k=1}^{K} \sbr{ V^*-V^{\pi^k} } 
		\\
		&\leq 2 \beta(K) \sbr{ 4H\sqrt{ \frac{K}{\lambda} \log\sbr{\frac{4K}{\delta'}} } + H \sqrt{ \frac{2K |\cS| |\cA|}{\lambda} \log \sbr{ 1+ \frac{  KH^2 }{ \lambda |\cS| |\cA| m} }} } 
		\\& \quad + 
		2 \sbr{ \frac{\beta(K)}{\sqrt{\lambda}} + r_{\max} } \sum_{s',a'} \sum_{k=1}^{K}  \ex_{s_1 \sim \rho}\mbr{B^{\pi^k;s',a';k}_1(s_1)}
		\\
		&\overset{\textup{(a)}}{\leq} 2 \beta(K) \sbr{ 4H\sqrt{ \frac{K}{\lambda} \log\sbr{\frac{4K}{\delta'}} } + H \sqrt{ \frac{2K |\cS| |\cA|}{\lambda} \log \sbr{ 1+ \frac{  KH^2 }{ \lambda |\cS| |\cA| m } }} } \\&\quad + \frac{4\beta(K)}{\sqrt{\lambda}} \sbr{ 16 e^{12} |\cS|^{\frac{3}{2}} |\cA|^{\frac{3}{2}} H \sqrt{K L \log(2KH) } + 192 e^{12} |\cS|^2 |\cA|^2 H^2 L \log(2KH) } 
		\\
		%			&\overset{\textup{(a)}}{\leq} 2 \beta(K) \sbr{ 4H\sqrt{ \frac{K}{\lambda} \log\sbr{\frac{4K}{\delta'}} } + H \sqrt{ \frac{2K |\cS| |\cA|}{\lambda} \log \sbr{ 1+ \frac{  KH^2 }{ \lambda |\cS| |\cA| m } }} } \\&\quad + \frac{4\beta(K)}{\sqrt{\lambda}} \cdot 16 e^{12} \sqrt{L} \sum_{s',a'} \sqrt{ \sum_{k=1}^{K} \sum_{h=1}^{H} \sum_{(s,a) \in D_k} w_{k,h}(s,a) \var_{p(\cdot|s,a)}(G^{\pi^k;s',a'}_{h+1}(\cdot|p)) } \cdot \sqrt{ \sum_{k=1}^{K} \sum_{h=1}^{H} \sum_{(s,a) \in D_k} \frac{w_{k,h}(s,a) }{n_{k-1}(s,a)} } 
		%			\\&\quad + \frac{4\beta(K)}{\sqrt{\lambda}} \cdot 2 e^{12} |\cS| |\cA| \cdot 46  H^2L \sum_{k=1}^{K} \sum_{h=1}^{H} \sum_{(s,a) \in D_k}   \frac{w_{k,h}(s,a) }{n_{k-1}(s,a)} 
		%			\\&\quad + \frac{4\beta(K)}{\sqrt{\lambda}} \cdot 16 e^{12}  |\cS|^2 |\cA|^2 H^2 \log\sbr{\frac{|\cS||\cA|H}{\delta'}}
		%			\\
		%			&\overset{\textup{(b)}}{\leq} 2 \beta(K) \sbr{ 4H\sqrt{ \frac{K}{\lambda} \log\sbr{\frac{4K}{\delta'}} } + H \sqrt{ \frac{2K |\cS| |\cA|}{\lambda} \log \sbr{ 1+ \frac{  KH^2 }{ \lambda |\cS| |\cA| m } }} } \\&\quad + \frac{4\beta(K)}{\sqrt{\lambda}} \cdot 16 e^{12} |\cS| |\cA| \sqrt{L}  \sqrt{K H^2} \cdot \sqrt{ 4 |\cS| |\cA| \log(2KH) } 
		%			\\&\quad + \frac{4\beta(K)}{\sqrt{\lambda}} \cdot 368 e^{12} |\cS|^2 |\cA|^2 H^2 L \log(2KH)  + \frac{4\beta(K)}{\sqrt{\lambda}} \cdot 16 e^{12} |\cS|^2 |\cA|^2 H^2 \log\sbr{\frac{|\cS||\cA|H}{\delta'}} 
		%			\\
		&= O \Bigg( \bigg( \sqrt{ \frac{H|\cS||\cA|}{m} \log\sbr{ \sbr{1+\frac{KH^2}{\lambda|\cS||\cA| m}}  \frac{1}{\delta} } } + r_{\max} \sqrt{\lambda|\cS||\cA|}  \bigg) \cdot
		\\&\qquad\quad \bigg( H \sqrt{ \frac{K |\cS| |\cA|}{\lambda} \log \sbr{ \sbr{1+ \frac{  KH^2 }{ \lambda |\cS| |\cA| m}}  \frac{1}{\delta} }}   +  |\cS|^{\frac{3}{2}} |\cA|^{\frac{3}{2}} H  \sqrt{ \frac{K L}{\lambda} \log(KH) }  +  \frac{|\cS|^2 |\cA|^2 H^2 L}{\sqrt{\lambda}} \log(KH) \bigg) \Bigg)
		\\
		&\overset{\textup{(b)}}{=}  O \Bigg( (1+r_{\max}) |\cS|^2 |\cA|^2 H \sqrt{ K}  \bigg( \log \sbr{ \sbr{ 1+ \frac{  KH }{  |\cS| |\cA| } }  \frac{1}{\delta} } 
	+ \sqrt{L \log(KH)} \sqrt{\log\sbr{ \sbr{1+\frac{KH}{|\cS||\cA|}}  \frac{1}{\delta} }} \bigg)  
		\\
		&\qquad\quad +  (1+r_{\max}) |\cS|^{\frac{5}{2}} |\cA|^{\frac{5}{2}} H^2 L \log(KH) \sqrt{\log\sbr{ \sbr{1+\frac{KH}{|\cS||\cA|}} \frac{1}{\delta} }}  \Bigg)
		\\
		&= \tilde{O} \sbr{ (1+r_{\max}) |\cS|^{\frac{5}{2}} |\cA|^2 H \sqrt{K} + (1+r_{\max}) |\cS|^{\frac{7}{2}} |\cA|^{\frac{5}{2}} H^2 } ,
	\end{align*}
	where inequality (a) comes from Lemma~\ref{lemma:ub_B}, and equality (b) uses the fact that $\lambda:=\frac{H}{m}$. 
\end{proof}

\subsection{A Lower Bound for Unknown Transition and its Proof} \label{apx:lb_sum_unknown_tran}

Below we provide a lower bound for RL with sum segment feedback and unknown transition with the proof.

\begin{theorem} \label{thm:lb_sum_unknown_tran}
	Consider the problem of RL with sum segment feedback and unknown transition. There exists a distribution of instances where the regret of any algorithm must be
	\begin{align*}
		\Omega\sbr{r_{\max} H\sqrt{|\cS||\cA|K} } .
	\end{align*}
\end{theorem}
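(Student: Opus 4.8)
The plan is to adapt the lower-bound construction used for the known-transition case (Theorem~\ref{thm:lb_sum_known_tran}), but to \emph{move the discriminating signal from the reward into the transition}, which is exactly what produces the extra $\sqrt{H}$ factor. Concretely, I would reuse the same topology: $n=|\cS|-2$ bandit states $s_1,\dots,s_n$, a good absorbing state $s_{n+1}$, and a bad absorbing state $s_{n+2}$, with the agent starting uniformly among $s_1,\dots,s_n$, and in each $s_i$ a single optimal action $a_J$ drawn uniformly from $\cA$. Now, however, the immediate rewards at the bandit states carry no information: set $r(s_i,a)=0$ for all $a$, $r(s_{n+1},a)=r_{\max}$, $r(s_{n+2},a)=-r_{\max}$ (with the same Gaussian reward noise as before), and let the \emph{transition} be the signal --- under $a_J$ the agent moves to $s_{n+1}$ with probability $\tfrac12+\varepsilon$ and to $s_{n+2}$ with probability $\tfrac12-\varepsilon$, while under any suboptimal action it moves to each with probability $\tfrac12$. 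The reference instance $\cI_{\unif}$ uses probability $\tfrac12$ for \emph{all} actions, so the two instances share identical reward functions and differ only in the transition out of the bandit states under $a_J$. With this choice the per-episode value gap for a suboptimal first action is $\Theta(\varepsilon H r_{\max})$, since reaching $s_{n+1}$ versus $s_{n+2}$ at step $2$ changes the accumulated reward over the remaining $H-1$ steps by $\Theta(Hr_{\max})$ while the probability shifts by $2\varepsilon$.

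The crux is the KL computation. Since the learner observes the full trajectory $\tau^k$ together with the segment sums, I would argue by the chain rule that the per-episode KL divergence between $\cI_{\unif}$ and the planted instance equals the divergence of the trajectory laws alone: conditioned on the trajectory, the reward observations have the \emph{same} distribution in both instances (the reward function is unchanged), so the reward feedback contributes nothing. The trajectory law differs only through one Bernoulli transition per visit to $s_i$ under $a_J$, giving a per-visit divergence $\kl(\bernoulli(\tfrac12)\,\|\,\bernoulli(\tfrac12+\varepsilon)) \le \tfrac{\varepsilon^2}{(\tfrac12+\varepsilon)(\tfrac12-\varepsilon)} = O(\varepsilon^2)$. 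This is the decisive point: unlike the known-transition proof, where the sum-reward observation yielded a per-episode divergence of order $Hr_{\max}^2\varepsilon^2$, here the informative divergence is only $O(\varepsilon^2)$, with no $H$ and no $r_{\max}^2$. The value gap still scales as $\varepsilon H r_{\max}$, so the regret-to-information ratio is worse by a factor $\sqrt{H}\,r_{\max}$, which is precisely the gap between the two theorems.

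With these two ingredients the endgame mirrors the known-transition proof. Fixing an algorithm and letting $N_{i,j}$ count episodes whose policy plays $a_j$ at $s_i$, I would invoke Lemma~A.1 of \citep{auer2002nonstochastic} with the per-visit divergence $O(\varepsilon^2)$ and the $\tfrac1n$ probability of actually reaching $s_i$, obtaining $\ex_{i,j}[N_{i,j}] \le \ex_{\unif}[N_{i,j}] + O\!\sbr{K\varepsilon\sqrt{\tfrac1n\,\ex_{\unif}[N_{i,j}]}}$; summing over $j$ with Cauchy-Schwarz and $\sum_j\ex_{\unif}[N_{i,j}]=K$ gives $\sum_j\ex_{i,j}[N_{i,j}] \le K + O(K\varepsilon\sqrt{|\cA|K/n})$. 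Lower-bounding the regret by $\tfrac{\Delta}{n}\sum_i(K-N_{i,J_i})$ with $\Delta=\Theta(\varepsilon H r_{\max})$ and averaging over the planted optimal action yields $\cR(K)=\Omega\!\sbr{\varepsilon H r_{\max}\,(K-O(K\varepsilon\sqrt{|\cA|K/n}))}$; choosing $\varepsilon=\Theta(\sqrt{n|\cA|/K})$ (valid once $K\gtrsim n|\cA|$ so that $\varepsilon\le\tfrac14$) balances the terms and produces $\Omega(r_{\max}H\sqrt{n|\cA|K})=\Omega(r_{\max}H\sqrt{|\cS||\cA|K})$. \textbf{The main obstacle} I anticipate is rigorously justifying that the segment-sum reward feedback adds no usable information about the planted transition, i.e.\ certifying the chain-rule reduction of the per-episode KL to the pure transition KL; any leakage here would reintroduce an $H$ or $r_{\max}^2$ factor and collapse the bound back toward the known-transition rate. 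A secondary check is confirming that the value gap genuinely accumulates to $\Theta(\varepsilon H r_{\max})$ rather than $\Theta(\varepsilon r_{\max})$, which requires the single decision to occur at the first step and both absorbing states to persist for the remaining horizon.
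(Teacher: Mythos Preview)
Your proposal is correct and follows essentially the same construction as the paper: shift the discriminating signal from the reward to the transition out of the bandit states, so the per-episode KL is $O(\varepsilon^2)$ while the value gap scales as $\Theta(\varepsilon H r_{\max})$, then apply Lemma~A.1 of \citep{auer2002nonstochastic} and optimize $\varepsilon=\Theta(\sqrt{n|\cA|/K})$. The paper makes two cosmetic simplifications relative to your version: it sets $r(s_{n+2},a)=0$ rather than $-r_{\max}$, and it takes the rewards to be \emph{deterministic} rather than Gaussian, which dissolves your ``main obstacle'' immediately --- conditioned on the trajectory, the segment sums are fixed constants, so the chain-rule reduction of the episode KL to the pure transition KL is trivial.
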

\begin{proof}[Proof of Theorem~\ref{thm:lb_sum_unknown_tran}]
	\begin{figure}[t]
		\centering   
		\includegraphics[width=0.5\textwidth]{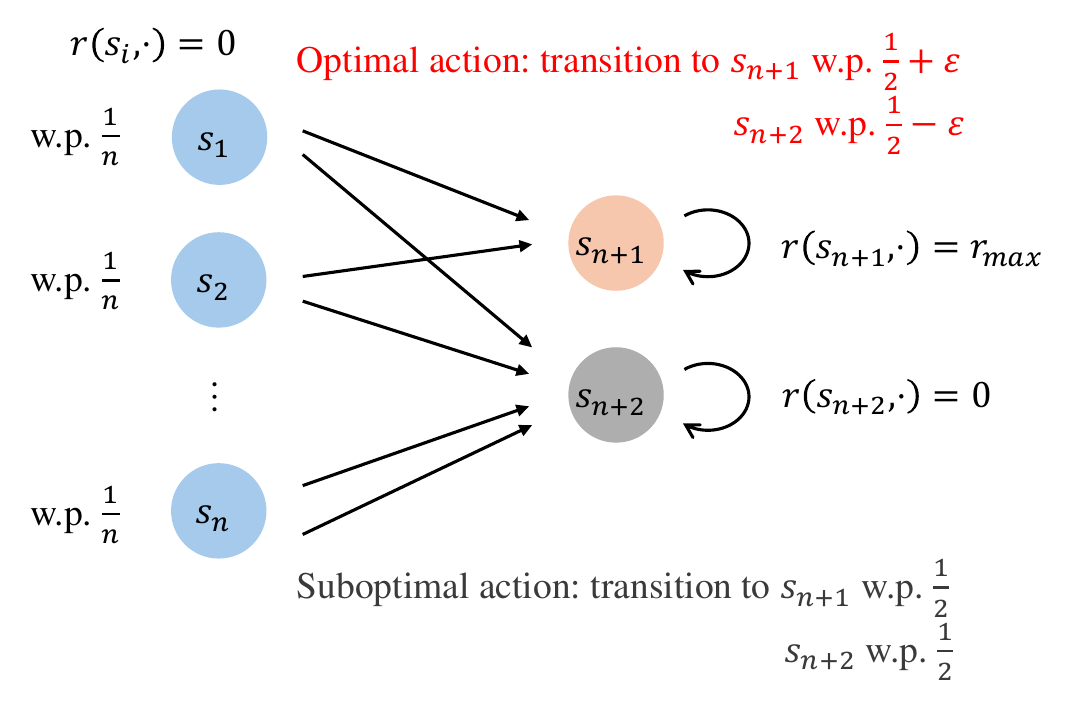}
		\caption{Instance for the lower bound under sum segment feedback and unknown transition.
		} \label{fig:lower_bound_unknown_transition}
	\end{figure}
	
	We construct a random instance $\cI$ as follows. As shown in Figure~\ref{fig:lower_bound_unknown_transition}, there are $n$ bandit states $s_1,\dots,s_n$ (i.e., there are an optimal action and multiple suboptimal actions), a good absorbing state $s_{n+1}$ and a bad absorbing state $s_{n+2}$.
	The agent starts from $s_1,\dots,s_n$ with equal probability $\frac{1}{n}$. For any $i \in [n]$, in state $s_i$, one action $a_J$ is uniformly chosen from $\cA$ as the optimal action. In state $s_i$, under the optimal action $a_J$, the agent transitions to $s_{n+1}$ and $s_{n+2}$ with probabilities $\frac{1}{2}+\varepsilon$ and $\frac{1}{2}-\varepsilon$, respectively, where $\varepsilon \in (0,\frac{1}{4})$ is a parameter specified later; Under any suboptimal action $a \in \cA \setminus \{s_J\}$, the agent transitions to $s_{n+1}$ and $s_{n+2}$ with equal probability $\frac{1}{2}$.
	
	The rewards are deterministic for all state-action pairs. 
	For any $a \in \cA$, $r(s_{n+1},a)=r_{\max}$. For any $i\in \{1,...,n,n+2\}$ and $a \in \cA$, $r(s_i,a)=0$.

	In this proof, we will also use an alternative uniform instance $\cI_{\unif}$. The only difference between $\cI_{\unif}$ and $\cI$ is that for any $i \in [n]$, in state $s_i$, under all actions $a \in \cA$, the agent transitions to $s_{n+1}$ and $s_{n+2}$ with equal probability $\frac{1}{2}$.
	
	Fix an algorithm $\A$.
	Let $\ex_{\unif}[\cdot]$ denote the expectation with respect to $\cI_{\unif}$. Let $\ex_*[\cdot]$ denote the expectation with respect to $\cI$. For any $i \in [n]$ and $j \in [|\cA|]$, let $\ex_{i,j}[\cdot]$ denote the expectation with respect to the case where $a_j$ is the optimal action in state $s_i$, and  $N_{i,j}$ denote the number of episodes where algorithm $\A$ chooses $a_j$ in state $s_i$, i.e., $N_{i,j}=\sum_{k=1}^{K} \indicator\{\pi^k_1(s_i)=a_j\}$. 
	
	The KL divergence of transition distribution on $(s_i,a_J)$ ($i \in [n]$) between $\cI_{\unif}$ and $\cI$ is 
	\begin{align*}
		\kl \sbr{ \bernoulli\sbr{\frac{1}{2}} \| \bernoulli\sbr{\frac{1}{2}+\varepsilon} } &=  \frac{1}{2} \ln\sbr{\frac{\frac{1}{2}}{\frac{1}{2}-\varepsilon}} + \frac{1}{2}\ln\sbr{ \frac{\frac{1}{2}}{\frac{1}{2}+\varepsilon} } 
		\\
		&=  \frac{1}{2} \ln\sbr{\frac{\frac{1}{4}}{\frac{1}{4}-\varepsilon^2}} 
		\\
		&=  -\frac{1}{2} \ln\sbr{1-4\varepsilon^2} 
		\\
		&\overset{\textup{(a)}}{\leq} 4 \varepsilon^2 ,
	\end{align*}
	where (a) uses the fact that  $-\ln(1-x)\leq 2 x$ when $x \in (0,\frac{1}{4})$.
	
	In addition, the agent has probability only $\frac{1}{n}$ to arrive at (observe) state $s_i$.
	
	Thus, using Lemma A.1 in \citep{auer2002nonstochastic}, we have that for any $i \in [n]$, in state $s_i$,
	\begin{align*}
		\ex_{i,j}[N_{i,j}] &\leq \ex_{\unif}[N_{i,j}] + \frac{K}{2} \sqrt{ \frac{1}{n} \cdot \ex_{\unif}[N_{i,j}] \cdot \kl \sbr{ \bernoulli\sbr{\frac{1}{2}} \| \bernoulli\sbr{\frac{1}{2}+\varepsilon} } } 
		\\
		&\leq \ex_{\unif}[N_{i,j}] + \frac{K}{2} \sqrt{ \frac{1}{n} \cdot \ex_{\unif}[N_{i,j}] \cdot 4 \varepsilon^2 }
		\\
		&= \ex_{\unif}[N_{i,j}] + K \varepsilon \sqrt{ \frac{1}{n} \cdot \ex_{\unif}[N_{i,j}] } .
	\end{align*}
	
	Summing over $j \in [|\cA|]$, using the Cauchy-Schwarz inequality and the fact that $\sum_{j=1}^{|\cA|} \ex_{\unif}[N_{i,j}]=K$, we have
	\begin{align*}
		\sum_{j=1}^{|\cA|}  \ex_{i,j}[N_{i,j}] &\leq K + K \varepsilon \sqrt{ \frac{|\cA|}{n} \cdot K  } .
	\end{align*}
	Then, we have
	\begin{align*}
		\cR(K) &= \sum_{k=1}^{K} \ex_{*}\mbr{V^*-V^{\pi^k}}
		\\
		&=\sbr{\frac{1}{2}+\varepsilon}(H-1)r_{\max}K 
		\\
		&\quad - \frac{1}{n} \sum_{i=1}^{n} \sbr{ \frac{1}{2}(H-1)r_{\max}K + \varepsilon(H-1)r_{\max} \cdot \frac{1}{|\cA|} \sum_{j=1}^{|\cA|} \ex_{i,j}[N_{i,j}] }
		\\
		&\geq \varepsilon(H-1)r_{\max} \sbr{K - \frac{K}{|\cA|} - K \varepsilon \sqrt{ \frac{K}{|\cA|n} } } .
	\end{align*}
	
	Recall that $n=|\cS|-2$. Let $|\cS|\geq 3$, $|\cA|\geq 2$, $H \geq 2$, $K > |\cA|n$ and $\varepsilon=\frac{1}{4}\sqrt{\frac{|\cA|n}{K}}$. Then, we have 
	\begin{align*}
		\cR(K) = \Omega\sbr{ r_{\max} H\sqrt{|\cS||\cA|K} } .
	\end{align*}
\end{proof}

\section{Technical Tools}

In this section, we introduce several technical tools.

\begin{lemma}[Self-concordance, Lemma 9 in \citep{faury2020improved}] \label{lemma:self_concordance}
	For any $x_1,x_2 \in \R$, we have
	\begin{align*}
		\mu'(x_1) \frac{1-\exp(-|x_1-x_2|)}{|x_1-x_2|} \leq \int_{z=0}^{1} \mu'( (1-z)x_1 + zx_2 ) dz \leq \mu'(x_1) \frac{\exp(|x_1-x_2|)-1}{|x_1-x_2|} .
	\end{align*}
	
	Furthermore, we have
	\begin{align*}
		\int_{z=0}^{1} \mu'( (1-z)x_1 + zx_2 ) dz \geq \frac{\mu'(x_1)}{1+|x_1-x_2|} .
	\end{align*}
\end{lemma}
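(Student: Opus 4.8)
The plan is to exploit the self-concordance property of the logistic link, namely that $\abr{\mu''(x)}\le\mu'(x)$ for every $x$, and then convert this pointwise derivative bound into the stated integral sandwich by a one-dimensional integration along the segment joining $x_1$ and $x_2$. Everything reduces to controlling the logarithmic derivative of $\mu'$.

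First I would compute the derivatives explicitly. Since $\mu'(x)=\mu(x)(1-\mu(x))$, differentiating once more gives $\mu''(x)=\mu'(x)(1-2\mu(x))$. Because $\mu(x)\in(0,1)$, we have $\abr{1-2\mu(x)}\le 1$, and hence $\abr{\tfrac{d}{dx}\log\mu'(x)}=\abr{1-2\mu(x)}\le 1$. This single observation is the crux of the whole argument.

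Next I would introduce the scalar function $\phi(t):=\mu'\sbr{(1-t)x_1+tx_2}$ for $t\in[0,1]$, which traces $\mu'$ along the segment. By the chain rule, $\tfrac{d}{dt}\log\phi(t)=(x_2-x_1)\sbr{1-2\mu((1-t)x_1+tx_2)}$, whose absolute value is at most $\abr{x_1-x_2}$. Writing $\Delta:=\abr{x_1-x_2}$ and integrating the logarithmic derivative from $0$ to $t$, and using $\phi(0)=\mu'(x_1)$, yields the pointwise envelope $\mu'(x_1)e^{-t\Delta}\le\phi(t)\le\mu'(x_1)e^{t\Delta}$. Then I would integrate this envelope over $t\in[0,1]$: the identities $\int_0^1 e^{-t\Delta}\,dt=\tfrac{1-e^{-\Delta}}{\Delta}$ and $\int_0^1 e^{t\Delta}\,dt=\tfrac{e^{\Delta}-1}{\Delta}$ deliver exactly the two-sided bound in the statement. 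For the final inequality it suffices to show $\tfrac{1-e^{-\Delta}}{\Delta}\ge\tfrac{1}{1+\Delta}$; clearing denominators, this reduces to $1+\Delta\le e^{\Delta}$, which is the elementary inequality $e^x\ge 1+x$ at $x=\Delta\ge 0$.

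Since the argument is a clean integration once the derivative bound is in hand, there is no serious obstacle; the only point requiring care is the identity $\mu''=\mu'(1-2\mu)$ together with $\abr{1-2\mu}\le 1$, from which everything else is routine. The degenerate case $x_1=x_2$, where $\Delta=0$, is handled by interpreting $\tfrac{1-e^{-\Delta}}{\Delta}$ and $\tfrac{e^{\Delta}-1}{\Delta}$ as their common limit $1$, so that all three quantities in the sandwich collapse to $\mu'(x_1)$.
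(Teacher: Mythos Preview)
Your argument is correct: the key identity $\mu''=\mu'(1-2\mu)$ gives $\abr{(\log\mu')'}\le 1$, and the rest is a one-dimensional integration of this logarithmic-derivative bound followed by the elementary estimate $e^{\Delta}\ge 1+\Delta$. The paper does not supply its own proof of this lemma---it simply cites Lemma~9 of \citep{faury2020improved}---and your derivation is precisely the standard self-concordance argument one finds there.
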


\begin{lemma}[Value Difference Lemma, Lemma E.15 in \citep{dann2017unifying}] \label{lemma:value_diff_lemma}
	For any two MDPs $M'$ and $M''$ with rewards $r'$ and $r''$ and transition distributions $p'$ and $p''$, we have that for any $h \in [H]$ and $s \in \cS$,
	\begin{align*}
		V'_h(s) \!-\! V''_h(s) \!=\! \ex_{p''}\mbr{ \sum_{t=h}^{H} \sbr{ r'(s_t,a_t) \!-\! r''(s_t,a_t) \!+\! \sbr{ p'(\cdot|s_t,a_t) \!-\! p''(\cdot|s_t,a_t) }^{\!\top} \! V'_{h+1}(\cdot) } | s_t=s } \!.
	\end{align*}
\end{lemma}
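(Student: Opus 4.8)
The plan is to prove this by backward induction on the step index $h$, running from $h=H+1$ down to $h$, and exploiting the fact that both value functions are taken under a \emph{common} policy $\pi$ (so that $V'_h(s)=r'(s,\pi_h(s))+p'(\cdot|s,\pi_h(s))^\top V'_{h+1}(\cdot)$ and likewise for $M''$). The whole identity will fall out of a single add-and-subtract trick applied to the one-step Bellman difference, followed by recognizing the resulting recursion as an expectation over trajectories generated by the transition kernel $p''$.

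Concretely, first I would write down the two Bellman equations at step $h$ and subtract them, abbreviating $a_h:=\pi_h(s)$, to get
\begin{align*}
V'_h(s)-V''_h(s) = \sbr{r'(s,a_h)-r''(s,a_h)} + p'(\cdot|s,a_h)^\top V'_{h+1} - p''(\cdot|s,a_h)^\top V''_{h+1} .
\end{align*}
Then I would insert the cross term $\pm\, p''(\cdot|s,a_h)^\top V'_{h+1}$ to split the transition contribution as
\begin{align*}
p'(\cdot|s,a_h)^\top V'_{h+1} - p''(\cdot|s,a_h)^\top V''_{h+1} = \sbr{p'(\cdot|s,a_h)-p''(\cdot|s,a_h)}^\top V'_{h+1} + p''(\cdot|s,a_h)^\top\sbr{V'_{h+1}-V''_{h+1}} .
\end{align*}
The first group, together with the reward difference, is exactly the $t=h$ summand of the claimed expression (and, since $a_h$ is determined by $s$ under $\pi$, it equals its own conditional expectation given $s_h=s$). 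The second group is the recursive term: invoking the inductive hypothesis on $V'_{h+1}-V''_{h+1}$ at each possible next state and then averaging against $p''(\cdot|s,a_h)$ — i.e. pushing the conditioning from $s_h=s$ forward to $s_{h+1}\sim p''(\cdot|s,a_h)$ — reproduces $\ex_{p''}[\sum_{t=h+1}^{H}(\cdots)\mid s_h=s]$. Combining the two pieces gives the sum from $t=h$ to $H$. The base case $h=H+1$ is immediate since $V'_{H+1}=V''_{H+1}=0$ and the sum is empty.

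The main obstacle is purely bookkeeping rather than conceptual: one must keep straight that the \emph{reference} value function in the transition-difference term is $V'_{h+1}$ (from $M'$) while the \emph{rollout} kernel carrying the recursion is $p''$ (from $M''$), since choosing the cross term the other way would instead surface $V''_{h+1}$ and $p'$ and prove a different (valid but non-matching) identity. I would therefore take care to state the induction hypothesis with precisely the asymmetry appearing in the lemma, and to justify the tower-property step that rewrites $p''(\cdot|s,a_h)^\top \ex_{p''}[\,\cdot\mid s_{h+1}=s']$ as $\ex_{p''}[\,\cdot\mid s_h=s]$; this is where the ``expectation under $p''$'' in the statement is actually pinned down.
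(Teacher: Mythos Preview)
Your proposal is correct and is exactly the standard telescoping/backward-induction argument used to prove this identity; the paper itself does not give a proof but simply cites Lemma~E.15 of \citep{dann2017unifying}, whose proof proceeds by the same add-and-subtract of $p''(\cdot|s,a_h)^\top V'_{h+1}$ followed by unrolling under $p''$. One cosmetic point: in the displayed statement the inner $V'_{h+1}$ should read $V'_{t+1}$ (and the conditioning should be $s_h=s$); your write-up already treats it that way, so nothing changes in your argument.
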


\begin{lemma}[Law of Total Variance, Lemma 15 in \citep{zanette2019tighter}] \label{lemma:law_of_total_variance}
	For an MDP $p$ and a fixed policy $\pi$, we have
	\begin{align*}
		\ex_{\pi,p}\! \mbr{ \sbr{\sum_{h=1}^{H} r(s_h,\pi_h(s)) \!-\! V_1^{\pi}(s_1) } \bigg| s_1 } \!=\! \ex_{\pi,p} \mbr{ \sum_{h=1}^{H} \var_{s_{h+1} \sim p(\cdot|s_h,\pi_h(s_h))}\sbr{V^{\pi}_{h+1}(s_{h+1})} \bigg| s_1 } .
	\end{align*}
\end{lemma}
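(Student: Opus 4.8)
The plan is to prove this via a telescoping martingale decomposition of the total return, which is the standard route to the Law of Total Variance for MDPs. (I read the left-hand side as the \emph{variance} of the return, i.e. the deviation $\sum_{h=1}^{H} r(s_h,\pi_h(s_h)) - V_1^\pi(s_1)$ should be squared before the expectation is taken; without the square the left-hand side is identically zero and the identity is vacuous.) Write $G := \sum_{h=1}^{H} r(s_h,\pi_h(s_h))$ for the total return and let $\cF_h := \sigma(s_1,\dots,s_h)$ be the filtration generated by the states visited through step $h$; since $\pi$ is deterministic, the actions are $\cF_h$-measurable.

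First I would introduce the partial-return-plus-value process
\[
	X_h := \sum_{t=1}^{h-1} r(s_t,\pi_t(s_t)) + V^\pi_h(s_h), \qquad h \in [H+1],
\]
so that $X_1 = V^\pi_1(s_1)$ and, using $V^\pi_{H+1}\equiv 0$, $X_{H+1} = G$. The key algebraic step is to apply the Bellman equation $V^\pi_h(s_h) = r(s_h,\pi_h(s_h)) + \ex_{s_{h+1}\sim p(\cdot|s_h,\pi_h(s_h))}[V^\pi_{h+1}(s_{h+1})]$ to the increment, which collapses it to
\[
	X_{h+1}-X_h = V^\pi_{h+1}(s_{h+1}) - \ex_{s_{h+1}\sim p(\cdot|s_h,\pi_h(s_h))}\sbr{V^\pi_{h+1}(s_{h+1})}.
\]
This is a martingale difference, $\ex[X_{h+1}-X_h \mid \cF_h]=0$, because the only randomness beyond $\cF_h$ in the increment is the draw of $s_{h+1}$, and the subtracted term is precisely its conditional mean.

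Next I would telescope, writing $G - V_1^\pi(s_1) = \sum_{h=1}^{H}(X_{h+1}-X_h)$, square it, and take the conditional expectation given $s_1$. The cross terms $\ex[(X_{h+1}-X_h)(X_{t+1}-X_t)\mid s_1]$ with $h<t$ vanish by conditioning on $\cF_t$ and using the martingale-difference property of the later factor, leaving only the diagonal terms $\sum_{h=1}^{H} \ex[(X_{h+1}-X_h)^2\mid s_1]$. Finally I would identify each diagonal term: by the tower property and the definition of conditional variance,
\[
	\ex\sbr{(X_{h+1}-X_h)^2 \mid \cF_h} = \var_{s_{h+1}\sim p(\cdot|s_h,\pi_h(s_h))}\sbr{V^\pi_{h+1}(s_{h+1})},
\]
so that $\ex[(X_{h+1}-X_h)^2\mid s_1] = \ex[\var_{s_{h+1}\sim p}(V^\pi_{h+1}(s_{h+1}))\mid s_1]$; summing over $h$ yields the claimed identity.

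The routine parts are the Bellman substitution and the telescoping. The step requiring the most care is verifying that the cross terms genuinely vanish — this hinges on setting up the filtration correctly (actions measurable, $s_{h+1}$ the sole new randomness in the $h$-th increment) so that orthogonality of martingale increments applies — together with the correct reading of the statement with the missing square, since otherwise the asserted equality degenerates.
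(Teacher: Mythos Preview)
Your proposal is correct, including your reading that the left-hand side must be the squared deviation (as stated, the LHS is identically zero by definition of $V_1^\pi$). The martingale decomposition via $X_h = \sum_{t<h} r(s_t,\pi_t(s_t)) + V_h^\pi(s_h)$ is the standard route to this identity.

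There is nothing to compare against: the paper does not prove this lemma. It is listed in the ``Technical Tools'' appendix as a quoted result (Lemma~15 of \cite{zanette2019tighter}, with earlier appearances noted in \cite{munos1999influence,lattimore2012pac,gheshlaghi2013minimax}) and used as a black box in the proof of Lemma~\ref{lemma:ub_B}. Your argument is exactly the one that appears in those references.
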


The idea of Lemma~\ref{lemma:law_of_total_variance} was also used in earlier works, e.g.,~\citep{munos1999influence,lattimore2012pac,gheshlaghi2013minimax}.

\begin{lemma}[Lemma 10 in \citep{menard2021fast}] \label{lemma:bernstern_kl}
	For distributions $p,q \in \triangle_{\cS}$ and function $f:\cS \rightarrow [0,b]$, if $\kl(p,q)\leq \alpha$, then
	\begin{align*}
		|(p(\cdot)-q(\cdot))^\top f(\cdot)| \leq \sqrt{2\var_q(f)\alpha}+\frac{2}{3}b\alpha .
	\end{align*}
\end{lemma}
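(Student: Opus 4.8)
The plan is to prove this as a standard information-theoretic (change-of-measure) Bernstein inequality, combining the Gibbs variational principle with Bennett's bound on the moment generating function. First I would reduce to a one-sided statement: it suffices to bound $D := (p-q)^\top f = \ex_p[f] - \ex_q[f]$ from above by $\sqrt{2\var_q(f)\alpha} + \tfrac{2}{3}b\alpha$, because applying the same bound to $\tilde f := b - f$ (which again takes values in $[0,b]$, satisfies $\var_q(\tilde f) = \var_q(f)$, and has $(p-q)^\top \tilde f = -D$) yields the matching lower bound, and the two together give the absolute-value estimate. Note this reduction uses only $\kl(p\|q)\le\alpha$ in one fixed direction, so no symmetry of the KL divergence is needed.

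For the one-sided bound I would introduce the centered variable $Y := f - \ex_q[f]$, which under $q$ has mean zero, satisfies $Y \le b$, and has $\var_q(Y) = \var_q(f) =: V$. The Gibbs variational inequality (equivalently Donsker--Varadhan) gives, for every $\lambda>0$,
\[
\lambda D = \ex_p[\lambda Y] \le \kl(p\|q) + \log \ex_q\!\big[e^{\lambda Y}\big] \le \alpha + \psi(\lambda),
\]
where $\psi(\lambda) := \log\ex_q[e^{\lambda Y}]$ is the centered log-MGF of $f$ under $q$. The key estimate is Bennett's bound $\psi(\lambda) \le \frac{V}{b^2}\big(e^{\lambda b}-1-\lambda b\big)$, obtained from the monotonicity of $x\mapsto (e^x-1-x)/x^2$ together with $\ex_q[Y]=0$; combined with the elementary inequality $e^x - 1 - x \le \frac{x^2}{2(1-x/3)}$ valid on $[0,3)$, this yields the sub-gamma form
\[
\psi(\lambda) \le \frac{\lambda^2 V}{2\,(1-\lambda b/3)}, \qquad 0<\lambda<3/b .
\]

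Substituting back gives $\lambda D - \frac{\lambda^2 V}{2(1-\lambda b/3)} \le \alpha$ for all admissible $\lambda$, i.e. $g^*(D)\le\alpha$, where $g(\lambda)=\frac{\lambda^2 V}{2(1-c\lambda)}$ with scale $c = b/3$ and $g^*$ is its Fenchel--Legendre transform. I would then finish by the standard sub-gamma inversion identity $(g^*)^{-1}(\alpha) = \sqrt{2V\alpha} + c\alpha$ (equivalently, by optimizing $\frac{\alpha}{\lambda} + \frac{\lambda V}{2(1-c\lambda)}$ over $\lambda\in(0,1/c)$), which gives $D \le \sqrt{2V\alpha} + \frac{b}{3}\alpha \le \sqrt{2\var_q(f)\alpha} + \frac{2}{3}b\alpha$, the desired bound (indeed with a slightly better constant).

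The main obstacle is the passage from MGF control to the final closed form: one must verify Bennett's inequality with the correct constant and, more delicately, carry out (or cite) the Legendre inversion, where the $1/3$ in the scale is exactly what produces the coefficient of the $b\alpha$ term. Everything else---the variational inequality, the reduction to the one-sided bound via $\tilde f = b-f$, and the centering---is routine.
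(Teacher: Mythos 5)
The paper gives no proof of this lemma at all---it is imported verbatim as Lemma 10 of \citep{menard2021fast}---and your argument is exactly the standard derivation behind that cited result: the Donsker--Varadhan change of measure $\ex_p[\lambda Y]\le \kl(p\,\|\,q)+\log\ex_q[e^{\lambda Y}]$, Bennett's bound on the centered log-MGF, the sub-gamma relaxation $\psi(\lambda)\le \lambda^2\var_q(f)/\bigl(2(1-\lambda b/3)\bigr)$, and the Legendre inversion realized by $\lambda=\bigl(b/3+\sqrt{\var_q(f)/(2\alpha)}\bigr)^{-1}$. Every step checks out, including the one-sided reduction via $\tilde f=b-f$ (which preserves the range $[0,b]$ and the $q$-variance and uses $\kl(p\,\|\,q)\le\alpha$ only in the single direction required), and you in fact obtain the sharper scale term $\frac{b}{3}\alpha$, which implies the stated $\frac{2}{3}b\alpha$.
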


\begin{lemma}[Lemma 11 in \citep{menard2021fast}] \label{lemma:var_change_p}
	For distributions $p,q \in \triangle_{\cS}$ and function $f:\cS \rightarrow [0,b]$, if $\kl(p,q)\leq \alpha$, then
	\begin{align*}
		\var_q(f) \leq 2\var_p(f) + 4 b^2 \alpha .
	\end{align*}
\end{lemma}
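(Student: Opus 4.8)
The plan is to reduce the statement to the Bernstein-type transfer inequality already available as Lemma~\ref{lemma:bernstern_kl} (Lemma 10 of \citep{menard2021fast}), applied not to $f$ itself but to a symmetrized version on the product space $\cS\times\cS$. The starting point is the pairwise (Gini-type) identity for the variance: for any distribution $\mu\in\triangle_{\cS}$,
\begin{align*}
\var_\mu(f)=\tfrac12\,\E_{(s,s')\sim\mu\otimes\mu}\bigl[(f(s)-f(s'))^2\bigr].
\end{align*}
Setting $F(s,s'):=(f(s)-f(s'))^2$, which takes values in $[0,b^2]$ since $f\in[0,b]$, this rewrites both $\var_q(f)=\tfrac12\E_{q\otimes q}[F]$ and $\var_p(f)=\tfrac12\E_{p\otimes p}[F]$ as expectations of a single bounded function under the two product measures.

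Next I would invoke the tensorization of KL divergence, $\kl(p\otimes p,\,q\otimes q)=2\,\kl(p,q)\le 2\alpha$, and apply Lemma~\ref{lemma:bernstern_kl} on $\cS\times\cS$ with the function $F\in[0,b^2]$ and the two product distributions. This yields
\begin{align*}
\E_{q\otimes q}[F]-\E_{p\otimes p}[F]\le \sqrt{2\,\var_{q\otimes q}(F)\cdot 2\alpha}+\tfrac23 b^2\cdot 2\alpha .
\end{align*}
To control the variance term in a self-contained way, I would use $0\le F\le b^2$, so that $F^2\le b^2F$ and hence $\var_{q\otimes q}(F)\le\E_{q\otimes q}[F^2]\le b^2\,\E_{q\otimes q}[F]=2b^2\var_q(f)$. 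Substituting this and recalling $\E_{q\otimes q}[F]=2\var_q(f)$, $\E_{p\otimes p}[F]=2\var_p(f)$, then dividing by $2$, gives
\begin{align*}
\var_q(f)-\var_p(f)\le b\sqrt{2\alpha\,\var_q(f)}+\tfrac23 b^2\alpha .
\end{align*}

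The final step is an AM--GM absorption to remove the square-root term that still contains $\var_q(f)$ on the right-hand side. Writing $b\sqrt{2\alpha\,\var_q(f)}=\sqrt{\var_q(f)\cdot 2\alpha b^2}\le \tfrac12\var_q(f)+\alpha b^2$, I obtain $\tfrac12\var_q(f)\le \var_p(f)+\tfrac53 b^2\alpha$, i.e.\ $\var_q(f)\le 2\var_p(f)+\tfrac{10}{3}b^2\alpha\le 2\var_p(f)+4b^2\alpha$, as claimed. The one place that genuinely needs care is the self-referential nature of the transfer inequality: it produces a $\sqrt{\var_q(f)}$ factor on the right, and it is exactly the product-space reformulation (which keeps a bounded function $F$ and yields a \emph{variance} rather than a second moment) together with the AM--GM step that buy the factor $2$ on $\var_p(f)$ and the \emph{linear} dependence on $\alpha$. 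A naive application of Lemma~\ref{lemma:bernstern_kl} to $f$ and $f^2$ separately would instead leave uncontrolled second moments $\E_q[f^2]$ in place of $\var_q(f)$ and would not close as cleanly.
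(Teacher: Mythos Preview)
Your argument is correct. The paper itself does not prove this lemma; it is simply quoted as a technical tool from \citep{menard2021fast}. Your product-space route via the Gini identity $\var_\mu(f)=\tfrac12\,\E_{\mu\otimes\mu}[(f(s)-f(s'))^2]$ together with the tensorization $\kl(p\otimes p,q\otimes q)=2\,\kl(p,q)$ is valid, and the AM--GM absorption closes with the stated constant (you even get the sharper $\tfrac{10}{3}b^2\alpha$).

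A slightly more direct variant---and likely closer to the original in \citep{menard2021fast}---avoids the product space entirely: apply Lemma~\ref{lemma:bernstern_kl} on $\cS$ to the function $g(s)=(f(s)-\E_p[f])^2\in[0,b^2]$. Since $\var_q(f)\le\E_q[g]$ (variance minimizes $c\mapsto\E_q[(f-c)^2]$) and $\E_p[g]=\var_p(f)$, one gets $\E_q[g]\le\var_p(f)+\sqrt{2\var_q(g)\alpha}+\tfrac{2}{3}b^2\alpha$, and then $\var_q(g)\le b^2\E_q[g]$ plus the same AM--GM step gives $\var_q(f)\le\E_q[g]\le 2\var_p(f)+\tfrac{10}{3}b^2\alpha$. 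The two arguments are essentially the same; yours symmetrizes with an independent copy while the direct route centers at $\E_p[f]$, and both rely on Lemma~\ref{lemma:bernstern_kl} followed by a quadratic-in-$\sqrt{x}$ absorption. The only (harmless) technicality in your version is invoking Lemma~\ref{lemma:bernstern_kl} on $\cS\times\cS$ rather than $\cS$, which holds for the same reason.
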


\begin{lemma}[Lemma 12 in \citep{menard2021fast}] \label{lemma:var_change_f}
	For distribution $p \in \triangle_{\cS}$ and functions $f,g:\cS \rightarrow [0,b]$, we have
	\begin{align*}
		\var_p(f) \leq 2 \var_p(g) + 2b p(\cdot)^\top |f(\cdot)-g(\cdot)| .
	\end{align*}
\end{lemma}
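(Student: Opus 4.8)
The plan is to reduce this variance comparison to an elementary additive decomposition, invoking the boundedness of $f$ and $g$ only in the very last step. Writing $h := f - g$, the key observation is the symmetrization identity $\var_p(\psi) = \tfrac{1}{2}\, \ex_{x,x' \sim p}[(\psi(x) - \psi(x'))^2]$ valid for any real-valued $\psi$, where $x$ and $x'$ are drawn independently from $p$. This rewrites each variance purely in terms of pairwise differences, which split linearly under $f = g + h$ and therefore behave well under the additive perturbation.

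First I would apply this identity to $f$ and expand $f(x) - f(x') = (g(x) - g(x')) + (h(x) - h(x'))$. Using the numerical inequality $(a+b)^2 \le 2a^2 + 2b^2$ pointwise inside the expectation gives $\var_p(f) \le 2\var_p(g) + 2\var_p(h)$; in other words, variance is additive up to a factor of two under this split, and this step needs no hypothesis on $f$ or $g$ beyond their being real-valued.

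Next I would control the residual term $\var_p(h)$. Since $\var_p(h) \le \ex_p[h^2]$, and the assumption $f(s), g(s) \in [0,b]$ forces $|h(s)| = |f(s) - g(s)| \le b$ for every $s$, I can bound $h^2 = |h|^2 \le b\,|h|$ pointwise. Taking expectations yields $\var_p(h) \le b\,\ex_p[|f-g|] = b\, p(\cdot)^\top |f(\cdot)-g(\cdot)|$, and substituting this into the previous display produces exactly $\var_p(f) \le 2\var_p(g) + 2b\, p(\cdot)^\top |f(\cdot)-g(\cdot)|$.

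There is no substantial obstacle here: the boundedness hypothesis enters only through the pointwise estimate $h^2 \le b|h|$, and the sole idea is recognizing that the factor-of-two loss in variance additivity is precisely what the statement allows. An equivalent route avoids symmetrization altogether by expanding $\var_p(f) = \var_p(g) + \var_p(h) + 2\cov_p(g,h)$, bounding the covariance via Cauchy--Schwarz by $\sqrt{\var_p(g)\,\var_p(h)}$, and then applying AM--GM in the form $2\sqrt{ab} \le a+b$ to recover the same $2\var_p(g) + 2\var_p(h)$ bound. I would favor the symmetrization version, since it keeps every inequality pointwise and is the least error-prone.
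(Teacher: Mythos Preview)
Your proof is correct. The paper itself does not give a proof of this lemma; it is listed under ``Technical Tools'' and simply cited as Lemma~12 of \citep{menard2021fast}, so there is no in-paper argument to compare against.
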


\end{document}